\def\eqref#1{equation~\ref{#1}}
\def\Eqref#1{Equation~\ref{#1}}
\def\ceil#1{\lceil #1 \rceil}
\def\1{\bm{1}}
\def\rvx{{\mathbf{x}}}
\def\rmX{{\mathbf{X}}}
\def\vone{{\bm{1}}}
\def\vtheta{{\bm{\theta}}}
\def\vu{{\bm{u}}}
\def\vv{{\bm{v}}}
\def\vx{{\bm{x}}}
\def\vy{{\bm{y}}}
\def\vz{{\bm{z}}}
\def\evtheta{{\theta}}
\def\mA{{\bm{A}}}
\def\mB{{\bm{B}}}
\def\mF{{\bm{F}}}
\def\mM{{\bm{M}}}
\def\mU{{\bm{U}}}
\def\mV{{\bm{V}}}
\def\mW{{\bm{W}}}
\def\mPhi{{\bm{\Phi}}}
\def\mPsi{{\bm{\Psi}}}
\def\mSigma{{\bm{\Sigma}}}
\DeclareMathAlphabet{\mathsfit}{\encodingdefault}{\sfdefault}{m}{sl}
\SetMathAlphabet{\mathsfit}{bold}{\encodingdefault}{\sfdefault}{bx}{n}
\newcommand{\tens}[1]{\bm{\mathsfit{#1}}}
\def\tT{{\tens{T}}}
\def\tX{{\tens{X}}}
\DeclareMathOperator*{\argmin}{arg\,min}
\DeclareMathOperator{\poly}{poly}
\newcommand{\err}{\varepsilon}
\newcommand{\proj}{\mathcal{P}}
\newcommand{\inner}[2]{\left\langle #1, #2 \right\rangle}
\newcommand{\norm}[1]{\left\lVert#1\right\rVert}
\newcommand{\diag}{{\operatorname{Diag}}}
\par\vspace{4mm}}
\newcommand{\cD}{\mathcal{D}}
\newcommand{\cE}{\mathcal{E}}
\newcommand{\cF}{\mathcal{F}}
\newcommand{\cI}{\mathcal{I}}
\newcommand{\cL}{\mathcal{L}}
\newcommand{\cN}{\mathcal{N}}
\newcommand{\cO}{\mathcal{O}}
\newcommand{\cS}{\mathcal{S}}
\newcommand{\cT}{\mathcal{T}}
\newcommand{\bE}{\mathds{E}}
\newcommand{\bP}{\mathbb{P}}
\newcommand{\bR}{\mathbb{R}}
\newtheorem{theorem}{Theorem}
\newtheorem{definition}{Definition}
\newtheorem{lemma}{Lemma}
\newtheorem{proposition}{Proposition}
\newtheorem{example}{Example}
\newtheorem{assumption}{Assumption}
\newcommand{\stoptocwriting}{%
	\addtocontents{toc}{\protect\setcounter{tocdepth}{-5}}}
\newcommand{\resumetocwriting}{%
	\addtocontents{toc}{\protect\setcounter{tocdepth}{\arabic{tocdepth}}}}
\title{Behind the Scenes of Gradient Descent:\\ A Trajectory Analysis via Basis Function Decomposition}
\author{
	Jianhao Ma\\
	Industrial and Operations Engineering\\
	University of Michigan\\
	\texttt{jianhao@umich.edu}
	\and
	Lingjun Guo\\
	Industrial and Operations Engineering\\
	University of Michigan\\
	\texttt{glingjun@umich.edu}
	\and
	Salar Fattahi\\
	Industrial and Operations Engineering\\
	University of Michigan\\
	\texttt{fattahi@umich.edu}
}
\begin{document}

\maketitle

\begin{abstract}
This work analyzes the solution trajectory of gradient-based algorithms via a novel \textit{basis function decomposition}. We show that, although solution trajectories of gradient-based algorithms may vary depending on the learning task, they behave almost monotonically when projected onto an appropriate orthonormal function basis. Such projection gives rise to a basis function decomposition of the solution trajectory. Theoretically, we use our proposed basis function decomposition to establish the convergence of gradient descent (GD) on several representative learning tasks. In particular, we improve the convergence of GD on {symmetric matrix factorization} and provide a completely new convergence result for the {orthogonal symmetric tensor decomposition}. Empirically, we illustrate the promise of our proposed framework on realistic deep neural networks (DNNs) across different architectures, gradient-based solvers, and datasets. Our key finding is that gradient-based algorithms monotonically learn the coefficients of a particular orthonormal function basis of DNNs defined as the eigenvectors of the conjugate kernel after training. Our code is available at \texttt{\href{https://github.com/jianhaoma/function-basis-decomposition}{https://github.com/jianhaoma/function-basis-decomposition}}.

\end{abstract}
\stoptocwriting
\section{Introduction}
Learning highly nonlinear models amounts to solving a nonconvex optimization problem, which is typically done via different variants of gradient descent (GD).
But how does GD learn nonlinear models? Classical optimization theory asserts that, in the face of nonconvexity, GD and its variants may lack any meaningful optimality guarantee;
they produce solutions that---while being first- or second-order optimal~\citep{nesterov1998introductory, jin2017escape}---may not be globally optimal. In the rare event where the GD can recover a globally optimal solution, the recovered solution may correspond to an overfitted model rather than one with desirable generalization.

Inspired by the large empirical success of gradient-based algorithms in learning complex models, recent work has postulated that typical training losses have \textit{benign landscapes}: they are devoid of spurious local minima and their global solutions coincide with true solutions---i.e., solutions corresponding to the true model. For instance, different variants of low-rank matrix factorization \citep{ge2016matrix, ge2017no} and deep linear NNs \citep{kawaguchi2016deep} have benign landscapes. 
However, when spurious solutions \textit{do exist} \citep{safran2018spurious} or global and true solutions \textit{do not coincide} \citep{ma2022global}, such a holistic view of the optimization landscape cannot explain the success of gradient-based algorithms. To address this issue, another line of research has focused on analyzing the solution trajectory of different algorithms. Analyzing the solution trajectory has been shown extremely powerful in sparse recovery \citep{vaskevicius2019implicit}, low-rank matrix factorization \citep{li2018algorithmic, stoger2021small}, and linear DNNs \citep{arora2018convergence, ma2022blessing}. However, these analyses are tailored to specific models and thereby cannot be directly generalized.
	
	In this work, we propose a unifying framework for analyzing the optimization trajectory of GD based on a novel \textit{basis function decomposition}. We show that, although the dynamics of GD may vary drastically on different models, they behave almost monotonically when projected onto an appropriate choice of orthonormal function basis. 

 \begin{figure*}[t]
    \begin{centering}
        \subfloat[AlexNet]{
            {\includegraphics[width=0.33\linewidth]{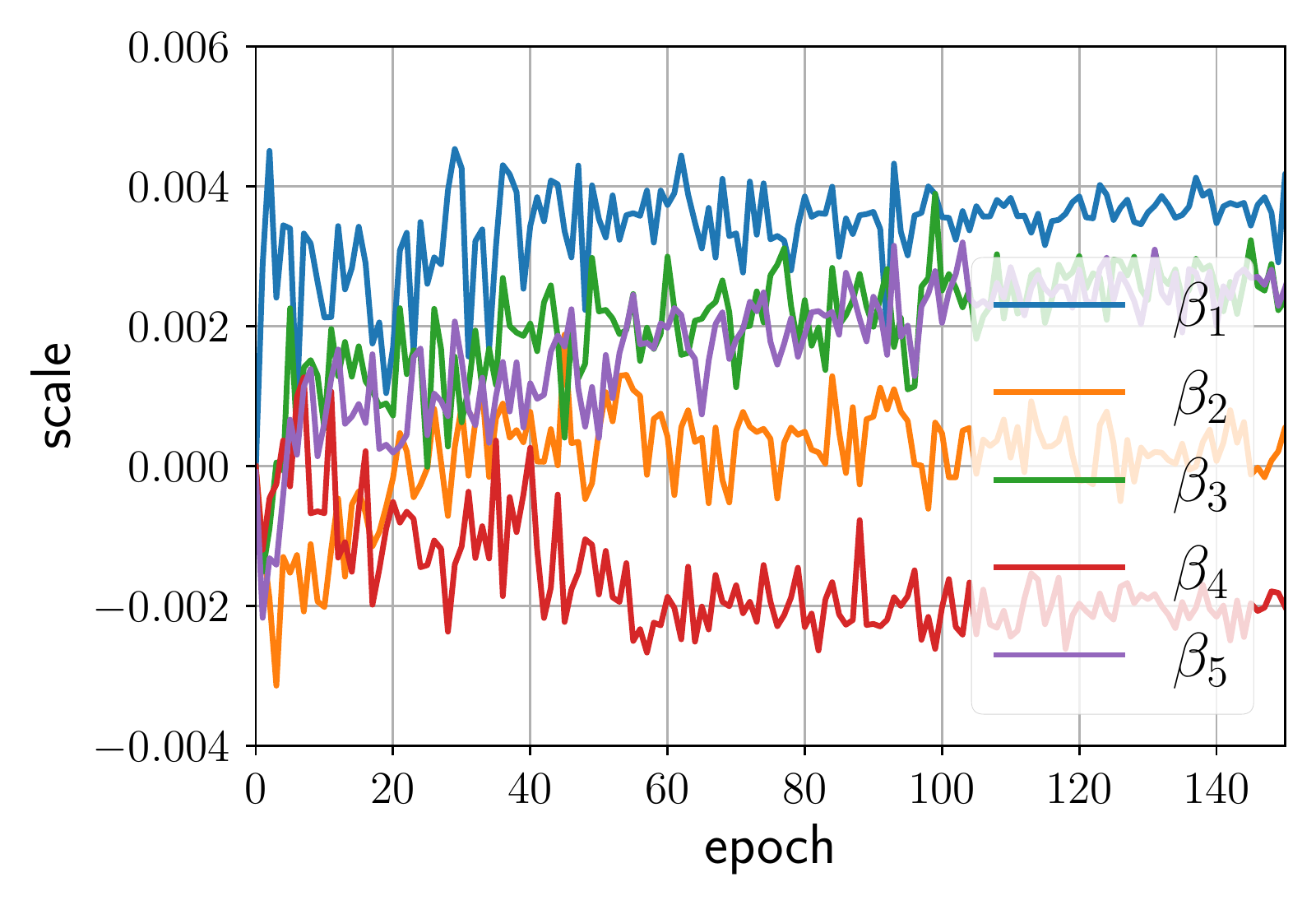}}\label{fig::alexnet}}
        \subfloat[ResNet-18]{
            {\includegraphics[width=0.33\linewidth]{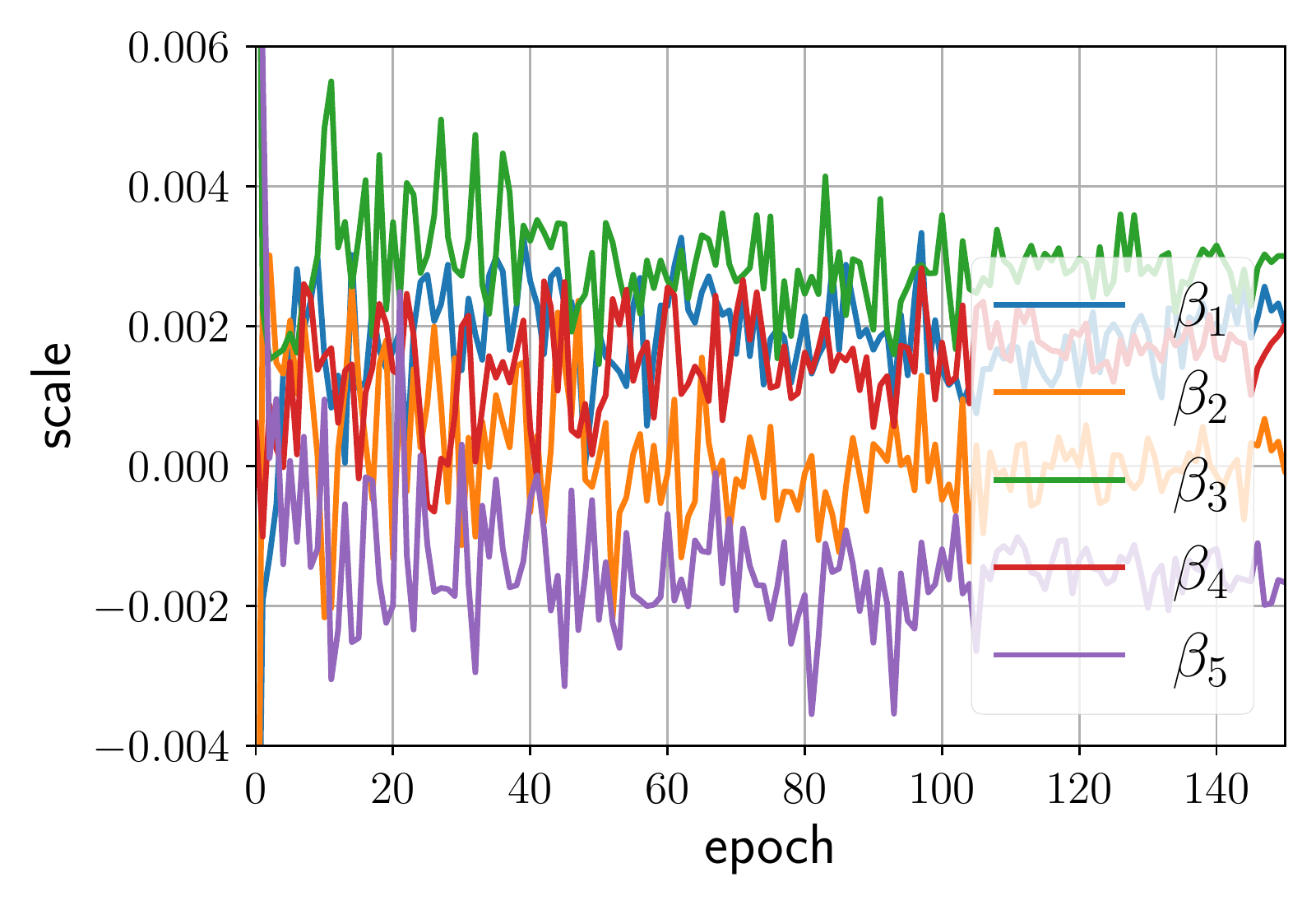}
            }\label{fig::resnet}}
        \subfloat[ViT]{
            {\includegraphics[width=0.33\linewidth]{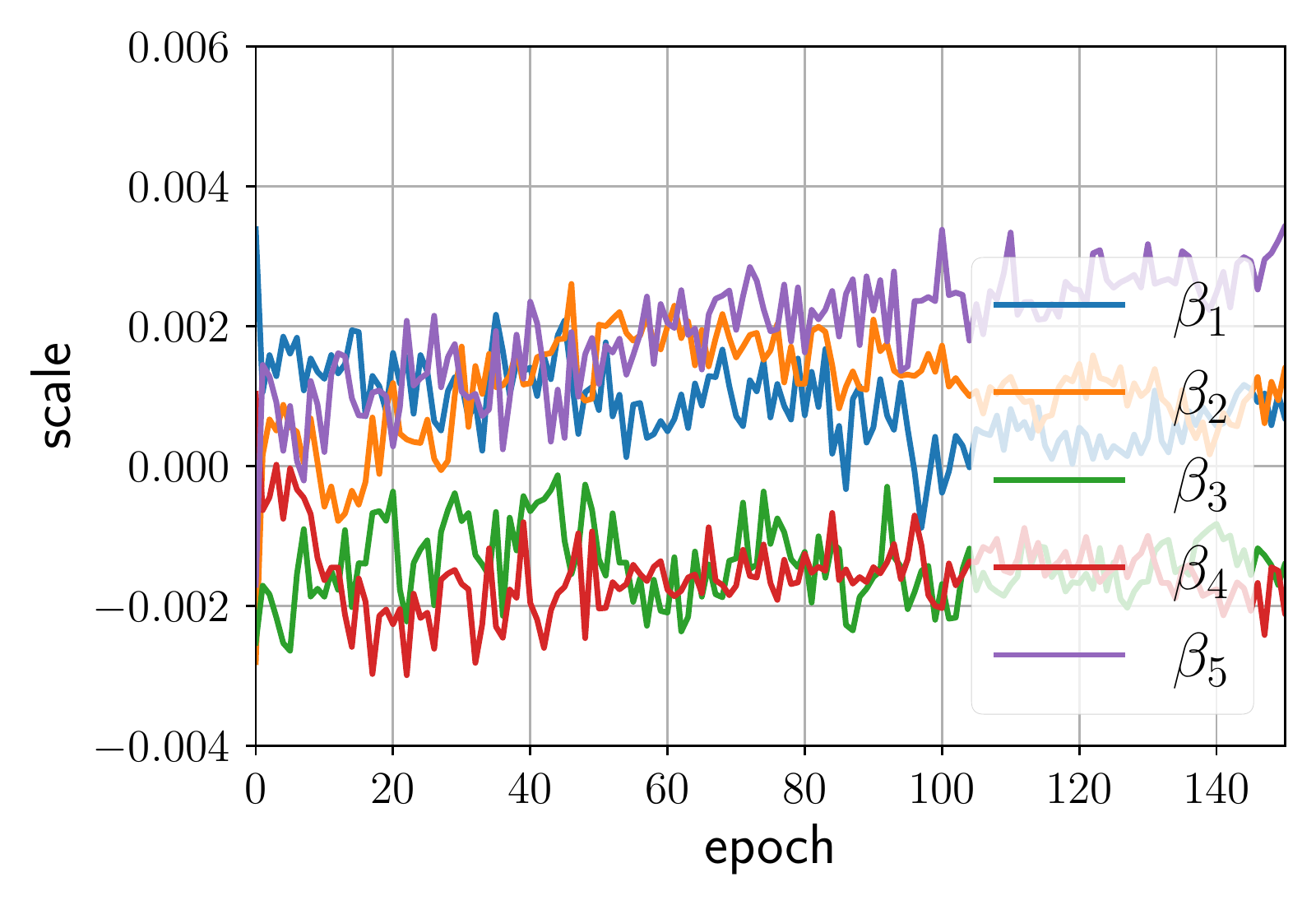}}\label{fig::vit}}\\
        \subfloat[AlexNet]{
            {\includegraphics[width=0.33\linewidth]{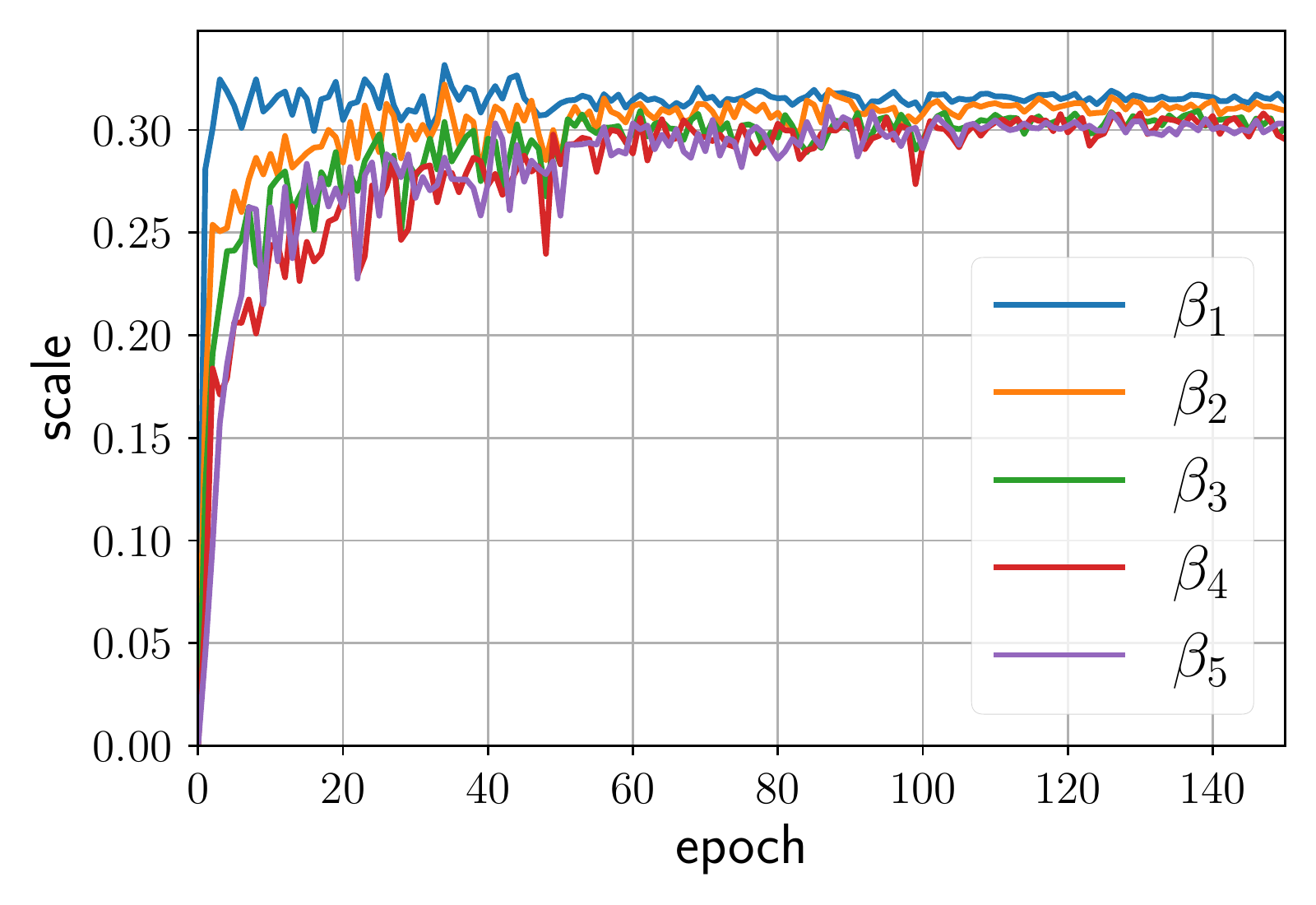}}\label{fig::alexnet-2}}
        \subfloat[ResNet-18]{
            {\includegraphics[width=0.33\linewidth]{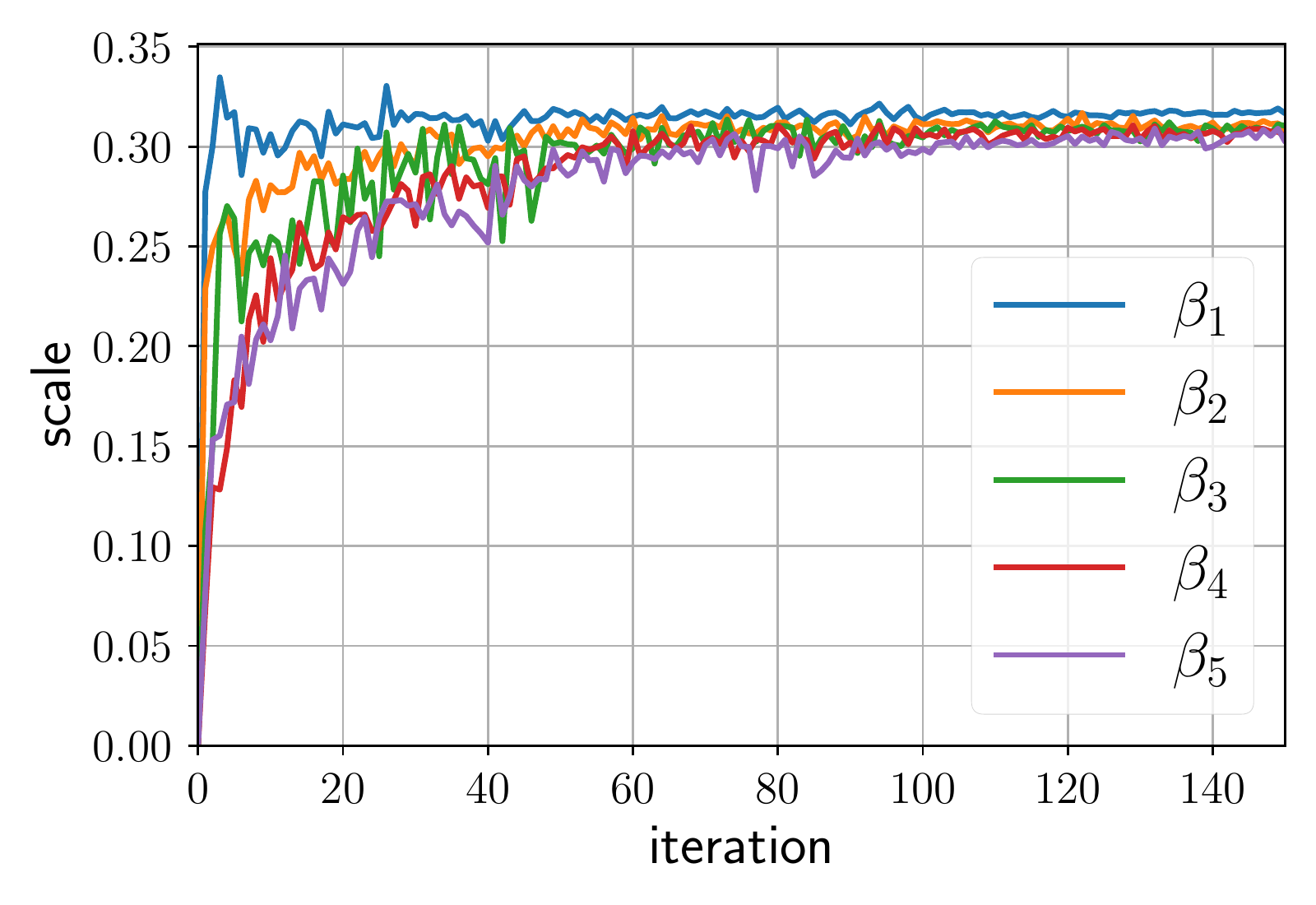}}\label{fig::resnet-2}}
        \subfloat[ViT]{
            {\includegraphics[width=0.33\linewidth]{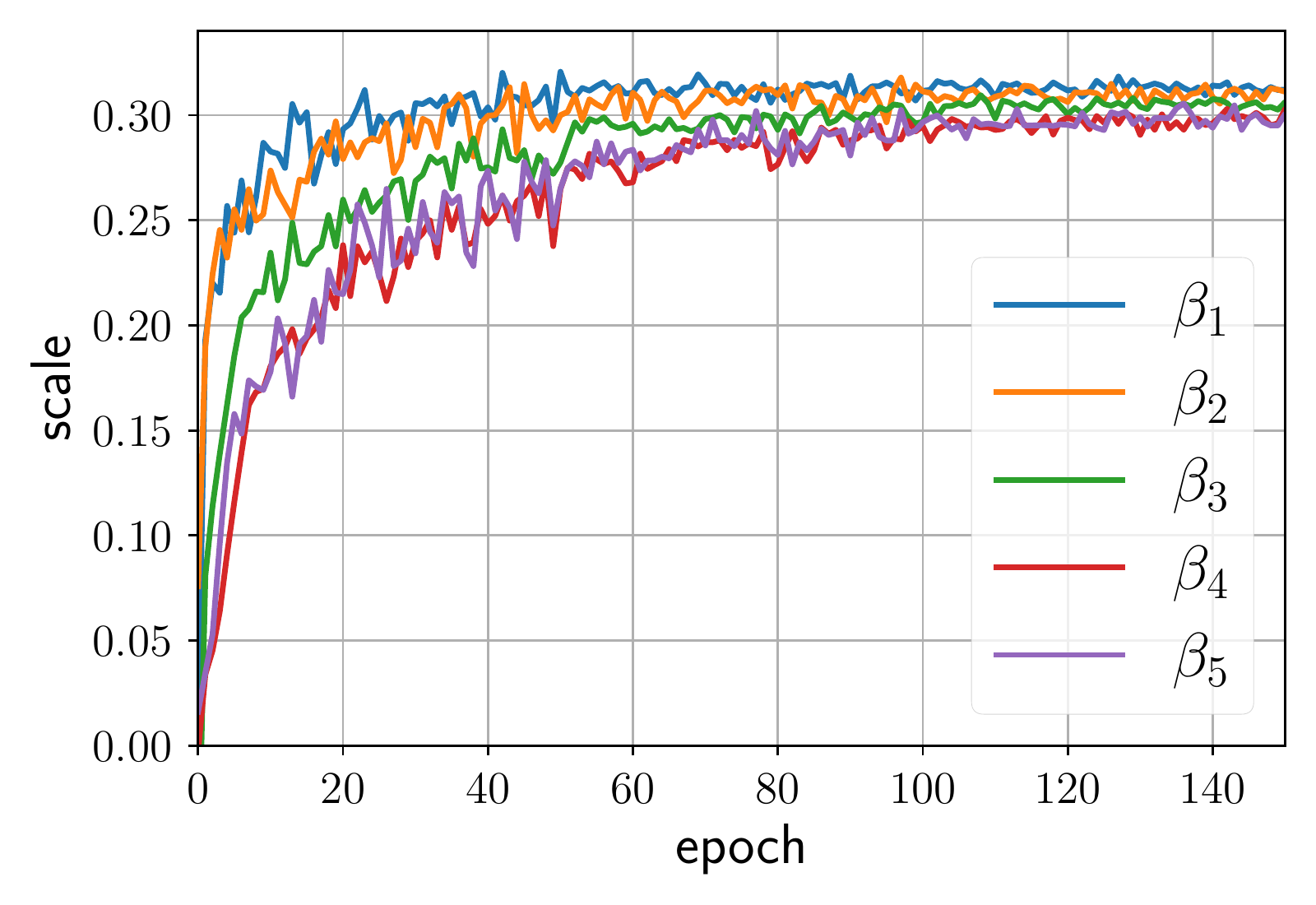}}\label{fig::vit-2}}
    \end{centering}
    \caption{\footnotesize
    The solution trajectories of LARS on AlexNet and ResNet-18 and AdamW on ViT with $\ell_2$-loss after projecting onto two different orthonormal bases. The first row shows the trajectories of the top-$5$ coefficients after projecting onto a randomly generated orthonormal basis. The second row shows the trajectories of the top-$5$ coefficients after projecting onto the eigenvectors of the {conjugate kernel} evaluated at the last epoch. 
    More detail on our implementation can be found in Appendix~\ref{sec::additional-experiment}.
    }
    \label{fig::main}
\end{figure*}

	\paragraph{Motivating example:} Our first example illustrates this phenomenon on DNNs. We study {the optimization trajectories of two adaptive gradient-based algorithms, namely AdamW and LARS}, on three different DNN architectures, namely AlexNet \citep{krizhevsky2017imagenet}, ResNet-18 \citep{he2016deep}, and Vision Transformer (ViT) \citep{dosovitskiy2020image} {with the CIFAR-10 dataset}. The first row of the Figure~\ref{fig::main} shows the top-$5$ coefficients of the solution trajectory when projected onto a randomly generated orthonormal basis. We see that the trajectories of the coefficients are highly non-monotonic and almost indistinguishable (they range between -0.04 to 0.06), implying that the energy of the obtained model is spread out on different orthogonal components. The second row of Figure~\ref{fig::main} shows the same trajectory after projecting onto an orthogonal basis defined as the eigenvectors of the conjugate kernel after training~\citep{long2021properties} (see Section~\ref{sec::NN} and Appendix~\ref{sec::additional-experiment} for more details). Unlike the previous case, the top-5 coefficients carry more energy and behave monotonically (modulo the small fluctuations induced by the stochasticity in the algorithm) in all three architectures, until they plateau around their steady state. In other words,
	\textit{the algorithm behaves more monotonically after projecting onto a correct choice of orthonormal basis.}

\subsection{Main Contributions}
The monotonicity of the projected solution trajectory motivates the use of an appropriate basis function decomposition to analyze the behavior of gradient-based algorithms.
In this paper, we show how an appropriate basis function decomposition can be used to provide a much simpler convergence analysis for gradient-based algorithms on several representative learning problems, from simple kernel regression to complex DNNs. 
Our main contributions are summarized below:

\begin{itemize}
    \item [-] {\bf Global convergence of GD via basis function decomposition:} We prove that GD learns the coefficients of an appropriate function basis that forms the true model. In particular, we show that GD learns the true model when applied to the expected $\ell_2$-loss under certain gradient independence and gradient dominance conditions. 
    Moreover, we characterize the convergence rate of GD, identifying conditions under which it enjoys linear or sublinear convergence rates. Our result does not require a benign landscape for the loss function and can be applied to both convex and nonconvex settings.
    \item [-] {\bf Application in learning problems:} We show that our general framework is well-suited for analyzing the solution trajectory of GD on different representative learning problems. Unlike the existing results, our proposed method leads to a much simpler trajectory analysis of GD for much broader classes of models.
    Using our technique, we improve the convergence of GD on the symmetric matrix factorization and provide an entirely new convergence result for GD on the orthogonal symmetric tensor decomposition. We also prove that GD enjoys an incremental learning phenomenon in both problems.
    \item [-] {\bf Empirical validation on DNNs:} We empirically show that our proposed framework applies to DNNs beyond GD. More specifically, we show that different gradient-based algorithms monotonically learn the coefficients of a particular function basis defined as the eigenvectors of the conjugate kernel after training (also known as ``after kernel regime''). We show that this phenomenon happens across different architectures, datasets, solvers, and loss functions, strongly motivating the use of function basis decomposition to study deep learning.
\end{itemize}

\section{General Framework: Function Basis Decomposition}
We study the optimization trajectory of GD on the expected (population) $\ell_2$-loss
\begin{equation}
    \label{eq::objective}
    \min_{\vtheta\in \Theta}\cL\left(\vtheta\right):=\frac{1}{2}\bE_{\vx,y}\left[\left(f_{\vtheta}(\vx)-y\right)^2\right].\tag{expected $\ell_2$-loss}
\end{equation}
Here the input $\vx\in \bR^{d}$ is drawn from an unknown distribution $\cD$, and the output label $y$ is generated as $y=f^{\star}(\vx)+\err$, where $\err$ is an additive noise, independent of $\vx$, with mean $\bE[\err]=0$ and variance $\bE[\err^2]=\sigma_\err^2<\infty$.
\footnote{For simplicity, we assume that $y$ is scalar. Our results can be easily extended to the vector case.}. 
The model $f_{\vtheta}(\vx)$ is characterized by a parameter vector $\vtheta\in\bR^{m}$, which naturally induces a set of admissible models (model space for short) $\cF_{\Theta}:=\{f_{\vtheta}:\vtheta\in \bR^{m}\}$. We do not require the true model $f^\star$ to lie within the model space; instead, we seek to obtain a model $f_{\vtheta^\star}\in \cF_{\Theta}$ that is closest to $f^\star$ in $L^2(\cD)$-distance. In other words, we consider $f^{\star}=f_{\vtheta^{\star}}(\vx)+f^{\star}_{\perp}(\vx)$, where $\vtheta^{\star}=\argmin_{\vtheta}\norm{f_{\vtheta}-f^{\star}}_{L^2(\cD)}$.
\footnote{Given a probability distribution $\cD$, we define the $L_2(\cD)$-norm as $\norm{f}_{L_2(\cD)}^2=\bE_{\rvx\sim \cD}\left[f^2(\rvx)\right]$.} 
To minimize the \ref{eq::objective}, we use vanilla GD with constant step-size $\eta>0$:
\begin{equation}
    \label{eq::GD}
    \vtheta_{t+1}=\vtheta_t-\eta\nabla \cL(\vtheta_t).\tag{GD}
\end{equation}

\begin{definition}[Orthonormal function basis]
    A set of functions $\{\phi_i(\vx)\}_{i\in \cI}$ forms an \textbf{orthonormal function basis} for the model space $\cF_{\Theta}$ with respect to the $L^2(\cD)$-metric if
    \begin{itemize}
        \item for any $i\in \cI$, we have $\bE_{x\sim \cD}[\phi_i^2(\vx)]=1$;
        \item for any $i,j\in\cI$ such that $i\neq j$, we have $\bE_{x\sim \cD}[\phi_i(\vx)\phi_j(\vx)]=0$;
        \item for any $f_{\vtheta}\in \cF_{\Theta}$, there exists a unique sequence of basis coefficients $\{\beta_i(\vtheta)\}_{i\in \cI}$ such that $f_{\vtheta}(\vx)=\sum_{i\in \cI}\beta_i(\vtheta)\phi_i(\vx)$.
    \end{itemize}
\end{definition}

	\begin{example}[Orthonormal basis for polynomials]\label{exp_poly}
		Suppose that $\cF_{\Theta}$ is the class of all univariate real polynomials of degree at most $n$, that is, $\cF_{\Theta} = \{\sum_{i=1}^{n+1}\theta_ix^{i-1}: \vtheta\in\mathbb{R}^{n+1}\}$. If $\cD$ is a uniform distribution on $[-1,1]$, then the so-called Legendre polynomials form an orthonormal basis for $\cF_{\Theta}$ with respect to the $L^2(\cD)$-metric~\cite[Chapter 14]{olver2010nist}. Moreover, if $D$ is a normal distribution, then Hermite polynomials define an orthonormal basis for $\cF_{\Theta}$ with respect to the $L^2(\cD)$-metric~\cite[Chapter 18]{olver2010nist}.\footnote{Both Legendre and Hermite polynomials can be derived sequentially using Gram-Schmidt procedure. For instance, the first three Legendre polynomials are defined as $P_1(x) = 1/\sqrt{2}$, $P_2(x) = \sqrt{3/2}x$, and $P_3(x) = \sqrt{5/8}(3x^2-1)$.}
	\end{example}
	\begin{example}[Orthonormal basis for symmetric matrix factorization]\label{exp_mat_factor}
		Suppose that the true model is defined as $f_{\mU^\star}(\rmX) = \langle {\mU^\star}{\mU^\star}^\top, \rmX\rangle$ with some rank-$r$ matrix $\mU^\star\in\bR^{d\times r}$, and consider an ``overparameterized'' function class $\cF_{\Theta} = \{f_{\mU}(\rmX): \mU\in\mathbb{R}^{d\times r'}\}$ where $r'\geq r$ is an overestimation of the rank. Moreover, suppose that the elements of $\rmX\sim \cD$ are iid with zero mean and unit variance. Consider the eigenvalues of ${\mU^\star}{\mU^\star}^\top$ as $\sigma_1\geq \dots\geq \sigma_d$ with $\sigma_{r+1} = \dots = \sigma_d = 0$, and their corresponding eigenvectors $\vz_1,\dots, \vz_d$. It is easy to verify that the functions $\phi_{ij}(\rmX) = \langle \vz_i\vz_j^\top, \rmX\rangle$ for $1\leq i,j\leq d$ define a valid orthogonal basis for $\cF_{\Theta}$ with respect to the $L^2(\cD)$-metric. Moreover, for any $f_{\mU}(\rmX)$, the basis coefficients can be obtained as $\beta_{ij}(\mU)=\bE\left[\inner{\mU\mU^{\top}}{\rmX}\inner{\vz_i\vz_j^{\top}}{\rmX}\right]=\inner{\vz_i\vz_j^{\top}}{\mU\mU^{\top}}$. As will be shown in Section~\ref{subsec::mat_factor}, this choice of orthonormal basis significantly simplifies the dynamics of GD for symmetric matrix factorization.
	\end{example}

Given the input distribution $\cD$, we write $f_{\vtheta^{\star}}(\vx)=\sum_{i\in\cI} \beta_i(\vtheta^{\star})\phi_i(\vx)$, where $\{\phi(\vx)\}_{i\in\cI}$ is an orthonormal basis for $\cF_{\Theta}$ with respect to $L^2(\cD)$-metric, and $\{\beta_i(\vtheta^\star)\}_{i\in\cI}$ are the \textit{true basis coefficients}. For short, we denote $\beta_i^{\star}=\beta_i(\vtheta^{\star})$. In light of this, the expected loss can be written as:
	\begin{equation}
		\label{eq::decomposed-loss}
		\cL(\vtheta)=\underbrace{\frac{1}{2}\sum_{i\in \cI}\left(\beta_i(\vtheta)-\beta_i^{\star}\right)^2}_{\text{optimization error}}+\underbrace{\vphantom{\sum_{i\in \cI}}\frac{1}{2}\norm{f^{\star}_{\perp}}_{L^2(\cD)}^2}_{\text{approximation error}}+\underbrace{\vphantom{\sum_{i\in \cI}}\sigma_{\err}^2/2}_{\text{noise}}.
	\end{equation}
	Accordingly, GD takes the form
	\begin{equation}\label{gd_dynamic}
		\vtheta_{t+1}=\vtheta_t-\sum_{i\in \cI}\left(\beta_i(\vtheta_t)-\beta_i^{\star}\right)\nabla \beta_i(\vtheta_t). \tag{GD dynamic}
	\end{equation}
	Two important observations are in order based on~\ref{gd_dynamic}: first, due to the decomposed nature of the expected loss, the solution trajectory becomes independent of the approximation error and noise. Second, in order to prove the global convergence of GD, it suffices to show the convergence of $\beta_i(\vtheta_t)$ to $\beta_i^{\star}$. In fact, we will show that the coefficients $\beta_{i}(\vtheta_t)$ enjoy simpler dynamics for particular choices of orthonormal basis that satisfy appropriate conditions. 

\begin{assumption}[Boundedness and smoothness]
		\label{assumption::L-smooth}
		There exist constants $L_f, L_g, L_H>0$ such that
		\begin{equation}
			\norm{f_{\vtheta}}_{L^2(\cD)}\leq L_f,\norm{\nabla f_{\vtheta}}_{L^2(\cD)}\leq L_g,\norm{\nabla^2f_{\vtheta}}_{L^2(\cD)}\leq L_H.
		\end{equation}
	\end{assumption}
	We note that the boundedness and smoothness assumptions are indeed restrictive and may not hold in general. However, all of our subsequent results hold when Assumption~\ref{assumption::L-smooth} is satisfied within any bounded region for $\vtheta$ that includes the solution trajectory. Moreover, we will relax these assumptions for several learning problems.
	\begin{proposition}[Dynamic of $\beta_i(\vtheta_t)$]
		\label{prop::1}
		Under Assumption~\ref{assumption::L-smooth} and based on~\ref{gd_dynamic}, we have
		\begin{equation}
			\begin{aligned}
				\beta_i(\vtheta_{t+1}) =\beta_i(\vtheta_t) -\eta\sum_{j\in \cI} \left(\beta_j(\vtheta_t)-\beta_j^{\star}\right)\inner{\nabla\beta_i(\vtheta_t)}{\nabla\beta_j(\vtheta_t)}\pm \cO\left(\eta^2L_HL_f^2L_g^2\right).
			\end{aligned}
		\end{equation}
	\end{proposition}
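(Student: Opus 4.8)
The plan is to apply a second‑order Taylor expansion of the scalar map $\vtheta\mapsto\beta_i(\vtheta)$ along the GD step, identify the linear term with the claimed expression, and absorb the quadratic remainder into the $\cO(\cdot)$ term. Write $\Delta_t:=\vtheta_{t+1}-\vtheta_t=-\eta\nabla\cL(\vtheta_t)$. Taylor's theorem with integral remainder gives
\[
\beta_i(\vtheta_{t+1})=\beta_i(\vtheta_t)+\inner{\nabla\beta_i(\vtheta_t)}{\Delta_t}+\int_0^1(1-s)\,\Delta_t^\top\nabla^2\beta_i(\vtheta_t+s\Delta_t)\,\Delta_t\,ds .
\]
All that remains is to rewrite the linear term and to bound the remainder, both using Assumption~\ref{assumption::L-smooth}.

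For the linear term, I would use the decomposed loss~\eqref{eq::decomposed-loss}: the approximation‑error and noise terms are constants, so $\nabla\cL(\vtheta)=\sum_{j\in\cI}(\beta_j(\vtheta)-\beta_j^\star)\nabla\beta_j(\vtheta)$ (equivalently, since by Parseval $\cL(\vtheta)=\tfrac12\norm{f_\vtheta-f_{\vtheta^\star}}_{L^2(\cD)}^2+\text{const}$, one has $\nabla\cL(\vtheta)=\bE_{\vx}[(f_\vtheta(\vx)-f_{\vtheta^\star}(\vx))\nabla f_\vtheta(\vx)]$, and the two expressions coincide by Parseval applied to the pairing of $f_\vtheta-f_{\vtheta^\star}\in\cF_\Theta$ with each coordinate of $\nabla f_\vtheta$). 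Substituting $\Delta_t=-\eta\nabla\cL(\vtheta_t)$ into $\inner{\nabla\beta_i(\vtheta_t)}{\Delta_t}$ produces exactly $-\eta\sum_{j\in\cI}(\beta_j(\vtheta_t)-\beta_j^\star)\inner{\nabla\beta_i(\vtheta_t)}{\nabla\beta_j(\vtheta_t)}$, the main term of the claim.

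For the remainder, I would first bound the step size. Using the elementary inequality $\norm{\bE[g\vh]}^2=\sum_k(\bE[gh_k])^2\le\bE[g^2]\sum_k\bE[h_k^2]=\norm{g}_{L^2(\cD)}^2\norm{\vh}_{L^2(\cD)}^2$ with $g=f_{\vtheta_t}-f_{\vtheta^\star}$ and $\vh=\nabla f_{\vtheta_t}$, together with $\norm{f_{\vtheta_t}-f_{\vtheta^\star}}_{L^2(\cD)}\le 2L_f$ (triangle inequality, since $f_{\vtheta^\star}\in\cF_\Theta$) and $\norm{\nabla f_{\vtheta_t}}_{L^2(\cD)}\le L_g$, I get $\norm{\Delta_t}=\eta\norm{\nabla\cL(\vtheta_t)}\le 2\eta L_f L_g$. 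For the Hessian factor, differentiating $\beta_i(\vtheta)=\bE[\phi_i(\vx)f_\vtheta(\vx)]$ twice under the expectation gives $\nabla^2\beta_i(\vtheta)=\bE[\phi_i(\vx)\nabla^2 f_\vtheta(\vx)]$, so $\norm{\nabla^2\beta_i(\vtheta)}_{\mathrm{op}}\le\bE[|\phi_i(\vx)|\,\norm{\nabla^2 f_\vtheta(\vx)}_{\mathrm{op}}]\le\sqrt{\bE[\phi_i^2(\vx)]}\cdot\norm{\nabla^2 f_\vtheta}_{L^2(\cD)}\le L_H$, using $\bE[\phi_i^2]=1$. Hence the remainder is at most $\tfrac12 L_H\norm{\Delta_t}^2\le 2\eta^2 L_H L_f^2 L_g^2$ in absolute value, i.e. $\pm\cO(\eta^2 L_H L_f^2 L_g^2)$, which finishes the proof.

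Every step is elementary; the only delicate points — and the main obstacle when the index set $\cI$ is infinite — are justifying the interchange of differentiation and expectation in $\nabla\beta_i(\vtheta)=\bE[\phi_i(\vx)\nabla f_\vtheta(\vx)]$ and $\nabla^2\beta_i(\vtheta)=\bE[\phi_i(\vx)\nabla^2 f_\vtheta(\vx)]$, and the term‑by‑term identity $\nabla\cL(\vtheta)=\sum_{j\in\cI}(\beta_j(\vtheta)-\beta_j^\star)\nabla\beta_j(\vtheta)$. Both are controlled by the uniform $L^2(\cD)$‑bounds of Assumption~\ref{assumption::L-smooth} (dominated convergence and uniform convergence of the series), and one can bypass the second issue entirely by working directly with $\nabla\cL(\vtheta)=\bE[(f_\vtheta-f_{\vtheta^\star})\nabla f_\vtheta]$ as above.
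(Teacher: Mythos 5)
Your proposal is correct and follows essentially the same route as the paper: a second-order Taylor expansion of $\beta_i$ along the GD step (the paper uses the mean-value/Lagrange form of the remainder where you use the integral form), with the Hessian factor bounded by $L_H$ and the step length bounded by $2\eta L_fL_g$ via the same Cauchy--Schwarz arguments, yielding the identical remainder bound $2\eta^2 L_H L_f^2 L_g^2$. Your closing remark on justifying the interchange of differentiation and expectation is a point the paper silently assumes, but it does not change the argument.
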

	The above proposition holds for \textit{any} valid choice of orthonormal basis $\{\phi_i(\vx)\}_{i\in\cI}$. Indeed, there may exist multiple choices for the orthonormal basis, and not all of them would lead to equally simple dynamics for the coefficients. Examples of ``good'' and ``bad'' choices of orthonormal basis were presented for DNNs in our earlier motivating example. Indeed, an \textit{ideal} choice of orthogonal basis should satisfy $\inner{\nabla\beta_i(\vtheta_t)}{\nabla\beta_j(\vtheta_t)}\approx  0$ for $i\not=j$, i.e., the gradients of the coefficients remain orthogonal along the solution trajectory. Under such assumption, the dynamics of $\beta_i(\vtheta_t)$ \textit{almost} decompose over different indices:
	\begin{equation}
	\beta_i(\vtheta_{t+1}) \approx\beta_i(\vtheta_t) -\eta\left(\beta_i(\vtheta_t)-\beta_i^{\star}\right)\norm{\nabla\beta_i(\vtheta_t)}^2\pm\cO(\eta^2),
	\end{equation}
	where the last term accounts for the second-order interactions among the basis coefficients. If such an ideal orthonormal basis exists, then our next theorem shows that GD efficiently learns the true basis coefficients. To streamline the presentation, we assume that $\beta^\star_1\geq\dots\geq \beta^\star_k>0$ and $\beta^\star_i = 0, i>k$ for some $k<\infty$. We refer to the index set $\cS = \{1,\dots,k\}$ as \textit{signal} and the index set $\cE = \cI\backslash\cS$ as \textit{residual}. When there is no ambiguity, we also refer to $\beta_i(\vtheta_t)$ as a signal if $i\in\cS$.

\begin{theorem}[Convergence of GD with finite ideal basis]\label{thm::general}
Suppose that the initial point $\vtheta_0$ satisfies 
    \begin{align}
			 \beta_i(\vtheta_0) &\geq C_1\alpha,\qquad \text{for all $i\in\cS$}, \tag{lower bound on signals at $\vtheta_0$}\label{lb_initial}\\
			\norm{f_{\vtheta_0}}_{L^{2}(\cD)} &=\left(\sum_{i\in \cI} \beta_i^{2}(\vtheta_0)\right)^{1/2}\leq C_2\alpha,  \tag{upper bound on energy at $\vtheta_0$}\label{up_initial}
		\end{align}
    for $C_1,C_2>0$ and $\alpha\lesssim \beta_{k}^{\star}$. Moreover, suppose that the orthogonal function basis is finite, i.e., $|\cI| = d$ for some finite $d$, and the gradients of the coefficients satisfy the following conditions for every $0\leq t\leq T$:
    \begin{align}
        &\inner{\nabla\beta_i(\vtheta_t)}{\nabla\beta_j(\vtheta_t)} = 0&& \text{for all  $i\not=j$,}\tag{gradient independence}\label{gd_indep}\\
        &\norm{\nabla \beta_i(\vtheta_t)} \geq C\left|\beta_i(\vtheta_t)\right|^\gamma&& \text{for all  $i\in\cS$,}\tag{gradient dominance}\label{gd_dominance}
    \end{align}
    for $C>0$ and $1/2\leq \gamma\leq 1$. Then,~\ref{eq::GD} with step-size $\eta\lesssim \frac{\alpha^{2\gamma}}{\sqrt{d}C^2L_HL_g^2L_f^{2}}\beta_{k}^{\star 2\gamma}\log^{-1}\left(\frac{d\beta_{k}^{\star}}{C_1\alpha}\right)$ satisfies:
    \begin{itemize}
        \item If $\gamma \!=\! \frac{1}{2}$, then within $T\!=\!\cO\left(\frac{1}{C^2\eta\beta_{k}^{\star}}\log\left(\frac{\beta_{k}^{\star}}{C_1\alpha}\right)\right)$ iterations, we have $\norm{f_{\vtheta_T}\!-\!f_{\vtheta^{\star}}}_{L^2(\cD)}\!\lesssim\! \alpha.$
        \item If $\frac{1}{2}\!<\!\gamma\!\leq\! 1$, then within $T\!=\!\cO\left(\frac{1}{C^2\eta\beta_{k}^{\star}\alpha^{2\gamma-1}}\right)$ iterations, we have $\norm{f_{\vtheta_T}\!-\!f_{\vtheta^{\star}}}_{L^2(\cD)}\!\lesssim\! \alpha.$
    \end{itemize}
\end{theorem}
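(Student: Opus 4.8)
The plan is to combine Proposition~\ref{prop::1} with \ref{gd_indep} to collapse the coupled update into a family of \emph{decoupled scalar recursions}, one per basis coefficient, and then run a ``small-initialization'' argument: the signals $i\in\cS$ grow from scale $\Theta(\alpha)$ toward their targets $\beta_i^{\star}$, while the residuals $i\in\cE$ (whose targets are $0$) never leave scale $\cO(\alpha)$. Under \ref{gd_indep} every cross term $\inner{\nabla\beta_i(\vtheta_t)}{\nabla\beta_j(\vtheta_t)}$ with $i\neq j$ vanishes, so Proposition~\ref{prop::1} reduces to
\begin{equation}\label{eq::scalar-recursion}
\beta_i(\vtheta_{t+1})=\beta_i(\vtheta_t)+\eta\bigl(\beta_i^{\star}-\beta_i(\vtheta_t)\bigr)\norm{\nabla\beta_i(\vtheta_t)}^{2}\pm\cO\!\left(\eta^{2}L_HL_f^{2}L_g^{2}\right),\qquad i\in\cI.
\end{equation}
Two elementary facts will be used throughout. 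First, orthonormality of $\{\phi_i\}$ gives $\sum_{i\in\cI}\norm{\nabla\beta_i(\vtheta_t)}^{2}=\norm{\nabla f_{\vtheta_t}}_{L^2(\cD)}^{2}\le L_g^{2}$, so $\eta\norm{\nabla\beta_i(\vtheta_t)}^{2}\le\eta L_g^{2}\le1$ under the prescribed step-size; hence the factor $1-\eta\norm{\nabla\beta_i(\vtheta_t)}^{2}$ in \eqref{eq::scalar-recursion} lies in $[0,1]$. Second, $\norm{f_{\vtheta_T}-f_{\vtheta^{\star}}}_{L^2(\cD)}^{2}=\sum_{i\in\cS}\bigl(\beta_i(\vtheta_T)-\beta_i^{\star}\bigr)^{2}+\sum_{i\in\cE}\beta_i(\vtheta_T)^{2}$, so it suffices to show each signal enters an $\cO(\alpha/\sqrt d)$-ball around $\beta_i^{\star}$ and that $\sum_{i\in\cE}\beta_i(\vtheta_T)^{2}\lesssim\alpha^{2}$.

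I would run the whole argument as a single induction over $0\le t\le T$ maintaining: (a) $0<\beta_i(\vtheta_t)\le2\beta_i^{\star}$ for every $i\in\cS$, with $\beta_i(\vtheta_t)$ nondecreasing until it first reaches $\tfrac12\beta_i^{\star}$; (b) $|\beta_i(\vtheta_t)|\le|\beta_i(\vtheta_0)|+ct\,\eta^{2}L_HL_f^{2}L_g^{2}$ for every $i\in\cE$; and (c) $\vtheta_t$ stays in a bounded region, so Assumption~\ref{assumption::L-smooth} is in force along the trajectory (this follows from (a)--(b) and \ref{up_initial}). Note that at $t=0$ all coefficients satisfy $|\beta_i(\vtheta_0)|\le\norm{f_{\vtheta_0}}_{L^2(\cD)}\le C_2\alpha\ll\beta_i^{\star}$ by \ref{up_initial} and $\alpha\lesssim\beta_k^{\star}$, so the signals indeed start well below their targets. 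Item (b) is immediate from \eqref{eq::scalar-recursion} since $\beta_i^{\star}=0$ and the multiplicative factor lies in $[0,1]$; together with \ref{up_initial} it gives $\sum_{i\in\cE}\beta_i(\vtheta_T)^{2}\le2\sum_{i\in\cE}\beta_i(\vtheta_0)^{2}+2d\bigl(cT\eta^{2}L_HL_f^{2}L_g^{2}\bigr)^{2}\le2C_2^{2}\alpha^{2}+o(\alpha^{2})$, because the step-size is calibrated precisely so that $\sqrt d\,T\eta^{2}L_HL_f^{2}L_g^{2}=o(\alpha)$ --- this is the role of the $\sqrt d$ factor, the $\alpha^{2\gamma}\beta_k^{\star 2\gamma}$ scaling, and the $\log^{-1}(\cdot)$ term in the step-size bound. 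The residual block of the error is thereby controlled, and it remains to analyze the signals.

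The crux is the signal dynamics. Fix $i\in\cS$ and split its trajectory into a \emph{growth phase} ($\beta_i(\vtheta_t)\le\tfrac12\beta_i^{\star}$) and a \emph{contraction phase} ($\beta_i(\vtheta_t)\ge\tfrac12\beta_i^{\star}$). In the growth phase $\beta_i^{\star}-\beta_i(\vtheta_t)\ge\tfrac12\beta_i^{\star}$ and \ref{gd_dominance} gives $\norm{\nabla\beta_i(\vtheta_t)}^{2}\ge C^{2}\beta_i(\vtheta_t)^{2\gamma}$, so \eqref{eq::scalar-recursion} yields $\beta_i(\vtheta_{t+1})\ge\beta_i(\vtheta_t)+\tfrac12\eta C^{2}\beta_i^{\star}\beta_i(\vtheta_t)^{2\gamma}-\cO(\eta^{2}L_HL_f^{2}L_g^{2})$; since $\beta_i(\vtheta_t)\ge C_1\alpha$ by \ref{lb_initial} and (a), the driving term is at least $\tfrac12\eta C^{2}\beta_k^{\star}(C_1\alpha)^{2\gamma}$, which the step-size makes at least twice the $\cO(\eta^{2})$ error, so $\beta_i$ increases strictly at every step. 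Comparing the recursion with the scalar ODE $\dot b=\tfrac14 C^{2}\beta_k^{\star}b^{2\gamma}$ (both monotone, so the flows agree up to constants, and the discrete bound is made rigorous by dyadic summation over scales $b\in[2^j C_1\alpha,2^{j+1}C_1\alpha)$), $\beta_i$ escapes $\tfrac12\beta_i^{\star}$ within $\cO\!\bigl(\tfrac{1}{\eta C^{2}\beta_k^{\star}}\log\tfrac{\beta_k^{\star}}{C_1\alpha}\bigr)$ steps when $\gamma=\tfrac12$ (the ODE is linear: geometric growth) and within $\cO\!\bigl(\tfrac{1}{\eta C^{2}\beta_k^{\star}\alpha^{2\gamma-1}}\bigr)$ steps when $\tfrac12<\gamma\le1$ (the ODE is super-linear: the escape time scales like $\int_{C_1\alpha}b^{-2\gamma}\,db$, which is of order $\alpha^{1-2\gamma}$). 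In the contraction phase \ref{gd_dominance} gives $\norm{\nabla\beta_i(\vtheta_t)}^{2}\ge C^{2}(\tfrac12\beta_i^{\star})^{2\gamma}$, so \eqref{eq::scalar-recursion} becomes the affine recursion $|\beta_i(\vtheta_{t+1})-\beta_i^{\star}|\le\bigl(1-\eta C^{2}(\tfrac12\beta_i^{\star})^{2\gamma}\bigr)|\beta_i(\vtheta_t)-\beta_i^{\star}|+\cO(\eta^{2}L_HL_f^{2}L_g^{2})$, which closes the upper invariant $\beta_i\le2\beta_i^{\star}$ in (a) and drives $|\beta_i-\beta_i^{\star}|$ into a fixed-point ball of radius $\cO\!\bigl(\tfrac{\eta L_HL_f^{2}L_g^{2}}{C^{2}\beta_k^{\star 2\gamma}}\bigr)=o(\alpha/\sqrt d)$ within $\cO\!\bigl(\tfrac{1}{\eta C^{2}\beta_k^{\star 2\gamma}}\log\tfrac{\beta_k^{\star}}{\alpha}\bigr)$ further steps; a short computation shows this is of the same order (for $\gamma=\tfrac12$), or of lower order (for $\gamma>\tfrac12$), than the growth phase. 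Taking the slowest signal --- index $k$, since $\beta_k^{\star}$ is the smallest --- produces the stated $T$ in each regime.

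Putting the pieces together, at $t=T$ the signal analysis gives $\sum_{i\in\cS}\bigl(\beta_i(\vtheta_T)-\beta_i^{\star}\bigr)^{2}\le k\cdot o(\alpha^{2}/d)=o(\alpha^{2})$ while the residual bound gives $\sum_{i\in\cE}\beta_i(\vtheta_T)^{2}\lesssim\alpha^{2}$, whence $\norm{f_{\vtheta_T}-f_{\vtheta^{\star}}}_{L^2(\cD)}\lesssim\alpha$. I expect the main obstacle to be the growth phase: the second-order term $\cO(\eta^{2}L_HL_f^{2}L_g^{2})$ in \eqref{eq::scalar-recursion} is not sign-definite and does not decouple across coordinates, yet it must stay a strictly lower-order correction even when $\beta_i(\vtheta_t)=\Theta(\alpha)$ is tiny and the driving force $\norm{\nabla\beta_i(\vtheta_t)}^{2}\bigl(\beta_i^{\star}-\beta_i(\vtheta_t)\bigr)=\Theta(\alpha^{2\gamma}\beta_k^{\star})$ is correspondingly small --- this is exactly why the step-size carries the $\alpha^{2\gamma}\beta_k^{\star 2\gamma}$ and $\log^{-1}(\cdot)$ factors, and why invariants (a)--(c) and the phase lengths above must be established jointly in one induction rather than sequentially. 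A secondary technicality is confirming that the iterate never exits the region where Assumption~\ref{assumption::L-smooth} holds, which is read off from invariants (a)--(b).
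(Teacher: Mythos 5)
Your proposal is correct and follows essentially the same route as the paper's proof: decoupling the coefficient dynamics via gradient independence, a two-phase (growth then contraction) analysis of each signal with geometric growth for $\gamma=\tfrac12$ and dyadic substage summation for $\gamma>\tfrac12$, and controlling the residuals by letting the $\cO(\eta^2 L_H L_g^2 L_f^2)$ error accumulate over $T$ steps into an $\cO(\alpha/\sqrt{d})$ perturbation per coefficient. The only cosmetic difference is that you package the argument as a single joint induction with explicit invariants, whereas the paper runs the signal and residual bounds sequentially.
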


Theorem~\ref{thm::general} shows that, under certain conditions on the basis coefficients and their gradients, GD with constant step-size converges to a model that is at most $\alpha$-away from the true model. In particular, to achieve an $\epsilon$-accurate solution for any $\epsilon>0$, GD requires $\cO((1/\epsilon)\log(1/\epsilon))$ iterations for $\gamma=1/2$, and $\cO(1/\epsilon^{2\gamma})$ iterations for $1/2<\gamma\leq 1$ (ignoring the dependency on other problem-specific parameters). Due to its generality, our theorem inevitably relies on a small step-size and leads to a conservative convergence rate for GD. Later, we will show how our proposed approach can be tailored to specific learning problems to achieve better convergence rates in each setting. 

\begin{figure*}
	\centering
	{\hspace{-4mm}
	\subfloat[Small initialization]{
            {\includegraphics[width=0.35\linewidth]{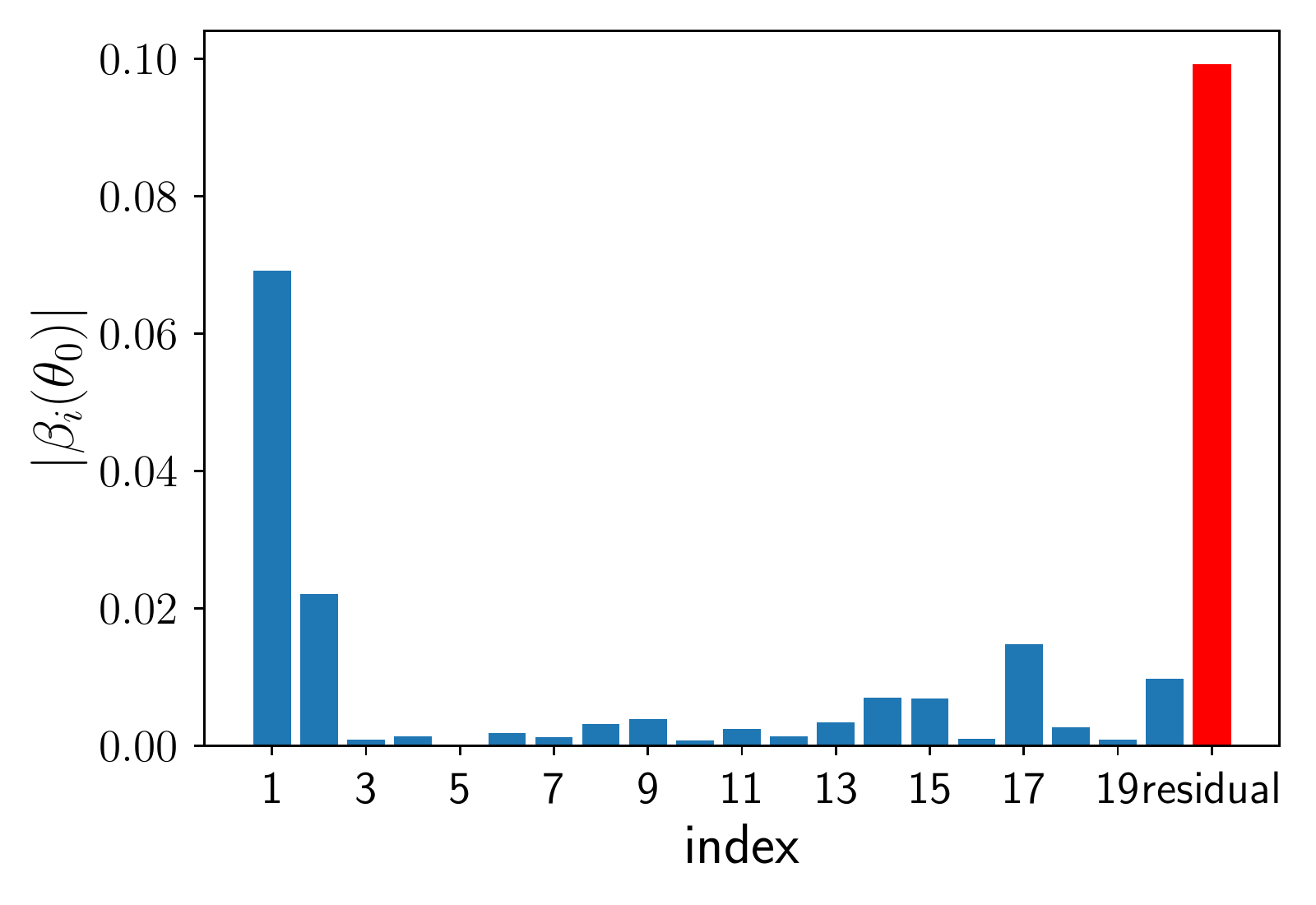}}\label{fig::small_init}}
            \hspace{-10mm}
		\subfloat[Gradient independence]{
            {\includegraphics[width=0.35\linewidth]{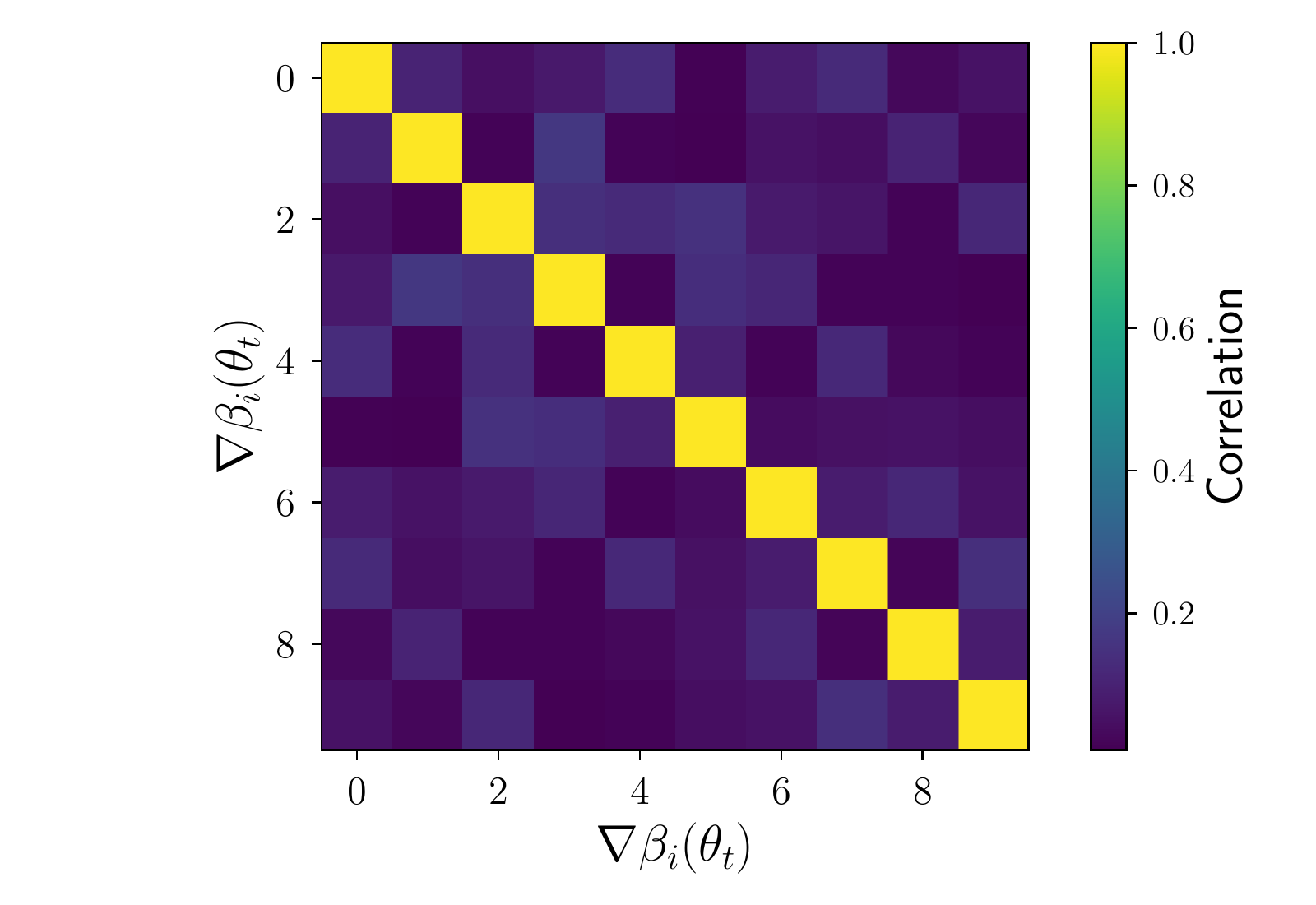}}\label{fig::grad_ind}}
            \hspace{-4mm}
        \subfloat[Gradient dominance]{
            {\includegraphics[width=0.35\linewidth]{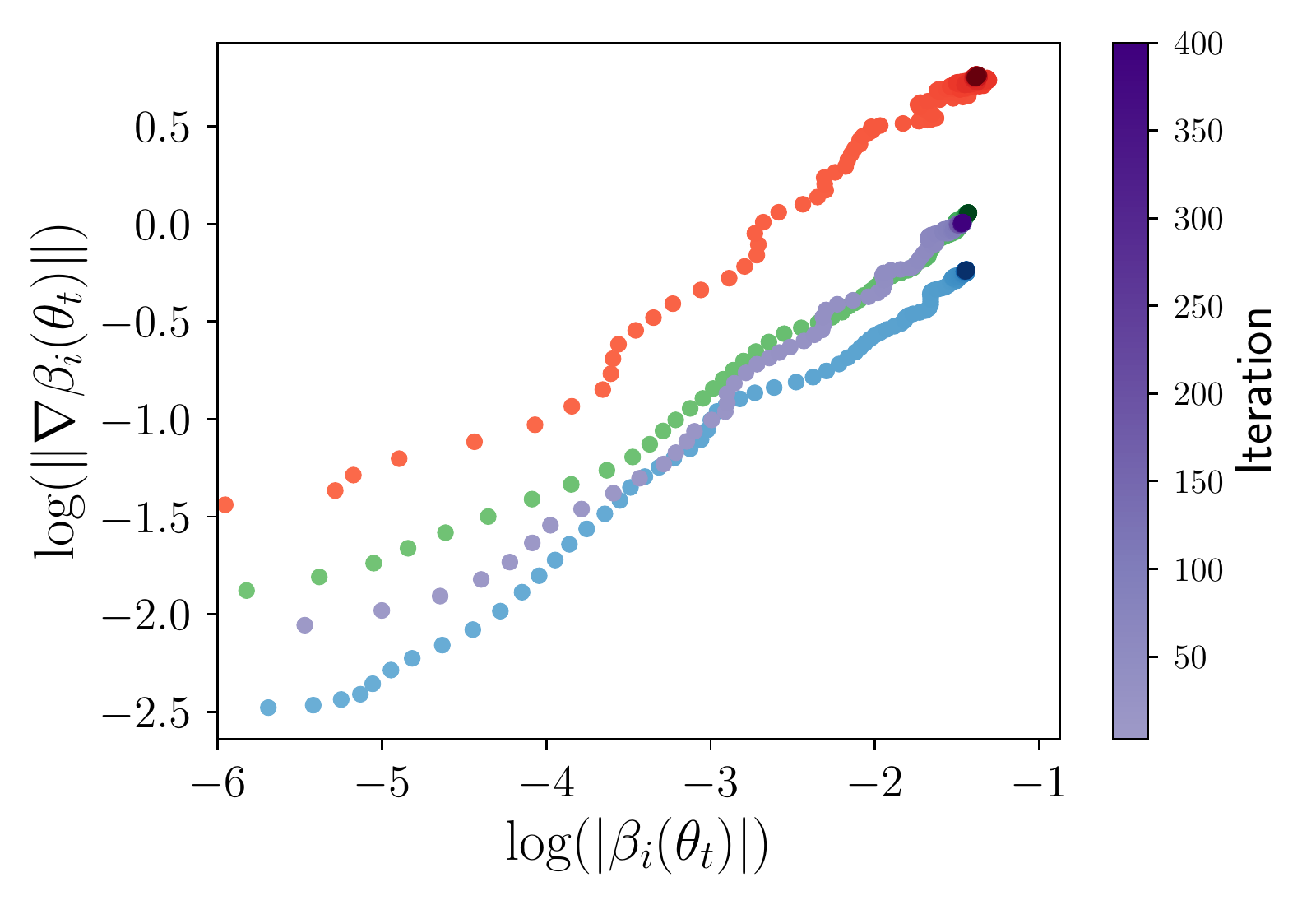}}\label{fig::grad_donimance}}
            }
	\caption{\footnotesize
		The conditions of Theorem~\ref{thm::general} are approximately satisfied for LARS on a 2-layer CNN with MNIST dataset. Here, the coefficients are obtained by projecting the solution trajectory onto the eigenvectors of the conjugate kernel after training. (a) The top-20 basis coefficients at a small random initial point along with residual energy on the remaining coefficients. (b) The maximum value of $|\cos(\nabla\beta_i(\vtheta),\nabla\beta_j(\vtheta))|$ for $1\leq i<j\leq 10$ along the solution trajectory. (c) The scaling of $\norm{\nabla\beta_i(\vtheta_t)}$ with respect to $|{\beta_i(\vtheta_t)}|$ for the top-4 coefficients.
		}
		\vspace{-1mm}
	\label{fig::condition}
\end{figure*}

\paragraph{\bf How realistic are the assumptions of Theorem~\ref{thm::general}?}
A natural question arises as to whether the conditions for Theorem~\ref{thm::general} are realistic.  We start with the conditions on the initial point. Intuitively, these assumptions entail that a non-negligible fraction of the energy is carried by the signal at the initial point. We note that these assumptions are mild and expected to hold in practice. For instance, We will show in Section~\ref{sec::application} that, depending on the learning task, they are guaranteed to hold with fixed, random, or spectral initialization.\footnote{If $\vtheta_0$ is selected from an isotropic Gaussian distribution, then $C_1$ and $C_2$ may scale with $k$ and $d$. However, to streamline the presentation, we keep this dependency implicit.}
We have also empirically verified that these conditions are satisfied for DNNs with random or default initialization. For instance,
Figure~\ref{fig::small_init} illustrates the top-$20$ basis coefficients at $\vtheta_0$ for LARS with random initialization on a realistic CNN. It can be seen that a non-negligible fraction of the energy at the initial point is carried by the first few coefficients.

The conditions on the coefficient gradients are indeed harder to satisfy; as will be shown later, the existence of an ideal orthonormal basis may not be guaranteed even for linear NNs. Nonetheless, we have empirically verified that, with an appropriate choice of the orthonormal basis, the gradients of the coefficients remain \textit{approximately independent} throughout the solution trajectory. Figure~\ref{fig::grad_ind} shows that, when the orthonormal basis is chosen as the eigenvectors of the conjugate kernel after training, the maximum value of $|\cos(\nabla\beta_i(\vtheta_t),\nabla\beta_j(\vtheta_t))|$ remains small throughout the solution trajectory. 
Finally, we turn to the~\ref{gd_dominance} condition. Intuitively, this condition entails that the gradient of each signal scales with its norm. We prove that this condition is guaranteed to hold for kernel regression, symmetric matrix factorization, and symmetric tensor decomposition. Moreover,  we have empirically verified that the gradient dominance holds across different DNN architectures.~Figure~\ref{fig::grad_donimance} shows that this condition is indeed satisfied for the top-4 basis coefficients of the solution trajectory (other signal coefficients behave similarly). We also note that our theoretical result on GD may not naturally extend to LARS. Nonetheless, our extensive simulations suggest that our proposed analysis can be extended to other stochastic and adaptive variants of GD (see Section~\ref{sec::NN} and Appendix~\ref{sec::additional-experiment}); a rigorous verification of this conjecture is left as future work.

\section{Applications}
\label{sec::application}
In this section, we show how our proposed basis function decomposition can be used to study the performance of GD in different learning tasks, from simple kernel regression to complex DNNs. We start with the classical kernel regression, for which GD is known to converge linearly~\citep{karimi2016linear}. Our purpose is to revisit GD through the lens of basis function decomposition, where there is a natural and simple choice for the basis functions. Next, we apply our approach to two important learning problems, namely symmetric matrix factorization and orthogonal symmetric tensor decomposition. In particular, we show how our proposed approach can be used to improve the convergence of GD for the symmetric matrix factorization and leads to a completely new convergence result for the orthogonal symmetric tensor decomposition. 
Finally, through extensive experiments, we showcase the promise of our proposed basis function decomposition on realistic DNNs. Our code is available at \texttt{\href{https://github.com/jianhaoma/function-basis-decomposition}{https://github.com/jianhaoma/function-basis-decomposition}}.

\subsection{Kernel Regression}
In kernel regression (KR), the goal is to fit a regression model $f_{\vtheta}(\vx)=\sum_{i=1}^{d} \evtheta_i\phi_i(\vx)$ from the function class $\cF_{\Theta} = \{f_{\vtheta}(\vx): \vtheta\in \bR^{d}\}$ to observation $y$, where $\{\phi_i(\vx)\}_{i=1}^d$ are some known kernel functions.  Examples of KR are linear regression, polynomial regression (including those described in Example~\ref{exp_poly}), and neural tangent kernel (NTK) \citep{jacot2018neural}. 
Without loss of generality, we may assume that the kernel functions $\{\phi_i(\vx)\}_{i=1}^d$ are orthonormal.\footnote{Suppose that $\{\phi_i(\vx)\}_{i=1}^d$ are not orthonormal. Let $\{\widetilde\phi_i(\vx)\}_{i\in\cI}$ be any orthonormal basis for $\cF_{\Theta}$. Then, there exists a matrix $A$ such that $\phi_i(\vx) = \sum_{j}A_{ij}\widetilde\phi_j(\vx)$ for every $1\leq i\leq d$. Therefore, upon defining $\widetilde{\vtheta} = \vtheta^\top A$, one can write $f_{\widetilde{\vtheta}^\star}(\vx)=\sum_{i\in\cI} \widetilde\evtheta_i^{\star}\widetilde\phi_i(\vx)$ which has the same form as the regression model.} Under this assumption, the basis coefficients can be defined as $\beta_i(\vtheta) = \theta_i$ and the expected loss can be written as 
\begin{equation}
    \cL(\vtheta_t)=\frac{1}{2}\bE\left[\left(f_{\vtheta_t}(\vx)-f_{\vtheta^{\star}}(\vx)\right)^2\right]=\frac{1}{2}\norm{\vtheta-\vtheta^{\star}}^2 = \frac{1}{2}\sum_{i=1}^d(\beta_i(\vtheta_t) - \theta_i^\star)^2. 
\end{equation}
Moreover, the coefficients satisfy the \ref{gd_indep} condition. Therefore,
an adaptation of Proposition~\ref{prop::1} reveals that the dynamics of the basis coefficients are independent of each other.

\begin{proposition}[dynamics of $\beta_i(\vtheta_t)$]
    \label{proposition::dynamic-kernel}
    Consider GD with a step-size that satisfies $0<\eta< 1$. Then,
    \begin{itemize}
        \item 
        for $i\in \cS$, we have $\beta_i(\vtheta_{t})=\beta_i^{\star}-(1-\eta)^t(\beta_i^{\star}-\beta_i(\vtheta_{0}))$,
        \item for $i\not\in\cS$, we have $\beta_i(\vtheta_{t})=(1-\eta)^t\beta_i(\vtheta_0).$
    \end{itemize}
\end{proposition}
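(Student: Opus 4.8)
The plan is to use the explicit dynamics from the kernel regression setup, where $\cL(\vtheta) = \tfrac{1}{2}\norm{\vtheta - \vtheta^\star}^2$, so that $\nabla\cL(\vtheta) = \vtheta - \vtheta^\star$ and the \ref{eq::GD} update becomes simply $\vtheta_{t+1} = \vtheta_t - \eta(\vtheta_t - \vtheta^\star)$. Since $\beta_i(\vtheta_t) = (\vtheta_t)_i$ and $\beta_i^\star = \theta_i^\star$, this update decouples completely across coordinates: $\beta_i(\vtheta_{t+1}) = \beta_i(\vtheta_t) - \eta(\beta_i(\vtheta_t) - \beta_i^\star)$ for every $i$.

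From here the argument is a one-line recursion. First I would rewrite the update in terms of the error $e_i^{(t)} := \beta_i(\vtheta_t) - \beta_i^\star$, obtaining $e_i^{(t+1)} = (1-\eta)\, e_i^{(t)}$, and unroll it to get $e_i^{(t)} = (1-\eta)^t\, e_i^{(0)}$, i.e. $\beta_i(\vtheta_t) - \beta_i^\star = (1-\eta)^t(\beta_i(\vtheta_0) - \beta_i^\star)$. Rearranging gives $\beta_i(\vtheta_t) = \beta_i^\star - (1-\eta)^t(\beta_i^\star - \beta_i(\vtheta_0))$, which is exactly the first bullet for $i\in\cS$. For $i\notin\cS$ we have $\beta_i^\star = 0$ by the convention $\beta_i^\star = 0$ for $i>k$, so the same formula collapses to $\beta_i(\vtheta_t) = (1-\eta)^t\beta_i(\vtheta_0)$, giving the second bullet. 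The condition $0<\eta<1$ is only needed to guarantee $(1-\eta)^t \to 0$ (geometric decay of the error); it is not used to derive the closed form itself.

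There is essentially no obstacle here — the proposition is a direct corollary of the fact that the expected $\ell_2$-loss for kernel regression is a separable quadratic, so GD reduces to independent one-dimensional linear recursions. The only thing worth stating carefully is the reduction: that $\bE[(f_{\vtheta_t}(\vx) - f_{\vtheta^\star}(\vx))^2] = \norm{\vtheta - \vtheta^\star}^2$ by orthonormality of $\{\phi_i\}$, which has already been established in the displayed equation preceding the proposition, and that the \ref{gd_indep} condition holds trivially because $\nabla\beta_i(\vtheta) = \ve_i$ are mutually orthogonal standard basis vectors. With these two facts in hand, the recursion and its solution are immediate.
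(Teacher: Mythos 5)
Your proposal is correct and follows essentially the same route as the paper's proof: write the GD update as $\vtheta_{t+1}=\vtheta_t-\eta(\vtheta_t-\vtheta^\star)$, observe that it decouples coordinatewise since $\beta_i(\vtheta)=\theta_i$, and unroll the resulting linear recursion for the error $\beta_i^\star-\beta_i(\vtheta_t)$, with the residual case following from $\beta_i^\star=0$. No gaps.
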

Without loss of generality, we assume that $0<\theta^\star_k\leq\dots\leq \theta^\star_1\leq 1$ and $\norm{\vtheta_0}_\infty\leq \alpha$. Then, given Proposition~\ref{proposition::dynamic-kernel}, we have $|\beta_i(\vtheta_t)|\leq 2+\alpha$ for every $1\leq i\leq k$. Therefore, the \ref{gd_dominance} is satisfied with parameters $(C,\gamma) = (1/\sqrt{2+\alpha},1/2)$. Since both \ref{gd_indep} and \ref{gd_dominance} are satisfied, the convergence of GD can be established with an appropriate initial point.
\begin{theorem}
    \label{thm::kernel}
    Suppose that $\vtheta_0 = \alpha \vone$, where $\alpha\lesssim {k|\theta_k^{\star}|}/{d}$. Then, within $T\lesssim({1}/{\eta})\log\left({k|\theta_1^{\star}|}/{\alpha}\right)$ iterations, GD with step-size $0<\eta<1$ satisfies $\norm{\vtheta_T-\vtheta^{\star}}\lesssim\alpha$.
\end{theorem}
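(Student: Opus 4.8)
The plan is to invoke Proposition~\ref{proposition::dynamic-kernel}, which gives closed-form expressions for each coefficient along the GD trajectory, and then control the error $\norm{\vtheta_T - \vtheta^\star}^2 = \sum_{i=1}^d (\beta_i(\vtheta_T) - \theta_i^\star)^2$ by splitting into signal indices $i \in \cS = \{1,\dots,k\}$ and residual indices $i \notin \cS$. For the signal coordinates, Proposition~\ref{proposition::dynamic-kernel} yields $\beta_i(\vtheta_T) - \theta_i^\star = -(1-\eta)^T(\beta_i^\star - \beta_i(\vtheta_0))$, so $|\beta_i(\vtheta_T) - \theta_i^\star| \leq (1-\eta)^T |\theta_i^\star - \alpha| \leq (1-\eta)^T |\theta_1^\star|$ using $\vtheta_0 = \alpha\vone$ and $0 < \alpha \leq \theta_i^\star \leq 1$. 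For the residual coordinates, $\theta_i^\star = 0$ and $\beta_i(\vtheta_T) = (1-\eta)^T \alpha$, so these contribute at most $(d-k)(1-\eta)^{2T}\alpha^2$ to the squared error.

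Next I would choose $T$ to drive the signal error below the target. Since $(1-\eta)^T \leq e^{-\eta T}$, picking $T \geq (1/\eta)\log(k|\theta_1^\star|/\alpha)$ — which matches the claimed iteration count up to constants — gives $(1-\eta)^T \leq \alpha/(k|\theta_1^\star|)$. Then the signal contribution to $\norm{\vtheta_T - \vtheta^\star}^2$ is at most $k \cdot (1-\eta)^{2T} |\theta_1^\star|^2 \leq k \cdot \alpha^2/(k^2|\theta_1^\star|^2)\cdot |\theta_1^\star|^2 = \alpha^2/k \leq \alpha^2$. For the residual part, one uses the hypothesis $\alpha \lesssim k|\theta_k^\star|/d$: since $(1-\eta)^T \leq 1$ trivially, the residual contribution is at most $(d-k)\alpha^2 \leq d\alpha^2$, which is not yet small enough by itself, so instead I would also use the decay factor — with the same $T$, $(1-\eta)^{2T}\alpha^2 \leq \alpha^2 \cdot \alpha^2/(k|\theta_1^\star|)^2$, giving residual contribution $\lesssim d\alpha^4/(k^2|\theta_1^\star|^2)$, and the condition $\alpha \lesssim k|\theta_k^\star|/d \leq k|\theta_1^\star|/d$ then forces $d\alpha^2/(k|\theta_1^\star|)^2 \lesssim 1$ so this term is $\lesssim \alpha^2$. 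Combining, $\norm{\vtheta_T - \vtheta^\star} \lesssim \alpha$.

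The main subtlety — not really an obstacle, but the one place to be careful — is bookkeeping the interplay between the two hypotheses $\alpha \lesssim k|\theta_k^\star|/d$ and the step count $T \lesssim (1/\eta)\log(k|\theta_1^\star|/\alpha)$: one must verify that a single choice of $T$ simultaneously makes both the signal and residual contributions $O(\alpha^2)$, and that the logarithm is well-defined and positive (which follows from $\alpha \lesssim k|\theta_k^\star|/d \leq k|\theta_1^\star|$). There is also a minor consistency check that $\alpha \lesssim k|\theta_k^\star|/d$ is compatible with the generic requirement $\alpha \lesssim \beta_k^\star = \theta_k^\star$ from Theorem~\ref{thm::general}, which holds since $k/d \leq 1$. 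No smoothness constants enter here because the kernel-regression loss is exactly quadratic, so the $\cO(\eta^2)$ remainder in Proposition~\ref{prop::1} vanishes and the dynamics in Proposition~\ref{proposition::dynamic-kernel} are exact; this is why the step-size restriction is merely $0 < \eta < 1$ rather than the conservative bound in Theorem~\ref{thm::general}.
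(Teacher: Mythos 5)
Your proposal is correct and follows essentially the same route as the paper: invoke Proposition~\ref{proposition::dynamic-kernel} for the exact per-coordinate dynamics, split the error into signal ($i\le k$) and residual ($i>k$) coordinates, choose $T\asymp(1/\eta)\log(k|\theta_1^\star|/\alpha)$ to kill the signal error, and use $\alpha\lesssim k|\theta_k^\star|/d$ (equivalently $\log d\lesssim\log(k|\theta_1^\star|/\alpha)$) so that the same $T$ also suppresses the residual mass. The only difference is cosmetic bookkeeping — the paper targets $\alpha/\sqrt{k}$ and $\alpha/\sqrt{d}$ per coordinate while you bound the residual sum directly via the decay factor — and your closing remarks about the exactness of the quadratic dynamics and the compatibility of the initialization conditions are accurate.
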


Theorem~\ref{thm::kernel} reveals that GD with large step-size and small initial point converges linearly to an $\epsilon$-accurate solution, provided that the initialization scale is chosen as  $\alpha = \epsilon$. This is indeed better than our result on the convergence of GD for general models in Theorem~\ref{thm::general}.

\subsection{Symmetric Matrix Factorization}\label{subsec::mat_factor}

In symmetric matrix factorization (SMF), the goal is to learn a model $f_{\mU^\star}(\rmX) = \langle {\mU^\star}{\mU^\star}^\top, \rmX\rangle$ with a low-rank matrix $\mU^\star\in\bR^{d\times r}$, where we assume that each element of $X\sim \cD$ is iid with $\bE[X_{ij}] = 0$ and $\bE[X_{ij}^2] = 1$. Examples of SMF are matrix sensing~\citep{li2018algorithmic} and completion~\citep{ge2016matrix}. Given the eigenvectors $\{\vz_1,\dots,\vz_d\}$ of ${\mU^\star}{\mU^\star}^\top$ and a function class $\cF_{\Theta} = \{f_{\mU}(\rmX): \mU\in\mathbb{R}^{d\times r'}\}$ with $r'\geq r$, it was shown in Example~\ref{exp_mat_factor} that the functions $\phi_{ij}(\rmX) = \langle \vz_i\vz_j^\top, \rmX\rangle$ define a valid orthogonal basis for $\cF_{\Theta}$ with coefficients $\beta_{ij}(\mU)=\inner{\vz_i\vz_j^{\top}}{\mU\mU^{\top}}$. Therefore, 
we have
\begin{equation}
    \cL(\mU_t)\!=\!\frac{1}{4}\bE\left[\left(f_{\mU_t}(\rmX)\!-\!f_{\mU^\star}(\rmX)\right)^2\right] \!=\! \frac{1}{4}\bE\left[\left(\inner{\mU_t\mU_t^{\top}\!-\!\mU^\star{\mU^\star}^{\top}\!}{\rmX}\right)^2\right] \!\!=\! \frac{1}{4}\!\sum_{i,j=1}^{r'}(\beta_{ij}(\mU_t)-\beta^\star_{ij})^2.\nonumber
\end{equation}
Here, the true basis coefficients are defined as $\beta^\star_{ii} = \sigma_i$ for $i\leq r$, and $\beta^\star_{ij} = 0$ otherwise. Moreover, one can write $\norm{\nabla \beta_{ii}(\mU)}_F = 2\norm{\vz_i\vz_i^{\top}\mU}_F = 2\sqrt{\beta_{ii}(\mU)}$. Therefore, \ref{gd_dominance} holds with parameters $(C,\gamma) = (2,1/2)$. However, \ref{gd_indep} \textit{does not} hold for this choice of function basis: given any pair $(i,j)$ and $(i,k)$ with $j\not=k$, we have $\langle\nabla\beta_{ij}(\mU), \nabla\beta_{ik}(\mU)\rangle =\langle \vz_j\vz_k^\top, \mU\mU^\top\rangle$ which may not be zero. Despite the absence of~\ref{gd_indep}, our next proposition characterizes the dynamic of $\beta_{ij}(\mU_t)$ via a finer control over the coefficient gradients.

\begin{proposition}\label{prop::incremental}
    Suppose that $\gamma:=\min_{1\leq i\leq r}\{\sigma_i-\sigma_{i+1}\}>0$. 
    Let $\mU_0 = \alpha \mB$, where the entries of $\mB$ are independently drawn from a standard normal distribution and 
    $$\alpha\lesssim \min\left\{(\eta\sigma_r^2)^{\Omega(\sigma_1/\gamma)},(\sigma_r/d)^{\Omega(\sigma_1/\gamma)}, (\kappa\log^2(d))^{-\Omega(1/(\eta\sigma_r))}\right\}.$$
     Suppose that the step-size for GD satisfies $\eta\lesssim 1/\sigma_1$. Then, with probability of at least $1-\exp(-\Omega(r'))$:
    \begin{itemize}
        \item For $1\leq i\leq r$, we have $0.99\sigma_i\leq\beta_{ii}(\mU_t)\leq \sigma_i$ within $\cO\left(({1}/({\eta\sigma_i}))\log\left({\sigma_i}/{\alpha}\right)\right)$ iterations.
        \item For $t\geq 0$ and $i\not=j$ or $i,j>r$, we have $|\beta_{ij}(\mU_t)|\lesssim\poly(\alpha)$.
    \end{itemize}
\end{proposition}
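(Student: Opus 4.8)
The plan is to pass to the eigenbasis of $\mM^\star:=\mU^\star{\mU^\star}^\top$, where GD becomes the classical low-rank matrix factorization flow with a \emph{diagonal} target, and then run an incremental (``saddle-to-saddle'') induction that tracks the entire matrix of basis coefficients $\mN_t:=\big[\beta_{ij}(\mU_t)\big]_{1\le i,j\le d}=\mZ^\top\mU_t\mU_t^\top\mZ$, where $\mZ=[\vz_1,\dots,\vz_d]$.

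First I would record a closed recursion for $\mN_t$. Since $\nabla\cL(\mU)=(\mU\mU^\top-\mM^\star)\mU$ and $\mM^\star=\mZ\mSigma\mZ^\top$ with $\mSigma=\diag(\sigma_1,\dots,\sigma_d)$, the rotated iterate $\mW_t:=\mZ^\top\mU_t$ obeys $\mW_{t+1}=\mW_t-\eta(\mW_t\mW_t^\top-\mSigma)\mW_t$, so $\mN_t=\mW_t\mW_t^\top$ satisfies $\mN_{t+1}=(\mathbf{I}-\eta(\mN_t-\mSigma))\mN_t(\mathbf{I}-\eta(\mN_t-\mSigma))$. Writing $\beta_{ij,t}:=\beta_{ij}(\mU_t)$, this reads entrywise $\beta_{ij,t+1}=(1+\eta\sigma_i+\eta\sigma_j)\beta_{ij,t}-2\eta\sum_{k}\beta_{ik,t}\beta_{kj,t}+\eta^2R_{ij,t}$, with an $\eta^2$-remainder quadratic in the entries of $\mN_t$; this is Proposition~\ref{prop::1} specialized, using the observation (recorded before the statement) that each $\inner{\nabla\beta_{ij}(\mU_t)}{\nabla\beta_{kl}(\mU_t)}$ is itself a linear combination of basis coefficients, so that ``gradient independence'' fails only by the current magnitude of the off-diagonal coefficients. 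Isolating the terms $k\in\{i,j\}$ exposes, for $i\ne j$, the multiplier $1+\eta\sigma_i+\eta\sigma_j-2\eta\beta_{ii,t}-2\eta\beta_{jj,t}$ acting on $\beta_{ij,t}$, and for $i=j$ the scalar map $x\mapsto x\,(1+2\eta(\sigma_i-x))$ acting on $\beta_{ii,t}$ (plus the manifestly nonnegative drag $2\eta\sum_{k\ne i}\beta_{ik,t}^2$). Finally, by rotational invariance of the Gaussian, $\mW_0=\alpha\mB'$ with $\mB'$ having i.i.d.\ standard normal entries, so on an event of probability at least $1-\exp(-\Omega(r'))$ one has $\beta_{ii,0}=\norm{\vw_{0,i}}^2\ge\Omega(\alpha^2 r')$ for all $i$, while every entry of $\mN_0$ has magnitude at most $\cO(\alpha^2 r'\log d)$.

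The core is an induction over epochs $\ell=0,1,\dots,r$, with epoch $\ell$ beginning at a time $T_\ell=\cO\!\big(\tfrac{1}{\eta\sigma_\ell}\log(\sigma_\ell/\alpha)\big)$ and maintaining: (i) $0.99\sigma_i\le\beta_{ii,t}\le\sigma_i$ for all $i\le\ell$; (ii) for $\ell<i\le r$, $\beta_{ii,t}$ is still below saturation and grows at rate $\approx1+2\eta\sigma_i$; (iii) $|\beta_{ij,t}|\le\poly(\alpha)$ for all $i\ne j$ and all $\min(i,j)>r$, with a bound degrading only polynomially from epoch to epoch; and (iv) $\norm{\mN_t}\lesssim\sigma_1$. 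The signal bounds (i)--(ii) follow from the scalar map above: for $\eta\lesssim1/\sigma_1$, $x\mapsto x(1+2\eta(\sigma_i-x))$ is increasing on $[0,\sigma_i]$ with fixed point $\sigma_i$, the nonnegative drag only reinforces the upper bound, and escaping the saddle from level $\Omega(\alpha^2)$ to $0.99\sigma_i$ takes $\cO\!\big(\tfrac{1}{\eta\sigma_i}\log(\sigma_i/\alpha)\big)$ steps. For the off-diagonal/residual bound (iii) the decisive point is the size of the multiplier (take $i$ to have the larger eigenvalue): while $\beta_{ii,t}$ is unsaturated the multiplier is at most $1+\eta(\sigma_i+\sigma_j)$, strictly below the rate $1+2\eta\sigma_i$ of $\beta_{ii,t}$, so by the time $\beta_{ii,t}$ reaches $\sigma_i$ one has $|\beta_{ij,t}|\lesssim\alpha^{\,1-\sigma_j/\sigma_i}\,C_{ij}\le\alpha^{\gamma/\sigma_1}\,C_{ij}$ for a fixed $C_{ij}$ polynomial in $\sigma_i,r',\log d$ --- which the hypothesis on $\alpha$ forces to be $\poly(\alpha)$, and this is exactly where the $\Omega(\sigma_1/\gamma)$ exponent enters; once $\beta_{ii,t}$ saturates the multiplier drops to $1-\eta(\sigma_i-\sigma_j)<1$ (to $1-\eta(\sigma_i+\sigma_j)$ after both saturate, and to $\le1$ in the purely residual case $i,j>r$, where in addition the residual rows $\vw_{t,i}$ are contracted by $\mathbf{I}-\eta\mW_t^\top\mW_t$ using (iv)), so $\beta_{ij,t}$ only decreases afterwards; the source term $-2\eta\sum_{k\ne i,j}\beta_{ik,t}\beta_{kj,t}$ and the $\eta^2$-remainder are $\poly(\alpha)$ by the inductive hypothesis and perturb this only at lower order.

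The main obstacle is the joint bookkeeping: every multiplier above is a function of the very coefficients being controlled, so (i)--(iv) must be propagated simultaneously over the whole horizon $T=\cO\!\big(\tfrac{1}{\eta\sigma_r}\log(1/\alpha)\big)$ and through all $r$ epochs, while ensuring the per-epoch loss in the $\poly(\alpha)$ bounds of (iii) never overtakes the fixed polynomial factors ($\sqrt{\sigma_i}$, $r'$, $\log d$, $\kappa$) that appear; this is precisely what pins down the three-way bound on $\alpha$ --- the $(\eta\sigma_r^2)^{\Omega(\sigma_1/\gamma)}$ and $(\sigma_r/d)^{\Omega(\sigma_1/\gamma)}$ factors come from the eigengap mechanism, while $(\kappa\log^2 d)^{-\Omega(1/(\eta\sigma_r))}$ comes from keeping (i) valid over the long tail of iterations --- and what forces the step-size constraint $\eta\lesssim1/\sigma_1$ needed both for the scalar map in (i) and for the contraction in (iv).
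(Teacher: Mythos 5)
Your overall architecture is the same as the paper's: rotate into the eigenbasis of $\mM^\star$, write the exact one-step recursion for the coefficient matrix, establish the initialization bounds by Gaussian concentration, and then run an epoch-by-epoch induction in which each $\beta_{ii}$ follows a logistic map with rate $1+2\eta\sigma_i$ while the off-diagonals grow at the strictly slower rate $1+\eta(\sigma_i+\sigma_j)$, so that the eigengap forces $|\beta_{ij}|\lesssim \alpha^{\Theta(\gamma/\sigma_1)}$ by the time the faster signal saturates. This is exactly the paper's Substage~1 analysis (warm-up phase plus the two logistic-map lemmas).

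However, there is a genuine gap at the start of every later epoch, and it is precisely the point the paper spends most of its Substage~2 argument on. At the beginning of epoch $2$, the crude eigengap bounds give $|\beta_{12}|\asymp \alpha^{(\sigma_1-\sigma_2)/\sigma_1}$ and $\beta_{22}\asymp \alpha^{2(\sigma_1-\sigma_2)/\sigma_1}$ (up to $r',\log d,\sigma_1$ factors), so $\beta_{12}^2\asymp\beta_{22}$. The drag term $-2\eta\sum_{k\neq 2}\beta_{2k}^2$ in the recursion for $\beta_{22}$ is therefore of the \emph{same order} as the growth term $2\eta\sigma_2\beta_{22}$ --- it is not a lower-order perturbation, and it threatens to cancel the growth entirely. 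Your plan asserts that the source terms are ``$\poly(\alpha)$ by the inductive hypothesis and perturb this only at lower order,'' but $\beta_{22}$ is itself $\poly(\alpha)$ at that moment; what is needed is the multiplicative statement $\beta_{12}^2\ll\sigma_2\beta_{22}$, which does not follow from the separation-of-logistic-maps bound alone. The paper closes this by tracking the ratio $\omega(t)=\beta_{12}^2(\mU_t)/\beta_{22}(\mU_t)$, showing it can increase only while $\beta_{11}(\mU_t)\leq\sigma_1/2$, and then bounding its maximum by a telescoping product that extracts a contraction factor $(1-\Omega(\eta^2\sigma_1\sigma_2))$ per step relative to $\beta_{11}$; this is where the condition $\alpha\lesssim(\kappa\log^2 d)^{-\Omega(1/(\eta\sigma_r))}$ is actually consumed (you attribute it instead to keeping the saturated signals stable over the long horizon, which is not its role). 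Without this ratio argument, claim~(ii) of your induction --- that the unsaturated signals keep growing at rate $\approx 1+2\eta\sigma_i$ --- cannot be propagated past the first epoch.
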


Proposition~\ref{prop::incremental} shows that GD with small random initialization learns larger eigenvalues before the smaller ones, which is commonly referred to as \textit{incremental learning}. Incremental learning for SMF has been recently studied for gradient flow~\citep{arora2019implicit, li2020towards}, as well as GD with identical initialization for the special case $r'=d$~\citep{chou2020gradient}.
 To the best of our knowledge, Proposition~\ref{prop::incremental} is the first result that provides a full characterization of the incremental learning phenomenon for GD with random initialization on SMF.

\begin{theorem}
    \label{theorem::matrix}
    Suppose that the conditions of Proposition~\ref{prop::incremental} are satisfied. Then, with probability of at least $1-\exp({-\Omega(r')})$ and within $T\lesssim ({1}/{(\eta\sigma_r)})\log\left({\sigma_r}/{\alpha}\right)$ iterations, GD satisfies
    \begin{equation}
        \norm{\mU_T\mU_T^{\top}-\mM^{\star}}_F\lesssim r'\log(d)d\alpha^2.
        \label{eq::14}
    \end{equation}
\end{theorem}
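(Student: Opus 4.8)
The plan is to derive the bound on $\norm{\mU_T\mU_T^\top - \mM^\star}_F$ directly from the coefficient estimates provided by Proposition~\ref{prop::incremental}, using the orthonormal basis decomposition already established for SMF. Recall that $\{\vz_i\vz_j^\top\}_{1\le i,j\le d}$ form an orthonormal basis (with respect to the Frobenius inner product after the expectation over $\rmX$), so any symmetric matrix $\mN$ decomposes as $\mN = \sum_{i,j}\inner{\vz_i\vz_j^\top}{\mN}\,\vz_i\vz_j^\top$ and $\norm{\mN}_F^2 = \sum_{i,j}\inner{\vz_i\vz_j^\top}{\mN}^2$. Applying this to $\mN = \mU_T\mU_T^\top - \mM^\star$, and noting that $\inner{\vz_i\vz_j^\top}{\mU_T\mU_T^\top} = \beta_{ij}(\mU_T)$ while $\inner{\vz_i\vz_j^\top}{\mM^\star} = \beta_{ij}^\star = \sigma_i \mathbbm{1}[i=j\le r]$, we get
\begin{equation}
\norm{\mU_T\mU_T^\top - \mM^\star}_F^2 = \sum_{i=1}^{r}\bigl(\beta_{ii}(\mU_T) - \sigma_i\bigr)^2 + \sum_{(i,j):\,i\neq j\text{ or }i,j>r}\beta_{ij}(\mU_T)^2. \nonumber
\end{equation}

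Next I would bound each group of terms using Proposition~\ref{prop::incremental}. For the time horizon $T \lesssim (1/(\eta\sigma_r))\log(\sigma_r/\alpha)$, the proposition guarantees that for every $1\le i\le r$ the signal coefficient has entered the interval $0.99\sigma_i \le \beta_{ii}(\mU_T) \le \sigma_i$ (since $T$ dominates each individual $\cO((1/(\eta\sigma_i))\log(\sigma_i/\alpha))$, as $\sigma_i \ge \sigma_r$, and once a signal reaches its steady state it stays there — this monotone/stay-close behavior is part of what the proposition's analysis must already provide). Hence $|\beta_{ii}(\mU_T) - \sigma_i| \le 0.01\sigma_i$ — however, this only gives a constant-relative-error bound, which is weaker than the $\poly(\alpha)$ scaling in~\eqref{eq::14}. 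The off-diagonal and high-index terms are controlled by the second bullet: $|\beta_{ij}(\mU_T)| \lesssim \poly(\alpha)$ for all $i\neq j$ or $i,j>r$, and there are at most $(r')^2$ such terms, contributing at most $(r')^2\poly(\alpha)$ to the sum.

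The main obstacle — and the step I would need to think hardest about — is sharpening the diagonal signal estimate from "$0.99\sigma_i \le \beta_{ii} \le \sigma_i$" to "$|\beta_{ii}(\mU_T) - \sigma_i| \lesssim \poly(\alpha)$", since the $0.99$ constant in Proposition~\ref{prop::incremental} is too crude to yield~\eqref{eq::14} as stated. The resolution is to run GD slightly longer (or to observe that the proposition's proof actually yields exponential convergence of $\beta_{ii}$ to $\sigma_i$ once it enters a neighborhood): in the regime where~\ref{gd_dominance} holds with $\gamma=1/2$, the local dynamics of $\beta_{ii}$ around $\sigma_i$ are essentially $\beta_{ii}(\mU_{t+1}) - \sigma_i \approx (1 - c\eta)(\beta_{ii}(\mU_t) - \sigma_i) + \cO(\eta\cdot\text{(cross terms)})$, so after $T \gtrsim (1/(\eta\sigma_r))\log(\sigma_r/\alpha)$ steps the error contracts to $\cO(\poly(\alpha))$ plus a floor set by the off-diagonal energy, which is itself $\poly(\alpha)$. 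Collecting the pieces, $\norm{\mU_T\mU_T^\top - \mM^\star}_F^2 \lesssim (r')^2\poly(\alpha)$, and after taking square roots and tracking the precise polynomial degree and the $\log(d)$ and dimension factors through the cross-term bounds, one obtains $\norm{\mU_T\mU_T^\top - \mM^\star}_F \lesssim r'\log(d)\,d\,\alpha^2$. The probability $1 - \exp(-\Omega(r'))$ is inherited verbatim from Proposition~\ref{prop::incremental}, since no additional randomness is introduced in this final step.
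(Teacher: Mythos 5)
Your overall strategy is the same as the paper's: expand $\norm{\mU_T\mU_T^\top-\mM^\star}_F^2=\sum_{i,j}(\beta_{ij}(\mU_T)-\beta_{ij}^\star)^2$, and you correctly spot that the first bullet of Proposition~\ref{prop::incremental} only gives $|\beta_{ii}(\mU_T)-\sigma_i|\le 0.01\sigma_i$, which is far too weak; your proposed fix (once the residuals are small, the signal error contracts geometrically down to a floor set by the squared residual energy) is exactly what the paper does, with contraction factor $(1-1.98\eta\sigma_k)$ and floor $\cO(\eta d r'^2\log^2(d)\alpha^4)$, yielding $|\sigma_k-\beta_{kk}|=\cO(dr'^2\log^2(d)\alpha^4/\sigma_k)$.

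There is, however, a genuine gap in how you handle the off-diagonal terms. You invoke the second bullet of Proposition~\ref{prop::incremental} as giving $|\beta_{ij}(\mU_T)|\lesssim\poly(\alpha)$ and treat the remaining work as ``tracking the precise polynomial degree.'' But the bound actually available at the end of the incremental-learning phase, for residuals $\beta_{ij}$ with $i\le r$ or $j\le r$ and $(i,j)$ not a signal index, is only $|\beta_{ij}(\mU_t)|\lesssim \sigma_1 r'\log(d)\,\alpha^{(2\sigma_1-\sigma_i-\sigma_j)/(2\sigma_1)}$ (the paper's \Eqref{eq::beta_ij} in Appendix, from the two-logistic-map separation lemma). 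The exponent $(2\sigma_1-\sigma_i-\sigma_j)/(2\sigma_1)$ can be arbitrarily close to $0$ when the spectrum is nearly flat, so this is \emph{not} $\cO(\alpha^2)$ and cannot be made so by bookkeeping. The paper closes this with a separate dynamical argument: once every $\beta_{kk}$ has reached $[0.99\sigma_k,\sigma_k]$, the maximal such off-diagonal term satisfies $\beta_{\max}(\mU_{t+1})\le(1-0.9\eta\sigma_r)\beta_{\max}(\mU_t)$ (because the signal terms now dominate the $\sigma_i+\sigma_j-2\beta_{ii}-2\beta_{jj}$ prefactor and make it negative), so an additional $\cO((1/(\eta\sigma_r))\log(1/\alpha))$ iterations bring these residuals down to $r'\log(d)\alpha^2$; only then does the signal-refinement contraction you describe produce a floor of order $\alpha^4$ rather than a fractional power of $\alpha$. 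Without this extra decay phase, both the off-diagonal contribution to the Frobenius norm and the floor in your signal contraction would dominate $\alpha^2$, and \eqref{eq::14} would not follow. (The residuals with $i,j>r$ are fine: they stay at $\cO(r'\log(d)\alpha^2)$ from initialization, as in the paper's Case~II.)
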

It has been shown in~\cite[Thereom~3.3]{stoger2021small} that GD with small random initialization satisfies $\norm{\mU_T\mU_T^{\top}-\mM^{\star}}_F\lesssim \left(d^2/{r'}^{15/16}\right)\alpha^{21/16}$ within the same number of iterations. Theorem~\ref{theorem::matrix} improves the dependency of the final error on the initialization scale $\alpha$.

\subsection{Orthogonal Symmetric Tensor Decomposition}
We use our approach to provide a new convergence guarantee for GD on the orthogonal symmetric tensor decomposition (OSTD). In OSTD, the goal is to learn $f_{\mU^\star}(\tX)=\inner{\tT_{\mU^\star}}{\tX}$, where $\mU^\star = [\vu_1^\star,\dots, \vu_r^\star]\in \bR^{d\times r}$ and  $\tT_{\mU^\star}=\sum_{i=1}^{r} {\vu_i^\star}^{\otimes l}= \sum_{i=1}^{d}\sigma_i {\vz_i}^{\otimes l}$ is a symmetric tensor with order $l$ and rank $r$. Here, $\sigma_1\geq\dots\geq \sigma_d$ are tensor eigenvalues with $\sigma_{r+1} = \dots = \sigma_d = 0$, and $\vz_1,\dots, \vz_d$ are the corresponding tensor eigenvectors. The notation $\vu^{\otimes l}$ refers to the $l$-time outer product of $\vu$. We assume that $\tX\sim\cD$ is an $l$-order tensor whose elements are iid with zero mean and unit variance. Examples of OSTD are tensor regression \citep{tong2022scaling} and completion \citep{liu2012tensor}. 

When the rank of $\tT_{\mU^\star}$ is unknown, it must be overestimated. Even when the rank is known, its overestimation can improve the convergence of gradient-based algorithms~\citep{wang2020beyond}. This leads to an \textit{overparameterized} model $f_{\mU}(\tX)=\inner{\tT_{\mU}}{\tX}$, where $\tT_{\mU}=\sum_{i=1}^{r'} {\vu_i}^{\otimes l}$ with an overestimated rank $r'\geq r$. Accordingly, the function class is defined as $\cF_{\Theta}=\left\{f_{\mU}(\rmX):\mU=[\vu_1,\cdots,\vu_{r'}]\in\bR^{d\times r'}\right\}$. Upon defining a multi-index $\Lambda=(j_1,\cdots,j_l)$, the functions $\phi_{\Lambda}(\tX)=\inner{\otimes_{k=1}^{l}\vz_{j_k}}{\tX}$ for $1\leq j_1,\dots, j_{l}\leq d$ form an orthonormal basis for $\cF_{\Theta}$ with basis coefficients defined as
\begin{align*}
    \beta_{\Lambda}(\mU) = \bE\left[\inner{\tT_{\mU}}{\tX}\inner{\otimes_{k=1}^{l}\vz_{j_k}}{\tX}\right] = \sum_{i=1}^{r'}\inner{\vu_i^{\otimes l}}{\otimes_{k=1}^{l} \vz_{j_k}} = \sum_{i=1}^{r'}\prod_{k=1}^{l}\inner{\vu_i}{\vz_{j_k}},
\end{align*}
and the expected loss can be written as
\begin{equation}
    \cL(\mU)=\frac{1}{2}\bE\left[\left(f_{\mU}(\tX)-f_{\mU^\star}(\tX)\right)^2\right]=\frac{1}{2}\norm{\sum_{i=1}^{r'}\vu_i^{\otimes l}-\sum_{i=1}^{r}\sigma_i \vz_i^{\otimes l}}_F^2=\frac{1}{2}\sum_{\Lambda}\left(\beta_{\Lambda}(\mU)-\beta_{\Lambda}^{\star}\right)^2,\nonumber
\end{equation}
where the true basis coefficients are $\beta_{\Lambda_i}^{\star}=\sigma_i$ for $\Lambda_i = (i,\dots, i), 1\leq i\leq r$, and $\beta_{\Lambda}^{\star}=0$ otherwise. 
Unlike KR and SMF, neither \ref{gd_indep} nor \ref{gd_dominance} are satisfied for OSTD with a random or equal initialization. However, we show that these conditions are \textit{approximately} satisfied throughout the solution trajectory, provided that the initial point is nearly aligned with the eigenvectors $\vz_1,\dots,\vz_{r'}$; in other words, $\cos(\vu_i(0),\vz_i)\approx 1$ for every $1\leq i\leq r'$.\footnote{We use the notations $\vu_t$ or $\vu(t)$ interchangeably to denote the solution at iteration $t$.} Assuming that the initial point satisfies this alignment condition, we show that the \textit{entire} solution trajectory remains aligned with these eigenvectors, i.e.,  $\cos(\vu_i(t),\vz_i)\approx 1$ for every $1\leq i\leq r'$ and $1\leq t\leq T$. Using this key result, we show that both \ref{gd_indep} and \ref{gd_dominance} are \textit{approximately} satisfied throughout the solution trajectory. We briefly explain the intuition behind our approach for \ref{gd_dominance} and defer
our rigorous analysis for \ref{gd_indep} to the appendix. Note that if $\cos(\vu_i(t),\vz_i)\approx 1$, then $\beta_{\Lambda_i}(\mU_t) \approx \langle \vu_i(t),\vz_i\rangle^l$ and $\norm{\nabla \beta_{\Lambda_i}(\mU_t)}_F\approx \norm{\nabla_{\vu_i}\beta_{\Lambda_i}(\mU_t)}\approx l\inner{\vu_i(t)}{\vz}^{l-1}$. Therefore,~\ref{gd_dominance} holds with parameters $(C,\gamma) = (l,(l-1)/l)$. We will make this intuition rigorous in Appendix~\ref{appendix::tensor}.

\begin{proposition}\label{prop_tensor_incremental}
    Suppose that the initial point $\mU_0$ is chosen such that $\norm{\vu_i(0)}=\alpha^{1/l}$ and $\cos(\vu_i(0),\vz_i)\geq \sqrt{1-\gamma}$, for all $1\leq i\leq r'$, where $\alpha\lesssim d^{-l^3}$ and $\gamma\lesssim{(l\kappa)}^{-{l}/{(l-2)}}$. Then, GD with step-size $\eta\lesssim 1/(l\sigma_1)$ satisfies:
    \begin{itemize}
        \item For $1\!\leq\! i\!\leq\! r$, we have $0.99\sigma_i\!\leq\! \beta_{\Lambda_i}(\mU_t)\!\leq\! 1.01\sigma_i$ within $\cO\left(({1}/({\eta l\sigma_r}))\alpha^{-
    \frac{l-2}{l}}\right)$ iterations.
        \item For $t\geq 0$ and $\Lambda\not=\Lambda_i$, we have $|\beta_{\Lambda}(\mU_t)|=\poly(\alpha)$.
    \end{itemize}
\end{proposition}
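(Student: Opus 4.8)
The plan is to follow the trajectory of $\mU_t$ in the eigenbasis $\{\vz_1,\dots,\vz_d\}$ of $\tT_{\mU^\star}$. Writing $a_{ij}(t)=\inner{\vu_i(t)}{\vz_j}$ and splitting $\vu_i(t)=\rho_i(t)\vz_i+\vw_i(t)$ with $\rho_i(t)=a_{ii}(t)$ and $\vw_i(t)\perp\vz_i$, the loss becomes an explicit polynomial in the $a_{ij}$'s, so \ref{eq::GD} induces exact coordinate updates; concretely, $\nabla_{\vu_i}\cL=l\bigl(\sum_j\inner{\vu_j}{\vu_i}^{l-1}\vu_j-\sum_{m=1}^r\sigma_m a_{im}^{l-1}\vz_m\bigr)$. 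The whole argument is then a single coupled induction over $0\le t\le T$ carrying: (i) $\cos(\vu_i(t),\vz_i)\ge 1-\cO(\gamma)$ for all $1\le i\le r'$; (ii) $\norm{\vw_i(t)}=\poly(\alpha)$; (iii) $|\beta_\Lambda(\mU_t)|=\poly(\alpha)$ for every $\Lambda\notin\{\Lambda_1,\dots,\Lambda_r\}$; (iv) $0\le\beta_{\Lambda_i}(\mU_t)\le1.01\sigma_i$ for $i\le r$, with the scalar dynamics below. Invariants (i) and (ii) are precisely what make~\ref{gd_indep} and~\ref{gd_dominance} hold \emph{approximately} along the trajectory, so this is the hands-on version of Theorem~\ref{thm::general}, which cannot be invoked directly because those conditions hold here only approximately.

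The key step is to isolate the dominant term of each gradient under invariant~(i). For $i\le r$ the residual coefficient strongly coupled to $\vu_i$ is $\beta_{\Lambda_i}-\sigma_i$, whose gradient contribution $l\rho_i^{l-1}\vz_i$ is aligned with $\vz_i$; all remaining contributions to $\nabla_{\vu_i}\cL$ are $\poly(\alpha)$-small, either because the residual coefficient is off-diagonal (hence $\poly(\alpha)$ by~(iii)) or because the relevant component of the gradient carries a $\vw_i$-factor. Using $\beta_{\Lambda_i}(\mU_t)=\sum_{i'}a_{i'i}(t)^l=\rho_i(t)^l\pm\poly(\alpha)$, this yields the almost-decoupled updates
\begin{align*}
\rho_i(t+1) &= \rho_i(t)+\eta l\,\rho_i(t)^{l-1}\bigl(\sigma_i-\rho_i(t)^l\bigr)\pm\poly(\alpha),\\
\vw_i(t+1) &= \bigl(1-\eta l\,\norm{\vu_i(t)}^{2(l-1)}\bigr)\vw_i(t)+\poly(\alpha),
\end{align*}
for $i\le r$, with $\sigma_i$ replaced by $0$ when $r<i\le r'$ (so $\rho_i$ then decays). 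Since $\rho_i$ grows at multiplicative rate $\Theta(\eta l\sigma_i\rho_i^{l-2})$ while the contraction of $\norm{\vw_i}$ runs at the much smaller rate $\Theta(\eta l\rho_i^{2(l-1)})$, the ratio $\norm{\vw_i(t)}/\rho_i(t)$ stays $\cO(\sqrt\gamma)$, which propagates~(i); the bound $\eta\lesssim1/(l\sigma_1)$ forbids overshoot above $1.01\sigma_i$, propagating~(iv); and since each off-diagonal $\beta_\Lambda$ is a sum of products each containing at least one $\vw$-factor, (iii) follows once its accumulated drift over $t\le T$ is shown to remain $\poly(\alpha)$.

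To read off the iteration count, analyze the scalar recursion for $\beta_{\Lambda_i}\approx\rho_i^l$: while $\beta_{\Lambda_i}\ll\sigma_i$ it is $\beta_{\Lambda_i}(\mU_{t+1})\approx\beta_{\Lambda_i}(\mU_t)\bigl(1+\eta l^2\sigma_i\,\beta_{\Lambda_i}(\mU_t)^{(l-2)/l}\bigr)$, and substituting $g(t)=\beta_{\Lambda_i}(\mU_t)^{-(l-2)/l}$ linearizes this to $g(t+1)\approx g(t)-\eta l(l-2)\sigma_i$. Hence $\beta_{\Lambda_i}$ reaches a constant fraction of $\sigma_i$ after $\cO\bigl((\eta l\sigma_i)^{-1}\alpha^{-(l-2)/l}\bigr)$ iterations, and a further $\cO((\eta l\sigma_i)^{-1})$ iterations of the now-logistic recursion pin it into $[0.99\sigma_i,1.01\sigma_i]$. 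As these times scale like $1/\sigma_i$, larger eigenvalues are fitted first (incremental learning), and the slowest coordinate $i=r$ gives the claimed horizon $\cO\bigl((\eta l\sigma_r)^{-1}\alpha^{-(l-2)/l}\bigr)$, proving the first bullet; the second bullet is invariant~(iii) together with the decay of $\rho_i$ for $r<i\le r'$.

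The main obstacle is the bookkeeping of this coupled induction over a trajectory that is \emph{polynomially} (not logarithmically, as in the matrix case $l=2$) long in $1/\alpha$: one must show the per-step $\poly(\alpha)$ errors — produced by the multilinear mixing among the $r'$ factors and by the $\vw_i$'s — never accumulate to something comparable with $\norm{\vw_i}$ or with the residual coefficients before saturation. The saving grace is that the iteration budget is front-loaded while $\rho_i$ is tiny, so sums such as $\sum_t\eta l\sigma_i\rho_i(t)^{2l-3}$ are dominated by the few late iterations and stay $\poly(\alpha)$; extracting the exact exponents of $\alpha$ and $\gamma$ (hence the thresholds $\alpha\lesssim d^{-l^3}$ and $\gamma\lesssim(l\kappa)^{-l/(l-2)}$) is where the real effort lies. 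The full argument, including the rigorous verification that~\ref{gd_indep} and~\ref{gd_dominance} hold approximately along the trajectory, is deferred to Appendix~\ref{appendix::tensor}.
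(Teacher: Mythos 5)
Your plan follows essentially the same route as the paper's proof in Appendix~\ref{appendix::tensor}: pass to the coordinates $v_{ij}=\langle\vu_i,\vz_j\rangle$ in the eigenbasis, split them into signal, diagonal-residual, and off-diagonal-residual terms, derive the one-step dynamics, and run a coupled induction showing the off-diagonal mass stays at its initial $\poly(\alpha)$ scale while each $\rho_i^{l}$ climbs from $\alpha$ to $\sigma_i$ in $\cO\bigl(\alpha^{-(l-2)/l}/(\eta l\sigma_i)\bigr)$ front-loaded steps (your linearization via $g=\beta_{\Lambda_i}^{-(l-2)/l}$ is equivalent to the paper's doubling-subinterval count). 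One small caution: the dominant danger for the off-diagonal components is not an additive $\poly(\alpha)$ drift but the multiplicative growth term $+\eta l\sigma_j a_{ij}^{l-1}$, which the paper controls by showing the doubling time of $\max_{i\neq j}|v_{ij}|$, of order $\alpha^{-(l-2)/l}/(\eta l^2\sigma_1\gamma^{l-2})$, exceeds the entire horizon precisely because $\kappa\gamma^{l-2}\ll 1$ --- this, rather than front-loading, is where the threshold on $\gamma$ enters.
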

Proposition~\ref{prop_tensor_incremental} shows that, similar to SMF, GD learns the tensor eigenvalues incrementally.
However, unlike SMF, we require a specific alignment for the initial point. We note that such initial point can be obtained in a pre-processing step via tensor power method within a number of iterations that is almost independent of $d$~\cite[Theorem 1]{anandkumar2017analyzing}. We believe that Proposition~\ref{prop_tensor_incremental} can be extended to random initialization; we leave the rigorous verification of this conjecture to future work. Equipped with this proposition, we next establish the convergence of GD on OSTD.
\begin{theorem}
    \label{thm::tensor}
    Suppose that the conditions of Proposition~\ref{prop_tensor_incremental} are satisfied.
    Then, within $T\lesssim ({1}/({\eta l\sigma_r}))\alpha^{-
    ({l-2})/{l}}$ iterations, GD satisfies
    \begin{equation}
        \begin{aligned}
            \norm{\tT_{\mU_T}-\tT_{\mU^\star}}_F^2\lesssim r d^l\gamma \sigma_1^{\frac{l-1}{l}}\alpha^{\frac{1}{l}}.
        \end{aligned}
    \end{equation}
\end{theorem}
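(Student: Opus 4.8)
The plan is to reduce the tensor reconstruction error to the basis coefficients and then feed in Proposition~\ref{prop_tensor_incremental}. Since $\{\phi_{\Lambda}\}$ is orthonormal, Parseval gives
\begin{align*}
\norm{\tT_{\mU_T}-\tT_{\mU^\star}}_F^2 &= 2\cL(\mU_T) = \sum_{\Lambda}\bigl(\beta_{\Lambda}(\mU_T)-\beta_{\Lambda}^{\star}\bigr)^2 \\
&= \sum_{i=1}^{r}\bigl(\beta_{\Lambda_i}(\mU_T)-\sigma_i\bigr)^2 \;+\; \sum_{\Lambda\notin\{\Lambda_1,\dots,\Lambda_r\}}\beta_{\Lambda}(\mU_T)^2 ,
\end{align*}
so it suffices to bound the \emph{signal error} (the first sum, over the $r$ diagonal multi-indices) and the \emph{residual error} (the second sum, over at most $d^l$ multi-indices). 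Both are controlled through the central fact behind Proposition~\ref{prop_tensor_incremental}, proved in Appendix~\ref{appendix::tensor}: the alignment $\cos(\vu_i(t),\vz_i)\approx 1$ persists along the \emph{entire} trajectory $0\leq t\leq T$. This keeps the off-eigenvector components $\inner{\vu_i(t)}{\vz_j}$ ($j\neq i$) at their $\cO(\sqrt{\gamma}\,\alpha^{1/l})$ initialization scale and the spurious columns $\vu_i(t)$ ($i>r$) at size $\poly(\alpha)$, so that $\beta_{\Lambda_i}(\mU_t)\approx\inner{\vu_i(t)}{\vz_i}^l$, $\norm{\nabla\beta_{\Lambda_i}(\mU_t)}\approx l\,\inner{\vu_i(t)}{\vz_i}^{l-1}$ (hence \ref{gd_dominance} holds approximately with exponent $(l-1)/l$), and $\inner{\nabla\beta_{\Lambda_i}(\mU_t)}{\nabla\beta_{\Lambda_j}(\mU_t)}\approx 0$ for $i\neq j$ (hence \ref{gd_indep} holds approximately).

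For the signal error, I would invoke the first bullet of Proposition~\ref{prop_tensor_incremental}: each $\beta_{\Lambda_i}$ enters the band $[0.99\sigma_i,1.01\sigma_i]$ within $\cO\bigl((1/(\eta l\sigma_r))\alpha^{-(l-2)/l}\bigr)$ iterations and, on choosing the constant in $T$ slightly larger than in the proposition, stays there for $\Theta(T)$ further iterations. On that band the recursion of Proposition~\ref{prop::1} reads $\beta_{\Lambda_i}(t+1)=\beta_{\Lambda_i}(t)-\eta(\beta_{\Lambda_i}(t)-\sigma_i)\norm{\nabla\beta_{\Lambda_i}(t)}^2+\text{(cross-talk)}\pm\cO(\eta^2 L_H L_f^2 L_g^2)$, where the cross-talk $\eta\sum_{\Lambda'\neq\Lambda_i}(\beta_{\Lambda'}-\beta_{\Lambda'}^{\star})\inner{\nabla\beta_{\Lambda_i}}{\nabla\beta_{\Lambda'}}$ is small by the approximate \ref{gd_indep}; since $\norm{\nabla\beta_{\Lambda_i}(t)}^2\asymp l^2\sigma_i^{2(l-1)/l}$ on the band, the leading part contracts $|\beta_{\Lambda_i}(t)-\sigma_i|$ at rate $1-\Theta(\eta l^2\sigma_i^{2(l-1)/l})$, so over $\Theta(T)$ iterations the error is driven down to the floor set by the cross-talk and the $\cO(\eta^2)$ discretization term. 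I would then bound that floor using $\eta\lesssim 1/(l\sigma_1)$, the bounds on $\norm{\tT_{\mU}}_F,\norm{\nabla\tT_{\mU}}_F,\norm{\nabla^2\tT_{\mU}}_F$ near $\mU^\star$, and the $\cO(\sqrt{\gamma}\,\alpha^{1/l})$ size of the residual coefficients appearing in the cross-talk, which makes the signal error of the same order as (or smaller than) the residual error.

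Turning to the residual error, by the second bullet of Proposition~\ref{prop_tensor_incremental} each term is $\poly(\alpha)$; making the exponent explicit via the alignment fact, the dominant contributions come from multi-indices $\Lambda$ with a single off-eigenvector entry, each of size $\lesssim\sqrt{\gamma}\,\sigma_1^{(l-1)/l}\alpha^{1/l}$ (every additional off-eigenvector entry costs a further factor $\cO(\sqrt{\gamma}\,\alpha^{1/l})$ and is therefore lower order in $\alpha$). Summing the squares over the at most $d^l$ multi-indices and adding the signal contribution of the previous step yields $\norm{\tT_{\mU_T}-\tT_{\mU^\star}}_F^2\lesssim r d^l\gamma\,\sigma_1^{(l-1)/l}\alpha^{1/l}$, while the iteration count $T\lesssim(1/(\eta l\sigma_r))\alpha^{-(l-2)/l}$ is inherited directly from Proposition~\ref{prop_tensor_incremental}.

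The hard part will be the alignment invariant itself, on which everything above rests, because for OSTD neither \ref{gd_indep} nor \ref{gd_dominance} holds exactly. I would prove it by induction on $t$, maintaining that for all $1\leq i\leq r'$ and all $s\leq t$: $\vu_i(s)$ is $\cO(\sqrt{\gamma})$-close in direction to $\vz_i$, the incremental ordering $\beta_{\Lambda_1}(\mU_s)\geq\beta_{\Lambda_2}(\mU_s)\geq\cdots$ holds, and the spurious columns are $\poly(\alpha)$. A GD step preserves this because the signal component $\inner{\vu_i}{\vz_i}$ grows while each off-eigenvector component obeys, to leading order, $\inner{\vu_i(t+1)}{\vz_j}\approx\bigl(1-\Theta(\eta l\norm{\vu_i(t)}^{2(l-1)})\bigr)\inner{\vu_i(t)}{\vz_j}$ and therefore contracts, and the spurious columns receive no net signal. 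With the invariant in hand, the coefficient recursion of Proposition~\ref{prop::1} becomes an $\cO(\sqrt{\gamma})$-perturbation of the decoupled idealized recursion analyzed in the second convergence regime of Theorem~\ref{thm::general} (gradient-dominance exponent $(l-1)/l$), and the rate and final error follow by carrying that perturbation through the same argument. I expect the main difficulty to be controlling the nonlinearity $\beta_{\Lambda_i}=\inner{\vu_i}{\vz_i}^l(1+o(1))$ and the cross terms simultaneously during the long, slow ``escape'' phase in which $\norm{\vu_i(t)}$ is still of order $\alpha^{1/l}$.
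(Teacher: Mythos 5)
Your skeleton coincides with the paper's: Parseval over the orthonormal basis $\{\phi_\Lambda\}$, a split into the $r$ diagonal (signal) coefficients and the remaining $\le d^l$ residual coefficients, and everything resting on the invariant that the columns $\vu_i(t)$ stay aligned with $\vz_i$ so that $V(t)=\max_{i\neq j}|\inner{\vu_i(t)}{\vz_j}|$ stays at its initialization scale $\approx\alpha^{1/l}\gamma$ for all $t\le T$. The paper's Lemma~\ref{prop:4:beta dynamic} is exactly your conversion from the $v_{ij}$'s to the $\beta_\Lambda$'s, and the final summation is the same bookkeeping you describe (the off-diagonal residual block dominates and produces the $rd^l\gamma\sigma_1^{(l-1)/l}\alpha^{1/l}$ term).

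There is, however, one concrete misstep in your proposed proof of the invariant itself, which you correctly identify as the load-bearing part. You claim each off-eigenvector component contracts, $\inner{\vu_i(t+1)}{\vz_j}\approx\bigl(1-\Theta(\eta l\norm{\vu_i(t)}^{2(l-1)})\bigr)\inner{\vu_i(t)}{\vz_j}$. That is not what the exact dynamics give: by the paper's Lemma~\ref{lemma-tensor update formula}, the leading correction to $v_{ij}$ is $+\eta l(\sigma_j-v_{jj}^l)v_{ij}^{l-1}$, which during most of the trajectory ($v_{jj}^l\ll\sigma_j$, $j\le r$) is a \emph{growth} term. The off-diagonal components do not contract; they grow, but only as $V(t+1)\le V(t)+3\eta l\sigma_1V(t)^{l-1}$ (Proposition~\ref{prop:5:vij dynamic}), and the actual argument (Proposition~\ref{prop:6:vij time}) is that the doubling time of this recursion, $\gtrsim\frac{1}{\eta l^2\sigma_1\gamma^{l-2}}\alpha^{-(l-2)/l}$, exceeds the convergence time $t^\star\lesssim\frac{1}{\eta l\sigma_r}\alpha^{-(l-2)/l}$ precisely because $\gamma\lesssim(l\kappa)^{-l/(l-2)}$. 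An induction built on contraction would not close; the condition on $\gamma$ is what rescues the invariant, and your outline never uses it for this purpose. Separately, your treatment of the signal error routes through the generic recursion of Proposition~\ref{prop::1} with approximate gradient independence/dominance and a ``noise floor''; the paper avoids this entirely and instead analyzes the scalar dynamics of $v_{ii}(t)$ directly (two-phase logistic-type argument), obtaining the sharper signal error $\cO(d^{l-1}\sigma_i^{(l-2)/l}\alpha^{2/l}\gamma^2)$ without having to instantiate the smoothness constants $L_f,L_g,L_H$ for the tensor model. Your route is workable in principle but would require quantifying the cross-talk and the $\cO(\eta^2)$ discretization floor carefully enough to fall below the residual contribution, which is exactly the bookkeeping the paper's direct approach sidesteps.
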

Theorem~\ref{thm::tensor} shows that,  with appropriate choices of $\eta$ and $\alpha$, GD converges to a solution that satisfies $\norm{\tT_{\mU_T}-\tT_{\mU^\star}}_F^2\leq \epsilon$ within $\cO(d^{l(l-2)}/\epsilon^{l-2})$ iterations. To the best of our knowledge, this is the first result establishing the convergence of GD with a large step-size on OSTD. 

\subsection{Empirical Verification on Neural Networks}
\label{sec::NN}
In this section, we numerically show that the conjugate kernel after training (A-CK) can be used as a valid orthogonal basis for DNNs to capture the monotonicity of the solution trajectory of different optimizers on image classification tasks. To ensure consistency with our general framework, we use $\ell_2$-loss, which is shown to have a comparable performance with the commonly-used cross-entropy loss~ \citet{hui2020evaluation}. In Appendix~\ref{sec::additional-experiment}, we extend our simulations to cross-entropy loss.

\begin{figure*}
    \begin{centering}
        \subfloat[2-block CNN]{
            {\includegraphics[width=0.33\linewidth]{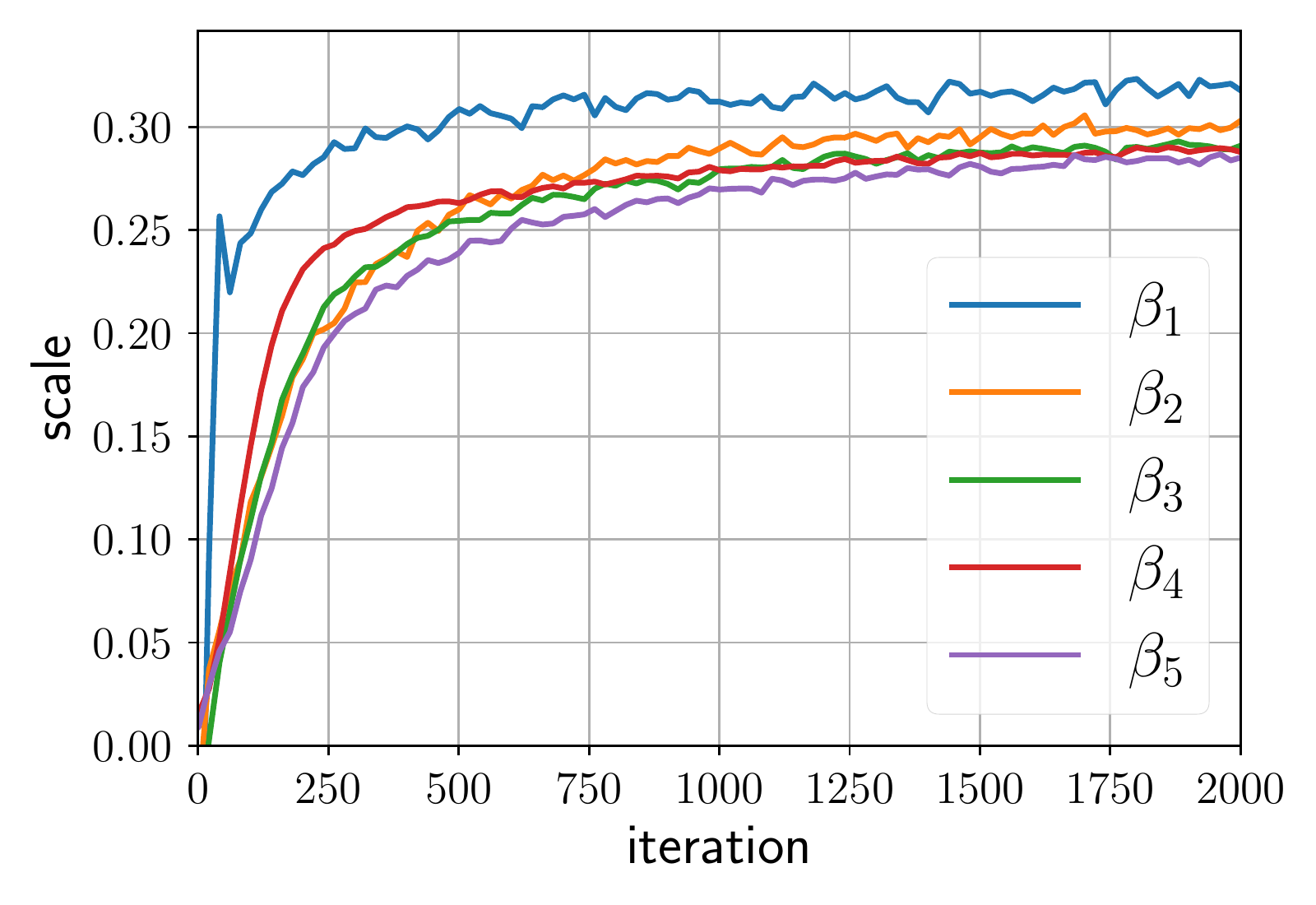}}\label{fig::cnn-mnist-2}}
        \subfloat[3-block CNN]{
            {\includegraphics[width=0.33\linewidth]{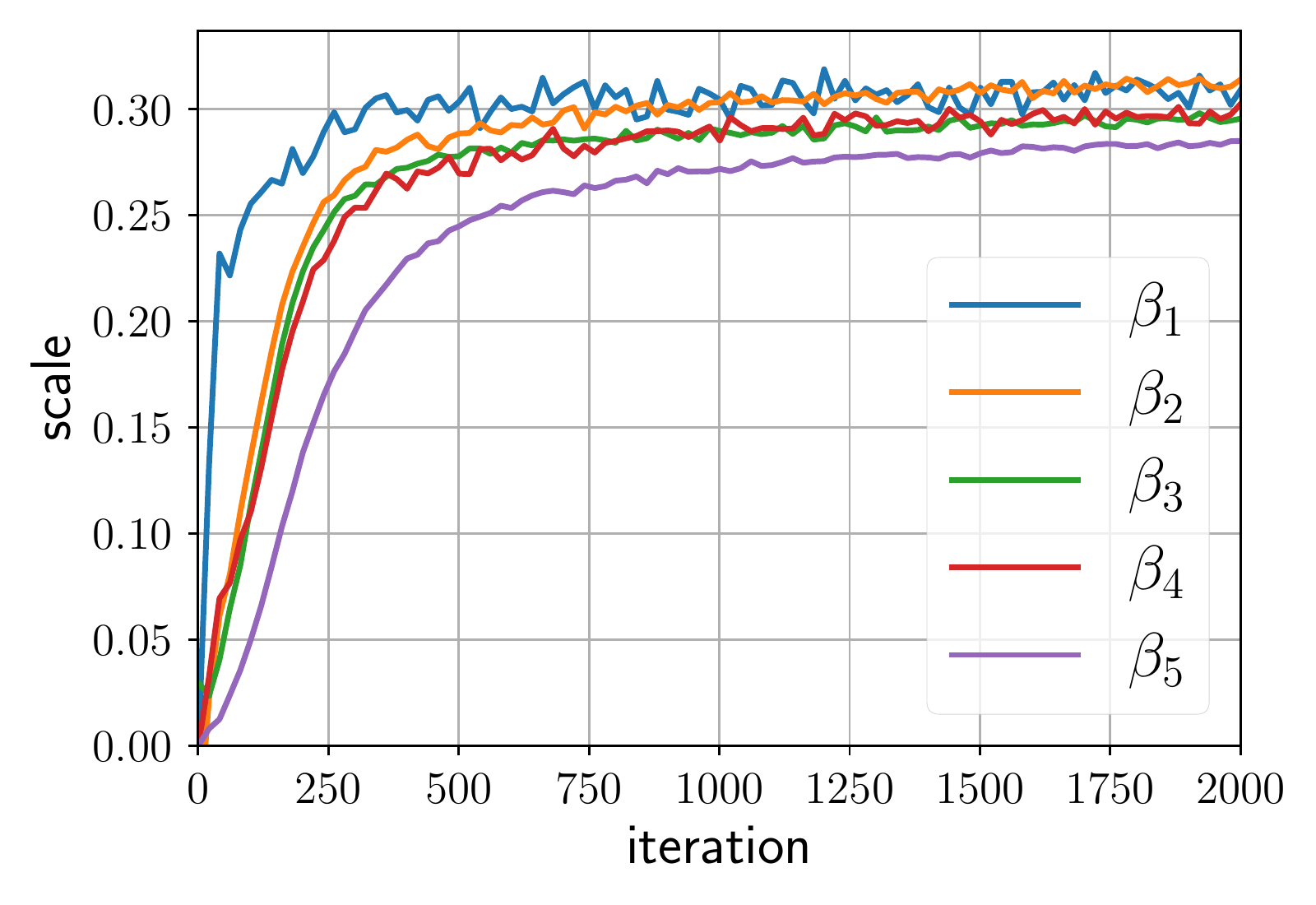}}\label{fig::cnn-mnist-3}}
        \subfloat[4-block CNN]{
            {\includegraphics[width=0.33\linewidth]{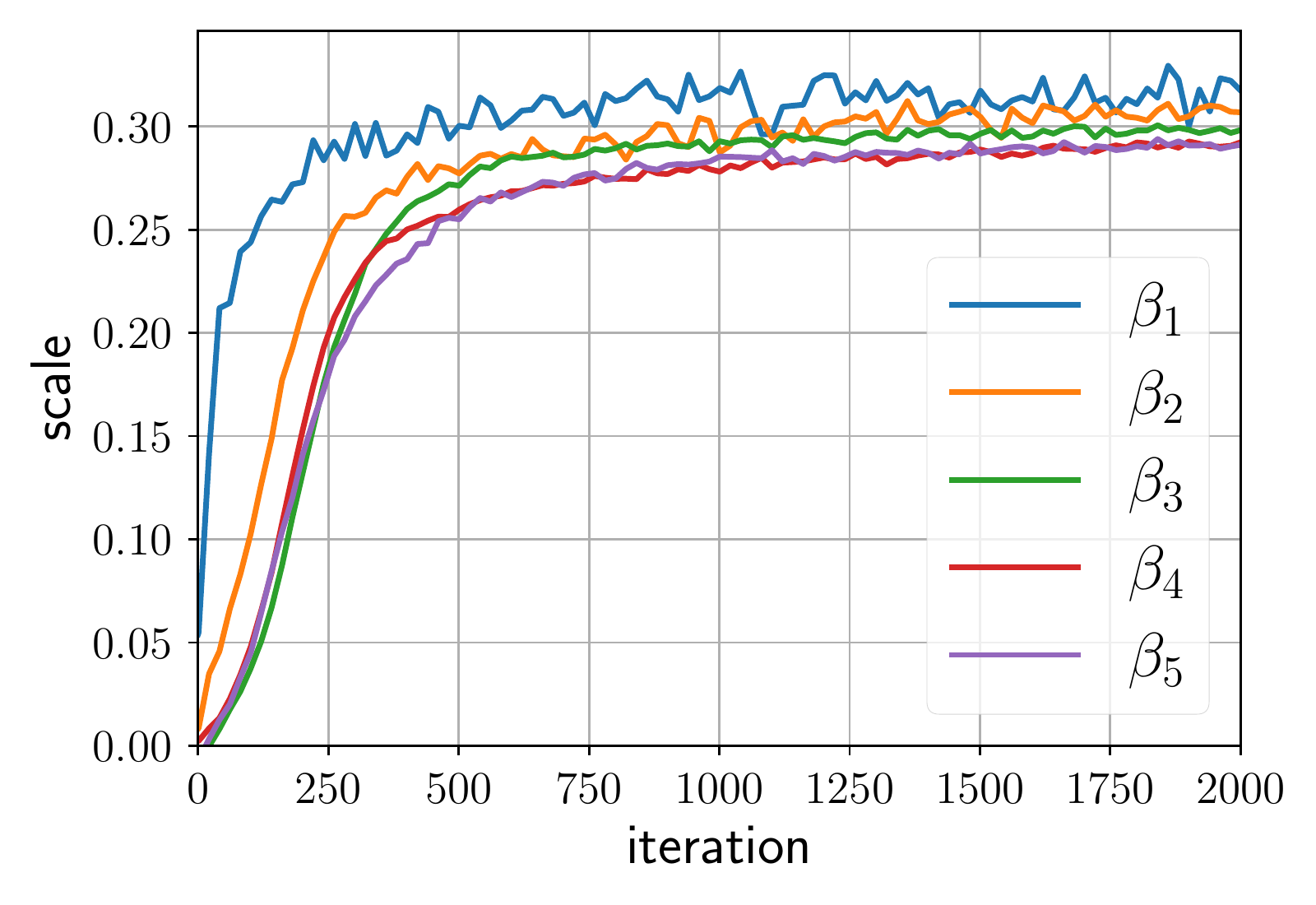}}\label{fig::cnn-mnist-4}}\\
            \subfloat[LARS]{
            {\includegraphics[width=0.33\linewidth]{figure/alexnet-cifar10-coe.pdf}}\label{fig::lars}}
        \subfloat[SGD]{
            {\includegraphics[width=0.33\linewidth]{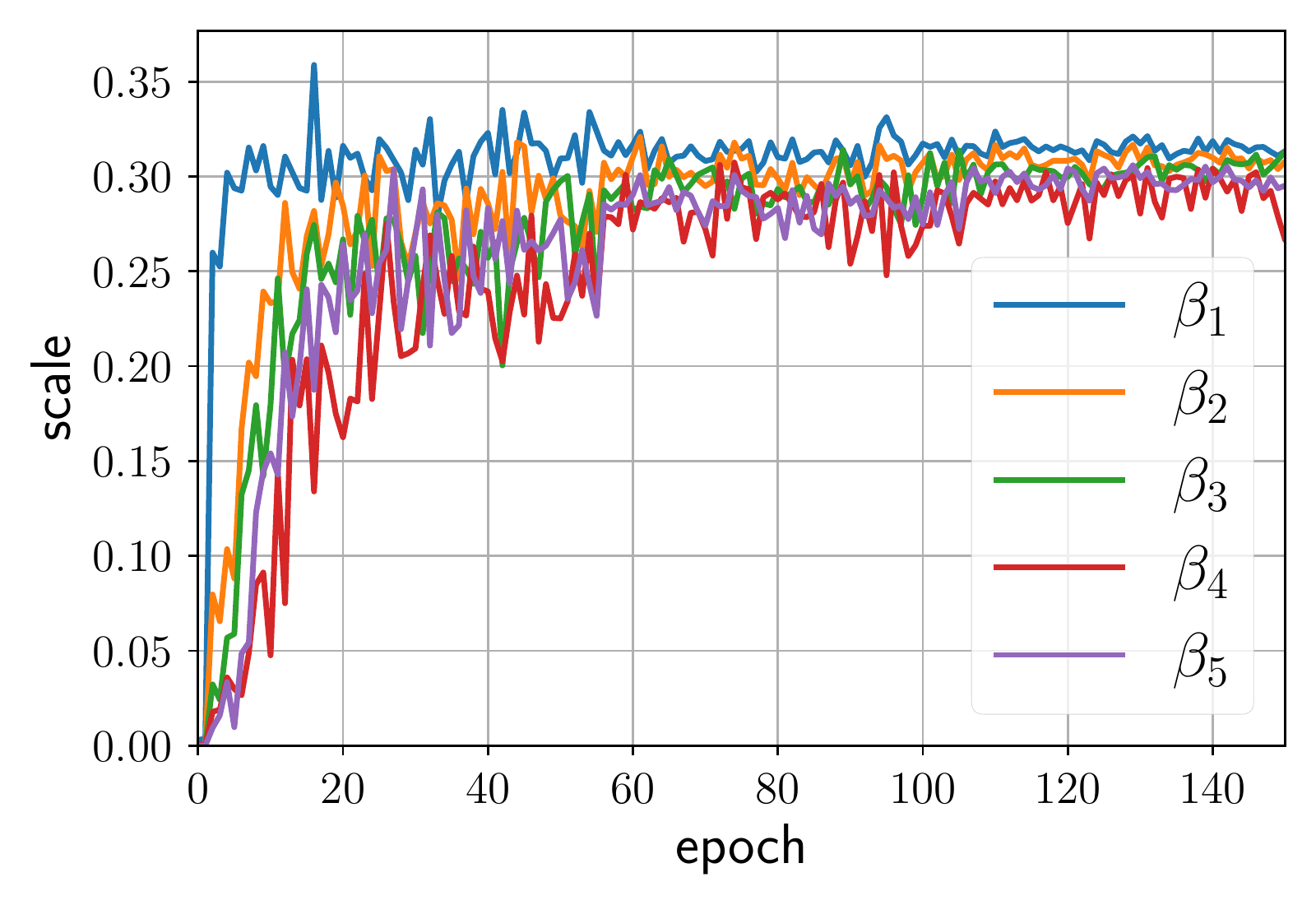}}\label{fig::sgd}}
        \subfloat[AdamW]{
            {\includegraphics[width=0.33\linewidth]{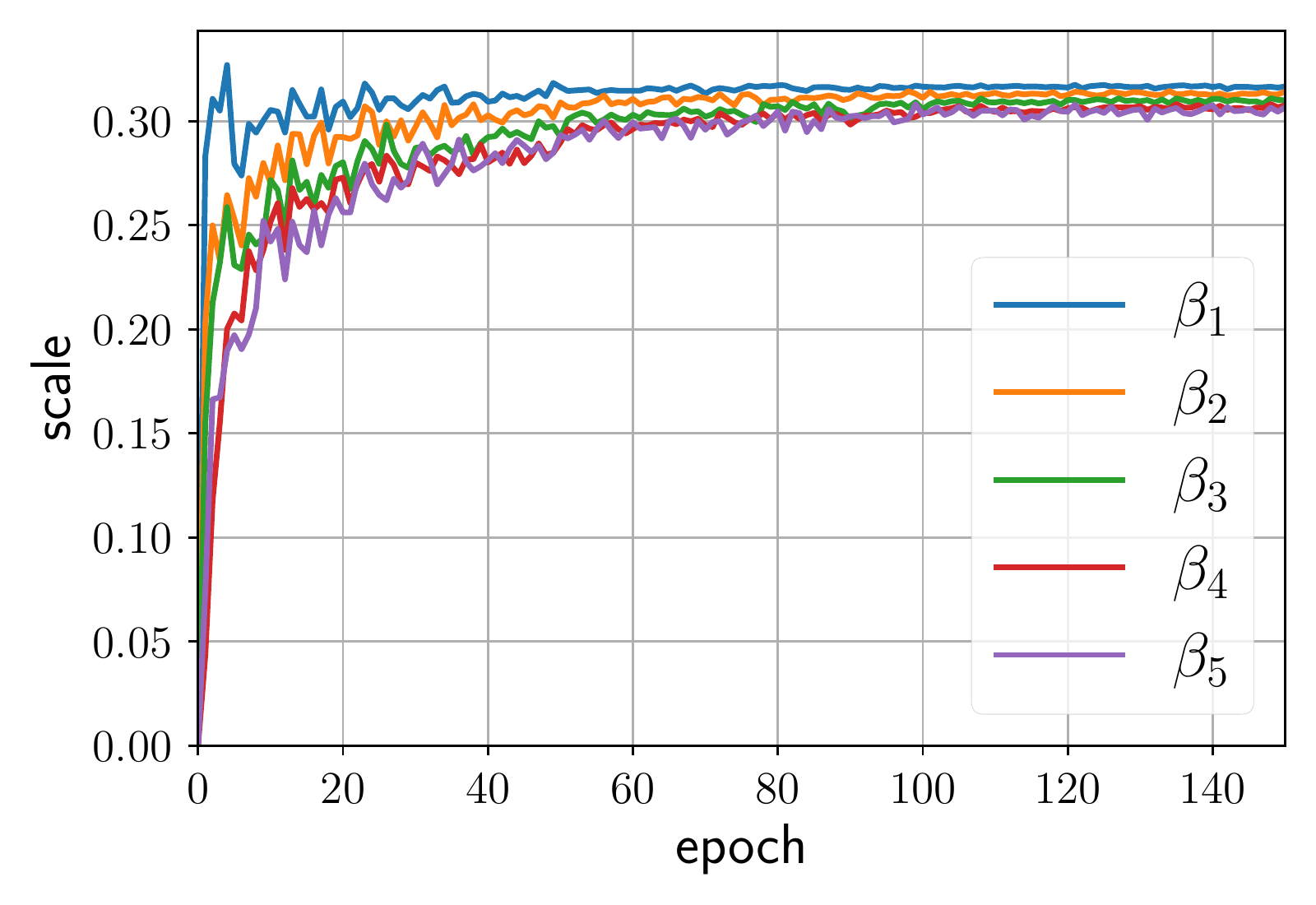}}\label{fig::adamw}}
    \end{centering}
    \caption{\footnotesize
        (First row) the projected trajectory of LARS on CNNs with MNIST dataset. The test accuracies for 2-Block, 3-Block, and 4-block CNN are $96.10\%, 96.48\%,96.58\%$. (Second row) the projected trajectories of different optimizers on AlexNet with the CIFAR-10 dataset. The test accuracies for LARS, SGD, and AdamW are $90.54\%$, $91.03\%$, and $90.26\%$, respectively. We use the following settings for each optimizer: (d) LARS: learning rate of $2$, Nesterov momentum of $0.9$, and weight decay of $1\times 10^{-4}$. (e) SGD: learning rate of $2$ with “linear warm-up”, Nesterov momentum of $0.9$, weight decay of $1\times 10^{-4}$. (f) AdamW: learning rate of $0.01$.
}
    \label{fig::main_2}
\end{figure*}

The conjugate kernel (CK) is a method for analyzing the generalization performance of DNNs that uses the second to last layer (the layer before the last linear layer) at the \textit{initial point} as the feature map~\citep{daniely2016toward, fan2020spectra, hu2021random}. Recently, \citet{long2021properties} shows that A-CK, a variant of CK that is evaluated at the \textit{last} epoch, better explains the generalization properties of realistic DNNs. Surprisingly, we find that A-CK can be used not only to characterize the generalization performance but also to capture the underlining solution trajectory of different gradient-based algorithms.

To formalize the idea, 
note that any neural network whose last layer is linear can be characterized as $f_{\vtheta}(\vx)=\mW\psi(\vx)$, where $\vx\in\bR^d$ is the input drawn from the distribution $\cD$, $\psi(\vx)\in\bR^{m}$ is the feature map with number of features $m$, and $\mW\in \bR^{k\times m}$ is the last linear layer with $k$ referring to the number of classes. We denote the trained model, i.e., the model in the last epoch, by $f_{\vtheta_{\infty}}(\vx)=\mW_{\infty}\psi_{\infty}(\vx)$.
To form an orthogonal basis, we use SVD to obtain a series of basis functions $\phi_{i}(\vx)=\mW_{\infty,i}\psi_{\infty}(\vx)$ that satisfy $\bE_{\vx\sim\cD}[\norm{\phi_i(\vx)}^2]=1$ and $\bE_{\vx\sim\cD}\left[\inner{\phi_i(\vx)}{\phi_j(\vx)}\right]=\delta_{ij}$ where $\delta_{ij}$ is the delta function. Hence, the coefficient $\beta_i(\vtheta_t)$ at each epoch $t$ can be derived as $\beta_{i}(\vtheta_t)=\bE_{\vx\sim\cD}\left[\inner{f_{\vtheta_t}(\vx)}{\phi_j(\vx)}\right]$, where the expectation is estimated by its sample mean on the test set. More details on our implementation can be found in Appendix~\ref{sec::additional-experiment}. Our code is attached as a supplementary file.

\paragraph{\bf Performance on convolutional neural networks:} We use LARS to train CNNs with varying depths on MNIST dataset. These networks are trained such that their test accuracies are above $96\%$.  Figures~\ref{fig::cnn-mnist-2}-\ref{fig::cnn-mnist-4} illustrate the evolution of the top-5 basis coefficients after projecting LARS onto the orthonormal basis obtained from A-CK. It can be observed that the basis coefficients are consistently monotonic across different depths, elucidating the generality of our proposed basis function decomposition. 
In the appendix, we discuss the connection between the convergence of the basis functions and the test accuracy for different architectures and loss functions.

\paragraph{\bf Performance with different optimizers:} The monotonic behavior of the projected solution trajectory is also observed across different optimizers. Figures~\ref{fig::lars}-\ref{fig::adamw} show the solution trajectories of three optimizers, namely LARS, SGD, and AdamW, on AlexNet with the CIFAR-10 dataset. It can be seen that all three optimizers have a monotonic trend after projecting onto the orthonormal basis obtained from A-CK. Although our theoretical results only hold for GD, our simulations highlight the strength of the proposed basis function decomposition in capturing the behavior of other gradient-based algorithms on DNN. 

We provide more extensive simulations in Appendix~\ref{sec::additional-experiment} to further explore the influence of larger datasets (such as CIFAR-100), and different architectures, loss functions, and batch sizes on the solution trajectory. We hope that our findings will inspire future efforts to better understand the behavior of gradient-based algorithms in deep learning through the lens of basis function decomposition.

\section*{Acknowledgements}
We thank Richard Y. Zhang and Tiffany Wu for helpful feedback. We would also like to thank Ruiqi Gao and Chenwei Wu for their insightful discussions.
This research is supported, in part, by NSF Award
DMS-2152776, ONR Award N00014-22-1-2127, MICDE Catalyst Grant, MIDAS PODS grant and
Startup Funding from the University of Michigan.

\bibliography{ref.bib}
\bibliographystyle{plainnat}
\newpage
\appendix
\resumetocwriting
\tableofcontents
\newpage

\section{Additional Experiments}
\label{sec::additional-experiment}

In this section, we provide more details on our simulation and further explore the empirical strength of the proposed basis function decomposition on different datasets, optimizers, loss functions, and batch sizes; see Table~\ref{table::table} for a summary of our simulations in this section.

\begin{table}[h]\footnotesize
    \centering
    \begin{tabular}{c||c c c c c c c}
        Architectures & CNN & AlexNet & VGG11& ResNet-18 & ResNet-34 & ResNet-50 & ViT\\
        \hline
        Datasets & MNIST & CIFAR-10 & CIFAR-100 &  &  &  \\
        \hline
        Optimizers & SGD & AdamW & LARS &  &  & &\\
        \hline
        Losses & $\ell_2$-loss & CE loss &  &  &  & &\\
    \end{tabular}
    \caption{The summary of our experiments.}
    \label{table::table}
\end{table}

\subsection{Numerical Verification of our Theoretical Results}
In this section, we provide experimental evidence to support our theoretical results on kernel regression (KR), symmetric matrix factorization (SMF), and orthogonal symmetric tensor decomposition (OSTD). The results are presented in Figure~\ref{fig::kernel-matrix-tensor}.
\paragraph*{Kernel regression.} We randomly generate $20$ orthonormal kernel functions. The true model is comprised of 4 signal terms with basis coefficients $10, 5, 3, 1$. Figure~\ref{fig::linear} shows the trajectories of the top-4 basis coefficients of GD with initial point $\vtheta_0=5\times 10^{-7}\times \vone$ and step-size $\eta=0.4$. It can be seen that GD learns different coefficients at the same rate, which is in line with Proposition~\ref{proposition::dynamic-kernel}.
\paragraph*{Symmetric matrix factorization.} In this simulation, we aim to recover a rank-$4$ matrix $\mM^{\star}=\mV\mSigma\mV^{\top}\in \bR^{20\times 20}$. In particular, we assume that $\mV\in\bR^{20\times 4}$ is a randomly generated orthonormal matrix and $\mSigma=\diag\{10, 5,3,1\}$. We consider a fully over-parameterized model where $\mU\in\bR^{20\times 20}$ (i.e., $r'=20$). Figure~\ref{fig::matrix} illustrates the incremental learning phenomenon that was proved in Proposition~\ref{prop::incremental} for GD with small Gaussian initialization $\mU_{ij}\overset{i.i.d.}{\sim}\cN(0,\alpha^2),\alpha=5\times 10^{-7}$ and step-size $\eta=0.04$. 
\paragraph*{Orthogonal symmetric tensor decomposition.} Finally, we present our simulations for OSTD. We aim to recover a rank-4 symmetric tensor of the form $\tT^{\star}=\sum_{i=1}^{4}\sigma_i\vz_i^{\otimes 4}$ where $\sigma_i$ are the nonzero eigenvalues with values $\{10,5,3,1\}$ and $\vz_i\in \bR^{10}$ are the corresponding eigenvectors. We again consider a fully over-parameterized model with $r'=10$. Figure~\ref{fig::tensor} shows the incremental learning phenomenon for GD with an aligned initial point that satisfies $\cos(\vu_i(0),\vz_i)\geq 0.9983, 1\leq i\leq r'$ and step-size $\eta=0.001$. 

\begin{figure*}
    \begin{centering}
        \subfloat[Solution trajectory for KR]{
            {\includegraphics[width=0.32\linewidth]{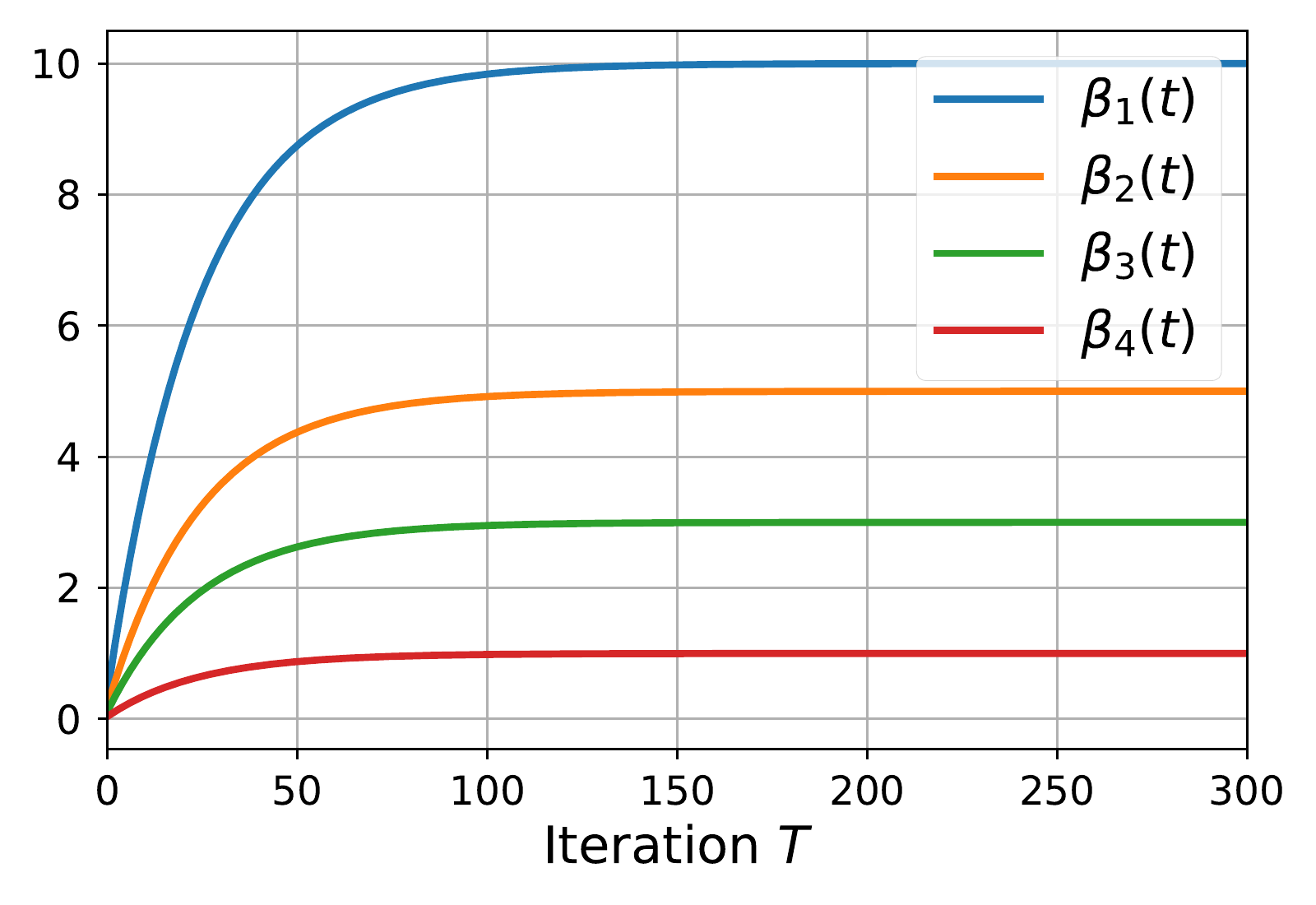}}\label{fig::linear}}
        \subfloat[Solution trajectory for SMF]{
            {\includegraphics[width=0.32\linewidth]{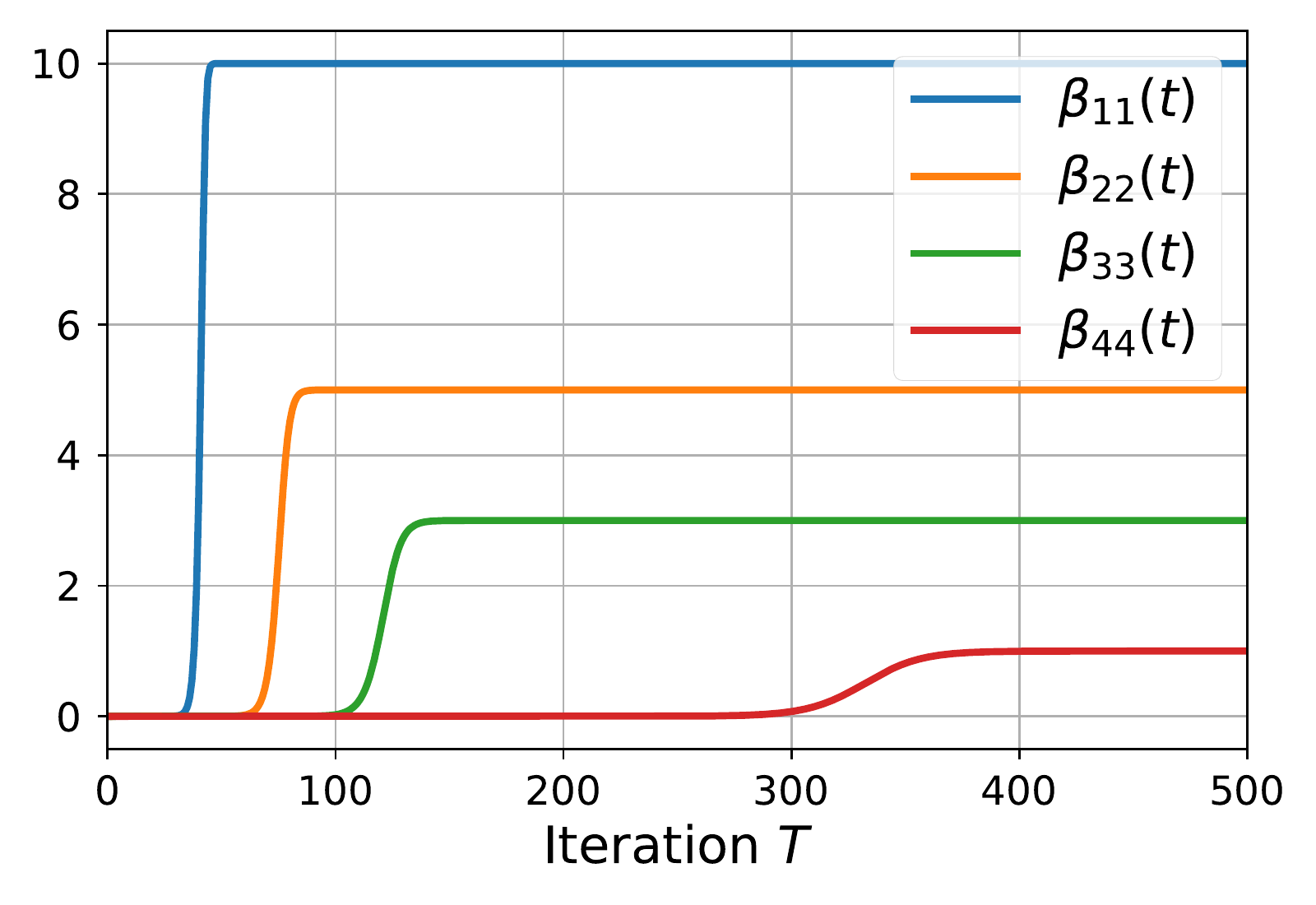}}\label{fig::matrix}}
        \subfloat[Solution trajectory for OSTD]{
            {\includegraphics[width=0.32\linewidth]{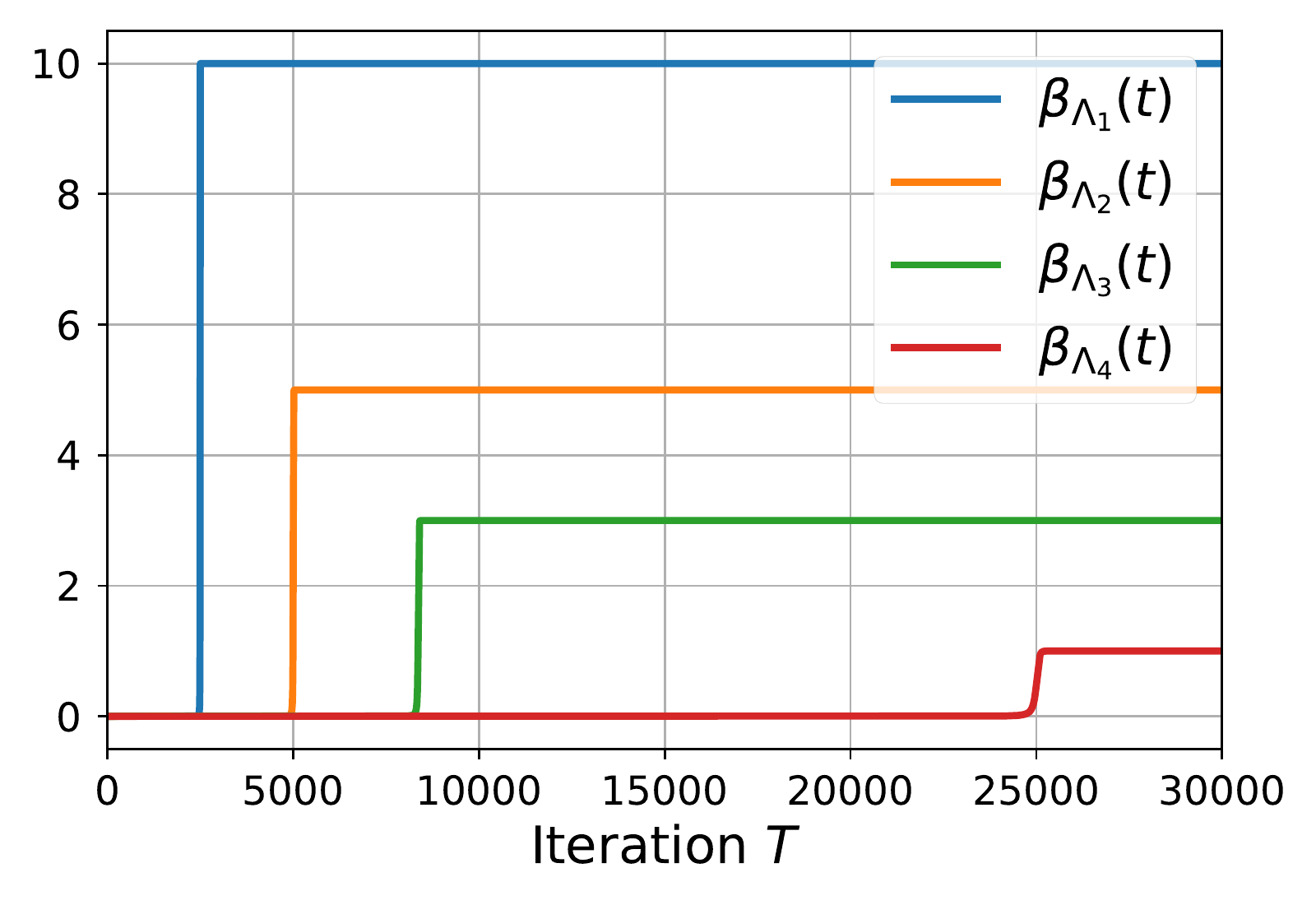}}\label{fig::tensor}}\\
         \subfloat[Estimation error for KR]{
            {\includegraphics[width=0.32\linewidth]{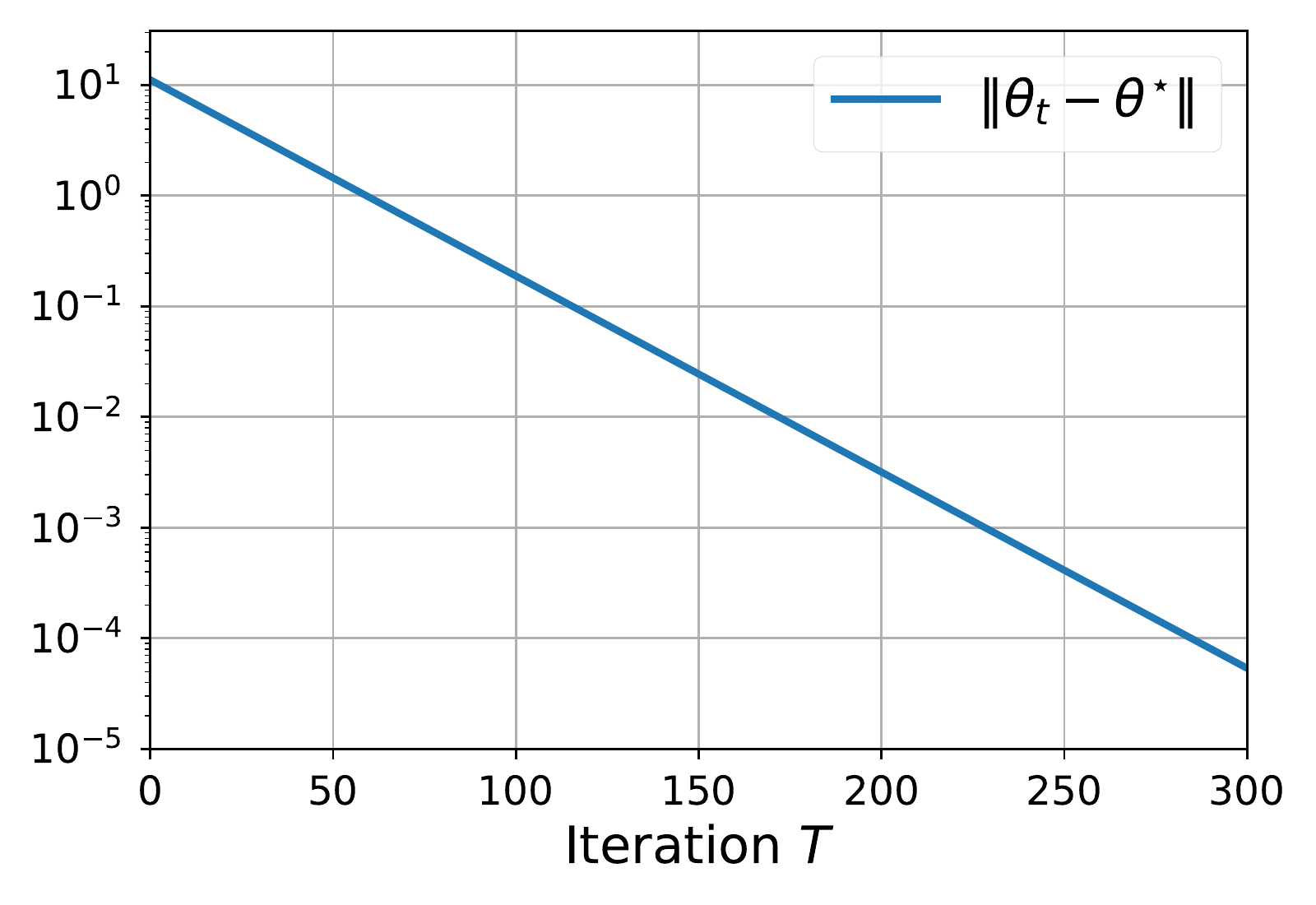}}\label{fig::linear-loss}} 
        \subfloat[Estimation error for SMF]{
            {\includegraphics[width=0.32\linewidth]{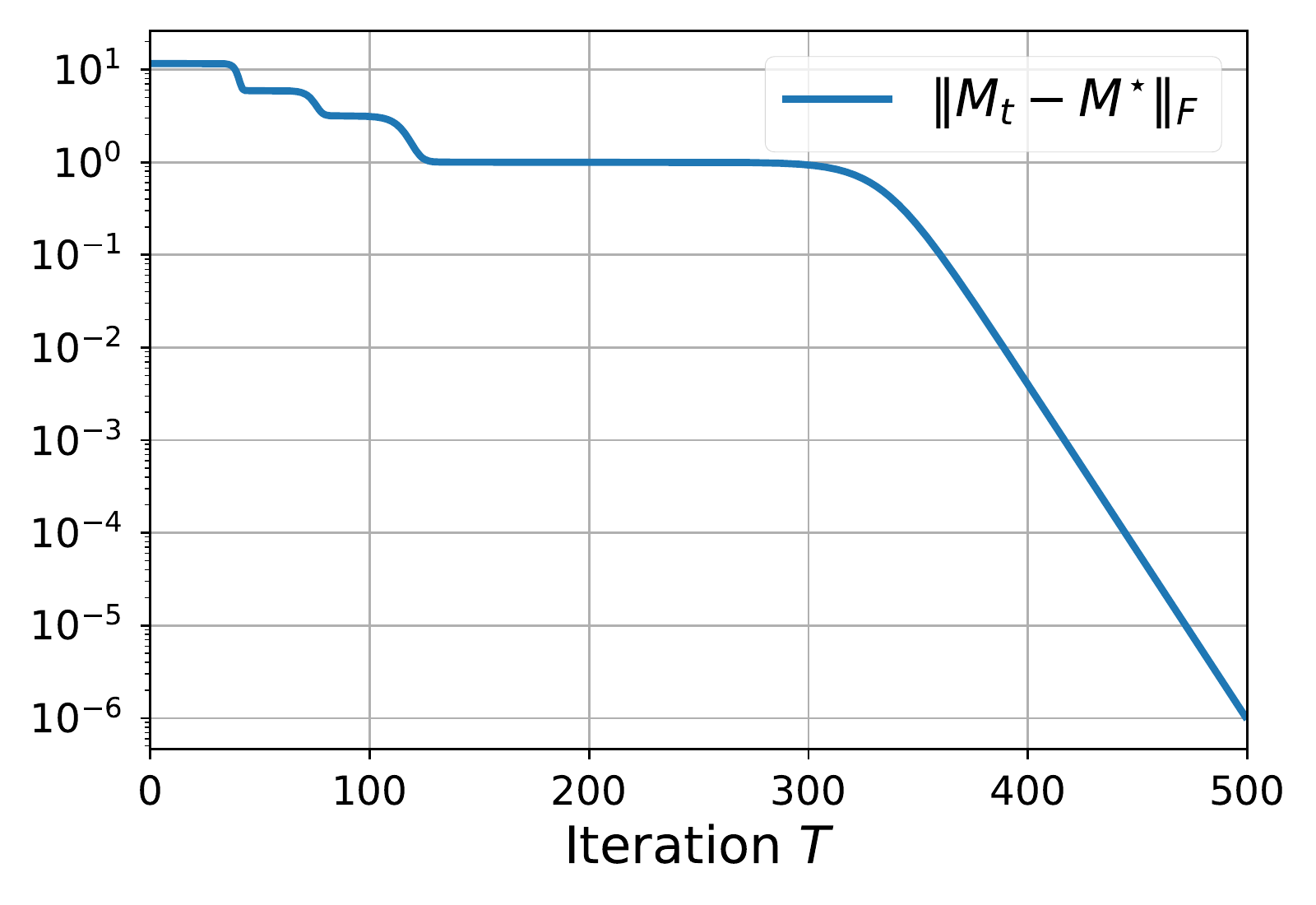}}\label{fig::matrix-loss}} 
        \subfloat[Estimation error for OSTD]{
            {\includegraphics[width=0.32\linewidth]{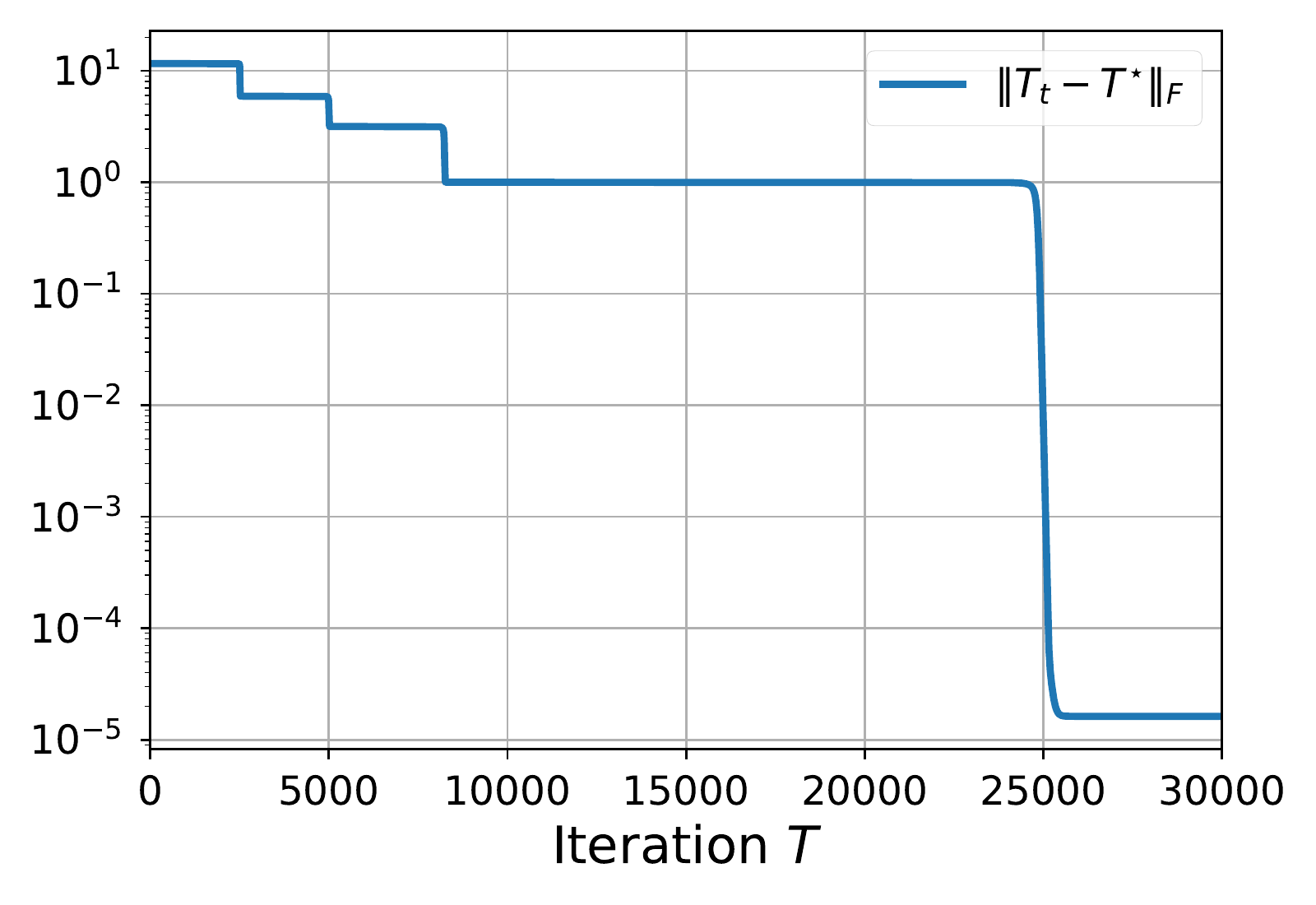}}\label{fig::tensor-loss}} 
    \end{centering}
    \caption{\footnotesize
        Experimental verification to support Theorems~\ref{thm::kernel},~\ref{theorem::matrix}, and~\ref{thm::tensor} in Section~\ref{sec::application}. The first row shows the projected trajectories of GD onto the specific basis functions we defined for each problem. The second row shows the estimation error. }
    \label{fig::kernel-matrix-tensor}
\end{figure*}
\subsection{Derivation of Basis Functions for DNNs}
In this section, we provide more details on how we evaluate our proposed orthogonal basis induced by A-CK and calculate the corresponding coefficients $\beta_{i}(\vtheta_t)$ for an arbitrary neural network.
First, recall that any neural network whose last layer is linear can be characterized as $f_{\vtheta}(\vx)=\mW\psi(\vx)$, where $\vx\in\bR^d$ is the input drawn from the distribution $\cD$, $\psi(\vx)\in\bR^{m}$ is the feature map with number of features $m$, and $\mW\in \bR^{k\times m}$ is the last linear layer with $k$ referring to the number of classes. We denote the trained model, i.e., the model in the last epoch, by $f_{\vtheta_{\infty}}(\vx)=\mW_{\infty}\psi_{\infty}(\vx)$.
To form an orthogonal basis, we use SVD to obtain a series of basis functions $\phi_{i}(\vx)=\mW_{\infty,i}\psi_{\infty}(\vx)$ that satisfy $\bE_{\vx\sim\cD}[\norm{\phi_i(\vx)}^2]=1$ and $\bE_{\vx\sim\cD}\left[\inner{\phi_i(\vx)}{\phi_j(\vx)}\right]=\delta_{ij}$ where $\delta_{ij}$ is the delta function. Hence, the coefficient $\beta_i(\vtheta_t)$ at each epoch $t$ can be derived as $\beta_{i}(\vtheta_t)=\bE_{\vx\sim\cD}\left[\inner{f_{\vtheta_t}(\vx)}{\phi_j(\vx)}\right]$.
In all of our implementation, we use the test dataset to approximate the population distribution. 

\paragraph*{Step 1: Obtaining the orthogonal basis $\phi_i(\vx)$.} We denote $\mPsi=[\psi(\vx_1),\cdots,\psi(\vx_N)]\in \bR^{m\times N}$ as the feature matrix where $N$ is the number of the test data points. We write the SVD of $\mPsi$ as $\mPsi=\mU \mSigma \mV^{\top}$. The right singular vectors collected in $\mV$ can be used to define the desired orthogonal basis of $\{\psi_i(\vx)\}_{i=1}^{m}$. To this goal, we write the prediction matrix as $\mF=\mW \mPsi=\widetilde\mW \widetilde \mPsi$ where $\widetilde\mW=\mW\mU\mSigma$ and $\widetilde \mPsi=\mV^{\top}$. Our goal is to define a set of matrices $\mA_i$ such that $\phi_i(\vx)=\mA_i\widetilde\psi_i(\vx)$ form a valid orthonormal basis for $\mF=\mW \mPsi=\widetilde\mW \widetilde \mPsi$. Before designing such $\mA_i$, first note that, due to the orthogonality of $\{\widetilde\psi_i(\vx)\}$, we have 
\begin{equation}
    \bE\left[\norm{\phi_i(\vx)}^2\right]=\norm{\mA_i}_F^2,\quad \bE\left[\inner{\phi_i(\vx)}{\phi_j(\vx)}\right]=\inner{\mA_i}{\mA_j}.
\end{equation}
Therefore, it suffices to ensure that $\{\mA_i\}$ are orthonormal. Consider the SVD of $\widetilde \mW$ as $\widetilde \mW=\sum_{i}\sigma_i\vu_i\vv_i^{\top}$. We define $\mA_i = \vu_i\vv_i^\top$. Clearly, defined $\{\mA_i\}$ are orthonormal. Moreover, it is easy to see that the basis coefficients (treated as the true basis coefficients) are exactly the singular values of $\widetilde\mW$. 

\paragraph*{Step 2: Obtaining the basis coefficients $\beta_i(\vtheta_t)$.} After obtaining the desired orthonormal basis $\phi_i(\vx)$, we can calculate the coefficient $\beta_i(\vtheta_t)$ for each epoch. 
Given the linear layer $\mW_t$ and the feature matrix $\mPhi_t$ at epoch $t$, we can obtain the coefficients for the signal terms by projecting the prediction matrix $\mF_t = \mW_t\mPsi_t$ onto $\tilde\mPsi = \mV^\top$. In particular, we write the prediction matrix as $\proj_{\widetilde \psi}\mF_t=\proj_{\widetilde \psi}\mW_t\mPsi_t=\mW_t\mPsi_t\mV\mV^{\top}=\widetilde\mW_t\widetilde\mPsi$ where $\widetilde\mW_t=\mW_t\mPsi_t\mV$. Hence, the basis coefficients can be easily calculated as $\beta_{i}(\vtheta_t)=\inner{\widetilde\mW_t}{\vu_i\vv_i^{\top}}$.

\subsection{Further Details on the Experiments}
\label{exp-setting}
In this section, we provide more details on our experiments presented in the main body of the paper and compare them with other DNN architectures.

All of our experiments are implemented in \texttt{Python 3.9, Pytorch 1.12.1} environment and run through a local server {SLURM} using {NVIDIA Tesla} with V100-PCIE-16GB GPUs. We use an additional NNGeometry package for calculating batch gradient, and our implemention of ViT is adapted from \url{https://juliusruseckas.github.io/ml/cifar10-vit.html}.
To ensure consistency with our theoretical results, we drop the last softmax operator and use the $\ell_2$-loss throughout this section. All of our training data are augmented by \texttt{RandomCrop} and \texttt{RandomHorizontalFlip}, and normalized by mean and standard deviation.

\paragraph*{Experimental details for Figure~\ref{fig::main}.}
Here, we describe our implementation details for Figure~\ref{fig::main}, and present additional experiments on VGG-11, ResNet-34, and ResNet-50 with the CIFAR-10 dataset. The results can be seen in Figure~\ref{fig::cifar10}. We use standard data augmentation for all architectures except for ViT. For ViT, we only use data normalization.

 To obtain a stable A-CK, we trained the above models for $300$ epochs. For ResNet-18, we used LARS with a learning rate of $0.5$ and applied small initialization with $\alpha=0.3$, i.e., we scale the default initial point by $\alpha = 0.3$. For ResNet-34 and ResNet-50, we choose the default learning rate and apply small initialization with $\alpha=0.3$. For ViT, we use AdamW with a learning rate of $0.01$. The remaining parameters are set to their default values.

\paragraph*{Experiments details for the first row of Figure~\ref{fig::main_2}.}
Here we conduct experiments on MNIST dataset with different CNN architectures. The CNNs are composed of $l$ blocks of layers, followed by a single fully connected layer. A block of a CNN consists of a convolutional layer, an activation layer, and a pooling layer. In our experiments, we use ReLU activation and vary the depth of the network. For the first block, we used identity pooling. For the remaining blocks, we used max-pooling.

For $2$-block CNN, we set the convolutional layer width to 256 and $64$, respectively. For $3$-block CNN, we set the convolutional layer width to $256, 128$, and $64$, respectively. And for $4$-block CNN, we set the convolutional layer width to $256, 128, 128$, and $64$, respectively. To train these networks, we used LARS with the learning rate of $0.05$. The remaining parameters are set to their default values. We run $20$ epochs to calculate A-CK.

\begin{figure*}
    \centering
        \subfloat[AlexNet]{
			{\includegraphics[width=0.33\linewidth]{figure/alexnet-cifar10-coe.pdf}}\label{fig::alex-cifar10}}
		\subfloat[VGG-11]{
			{\includegraphics[width=0.33\linewidth]{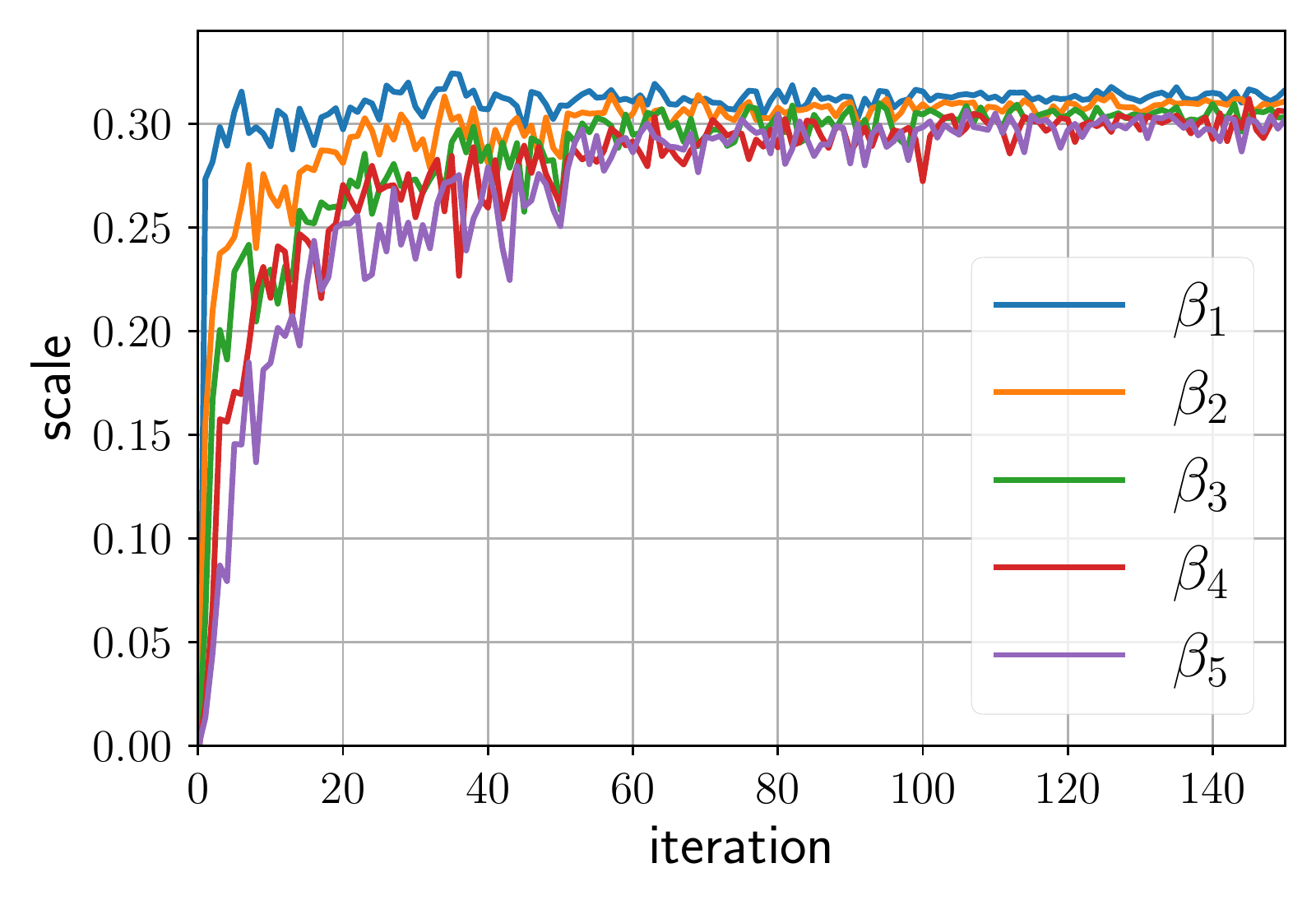}}\label{fig::vgg11-cifar10}}
		\subfloat[ViT]{
			{\includegraphics[width=0.33\linewidth]{figure/vit-cifar10-coe.pdf}}\label{fig:vit-cifar10}}\\
		\subfloat[ResNet-18]{
			{\includegraphics[width=0.33\linewidth]{figure/resnet18-cifar10-coe.pdf}}\label{fig::resnet18-cifar10}}
		\subfloat[ResNet-34]{
			{\includegraphics[width=0.33\linewidth]{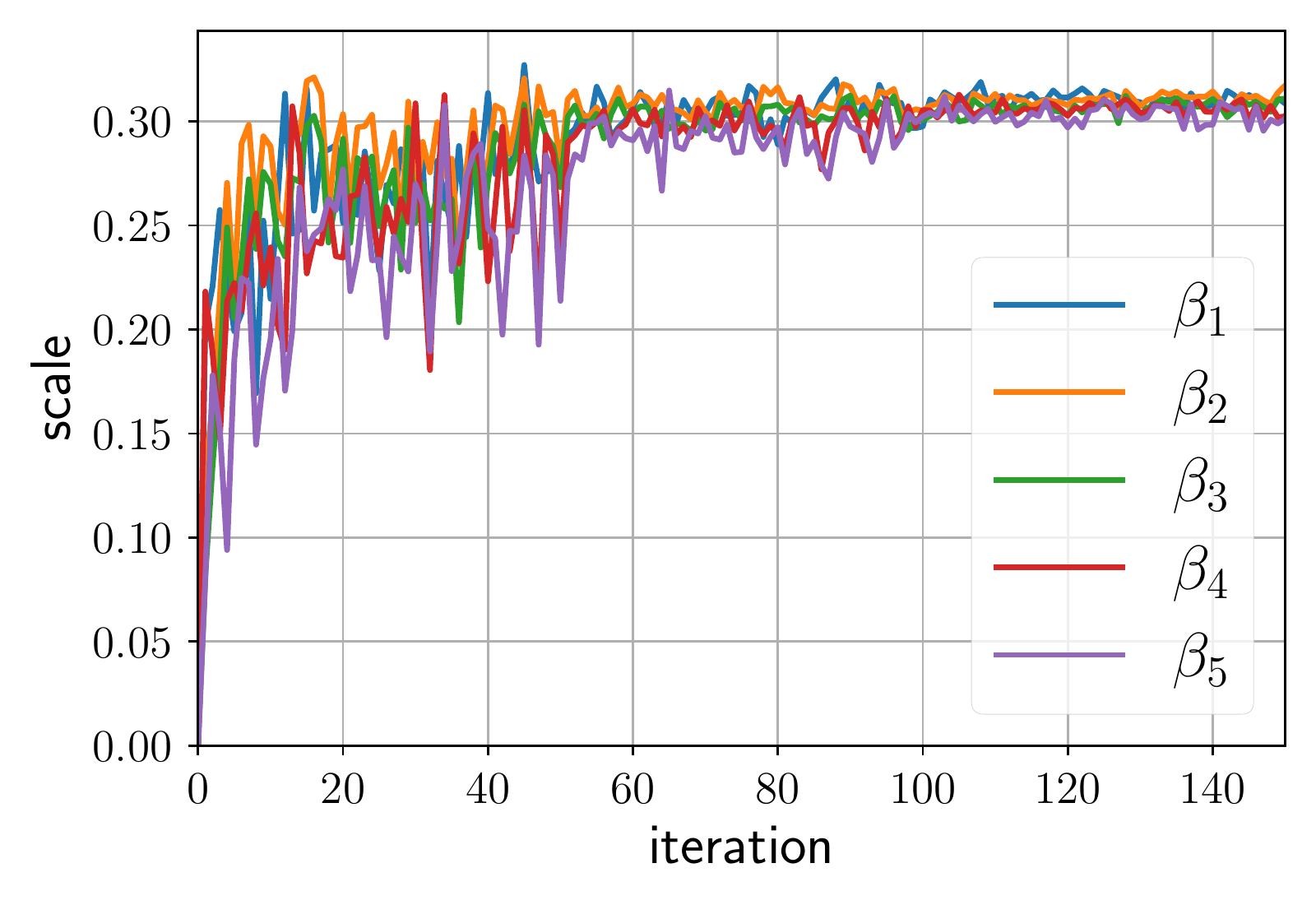}}\label{fig::resnet34-cifar10}}
		\subfloat[ResNet-50]{
			{\includegraphics[width=0.33\linewidth]{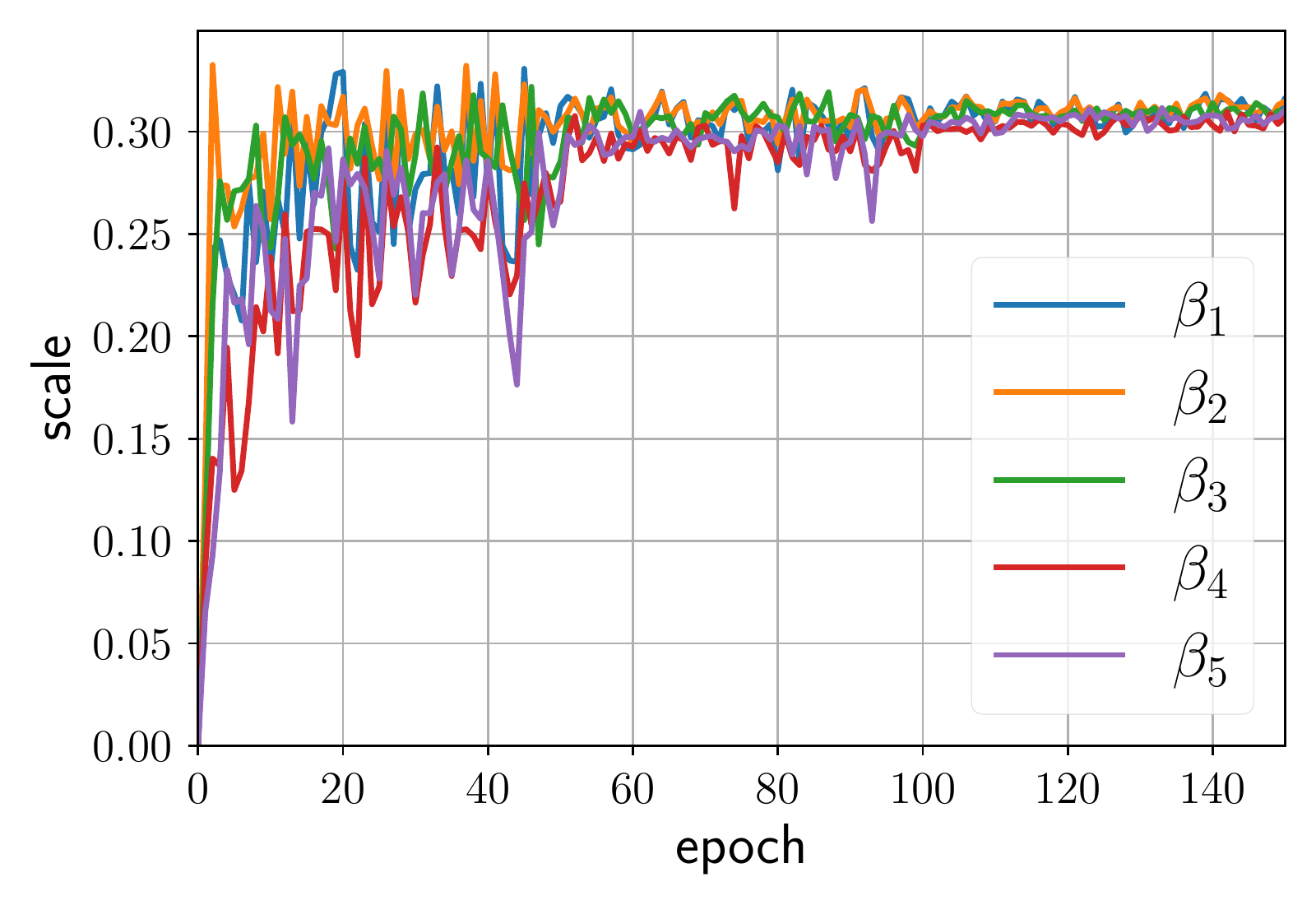}}\label{fig::resnet50-cifar10}}
			
	\caption{\footnotesize
		Solution trajectories for different architectures trained on CIFAR-10. The test accuracies for Alexnet, VGG-11 and ViT are $90.54\%$, $91.11\%$, $77.99\%$, respectively. The test accuracies for ResNet-18, ResNet-34 and ResNet-50 are $88.31\%$, $93.93\%$ and $94.06\%$, respectively.}
	\label{fig::cifar10}
\end{figure*}

\paragraph*{Experiments details for second row of Figure~\ref{fig::main_2}.}
We conduct experiments to compare the performance of different optimizers on the CIFAR-10 dataset. In particular, we use AlexNet to  compare the performance of three optimizers, i.e., SGD, AdamW, LARS. For SGD, we set the base learning rate to be $2$ with Nesterov momentum of $0.9$ and weight decay of $0.0001$, together with the “linear warm-up” technique.\footnote{In "linear warm-up", we linearly increase the learning rate in the first $5$ epochs. More precisely, we set the initial learning rate to $1\times 10^{-5}$ and linearly increase it to the selected learning rate in $5$ epochs. After the first 5 epochs, the learning rate follows a regular decay scheme. } For AdamW, we set the learning rate to $0.01$ and keep the remaining parameters unchanged. For LARS, we set the learning rate to $2$ with Nesterov momentum of $0.9$ and weight decay of $0.0001$.
The remaining parameters are set to the default setting.

\subsection{Experiments for CIFAR-100}
In this section, we conduct experiments using the CIFAR-100 dataset which is larger than both CIFAR-10 and MNIST.
Our simulations are run on AlexNet, VGG-11, ViT, ResNet-18, ResNet-34, and ResNet-50. In particular, we use the “loss scaling trick''~ \citep{hui2020evaluation} defined as follows: consider the datapoint $(\vx,\vy)$ where $\vx\in\bR^d$ is the input and $\vy\in\bR^k$ is a one-hot vector with $1$ at position $i$. Then, the scaled $\ell_2$-loss is defined as 
\begin{equation}
    \ell_{2,\text{scaling}}(\vx)=k\cdot (f_{\vtheta}(\vx)[i]-M)^2+\sum_{i'\neq i}(f_{\vtheta}(\vx)[i'])^2,
\end{equation}
for some constants $k,M>0$. We set these parameters to $k=1, M=4$. For AlexNet and VGG-11, we use LARS with a base learning rate of $\eta=1$. For ViT, we use AdamW with a base learning rate of $0.01$ and batch size of $256$. For ResNet architectures, we use SGD with a base learning rate of $0.3$ and batch size of $64$. We also add $5$ warm-up epochs for ResNets. All the remaining parameters for the above architectures are set to their default values. The results can be seen in Figure~\ref{fig::cifar100}. Our experiments highlight a trade-off between the monotonicity of the projected solution trajectories and the test accuracy: in order to obtain a higher test accuracy, one typically needs to pick a larger learning rate, which in turn results in more sporadic behavior of the solution trajectories. Nonetheless, even with large learning, the basis coefficients remain relatively monotonic after the first few epochs and converge to meaningful values.

\begin{figure*}[ht]
	\centering
		\subfloat[AlexNet]{
			{\includegraphics[width=0.32\linewidth]{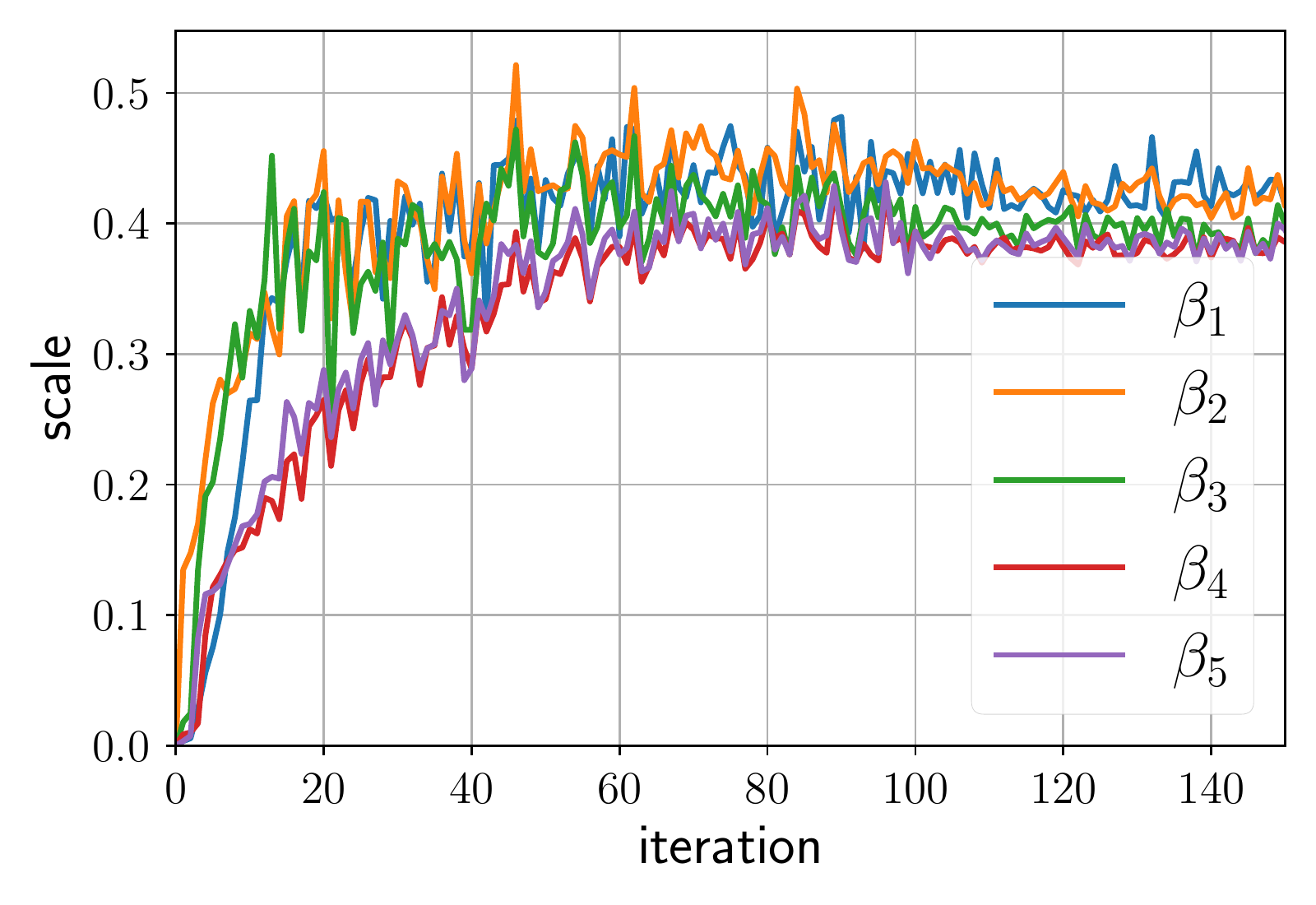}}\label{fig::alex-cifar100}}
		\subfloat[VGG-11]{
			{\includegraphics[width=0.32\linewidth]{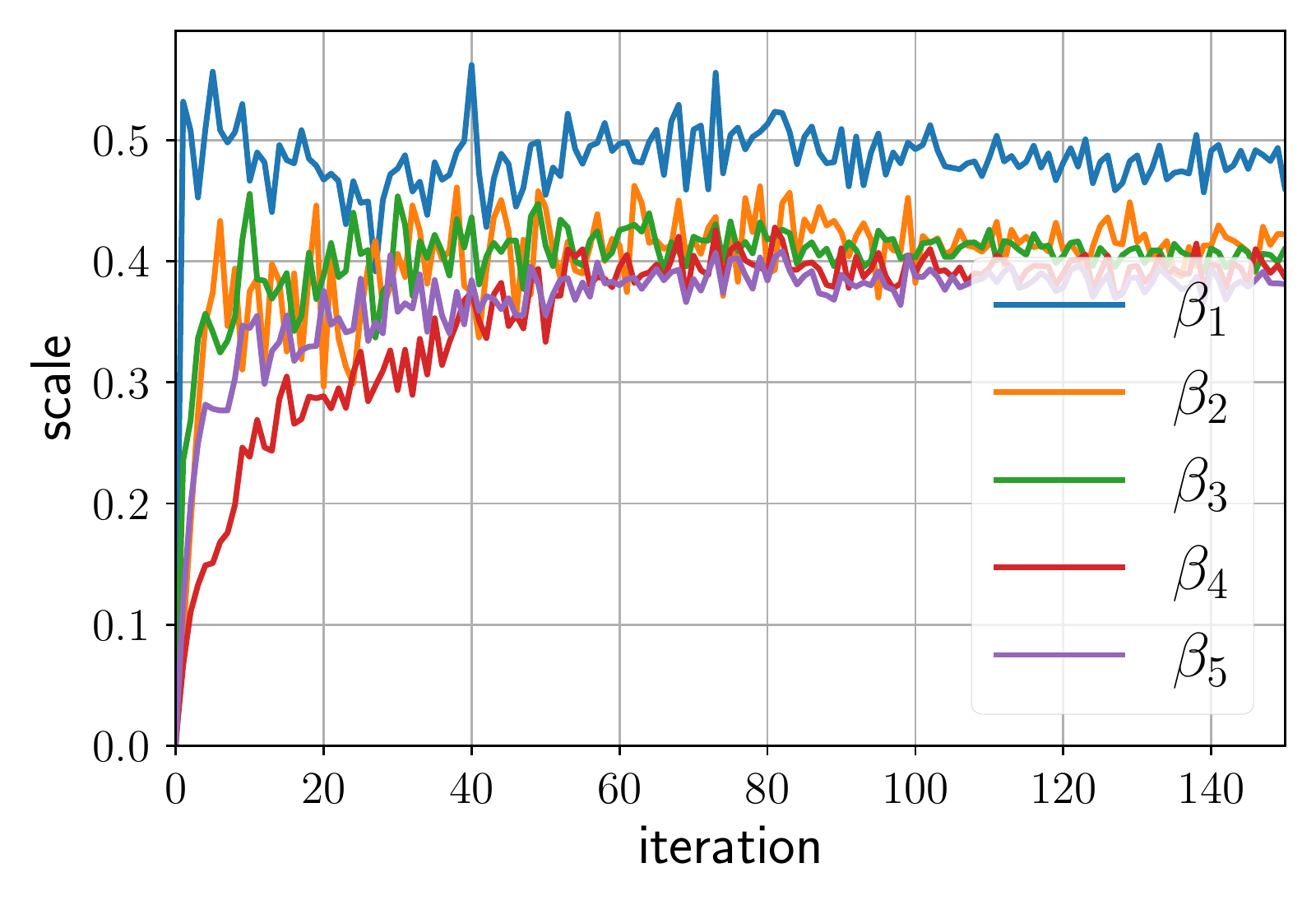}}\label{fig::vgg11-cifar100}}%
		\subfloat[ViT]{
			{\includegraphics[width=0.32\linewidth]{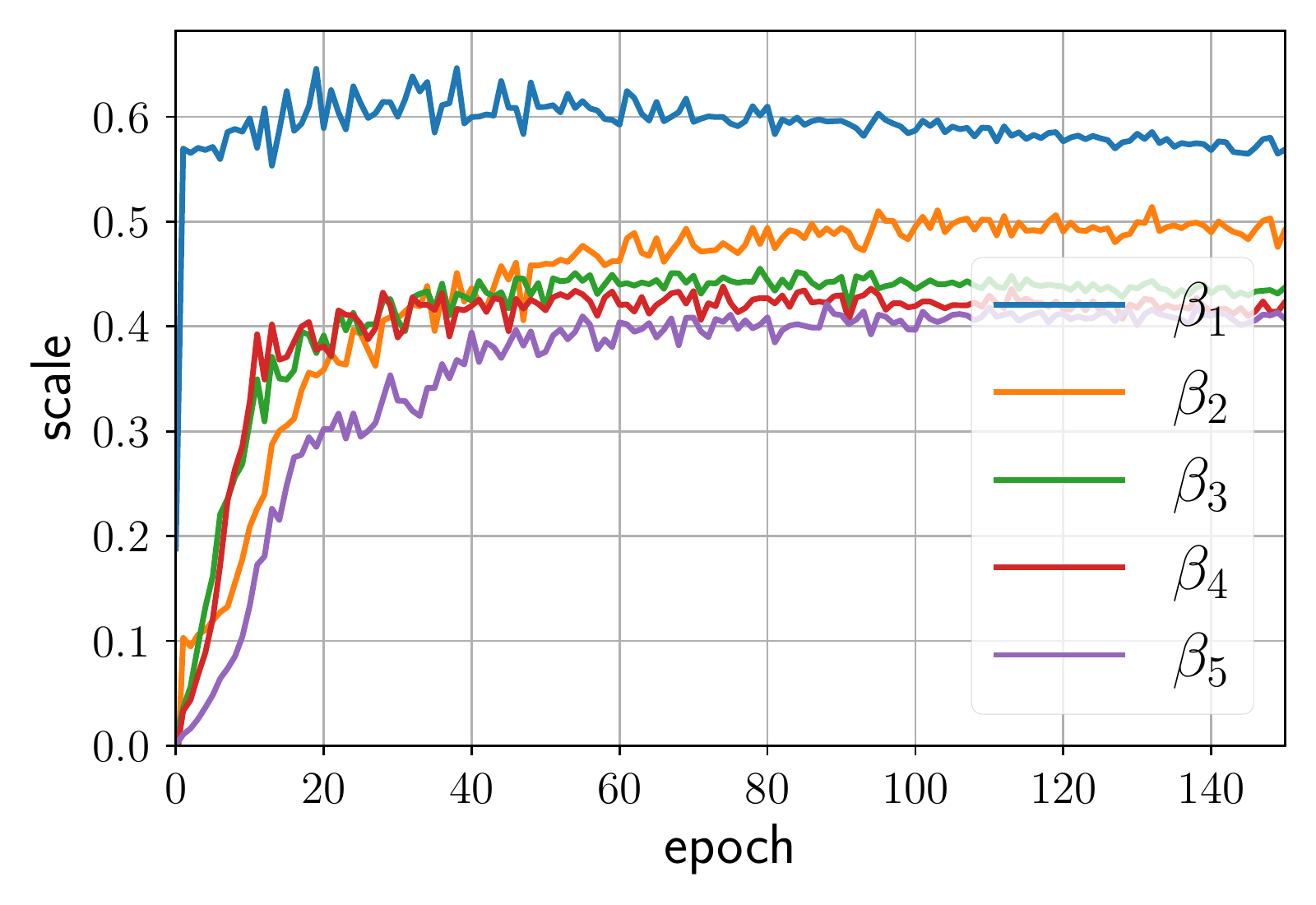}}\label{fig:vit-cifar100}}\\
		\subfloat[ResNet-18]{
			{\includegraphics[width=0.32\linewidth]{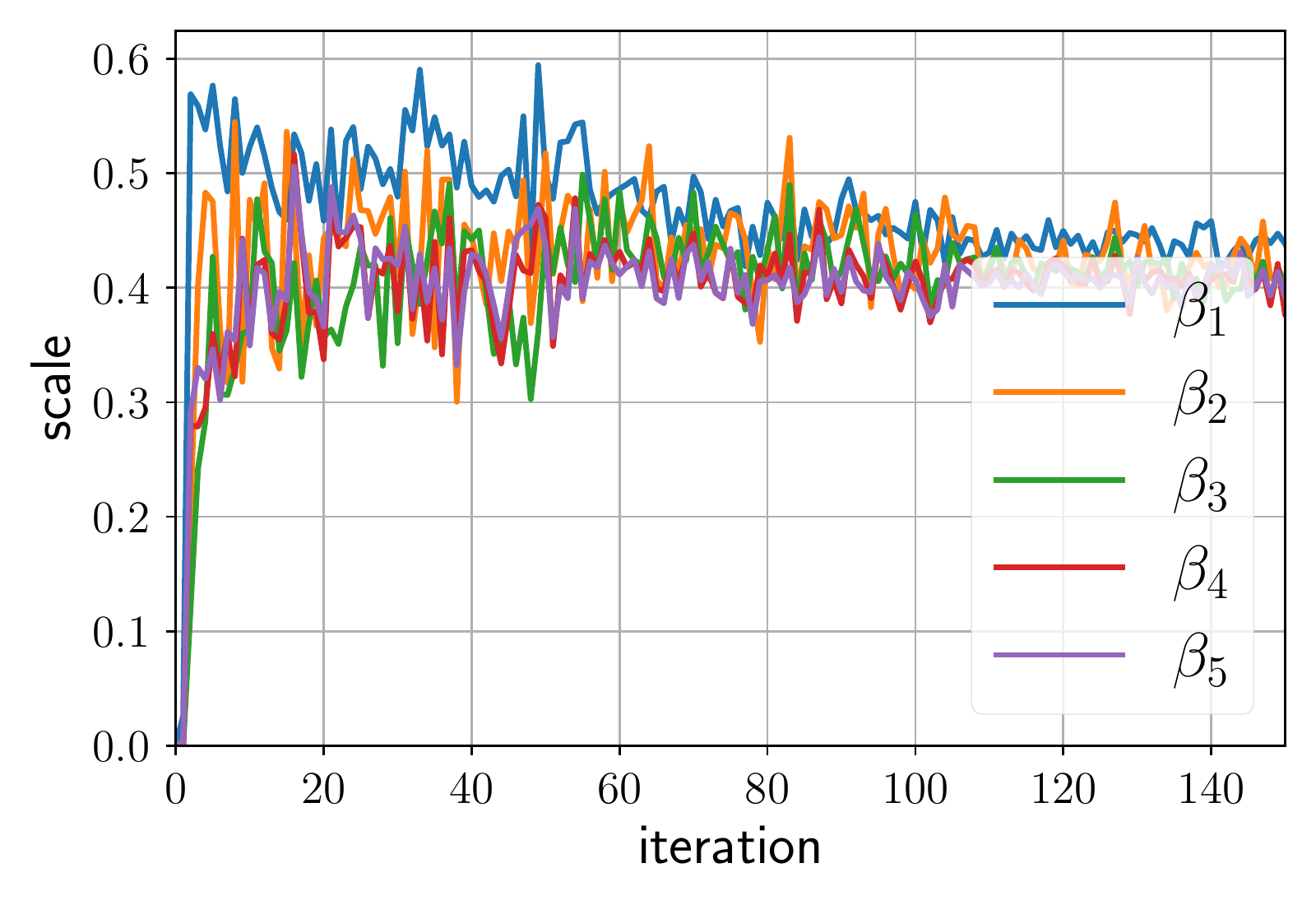}}\label{fig::resnet18-cifar100}}
		\subfloat[ResNet-34]{
			{\includegraphics[width=0.32\linewidth]{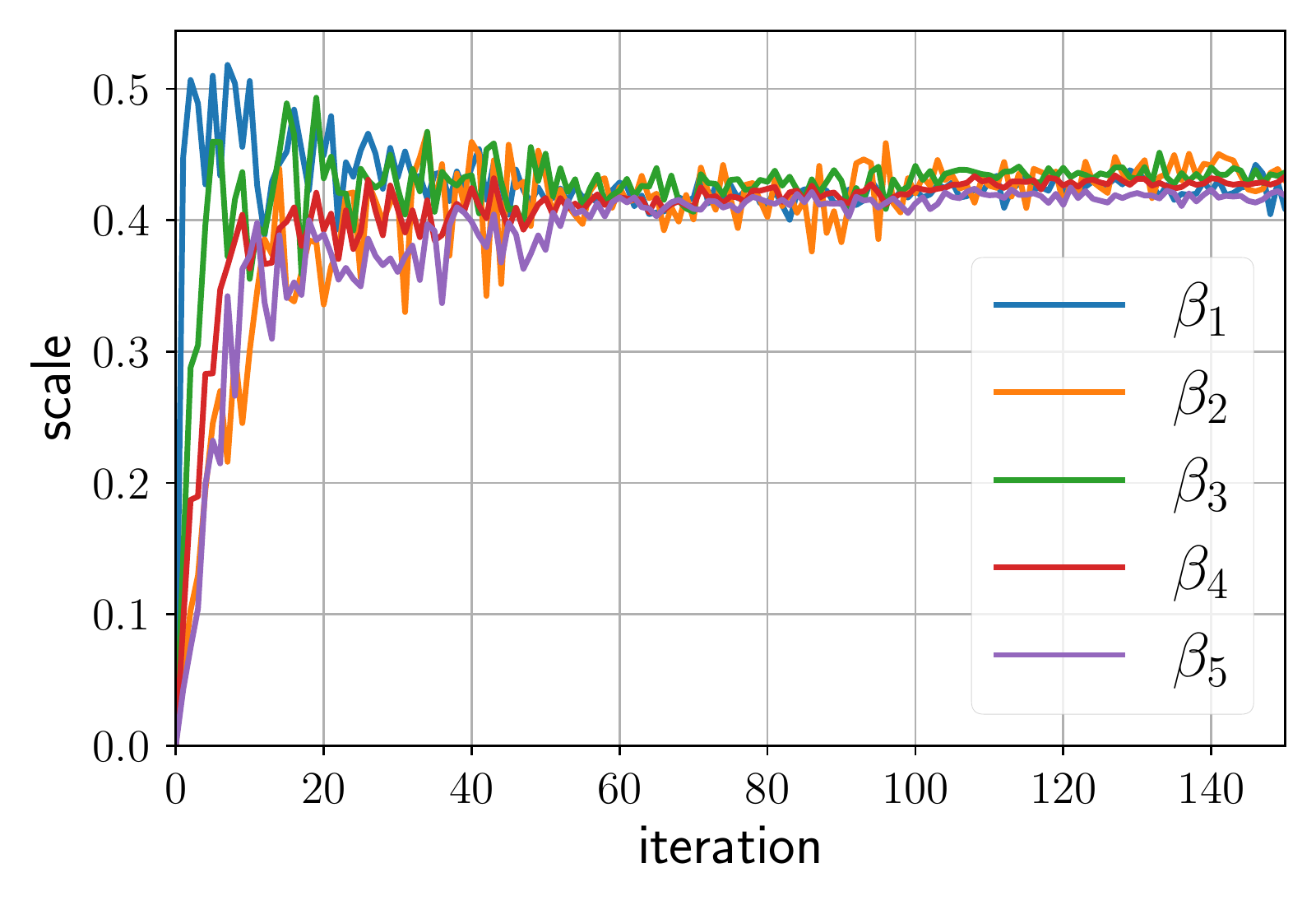}}\label{fig::resnet34-cifar100}}
		\subfloat[ResNet-50]{
			{\includegraphics[width=0.32\linewidth]{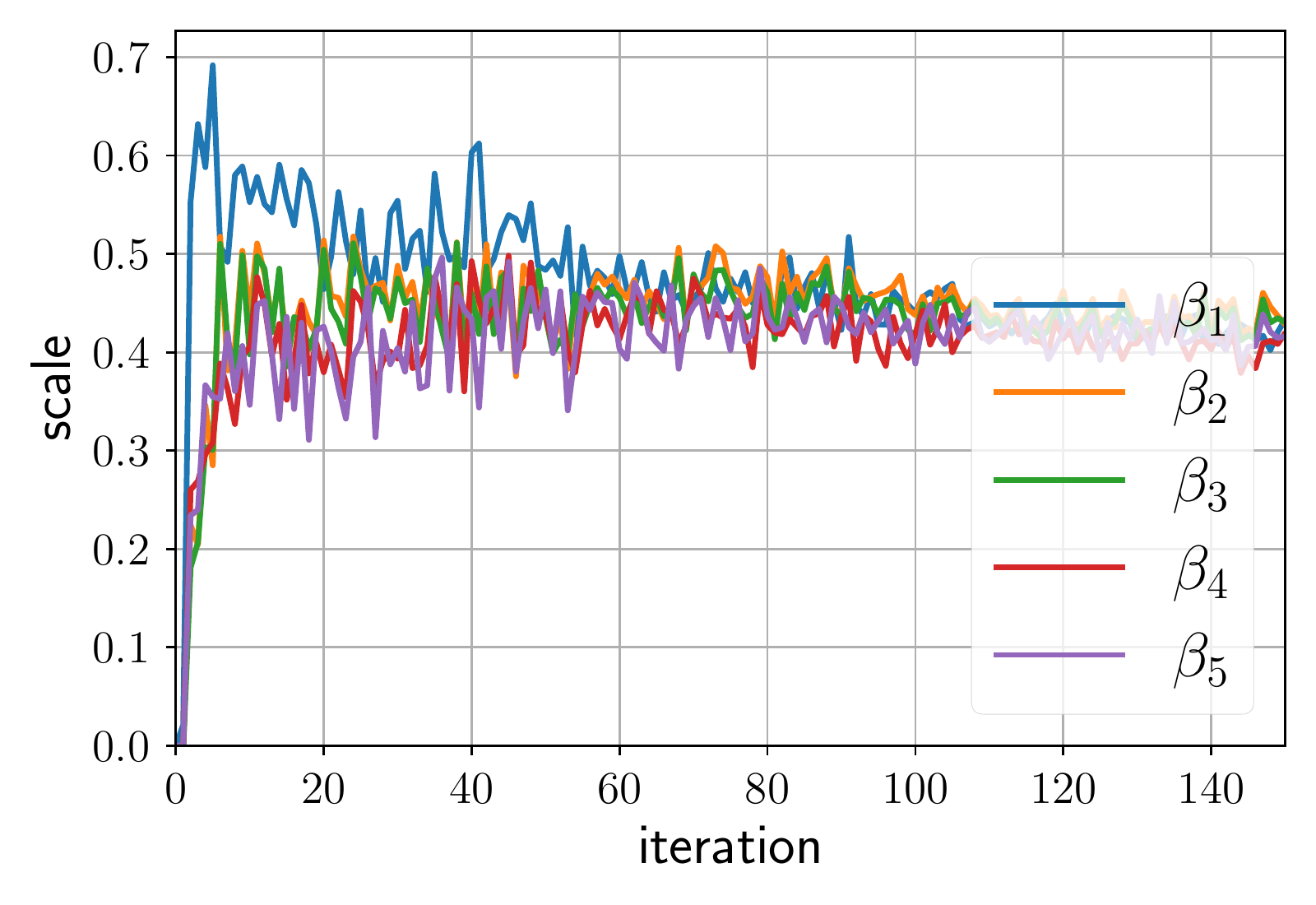}}\label{fig::resnet50-cifar100}}%
	\caption{\footnotesize 
		Solution trajectories for different architectures trained on CIFAR-100. The test accuracies for AlexNet, VGG-11, and ViT are $61.20\%$, $50.64\%$, and $50.41\%$, respectively. The test accuracies for ResNet-18, ResNet-34 and ResNet-50 are $78.11\%$, $74.26\%$ and $80.05\%$, respectively.}
	\label{fig::cifar100}
\end{figure*}

\subsection{Experiments for Different Losses}
\begin{figure*}
	\centering
		\subfloat[$\ell_{2}$-loss]{
			{\includegraphics[width=0.45\linewidth]{figure/alexnet-cifar10-coe.pdf}}\label{fig::l2 loss}}%
		\subfloat[CE loss]{
			{\includegraphics[width=0.45\linewidth]{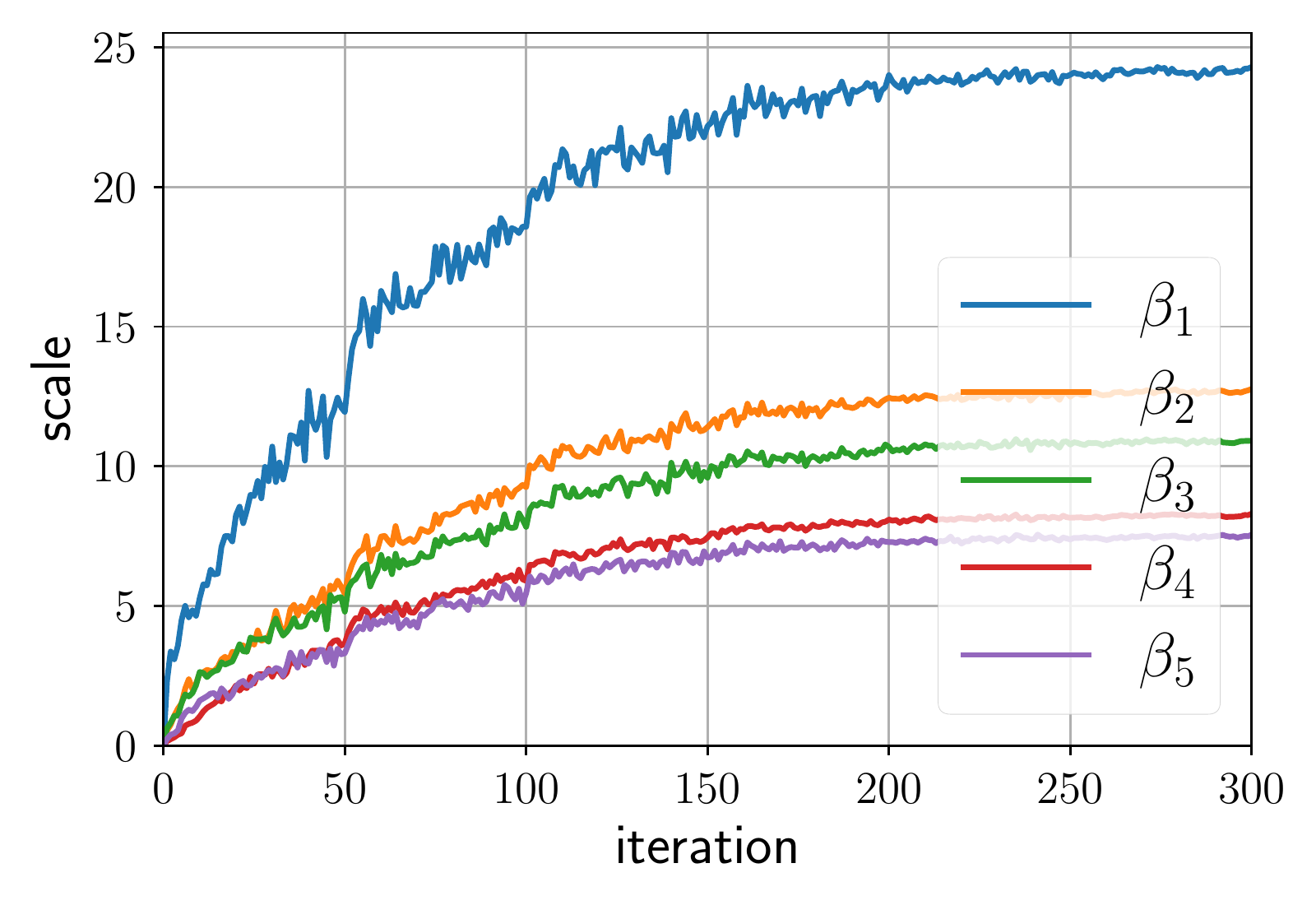}}\label{fig::ce loss}}%
	\caption{\footnotesize
		Solution trajectories of LARS on the CIFAR-10 dataset with $\ell_2$-loss and CE loss. The test accuracies are $90.54\%$ for $\ell_2$-loss and $91.09\%$ for CE loss.}
	\label{fig::loss}
\end{figure*}
Next, we compare the projected solution trajectories on two loss functions, namely $\ell_2$-loss and cross-entropy (CE) loss. We use LARS to train AlexNet on the CIFAR-10 dataset with both $\ell_{2}$-loss and CE loss. In particular, we add the softmax operator before training the CE loss. For CE loss, we set the base learning rate of LARS to $1$. For $\ell_2$-loss, we use the base learning rate of $2$. The remaining parameters are set to their default values. The results can be seen in Figure~\ref{fig::loss}. We observe that, similar to the $\ell_2$-loss, the solution trajectory of the CE loss behaves monotonically after projecting onto the orthogonal basis induced by A-CK. Inspired by these observations, another venue for future research would be to extend our framework to general loss functions.  Interestingly, the convergence of the basis coefficients is much slower than those of the $\ell_2$-loss. This is despite the fact that CE loss can learn slightly faster than $\ell_2$-loss in terms of test accuracy as shown by \citep{hui2020evaluation}. 

\subsection{Experiments for Different Batch Size}

Finally, we study the effect of different batch sizes on the solution trajectory. We train AlexNet on the CIFAR-10 dataset. When testing for different batch sizes, we follow the ``linear scaling'' rule, i.e., the learning rate scales linearly with the batch size. For batch size of $32$, we used SGD with a base learning rate $\eta=0.1$ and $5$ warm-up epochs. For batch size of $64$, we used SGD with a base learning rate of $\eta = 0.2$ and $5$ warm-up epochs. For batch size of $256$, we used LARS with a base learning rate of $1$. The remaining hyperparameters are set to their default values. The results are reported in Figure~\ref{fig::batch size}. We see that the projected solution trajectories share a similar monotonic behavior for different batch sizes.

\section{Related Work}
\label{sec::related-work} 
\paragraph*{GD for general nonconvex optimization.} Gradient descent and its stochastic or adaptive variants are considered as the ``go-to'' algorithms in large-scale (unconstrained) nonconvex optimization. Because of their first-order nature, they are known to converge to first-order stationary points~\citep{nesterov1998introductory}. Only recently it has been shown that GD \citep{lee2019first, panageas2019first} and its variants, such as perturbed GD \citep{jin2017escape} and SGD \citep{fang2019sharp, daneshmand2018escaping}, can avoid saddle points and converge to a second-order stationary point. However, these guarantees do not quantify the distance between the obtained solution and the globally optimal and/or true solutions. To the best of our knowledge, the largest subclass of nonconvex optimization problems for which GD or its variants converge to meaningful solutions are those with benign landscapes. These problems include different variants of low-rank matrix optimization with exactly parameterized rank, namely matrix completion~\citep{ge2016matrix}, matrix sensing~\citep{ge2017no, zhang2021preconditioned}, dictionary learning~\citep{sun2016complete}, and robust PCA~\citep{fattahi2020exact}, as well as deep linear neural networks~\citep{kawaguchi2016deep}. However, benign landscape is too restrictive to hold in practice; for instance,\citet{zhang2021sharp} shows that spurious local minima are ubiquitous in the low-rank matrix optimization, even under fairly mild conditions. Therefore, the notion of benign landscape cannot be used to explain the success of local search algorithms in more complex learning tasks.

\begin{figure*}
	\centering
		\subfloat[Batch size $32$]{
			{\includegraphics[width=0.32\linewidth]{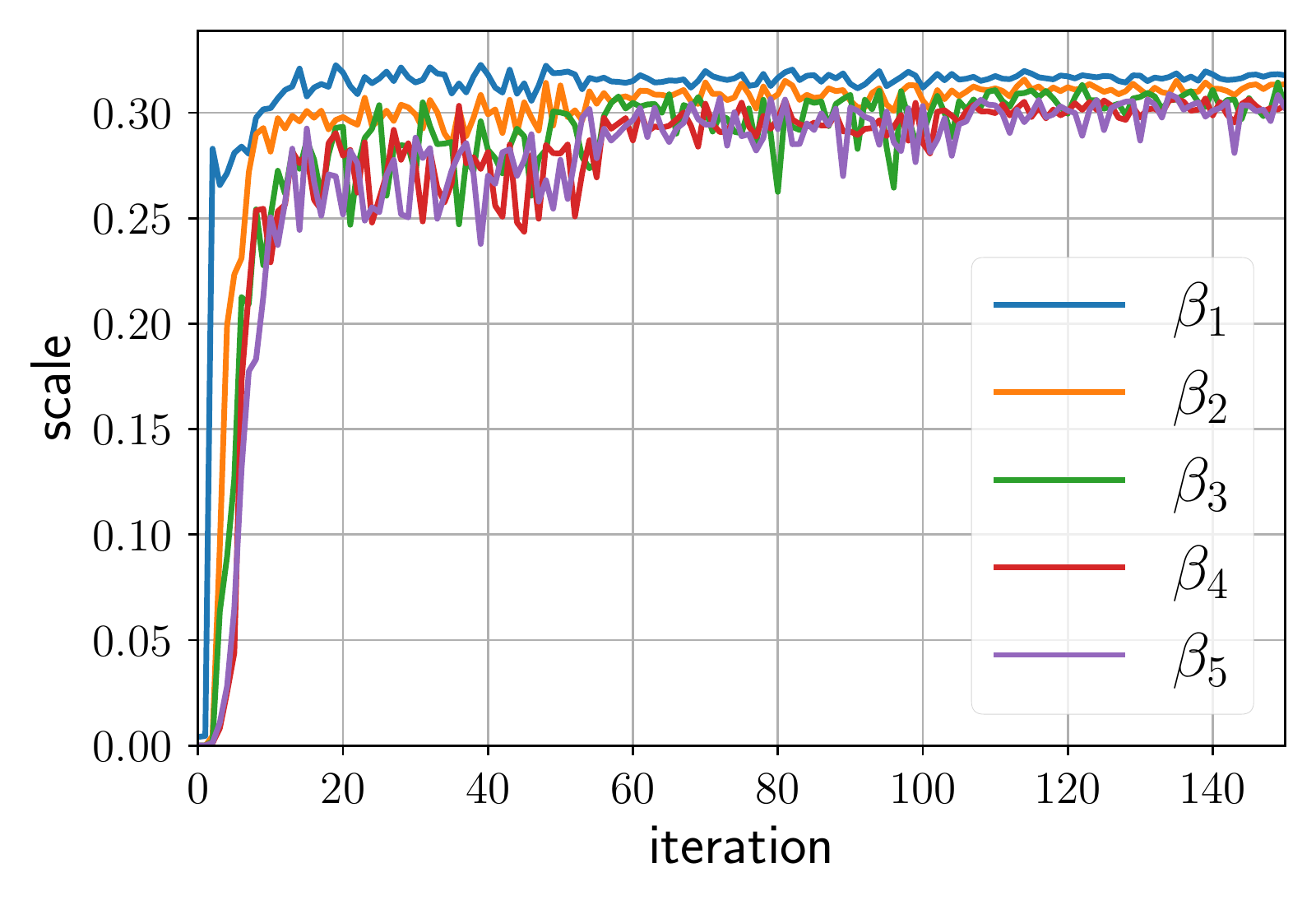}}\label{fig::32}}%
		\subfloat[Batch size $64$]{
			{\includegraphics[width=0.32\linewidth]{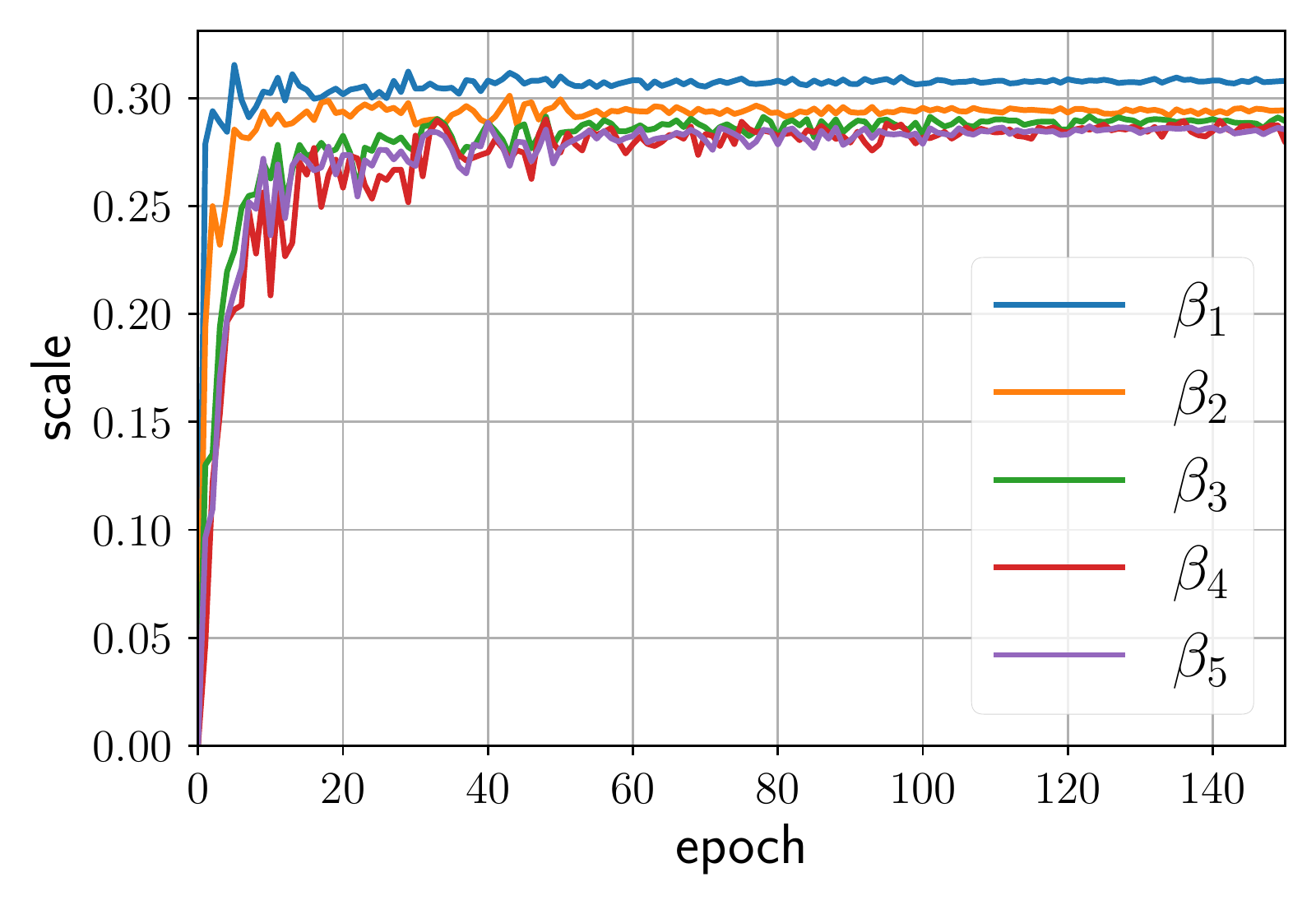}}\label{fig::64}}%
		\subfloat[Batch size $256$]{
			{\includegraphics[width=0.32\linewidth]{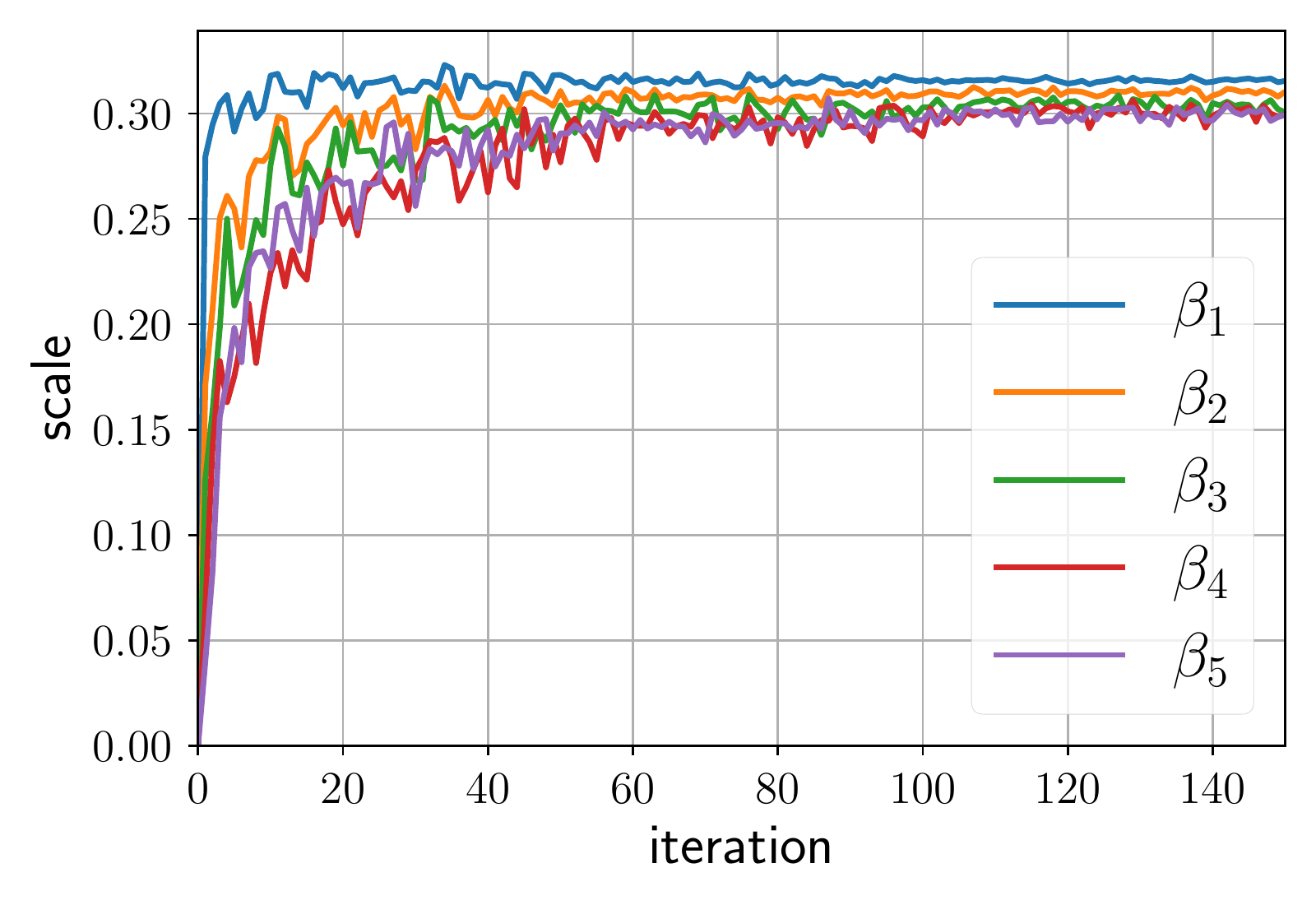}}\label{fig::256}}%
	\caption{\footnotesize
		Solution trajectories for AlexNet on the CIFAR-10 dataset with different batch sizes. The test accuracies for batch size $32, 64$ and $256$ are $91.50\%$, $91.79\%$, and $90.12\%$, respectively.}
	\label{fig::batch size}
\end{figure*}

\paragraph*{GD for specific learning problems.}
Although there does not exist a unifying framework to study the global convergence of GD for general learning tasks, its convergence has been established in specific learning problems, such as kernel regression (which includes neural tangent kernel \citep{jacot2018neural}), sparse recovery \citep{vaskevicius2019implicit}, matrix factorization \citep{li2018algorithmic}, tensor decomposition \citep{wang2020beyond, ge2021understanding}, and linear neural network \citep{arora2018convergence}. In what follows, we review specific learning tasks that are most related to our work.

The convergence of GD on kernel regression was studied far before the emergence of deep learning. \citet{bauer2007regularization, raskutti2014early} establish the convergence of gradient descent on a special class of nonparametric kernel regression called reproducing kernel Hilbert space (RKHS). 
Recently, \citet{jacot2018neural} discovered that under some conditions, neural networks can be approximated by a specific type of kernel models called neural tangent kernel (NTK). Later on, a series of papers studied the optimization \citep{allen2019convergence} and generalization \citep{allen2019learning, arora2019fine} properties of NTK. As for the matrix factorization, \citet{li2018algorithmic, stoger2021small} studied the global convergence of GD on the symmetric matrix sensing with noiseless measurements and overestimated rank. Later, these results were extended to noisy \citep{zhuo2021computational}, asymmetric \citep{ye2021global}, and robust \citep{ma2022global} variants. \citet{wang2020beyond, ge2021understanding} studied the dynamic of a modified GD for overcomplete nonconvex tensor decomposition. Moreover, \citet{razin2021implicit, razin2022implicit} analyzed the implicit regularization and the incremental learning of gradient flow in hierarchical tensor decomposition and showed its connection to neural networks.

\paragraph*{Conjugate kernel.} Conjugate kernel (CK) at the initial point has been considered as one of the promising methods for studying the generalization properties of DNNs \citep{daniely2016toward, hu2021random, fan2020spectra}. However, similar to NTK, a major shortcoming of CK is that it cannot fully characterize the behavior of the practical neural networks~\citep{vyas2022limitations}. Recent results have suggested that the conjugate kernel evaluated \textit{after training} (for both NTK and CK) can better describe the generalization properties of DNNs~\citep{fort2020deep, long2021properties}. In our work, we show that such “after kernel regime” can also be adapted to study the optimization trajectory of practical DNNs.
\section{Proofs for General Framework}
\label{sec::proof-general-framework}
\subsection{Proof of Proposition~\ref{prop::1}}
\label{proof-prop-1}
To prove this proposition, we first combine (\ref{eq::decomposed-loss}) and (\ref{eq::GD}):
\begin{equation}
    \begin{aligned}
        \vtheta_{t+1}=\vtheta_t-\eta\nabla \cL(\vtheta_t)=\vtheta_t - \eta \sum_{i\in\cI} \left(\beta_i(\vtheta_t)-\beta_i^{\star}\right)\nabla \beta_i(\vtheta_t).
    \end{aligned}
\end{equation}
For notational simplicity, we denote $E(\vtheta_t)=\frac{1}{2}\sum_{i\in \cE}\beta_i^2(\vtheta_t)$. Then, one can write 
\begin{equation}
    \begin{aligned}
        \vtheta_{t+1}=\vtheta_t-\eta\nabla \cL(\vtheta_t)=\vtheta_t - \eta \sum_{i\in\cS} \left(\beta_i(\vtheta_t)-\beta_i^{\star}\right)\nabla \beta_i(\vtheta_t)-\eta\nabla E(\vtheta_t).
    \end{aligned}
\end{equation}
Due to the Mean-Value Theorem, there exists a $\xi\in\bR^{m}$ such that
\begin{equation}
    \begin{aligned}
        \beta_i(\vtheta_{t+1}) & =\beta_i\left(\vtheta_t - \eta \sum_{j\in\cI} \left(\beta_j(\vtheta_t)-\beta_j^{\star}\right)\nabla \beta_j(\vtheta_t)\right)                                                                              \\
                               & = \beta_i(\vtheta_t) -\eta\sum_{j\in \cI} \left(\beta_j(\vtheta_t)-\beta_j^{\star}\right)\inner{\nabla\beta_i(\vtheta_t)}{\nabla\beta_j(\vtheta_t)} +\frac{\eta^2}{2}\inner{\nabla\cL(\vtheta_t)}{\nabla^2\beta_i(\xi)\nabla\cL(\vtheta_t)}.
    \end{aligned}
\end{equation}
On the other hand, one can write
\begin{equation}
    \begin{aligned}
        \left|\inner{\nabla\cL(\vtheta_t)}{\nabla^2\beta_i(\xi)\nabla\cL(\vtheta_t)}\right| \leq \sup_{\vtheta}\norm{\nabla^2\beta_i(\vtheta)}\norm{\nabla\cL(\vtheta_t)}^2.
    \end{aligned}
\end{equation}
For $\sup_{\vtheta}\norm{\nabla^2\beta_i(\vtheta)}$, we further have
\begin{equation}
    \begin{aligned}
        \sup_{\vtheta}\norm{\nabla^2\beta_i(\vtheta)} & =\sup_{\vtheta}\norm{\nabla^2\bE[f_{\vtheta}(\vx)\phi(\vx)]}                     \\
        & =\sup_{\vtheta}\norm{\bE[\nabla^2f_{\vtheta}(\vx)\phi(\vx)]}                     \\
        & \leq \sup_{\vtheta}\bE\left[\norm{\nabla^2f_{\vtheta}(\vx)}|\phi(\vx)|\right]             \\
        & \stackrel{(a)}{\leq}\sup_{\vtheta} \left(\bE\left[\norm{\nabla^2 f_\vtheta(\vx)}^2\right]\right)^{1/2}\left(\bE\left[\phi^2(\vx)\right]\right)^{1/2} \\
        & \stackrel{(b)}{\leq} L_H.
    \end{aligned}
\end{equation}
Here, we used Cauchy-Schwartz inequality for (a). Moreover, for (b), we used Assumption~\ref{assumption::L-smooth} and the definition of the orthonormal basis.
Hence, we have
\begin{equation}
    \begin{aligned}
        \beta_i(\vtheta_{t+1}) & =\beta_i(\vtheta_t) -\eta\sum_{j\in \cI} \left(\beta_j(\vtheta_t)-\beta_j^{\star}\right)\inner{\nabla\beta_i(\vtheta_t)}{\nabla\beta_j(\vtheta_t)}\pm (1/2)\eta^2L_H\norm{\nabla\cL(\vtheta_t)}^2.
    \end{aligned}
\end{equation}
Now, it suffices to bound $\norm{\nabla\cL(\vtheta_t)}^2$. Using Cauchy-Schwarz inequality, we have 
\begin{equation}
    \begin{aligned}
        \norm{\nabla\cL(\vtheta_t)}^2&=\norm{\nabla\bE\left[\frac{1}{2}\left(f_{\vtheta_t}-f_{\vtheta^{\star}}\right)^2\right]}^2\\
        &=\norm{\bE\left[\left(f_{\vtheta_t}-f_{\vtheta^{\star}}\right)\nabla f_{\vtheta_t}\right]}^2 \\
        &\leq\norm{f_{\vtheta_t}-f_{\vtheta^{\star}}}_{L^2(\cD)}^2\norm{\nabla f_{\vtheta_t}}_{L^2(\cD)}^2\\
        & \leq 4L_g^2L_f^2.
    \end{aligned}
\end{equation}
Therefore, we conclude that 
\begin{equation}
    \beta_i(\vtheta_{t+1}) =\beta_i(\vtheta_t) -\eta\sum_{j\in \cI} \left(\beta_j(\vtheta_t)-\beta_j^{\star}\right)\inner{\nabla\beta_i(\vtheta_t)}{\nabla\beta_j(\vtheta_t)}\pm 2\eta^2L_HL_g^2L_f^2.
\end{equation}
which completes the proof.$\hfill\square$

\subsection{Proof of Theorem~\ref{thm::general}}
\begin{proof}
    Invoking the \ref{gd_indep} condition, Proposition~\ref{prop::1} can be simplified as
    \begin{equation}
        \begin{aligned}
            \beta_i(\vtheta_{t+1}) &=\beta_i(\vtheta_t) -\eta\sum_{j\in \cI} \left(\beta_j(\vtheta_t)-\beta_j^{\star}\right)\inner{\nabla\beta_i(\vtheta_t)}{\nabla\beta_j(\vtheta_t)}\pm 2\eta^2L_HL_g^2L_f^2\\
            &=\beta_i(\vtheta_t) -\eta \left(\beta_i(\vtheta_t)-\beta_i^{\star}\right)\norm{\nabla\beta_{i}(\vtheta_t)}^2\pm 2\eta^2L_HL_g^2L_f^2.
        \end{aligned}
    \end{equation}
    We next provide upper and lower bounds for the residual and signal terms. Recall that $\cS=\{i\in\cI:\beta_{i}^{\star}\neq 0\}$, and $\cE=\cI\backslash\cS$. We first consider the dynamic of the signal term $\beta_{i}(\vtheta_t),i\in \cS$. Without loss of generality, we assume $\beta_{i}^{\star}>0$. Then, due to the gradient dominance condition, we have the following lower bound 
    \begin{equation}\label{eq_signal}
        \begin{aligned}
            \beta_{i}(\vtheta_{t+1})&\geq
            \beta_i(\vtheta_t) -\eta \left(\beta_i(\vtheta_t)-\beta_i^{\star}\right)\norm{\nabla\beta_{i}(\vtheta_t)}^2-2\eta^2L_HL_g^2L_f^2\\
            &\geq
            \left(1+C^2\eta \left(\beta_i^{\star}-\beta_i(\vtheta_t)\right)\beta_{i}^{2\gamma-1}(\vtheta_t)\right)\beta_i(\vtheta_t) -2\eta^2L_HL_g^2L_f^2,\quad i\in \cS
        \end{aligned}
    \end{equation}
    Next, for the dynamic of the residual term $\beta_{i}(\vtheta_t), i\in\cE$, we have 
    \begin{equation}\label{eq_residual}
        \beta_{i}(\vtheta_{t+1})=\left(1-\eta\norm{\nabla \beta_{i}(\vtheta_t)}^2\right)\beta_i(\vtheta_t)\pm 2\eta^2L_HL_g^2L_f^2.
    \end{equation}
    Next, we show that $\norm{\nabla \beta_i(\vtheta_t)}\leq L_g$. One can write,
    \begin{equation}
        \begin{aligned}
            \norm{\nabla \beta_i(\vtheta_t)}&=\norm{\nabla \bE\left[f_{\vtheta_t}(\vx)\phi_i(\vx)\right]}\\
            &\leq \bE\left[\norm{\nabla f_{\vtheta_t}(\vx)}|\phi_i(\vx)|\right]\\
            &\leq \left(\bE\left[\norm{\nabla f_{\vtheta_t}(\vx)}^2\right]\right)^{1/2}\left(\bE\left[\phi^2(\vx)\right]\right)^{1/2}\\
            &\leq L_g.
        \end{aligned}
    \end{equation}
    Due to our choice of the step-size, we have $\eta\lesssim \frac{1}{L_g^2}$, which in turn implies $\left|1-\eta\norm{\nabla \beta_{i}(\vtheta_t)}^2\right|\leq 1$. Therefore, we have
    \begin{equation}
        \begin{aligned}
            |\beta_{i}(\vtheta_{t+1})|&\leq |\beta_{i}(\vtheta_{t})|+2\eta^2L_HL_g^2L_f^2.
        \end{aligned}
    \end{equation}
    Now, we are ready to prove the theorem. We divide it into two cases.
    \paragraph*{Case 1: $\gamma=\frac{1}{2}$.} In this case, since we set the step-size $\eta\lesssim \frac{\alpha}{\sqrt{d}C^2L_HL_g^2L_f^{2}}\beta_k^{\star}\log^{-1}\left(\frac{d\beta_{k}^{\star}}{C_1\alpha}\right)$, we can simplify the dynamics of both signal and residual terms in \Eqref{eq_signal} and \Eqref{eq_residual} as follows
    \begin{equation}
        \begin{aligned}
            \beta_{i}(\vtheta_{t+1})&\geq \left(1+0.5C^2\eta \left(\beta_i^{\star}-\beta_i(\vtheta_t)\right)\right)\beta_i(\vtheta_t) \quad \forall i\in \cS,\\
            \left|\beta_{i}(\vtheta_{t+1})\right|&\leq |\beta_{i}(\vtheta_{t})|+2\eta^2L_HL_g^2L_f^2\qquad \qquad\quad\;\;\,\forall i\in \cE.
        \end{aligned}
    \end{equation}
    We first analyze the dynamic of signal $\beta_{i}(\vtheta_t)$ for $i\in\cS$.
    To this goal, we further divide this case into two phases. In the first phase, we assume $C_1\alpha\leq\beta_i(\vtheta_t)\leq \frac{1}{2}\beta_i^{\star}$. Under this assumption, we can simplify the dynamic of $\beta_i(\vtheta_t)$ as 
    \begin{equation}
        \begin{aligned}
            \beta_i(\vtheta_{t+1}) &\geq \left(1+0.25C^2\eta\beta_i^{\star}\right) \beta_i(\vtheta_t).
        \end{aligned}
    \end{equation}
    Therefore, within $T_1=\cO\left(\frac{1}{C^2\eta\beta_i^{\star}}\log\left(\frac{\beta_i^{\star}}{C_1\alpha}\right)\right)$ iterations, $\beta_i(\vtheta_t)$ becomes larger than $\frac{1}{2}\beta_i^{\star}$. In the second phase, we assume that $\beta_i(\vtheta_t)\geq \beta_{i}^{\star}/2$ and define $y_t = \beta_{i}^{\star}-\beta_i(\vtheta_t)$. One can write
    \begin{equation}
        \begin{aligned}
            y_{t+1}&\leq\left(1-0.5C^2\eta \beta_{i}(\vtheta_t)\right) y_t\\
            &\leq (1-0.25C^2\eta \beta_i^{\star})y_t.
        \end{aligned}
    \end{equation}
    Hence, with additional $T_2=\cO\left(\frac{1}{C^2\eta\beta_i^{\star}}\log\left(\frac{d\beta_i^{\star}}{\alpha}\right)\right)$, we have $y_t\leq \frac{\alpha}{\sqrt{d}}$ which implies $\beta_{i}(\vtheta_t)\geq \beta_{i}^{\star}-\frac{\alpha}{\sqrt{d}}$.
    Next, we show that there exists a time $t^{\star}$ such that $\beta_{i}^{\star}-\frac{1}{\sqrt{d}}\alpha\leq\beta_{i}(\vtheta_{t^{\star}})\leq \beta_{i}^{\star}+\frac{1}{\sqrt{d}}\alpha$. Without loss of generality, we assume that $t^{\star}$ is the first time that $\beta_{i}(\vtheta_t)\geq \beta_{i}^{\star}-\frac{1}{\sqrt{d}}\alpha$. Due to the dynamic of $\beta_{i}(\vtheta_t)$, the distance between two adjacent iterations can be upper bounded as
    \begin{equation}
        \begin{aligned}
            \left|\beta_{i}(\vtheta_{t+1})-\beta_{i}(\vtheta_t)\right|&\leq \eta \left|\beta_{i}^{\star}-\beta_{i}(\vtheta_t)\right|\norm{\nabla\beta_{i}(\vtheta_t)}^2+2\eta^2L_HL_g^2L_f^2\\
            &\leq \eta L_g^2\left|\beta_{i}^{\star}-\beta_{i}(\vtheta_t)\right|+2\eta^2L_HL_g^2L_f^2.
        \end{aligned}
    \end{equation}
    In particular, for $t = t^{\star}-1$, we have $\beta_{i}(\vtheta_{t^{\star}-1})\leq \beta_i^{\star}-\frac{1}{\sqrt{d}}\alpha$, which in turn implies
    \begin{equation}
        \begin{aligned}
            \beta_i(\vtheta_{t^{\star}})&\leq \beta_i(\vtheta_{t^{\star}-1})+\eta L_g^2\left|\beta_{i}^{\star}-\beta_{i}(\vtheta_{t^{\star}-1})\right|+2\eta^2L_HL_g^2L_f^2\\
            &\leq \beta_{i}^{\star}+2\eta^2L_HL_g^2L_f^2\\
            &\leq \beta_{i}^{\star}+\frac{1}{\sqrt{d}}\alpha.
        \end{aligned}
    \end{equation}
    
    Therefore, for each $i\in\cS$, we have $|\beta_i(\vtheta_t)-\beta_{i}^{\star}|\leq \frac{1}{\sqrt{d}}\alpha$ within $T_i=\cO\left(\frac{1}{C^2\eta\beta_{i}^{\star}}\log\left(\frac{d\beta_{i}^{\star}}{C_1\alpha}\right)\right)$ iterations. Meanwhile, we can show that the residual term $|\beta_{i}\vtheta_t)|,\forall i\in\cE$ remains small for $\max_{i\in\cS}T_i=\cO\left(\frac{1}{C^2\eta\beta_{k}^{\star}}\log\left(\frac{d\beta_{k}^{\star}}{C_1\alpha}\right)\right)$ iterations:
    \begin{equation}
        \begin{aligned}
            \left|\beta_{i}(\vtheta_{t})\right|&\leq \left|\beta_{i}(\vtheta_{0})\right|+\max_{i\in \cS}T_i\cdot 2\eta^2L_HL_g^2L_f^2\\
            &=\left|\beta_{i}(\vtheta_{0})\right|+\cO\left(\frac{1}{\beta_{k}^{\star}}\eta\log\left(\frac{\beta_{k}^{\star}}{C_1\alpha}\right)L_HL_g^2L_f^2\right)\\
            &=\left|\beta_{i}(\vtheta_{0})\right|+\cO\left(\frac{1}{\sqrt{d}}\alpha\right).
        \end{aligned}
    \end{equation}
    Therefore, we have that within $T=\cO\left(\frac{1}{\eta\beta_{k}^{\star}}\log\left(\frac{d\beta_{k}^{\star}}{C_1\alpha}\right)\right)$ iterations:
    \begin{equation}
        \begin{aligned}
            \norm{f_{\vtheta_T}-f_{\vtheta^{\star}}}_{L^2(\cD)}^2&=\sum_{i\in \cS}\left(\beta_i^{\star}-\beta_i(\vtheta_T)\right)^2+\sum_{i\in\cE}\beta_{i}^2(\vtheta_T)\lesssim \alpha^2.
        \end{aligned}
    \end{equation}
    \paragraph*{Case 2: $\frac{1}{2}<\gamma\leq 1$.} In this case, we have the following bounds for the signal and residual terms 
    \begin{equation}
        \begin{aligned}
            \beta_{i}(\vtheta_{t+1})
            &\geq
            \left(1+C^2\eta \left(\beta_i^{\star}-\beta_i(\vtheta_t)\right)\beta_{i}^{2\gamma-1}(\vtheta_t)\right)\beta_i(\vtheta_t) -2\eta^2L_HL_g^2L_f^2\quad \forall i\in \cS,\\
            \left|\beta_{i}(\vtheta_{t+1})\right|&\leq |\beta_{i}(\vtheta_{t})|+2\eta^2L_HL_g^2L_f^2\qquad \qquad\qquad\qquad\qquad\qquad\qquad\quad\;\;\,\forall i\in \cE.
        \end{aligned}
    \end{equation}
    We first analyze the dynamic of the signal term $\beta_i(\vtheta_t)$ for $i\in\cS$. We will show that $|\beta_i(\vtheta_t)-\beta^\star_i|\leq \frac{\alpha}{\sqrt{k}}$ within $T=\cO\left(\frac{1}{C^2\eta\beta_i^{\star}\alpha^{2\gamma-1}}\right)$ iterations. 
    Due to $\eta\lesssim \frac{\alpha^{2\gamma}}{\sqrt{d}C^2L_HL_g^2L_f^{2}}\beta_k^{\star 2\gamma}$, we can further simplify the dynamic of $\beta_i(\vtheta_t)$ as 
    \begin{equation}
        \beta_{i}(\vtheta_{t+1})
        \geq
        \left(1+0.5C^2\eta \left(\beta_i^{\star}-\beta_i(\vtheta_t)\right)\beta_{i}^{2\gamma-1}(\vtheta_t)\right)\beta_i(\vtheta_t).
    \end{equation}
    Next, we divide our analysis into two phases. In the first phase, we have $\beta_i(\vtheta_t)\leq \frac{1}{2}\beta_i^{\star}$. We denote the number of iterations for this phase as $T_{i,1}$. We further divide this period into $\ceil{\log\left(\beta_i^{\star}/2\alpha\right)}$ substages. In each Substage $k$, we have $C_12^{k-1}\alpha\leq \beta_i(\vtheta_t)\leq C_12^k\alpha$. Let $t_k$ be the number of iterations in Substage $k$. We first provide an upper bound for $t_k$. To this goal, note that at this substage 
    \begin{equation}
        \begin{aligned}
            \beta_{i}(\vtheta_{t+1})
            &\geq
            \left(1+0.5C^2\eta \left(\beta_i^{\star}-\beta_i(\vtheta_t)\right)\beta_{i}^{2\gamma-1}(\vtheta_t)\right)\beta_i(\vtheta_t)\\
            &\geq \left(1+0.25C^2\eta \beta_i^{\star}\beta_{i}^{2\gamma-1}(\vtheta_t)\right)\beta_i(\vtheta_t).
        \end{aligned}
    \end{equation}
    Hence, we have
    \begin{equation}
        t_k\leq \frac{\log(2)}{\log\left(1+0.25C^2\eta\beta_i^{\star}(C_12^{k-1}\alpha)^{2\gamma-1}\right)}.
    \end{equation}
    Summing over $t_k$, we obtain an upper bound for $T_{i,1}$
    \begin{equation}
        \begin{aligned}
            T_{i,1}=\sum_{i=1}^{\ceil{\log\left(\beta_1^{\star}/2\alpha\right)}}t_k\lesssim \sum_{k=1}^{\infty}\frac{1}{C^2\eta\beta_i^{\star}(C_12^{k-1}\alpha)^{2\gamma-1}}\lesssim \frac{1}{C^2\eta\beta_i^{\star}(C_1\alpha)^{2\gamma-1}}.
        \end{aligned}
    \end{equation} 
    Via a similar argument, we can show that in the second phase, we have $\left|\beta_i(\vtheta_t)-\beta_i^{\star}\right|\lesssim \frac{1}{\sqrt[]{d}}\alpha$ within additional $T_{i,2}= \cO\left(\frac{1}{C^2\eta\beta_i^{\star}(C_1\alpha)^{2\gamma-1}}\right)$ iterations.
    Therefore, for each $i\in\cS$, we conclude that $|\beta_i(\vtheta_t)-\beta_{i}^{\star}|\leq \frac{\alpha}{\sqrt{d}}$ within $T_i= T_{i,1}+T_{i,2} = \cO\left(\frac{1}{C^2\eta\beta_{i}^{\star}(C_1\alpha)^{2\gamma-1}}\right)$ iterations. Meanwhile, for the residual term $\beta_i(\vtheta_t),i\in\cE$, we have 
    \begin{equation}
        \begin{aligned}
            \left|\beta_{i}(\vtheta_{t})\right|&\leq \left|\beta_{i}(\vtheta_{0})\right|+\max_{i\in \cS}T_i\cdot 2\eta^2L_HL_g^2L_f^2\\
            &=\left|\beta_{i}(\vtheta_{0})\right|+\cO\left(\frac{1}{\sqrt[]{d}}\alpha\right).
        \end{aligned}
    \end{equation}
    Therefore, 
    \begin{equation}
        \begin{aligned}
            \norm{f_{\vtheta_T}-f_{\vtheta^{\star}}}_{L^2(\cD)}^2&=\sum_{i\in \cS}\left(\beta_i^{\star}-\beta_i(\vtheta_T)\right)^2+\sum_{i\in\cE}\beta_{i}^2(\vtheta_T)\lesssim \alpha^2,
        \end{aligned}
    \end{equation}
     within $T=\cO\left(\frac{1}{C^2\eta\beta_{k}^{\star}(C_1\alpha)^{2\gamma-1}}\right)$ iterations. This completes the proof.
\end{proof}

\section{Proofs for Kernel Regression}
\label{appendix::kernel}
\subsection{Proof of Proposition~\ref{proposition::dynamic-kernel}}
Note that $\vtheta_{t+1}=\vtheta_t-\eta (\vtheta_t-\vtheta^{\star})$, and $\beta_i(\vtheta)=\evtheta_i$.
Hence, for every $1\leq i\leq k$, we have
\begin{equation}
    \beta_i(\vtheta_{t+1})=\beta_i(\vtheta_t)+\eta (\evtheta_i^{\star}-\beta_i(\vtheta_t)).
\end{equation}
This in turn implies
\begin{equation}
    \beta_i^{\star}-\beta_i(\vtheta_{t+1})=(1-\eta)\left(\beta_i^{\star}-\beta_i(\vtheta_{t})\right)\implies \beta_i^{\star}-\beta_i(\vtheta_{t})=(1-\eta)^t\left(\beta_i^{\star}-\beta_i(\vtheta_{0})\right).
\end{equation}

For $i>k$, we have
\begin{equation}
    \beta_i(\vtheta_{t+1})=(1-\eta)\beta_i(\vtheta_t)\implies \beta_i(\vtheta_t)=(1-\eta)^t\beta_i(\vtheta_0),
\end{equation}
which completes the proof.$\hfill \square$

\subsection{Proof of Theorem~\ref{thm::kernel}}
Due to our choice of initial point $\vtheta_0=\alpha \vone, \alpha\lesssim |\beta_k^{\star}|$, we have $|\beta_{i}^{\star}-\beta_{i}(\vtheta_0)|\leq 2|\beta_{i}^{\star}|$. Hence, by Proposition~\ref{proposition::dynamic-kernel}, we have
\begin{equation}
    \begin{aligned}
        \left|\beta_i^{\star}-\beta_i(\vtheta_{t})\right|=(1-\eta)^t\left|\beta_i^{\star}-\beta_i(\vtheta_{0})\right|\leq 2(1-\eta)^t\beta_i^{\star}, \quad i\in \cS
    \end{aligned}
\end{equation}
\begin{equation}
    |\beta_i(\vtheta_{t})|\leq (1-\eta)^t\alpha,\quad i\in \cE.
\end{equation}
Therefore, to prove $\left|\beta_i^{\star}-\beta_i(\vtheta_{t})\right|\leq \frac{\alpha}{\sqrt[]{k}},i\in\cS$, it suffices to have
\begin{equation}
    2(1-\eta)^t\beta_i^{\star}\leq \frac{\alpha}{\sqrt[]{k}}\quad \implies \quad t\gtrsim \frac{1}{\eta}\log\left(\frac{k\theta_i}{\alpha}\right).
\end{equation}
On the other hand, to ensure $|\beta_i(\vtheta_t)|\leq \frac{\alpha}{\sqrt[]{d}},i\in \cE$, it suffices to have
\begin{equation}
    (1-\eta)^t\alpha\leq \frac{\alpha}{\sqrt[]{d}}\quad \implies \quad t\gtrsim \frac{1}{\eta}\log\left(d\right).
\end{equation}
Recall that $\alpha\lesssim \frac{k|\theta_k|}{d}$ and $|\theta_1|\geq |\theta_2|\geq \cdots\geq |\theta_k|>0$. Therefore, within $T=\cO\left(\frac{1}{\eta}\log\left(\frac{k|\theta_1|}{\alpha}\right)\right)$ iterations, we have
\begin{equation}
    \norm{\vtheta_T-\vtheta^{\star}}\leq \sqrt[]{\sum_{i=1}^k\frac{\alpha^2}{k}+\sum_{i=k+1}^d\frac{\alpha^2}{d}}\leq \sqrt[]{2}\alpha,
\end{equation}
which completes the proof.$\hfill\square$
\section{Proofs for Symmetric Matrix Factorization}
\subsection{Initialization} We start by proving that both~\ref{lb_initial} and~\ref{up_initial} are satisfied with high probability. Recall that each element of $\mU_{0}$ is drawn from $\cN(0,\alpha^2)$. The following proposition characterizes the upper and lower bounds for different coefficients $\beta_{ij}(\mU_0)$. 
\begin{proposition}[Initialization]
    \label{lem::init-sym-matrix}
    With probability at least $1-e^{-\Omega(r')}$, we have
    \begin{equation}
        \beta_{ii}(\mU_0)=\inner{\vz_i\vz_i^{\top}}{\mU_0\mU_0^{\top}}\geq \frac{1}{4}r'\alpha^2, \quad\text{for all } 1\leq i\leq r,
        \label{eq::17}
    \end{equation}
    and
    \begin{equation}
        |\beta_{ij}(\mU_0)|=\left|\inner{\vz_i\vz_j^{\top}}{\mU_0\mU_0^{\top}}\right|\leq 4\log(d)r'\alpha^2, \quad\text{for all $i\not=j$ or $i,j>r$}.
        \label{eq::15}
    \end{equation}
\end{proposition}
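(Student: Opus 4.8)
The plan is to reduce the statement to standard Gaussian concentration for $\chi^2$ random variables. Since $\mU_0=\alpha\mB$ with $\mB$ having i.i.d.\ standard normal entries and $\{\vz_i\}$ orthonormal, I would first write $\beta_{ij}(\mU_0)=\vz_i^{\top}\mU_0\mU_0^{\top}\vz_j=\alpha^2\inner{\mB^{\top}\vz_i}{\mB^{\top}\vz_j}$ and set $\vg_i:=\mB^{\top}\vz_i\in\bR^{r'}$. Because $\vz_i$ has unit norm, each coordinate $(\vg_i)_a=\sum_k B_{ka}(\vz_i)_k$ is a standard normal, and the coordinates are independent (they involve disjoint columns of $\mB$), so $\vg_i\sim\cN(\vzero,\mI_{r'})$ and in particular $\|\vg_i\|^2\sim\chi^2_{r'}$ with $\bE[\|\vg_i\|^2]=r'$. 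No cross-independence between different $\vg_i$'s is needed.

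For the lower bound on the signal terms, I would apply the lower-tail Laurent--Massart inequality $\Pr[\chi^2_{r'}\le r'-2\sqrt{r'x}]\le e^{-x}$ with $x=r'/8$; since $1-1/\sqrt2>1/4$ this yields $\Pr[\|\vg_i\|^2\le \tfrac14 r']\le e^{-r'/8}$, and a union bound over the at most $r$ signal indices gives $\beta_{ii}(\mU_0)=\alpha^2\|\vg_i\|^2\ge \tfrac14 r'\alpha^2$ for all $1\le i\le r$ with probability $1-e^{-\Omega(r')}$, which is~\eqref{eq::17}. For the off-diagonal and diagonal-residual terms I would instead control all $d$ norms simultaneously: by the upper-tail bound $\Pr[\chi^2_{r'}\ge r'+2\sqrt{r'x}+2x]\le e^{-x}$ with $x=\tfrac12 r'\log d$, one checks $r'+2\sqrt{r'x}+2x\le 4r'\log d$ for $d\ge 3$, so $\Pr[\|\vg_i\|^2\ge 4r'\log d]\le d^{-r'/2}$. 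Union-bounding over $\vg_1,\dots,\vg_d$, with probability $1-d\cdot d^{-r'/2}=1-e^{-\Omega(r')}$ we have $\|\vg_i\|^2\le 4r'\log d$ for every $i$, and then Cauchy--Schwarz gives $|\beta_{ij}(\mU_0)|=\alpha^2|\inner{\vg_i}{\vg_j}|\le\alpha^2\|\vg_i\|\,\|\vg_j\|\le 4r'\log(d)\alpha^2$ for $i\ne j$, while $|\beta_{ii}(\mU_0)|=\alpha^2\|\vg_i\|^2\le 4r'\log(d)\alpha^2$ for $i=j>r$; this is~\eqref{eq::15}. Intersecting the two events and a final union bound finishes the proof.

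The only care needed is bookkeeping the constants in the two $\chi^2$ tail bounds so as to land exactly on the factors $\tfrac14$ and $4\log d$ in the statement, and ensuring the union bounds do not degrade the probability. For~\eqref{eq::17} there are only $r\le r'$ events. For~\eqref{eq::15} the key point is that reducing ``bound each of the $\sim d^2$ inner products'' to ``bound the $d$ norms and apply Cauchy--Schwarz'' keeps the count of events at $d$, and the $\log d$ built into the deviation (hence a per-event failure $d^{-\Omega(r')}$) absorbs that factor, so the overall failure stays $e^{-\Omega(r')}$ rather than merely $1/\poly(d)$. I do not anticipate any genuinely hard step: once the Gaussian structure $\vg_i=\mB^{\top}\vz_i\sim\cN(\vzero,\mI_{r'})$ is identified, the proposition is a routine concentration-of-measure argument.
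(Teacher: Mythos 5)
Your proposal is correct and follows essentially the same route as the paper: reduce everything to the norms of the Gaussian vectors $\mU_0^{\top}\vz_i$ (your $\alpha\vg_i$), concentrate each norm, union-bound over the $d$ indices, and use Cauchy--Schwarz for the cross terms $i\neq j$. The only difference is cosmetic --- you invoke Laurent--Massart $\chi^2$ tails where the paper cites generic Gaussian norm concentration --- and your constant bookkeeping is, if anything, more careful than the paper's.
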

\begin{proof}
    First note that $\mU_0^{\top}\vz_i\sim \cN(0,\alpha^2 I_{r'\times r'})$. Hence, a standard concentration bound on Gaussian random vectors implies
    \begin{equation}
        \bP\left(\left|\norm{\mU_0^{\top}\vz_i}-\alpha\sqrt[]{r'}\right|\geq \alpha\delta\right)\leq 2\exp\{-c\delta^2\}.
    \end{equation}
    Via a union bound, we have that with probability of at least $1-e^{-Cr'}$:
    \begin{equation}
        \norm{\mU_0^{\top}\vz_i}\leq 2\alpha \sqrt[]{r'\log(d)}, \quad 1\leq i\leq d,\qquad 
        \norm{\mU_0^{\top}\vz_i}\geq 0.5\alpha \sqrt[]{r'}, \quad\forall 1\leq i\leq r.
    \end{equation}
    Given these bounds, one can write
    \begin{equation}\label{eq_beta_ii}
        \beta_{ii}(\mU_0)=\inner{\vz_i\vz_i^{\top}}{\mU_0\mU_0^{\top}}=\norm{\mU_0^{\top}\vz_i}^2\geq \frac{1}{4}r'\alpha^2 \quad\forall 1\leq i\leq r,
    \end{equation}
    and
    \begin{equation}\label{eq_beta_ij}
        \left|\beta_{ij}(\mU_0)\right|=\left|\inner{\vz_i\vz_j^{\top}}{\mU_0\mU_0^{\top}}\right|\leq \norm{\mU_0^{\top}\vz_i}\norm{\mU_0^{\top}\vz_j}\leq 4\log(d)r'\alpha^2 \quad\forall 1\leq i,j\leq d,
    \end{equation}
    which completes the proof.
\end{proof}
\subsection{One-step Dynamics}
\label{sec::proof-prop-matrix-dynamic}
In this section, we characterize the one-step dynamics of the basis coefficients. To this goal, we first provide a more precise statement of Proposition~\ref{prop::incremental} along with its proof.
\begin{proposition}
    \label{prop::matrix-dynamic}
    For the diagonal element $\beta_{ii}(\mU_t)$, we have
    \begin{equation}
        \begin{aligned}
            \beta_{ii}(\mU_{t+1}) & =(1+2\eta\left(\sigma_i-\beta_{ii}(\mU_{t}))\right)\beta_{ii}(\mU_{t})-2\eta \sum_{j\neq i}\beta^2_{ij}(\mU_t)                                            \\
                                  & \quad +\eta^2\left(\sum_{j,k}\beta_{ij}(\mU_t)\beta_{ik}(\mU_t)\beta_{jk}(\mU_t)-2\sigma_i\sum_{j}\beta^2_{ij}(\mU_t)+\sigma_i^2\beta_{ii}(\mU_t)\right).
        \end{aligned}
    \end{equation}
    where $\sigma_i=0$ for $r<i\leq d$. Moreover, for every $i\neq j$, we have
    \begin{equation}
        \begin{aligned}
            \beta_{ij}(\mU_{t+1}) & =\left(1+\eta\left(\sigma_i+\sigma_j-2\beta_{ii}(\mU_t)-2\beta_{jj}(\mU_t)\right)\right)\beta_{ij}(\mU_{t})+2\eta\sum_{k\neq i,j}\beta_{ik}(\mU_t)\beta_{jk}(\mU_t)                    \\
                                  & \quad +\eta^2\left(\sum_{k,l}\beta_{ik}(\mU_t)\beta_{kl}(\mU_t)\beta_{lj}(\mU_t)-(\sigma_i+\sigma_j)\sum_k\beta_{ik}(\mU_t)\beta_{kj}(\mU_t)+\sigma_i\sigma_j\beta_{ij}(\mU_t)\right).
        \end{aligned}
    \end{equation}
\end{proposition}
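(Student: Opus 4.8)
The plan is to lift the $\mU$-level GD recursion to a recursion for the Gram matrix $\mM_t:=\mU_t\mU_t^\top$, and then read off the coordinates $\beta_{ij}(\mU_t)=\vz_i^\top\mM_t\vz_j$ in the orthonormal eigenbasis $\{\vz_k\}_{k=1}^{d}$ of $\mM^{\star}:=\mU^\star{\mU^\star}^\top=\sum_{k}\sigma_k\vz_k\vz_k^\top$ (with $\sigma_k=0$ for $k>r$, as in the statement). Since the entries of $\rmX$ are i.i.d.\ with mean $0$ and unit variance, $\bE[\inner{A}{\rmX}^2]=\norm{A}_F^2$ for every fixed $A$, so $\cL(\mU)=\tfrac14\norm{\mM-\mM^{\star}}_F^2$ with $\mM=\mU\mU^\top$, and a direct differentiation gives $\nabla\cL(\mU)=(\mM-\mM^{\star})\mU$. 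Setting $\mG_t:=\mM_t-\mM^{\star}$ (which is symmetric), the GD step becomes $\mU_{t+1}=(\mI-\eta\mG_t)\mU_t$, hence
\begin{equation*}
    \mM_{t+1}=(\mI-\eta\mG_t)\,\mM_t\,(\mI-\eta\mG_t)=\mM_t-\eta\bigl(\mG_t\mM_t+\mM_t\mG_t\bigr)+\eta^2\,\mG_t\mM_t\mG_t .
\end{equation*}

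Next I would project this identity onto $\vz_i$ and $\vz_j$. Writing $\mM_t=\sum_{k,l}\beta_{kl}(\mU_t)\vz_k\vz_l^\top$ and $\mG_t=\sum_{k,l}\bigl(\beta_{kl}(\mU_t)-\sigma_k\delta_{kl}\bigr)\vz_k\vz_l^\top$ and using orthonormality of the $\vz_k$, every term collapses to a sum of products of coefficients (I suppress the argument $\mU_t$): the $\eta^0$ term is $\beta_{ij}$; the $\eta^1$ term is $\vz_i^\top(\mG_t\mM_t+\mM_t\mG_t)\vz_j=2\sum_k\beta_{ik}\beta_{kj}-(\sigma_i+\sigma_j)\beta_{ij}$, where the two eigenvalue contributions come precisely from the Kronecker delta in $\mG_t$; and the $\eta^2$ term is $\vz_i^\top\mG_t\mM_t\mG_t\vz_j=\sum_{k,l}\beta_{ik}\beta_{kl}\beta_{lj}-(\sigma_i+\sigma_j)\sum_k\beta_{ik}\beta_{kj}+\sigma_i\sigma_j\beta_{ij}$. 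The last expression already reproduces verbatim the $\eta^2$ lines of both displayed formulas (for $i=j$ one further rewrites $\sum_{k,l}\beta_{ik}\beta_{kl}\beta_{li}=\sum_{k,l}\beta_{ik}\beta_{il}\beta_{kl}$ using $\beta_{li}=\beta_{il}$, turning it into $\sum_{j,k}\beta_{ij}\beta_{ik}\beta_{jk}$).

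Finally I would split into the diagonal and off-diagonal cases by peeling the $k\in\{i,j\}$ summands out of $\sum_k\beta_{ik}\beta_{kj}$. For $i\ne j$ these summands are $\beta_{ii}\beta_{ij}$ and $\beta_{jj}\beta_{ij}$; absorbing them together with the $-(\sigma_i+\sigma_j)\beta_{ij}$ into the coefficient of $\beta_{ij}$ produces the prefactor $1+\eta(\sigma_i+\sigma_j-2\beta_{ii}-2\beta_{jj})$, while the remaining indices $k\notin\{i,j\}$ form the cross term $2\eta\sum_{k\ne i,j}\beta_{ik}(\mU_t)\beta_{jk}(\mU_t)$ (using $\beta_{kj}=\beta_{jk}$). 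For $i=j$ the single summand $k=i$ contributes $\beta_{ii}^2$, which gives the prefactor $1+2\eta(\sigma_i-\beta_{ii})$ on $\beta_{ii}$ and leaves $-2\eta\sum_{j\ne i}\beta_{ij}^2$. Recombining the $\eta^0,\eta^1,\eta^2$ pieces in each case yields the two stated identities.

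This is a bookkeeping computation rather than one with a conceptual obstacle; the only places where care is needed are (i) attaching the correct eigenvalue $\sigma_i$ versus $\sigma_j$ to each coefficient via the $\delta_{kl}$ inside $\mG_t$, and (ii) cleanly isolating the $k\in\{i,j\}$ terms so that the multiplicative prefactors in front of $\beta_{ij}(\mU_t)$ and $\beta_{ii}(\mU_t)$ come out exactly as claimed, with the leftover sum restricted to $k\notin\{i,j\}$.
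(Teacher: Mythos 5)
Your proposal is correct and takes essentially the same route as the paper: expand $\mU_{t+1}\mU_{t+1}^{\top}$ via the GD update, write $\mM_{t+1}=\mM_t-\eta(\mG_t\mM_t+\mM_t\mG_t)+\eta^2\mG_t\mM_t\mG_t$, project onto the eigenbasis of $\mM^\star$, and collect terms by powers of $\eta$. One remark: carried out literally, your peeling of the $k\in\{i,j\}$ summands out of $-2\eta\sum_{k}\beta_{ik}(\mU_t)\beta_{kj}(\mU_t)$ yields $-2\eta\sum_{k\neq i,j}\beta_{ik}(\mU_t)\beta_{jk}(\mU_t)$ in the off-diagonal identity, which exposes a sign typo in the stated proposition (the paper's own intermediate display has the same discrepancy with its final formula), so be careful not to silently flip that sign to match the statement.
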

\begin{proof}
    The iterations of GD on SMF take the form
    \begin{equation}
        \mU_{t+1}=\mU_t-\eta (\mU_t\mU_t^{\top}-\mM^{\star})\mU_t.
    \end{equation}
    This leads to
    \begin{equation}
        \begin{aligned}
            \mU_{t+1}\mU_{t+1}^{\top} & =\mU_t\mU_t^{\top}-\eta (\mU_t\mU_t^{\top}-\mM^{\star})\mU_t\mU_t^{\top}-\eta \mU_t\mU_t^{\top} (\mU_t\mU_t^{\top}-\mM^{\star}) \\
                                      & \quad +\eta^2 (\mU_t\mU_t^{\top}-M^{\star})\mU_t\mU_t^{\top} (\mU_t\mU_t^{\top}-\mM^{\star}).
        \end{aligned}
    \end{equation}
    Recall that $\beta_{ij}(\mU_t) = \langle \mU_t\mU_t^\top, \vz_i\vz_j^\top\rangle$, $\mU_t\mU_t^{\top}=\sum_{i,j}\beta_{ij}(\mU_t)\vz_i\vz_j^{\top}$, and $\mM^{\star}=\sum_{i=1}^{r}\sigma_i\vz_i\vz_i^{\top}$. Based on these definitions, one can write
    \begin{equation}\label{eq_beta_t}
        \begin{aligned}
            \beta_{ij}(\mU_{t+1}) &= \langle\mU_{t+1}\mU_{t+1}^{\top}, \vz_i\vz_j^\top\rangle\\
            &=\langle\mU_t\mU_t^{\top},\vz_i\vz_j^\top\rangle-\eta \langle(\mU_t\mU_t^{\top}-\mM^{\star})\mU_t\mU_t^{\top},\vz_i\vz_j^\top\rangle-\eta \langle\mU_t\mU_t^{\top} (\mU_t\mU_t^{\top}-\mM^{\star}), \vz_i\vz_j^\top\rangle \\
                                      & \quad +\eta^2 \langle(\mU_t\mU_t^{\top}-M^{\star})\mU_t\mU_t^{\top} (\mU_t\mU_t^{\top}-\mM^{\star}), \vz_i\vz_j^\top\rangle.
        \end{aligned}
    \end{equation}
    
    In light of the above equality and the orthogonality of $\{\vz_i\}_{i\in [d]}$, the RHS of~\Eqref{eq_beta_t} can be written in terms of $\beta_{kl}(\mU_t), 1\leq k,l\leq d$. In particular
    \begin{equation}
        \begin{aligned}
            \inner{(\mU_t\mU_t^{\top}-\mM^{\star})\mU_t\mU_t^{\top}}{\vz_i\vz_j^{\top}} & =\inner{\left(\sum_{i,j}\left(\beta_{ij}(\mU_t)-\beta_{ij}^{\star}\right)\vz_i\vz_j^{\top}\right)\sum_{i,j}\beta_{ij}(\mU_t)\vz_i\vz_j^{\top}}{\vz_i\vz_j^{\top}} \\
                                                                                        & =\inner{\sum_{i,j}\sum_k\left(\beta_{ik}(\mU_t)-\beta_{ik}^{\star}\right)\beta_{kj}(\mU_t)\vz_i\vz_j^{\top}}{\vz_i\vz_j^{\top}}                                   \\
                                                                                        & =\sum_k\left(\beta_{ik}(\mU_t)-\beta_{ik}^{\star}\right)\beta_{kj}(\mU_t)                                                                                         \\
                                                                                        & =\left(\beta_{ii}(\mU_t)-\sigma_i\right)\beta_{ij}(\mU_t)+\sum_{k\neq i}\left(\beta_{ik}(\mU_t)-\beta_{ik}^{\star}\right)\beta_{kj}(\mU_t).
        \end{aligned}
    \end{equation}
    Other terms in~\Eqref{eq_beta_t} can be written in terms of $\beta_{ij}(\mU_t)$ in an identical fashion. Substituting these derivations back in \Eqref{eq_beta_t}, we obtain
    \begin{equation}
        \begin{aligned}
            \beta_{ij}(\mU_{t+1}) & =\left(1+\eta\left(\sigma_i+\sigma_j\right)\right)\beta_{ij}(\mU_{t})-2\eta \sum_{k}\beta_{ik}(\mU_t)\beta_{kj}(\mU_t)                                                                 \\
                                  & \quad +\eta^2\left(\sum_{k,l}\beta_{ik}(\mU_t)\beta_{kl}(\mU_t)\beta_{lj}(\mU_t)-(\sigma_i+\sigma_j)\sum_k\beta_{ik}(\mU_t)\beta_{kj}(\mU_t)+\sigma_i\sigma_j\beta_{ij}(\mU_t)\right).
        \end{aligned}
    \end{equation}
    Note that the above equality holds for any $1\leq i,j\leq r'$.
    In particular, for $i=j$, we further have
    \begin{equation}
        \begin{aligned}
            \beta_{ii}(\mU_{t+1}) & =(1+2\eta\left(\sigma_i-\beta_{ii}(\mU_{t}))\right)\beta_{ii}(\mU_{t})-2\eta \sum_{j\neq i}\beta^2_{ij}(\mU_t)                                            \\
                                  & \quad +\eta^2\left(\sum_{j,k}\beta_{ij}(\mU_t)\beta_{ik}(\mU_t)\beta_{jk}(\mU_t)-2\sigma_i\sum_{j}\beta^2_{ij}(\mU_t)+\sigma_i^2\beta_{ii}(\mU_t)\right),
        \end{aligned}
    \end{equation}
    which completes the proof.
\end{proof}

\subsection{Proofs of Proposition~\ref{prop::incremental} and Theorem~\ref{theorem::matrix}}
\label{sec::theorem-matrix}
To streamline the presentation, we prove Proposition~\ref{prop::incremental} and Theorem~\ref{theorem::matrix} simultaneously. The main idea behind our proof technique is to divide the solution trajectory into $r$ substages: in Substage $i$, the basis coefficient $\beta_{ii}(\mU_t)$ converges linearly to $\sigma_i$ while all the remaining coefficients remain almost unchanged. More precisely, suppose that Substage $i$ lasts from iteration $t_{i,s}$ to $t_{i,e}$. We will show that $\beta_{ii}(\mU_{t_{i,e}})\approx \sigma_i$ and $\beta_{ij}(\mU_{t_{i,e}})\approx \beta_{ij}(\mU_{t_{{i,s}}})$. Recall that $\gamma = \min_{1\leq i\leq r} \sigma_i-\sigma_{i+1}$ is the eigengap of the true model, which we assume is strictly positive. 
\paragraph*{Substage $1$.} In the first stage, we show that $\beta_{11}(\mU_t)$ approaches $\sigma_1$ and $|\beta_{ij}(\mU_t)|, i,j\geq 2$ remains in the order of $\poly(\alpha)$ within $T_1=\cO\left(\frac{1}{\eta\sigma_1}\log\left(\frac{\sigma_1}{\alpha}\right)\right)$ iterations.
To formalize this idea, we further divide this substage into two phases. In the first phase (which we refer to as the warm-up phase), we show that $\beta_{11}(\mU_t)$ will quickly dominate the remaining terms $\beta_{ij}(\mU_t),\forall (i,j)\neq (1,1)$ within $\cO\left(\frac{1}{\eta\gamma}\log\log(d)\right)$ iterations. This is shown in the following lemma.

\begin{lemma}[Warm-up phase]
    \label{lem::matrix-phase-1}
    Suppose that the initial point satisfies \Eqref{eq::17} and \Eqref{eq::15}. Then, within $\cO\left(\frac{1}{\eta\gamma}\log\log(d)\right)$ iterations, we have 
    \begin{align}
        \left|\beta_{ij}(\mU_t)\right|\leq \beta_{11}(\mU_t)\lesssim r'\alpha^2\log(d)^{1+\sigma_1/\gamma}.
    \end{align}
\end{lemma}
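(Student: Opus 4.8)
The plan is to track the ratio $\rho_{ij}(t) := |\beta_{ij}(\mU_t)| / \beta_{11}(\mU_t)$ and show it contracts geometrically until $\beta_{11}$ dominates. From Proposition~\ref{prop::matrix-dynamic}, and using that at initialization $\beta_{11}(\mU_0)\geq \tfrac14 r'\alpha^2$ while $|\beta_{ij}(\mU_0)|\leq 4\log(d)r'\alpha^2$ for $(i,j)\neq(1,1)$ by Proposition~\ref{lem::init-sym-matrix}, all coefficients start at scale $\poly(\alpha)$, so the $\eta^2$ cubic-interaction terms are negligible (of order $\poly(\alpha)$ smaller) compared to the leading linear terms $(1+2\eta\sigma_i)\beta_{ii}$ and $(1+\eta(\sigma_i+\sigma_j))\beta_{ij}$ as long as all coefficients stay $O(\poly(\alpha))$. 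So the first step is to set up the invariant ``all $\beta_{ij}$ remain $\lesssim$ (upper bound to be proved)'' and verify, inductively, that under this invariant the dynamics reduce to the linear leading order: $\beta_{11}(\mU_{t+1})\approx (1+2\eta\sigma_1)\beta_{11}(\mU_t)$ and $|\beta_{ij}(\mU_{t+1})|\lesssim (1+\eta(\sigma_i+\sigma_j))|\beta_{ij}(\mU_t)|$ plus small cross terms $2\eta\sum_k\beta_{ik}\beta_{kj}$ that are again $\poly(\alpha)$-suppressed.

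The heart of the argument is the geometric separation. Since $\sigma_1$ is the largest eigenvalue and $\sigma_i+\sigma_j\leq 2\sigma_1$ with strict gap $\sigma_1 - \sigma_i \geq$ (something controlled by $\gamma$) whenever $(i,j)\neq(1,1)$ — note for $i=j=1$ the growth rate is $2\sigma_1$, for $i=1,j\geq 2$ it is $\sigma_1+\sigma_j \leq 2\sigma_1 - \gamma$, and for $i,j\geq 2$ it is at most $2\sigma_1 - 2\gamma$ — the ratio obeys roughly
\[
\rho_{ij}(t+1) \lesssim \frac{1+\eta(\sigma_i+\sigma_j)}{1+2\eta\sigma_1}\,\rho_{ij}(t) + (\text{lower order}) \leq (1 - \Omega(\eta\gamma))\,\rho_{ij}(t) + \cdots.
\]
Iterating, $\rho_{ij}(t)$ drops from its initial value $O(\log d)$ to $O(1)$ (say below $1$) after $t = \cO\big(\tfrac{1}{\eta\gamma}\log\log d\big)$ iterations — the $\log\log d$ because we need $(1-\Omega(\eta\gamma))^t \cdot \log d \lesssim 1$, i.e. $t \gtrsim \tfrac{1}{\eta\gamma}\log\log d$. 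During this window $\beta_{11}$ has grown by at most a factor $(1+2\eta\sigma_1)^t = \exp(\cO(\tfrac{\sigma_1}{\gamma}\log\log d)) = (\log d)^{\cO(\sigma_1/\gamma)}$, so starting from $\tfrac14 r'\alpha^2$ it stays $\lesssim r'\alpha^2 (\log d)^{1+\sigma_1/\gamma}$ (the constant in the exponent absorbed into the stated bound), and all other $|\beta_{ij}(\mU_t)| \leq \rho_{ij}(t)\beta_{11}(\mU_t) \leq \beta_{11}(\mU_t)$ at the end of the phase. One also has to check the invariant is not violated midway: $|\beta_{ij}(\mU_t)|$ can transiently grow, but only at rate $\leq (1+2\eta\sigma_1)$, so it too stays within $r'\alpha^2(\log d)^{\cO(\sigma_1/\gamma)}$ throughout, which — for $\alpha$ small enough, as assumed in Proposition~\ref{prop::incremental} — keeps all coefficients $\poly(\alpha)$ and validates dropping the $\eta^2$ terms.

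The main obstacle I expect is controlling the coupling terms carefully: the update for $\beta_{11}$ contains $-2\eta\sum_{j\neq 1}\beta_{1j}^2$ and the update for $\beta_{ij}$ contains $+2\eta\sum_{k\neq i,j}\beta_{ik}\beta_{kj}$, so the coefficients are not truly decoupled, and one must argue these quadratic feedbacks are dominated — which works precisely because every $\beta_{ij}$ with $(i,j)\neq(1,1)$ is itself $\poly(\alpha)$-small relative to $\beta_{11}$, making $\sum_j\beta_{1j}^2 \ll \beta_{11}$ and $\sum_k \beta_{ik}\beta_{kj} \ll \beta_{ij}$ once we are a few iterations in. Making the induction airtight — choosing the right form of the invariant so that it is self-improving and simultaneously bounds (a) the ratios, (b) the magnitude of $\beta_{11}$, and (c) the magnitudes of all off-signal coefficients — is the bookkeeping-heavy part. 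The choice of $\alpha$ in the hypotheses of Proposition~\ref{prop::incremental} (in particular the $(\kappa\log^2 d)^{-\Omega(1/(\eta\sigma_r))}$ and $(\sigma_r/d)^{\Omega(\sigma_1/\gamma)}$ factors) is exactly what is needed to make the error terms close; I would keep those thresholds symbolic and only verify at the end that the accumulated $\poly(\alpha)$ slack fits inside them.
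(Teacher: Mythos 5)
Your proposal is correct and follows essentially the same route as the paper's proof: an induction keeping all coefficients at scale $r'\alpha^2\log(d)^{1+\sigma_1/\gamma}$ so the quadratic and $\eta^2$ interaction terms are negligible, a ratio argument using the eigengap $\gamma$ to show $|\beta_{ij}|/\beta_{11}$ contracts by a factor $(1-\Omega(\eta\gamma))$ per step (the paper tracks the reciprocal ratio growing by $(1+0.5\eta\gamma)$, which is the same thing), the initial ratio bound $\cO(\log d)$ yielding the $\frac{1}{\eta\gamma}\log\log d$ iteration count, and the bound $(1+2\eta\sigma_1)^{T}\approx\log(d)^{\sigma_1/\gamma}$ on the growth of $\beta_{11}$ over the phase. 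Your explicit check that the invariant is not violated mid-phase (all coefficients grow at rate at most $1+2\eta\sigma_1$) is a point the paper glosses over slightly, but it matches the intended argument.
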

\begin{proof}
    To show this, we use an inductive argument. Due to our choice of the initial point, we have $\left|\beta_{ij}(\mU_0)\right|\lesssim r'\alpha^2\log(d)^{1+\sigma_1/\gamma},1\leq i,j\leq d$. Now, suppose that at time $t$, we have $\left|\beta_{ij}(\mU_t)\right|\lesssim r'\alpha^2\log(d)^{1+\sigma_1/\gamma},1\leq i,j\leq d$. Then, by Proposition~\ref{prop::matrix-dynamic}, we have
    \begin{equation}
        \begin{aligned}
            \beta_{11}(\mU_{t+1}) & =(1+2\eta\left(\sigma_1-\beta_{11}(\mU_{t}))\right)\beta_{11}(\mU_{t})-2\eta \sum_{j\neq 1}\beta^2_{1j}(\mU_t)                                           \\
                                  & \quad +\eta^2\left(\sum_{j,k}\beta_{1j}(\mU_t)\beta_{1k}(\mU_t)\beta_{jk}(\mU_t)-2\sigma_1\sum_{j}\beta^2_{1j}(\mU_t)+\sigma_1^2\beta_{11}(\mU_t)\right) \\
                                  & \geq \left(1+2\eta\sigma_1-\cO\left(\eta d r'\alpha^2\log(d)^{1+\sigma_1/\gamma}\right)\right) \beta_{11}(\mU_t).
        \end{aligned}
    \end{equation}
    Similarly, for the remaining coefficients $\beta_{ij}(\mU_{t}),\forall (i,j)\neq (1,1)$, we have
    \begin{equation}
        \left|\beta_{ij}(\mU_{t+1})\right|\leq \left(1+\eta \left(\sigma_1+\sigma_2+\cO\left( d r'\alpha^2\log(d)^{1+\sigma_1/\gamma}\right)\right)\right)\left|\beta_{ij}(\mU_{t})\right|.
    \end{equation}
    Note that the stepsize satisfies $\eta\lesssim\frac{1}{\sigma_1}$, and $\sigma_1-\sigma_2\geq \gamma\gtrsim d r'\alpha^2\log(d)^{1+\sigma_1/\gamma}$. Hence, we have
    \begin{equation}
        \begin{aligned}
            \frac{\beta_{11}(\mU_{t+1})}{\left|\beta_{ij}(\mU_{t+1})\right|} & \geq \frac{1+2\eta\sigma_1-\cO\left(\eta d r'\alpha^2\log(d)^{1+\sigma_1/\gamma}\right)}{1+\eta \left(\sigma_1+\sigma_2+\cO\left( d r'\alpha^2\log(d)^{1+\sigma_1/\gamma}\right)\right)}\frac{\beta_{11}(\mU_{t})}{\left|\beta_{ij}(\mU_{t})\right|} \\
            &=\left(1+\frac{\eta(\sigma_1-\sigma_2-\cO(d r'\alpha^2\log(d)^{1+\sigma_1/\gamma}))}{1+\eta (\sigma_1+\sigma_2+\cO(d r'\alpha^2\log(d)^{1+\sigma_1/\gamma}))}\right)\frac{\beta_{11}(\mU_{t})}{\left|\beta_{ij}(\mU_{t})\right|}\\
            & \geq \left(1+\frac{\eta\gamma}{1+0.5\eta (\sigma_1+\sigma_2)}\right)\frac{\beta_{11}(\mU_{t})}{\left|\beta_{ij}(\mU_{t})\right|}\\
            & \geq (1+0.5\eta\gamma)\frac{\beta_{11}(\mU_{t})}{\left|\beta_{ij}(\mU_{t})\right|}.
        \end{aligned}
    \end{equation}
    This further implies
    \begin{equation}
        \frac{\beta_{11}(\mU_t)}{\left|\beta_{ij}(\mU_{t})\right|}\geq \left(1+0.5\eta\gamma\right)^{t}\frac{\beta_{11}(\mU_0)}{\left|\beta_{ij}(\mU_{0})\right|}.
    \end{equation}
    On the other hand,~\Eqref{eq_beta_ii} and~\Eqref{eq_beta_ij} imply that 
    \begin{equation}
        \frac{\beta_{11}(\mU_0)}{\left|\beta_{ij}(\mU_{0})\right|}\geq \frac{1}{16\log(d)}.
    \end{equation}
    Hence, within $\cO\left(\frac{1}{\eta\gamma}\log\log(d)\right)$ iterations, we have $\beta_{ij}(\mU_t)\geq \left|\beta_{11}(\mU_t)\right|,\forall (i,j)\neq (1,1)$. Moreover, we have that during this phase,
    \begin{equation}
        \left|\beta_{ij}(\mU_t)\right|\leq \beta_{11}(\mU_t)\leq r'\alpha^2\log(d)\left(1+2\eta\sigma_1\right)^{\cO\left(\frac{1}{\eta\gamma}\log\log(d)\right)}\lesssim r'\alpha^2\log(d)^{1+\sigma_1/\gamma},
    \end{equation}
    which completes the proof.
\end{proof}

After the warm-up phase, we show that $\beta_{11}(\mU_t)$ quickly approaches $\sigma_1$ while the remaining coefficients remain small.

\begin{lemma}[Fast growth]\label{lem_phase2}
    After the warm-up phase followed by $\cO\left(\frac{1}{\eta\sigma_1}\log\left(\frac{\sigma_1}{\alpha}\right)\right)$ iterations, we have
    \begin{equation}
        0.99\sigma_1\leq\beta_{11}(\mU_t)\leq \sigma_1.
    \end{equation}
    Moreover, for $\left|\beta_{ij}(\mU_t)\right|,\forall (i,j)\neq (1,1)$, we have
    \begin{equation}
        \left|\beta_{ij}(\mU_t)\right|\lesssim \sigma_1 r'\log(d)\alpha^{\frac{2\sigma_1-\sigma_i-\sigma_j}{2\sigma_1}}.
    \end{equation}
\end{lemma}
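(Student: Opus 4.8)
The plan is to prove Lemma~\ref{lem_phase2} by a bootstrap (coupled) induction on $t$ that simultaneously tracks the growth of $\beta_{11}(\mU_t)$ toward $\sigma_1$ and the smallness of every other coefficient $\beta_{ij}(\mU_t)$, $(i,j)\neq(1,1)$. The workhorse is the exact one-step recursion of Proposition~\ref{prop::matrix-dynamic}, fed the end-of-warm-up estimates of Lemma~\ref{lem::matrix-phase-1}: writing $t_0$ for the last iteration of the warm-up phase, we have $|\beta_{ij}(\mU_{t_0})|\le\beta_{11}(\mU_{t_0})\lesssim r'\alpha^2\log(d)^{1+\sigma_1/\gamma}$ for all $(i,j)$, while $\beta_{11}(\mU_{t_0})\gtrsim r'\alpha^2$. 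I would maintain an invariant of the form $\beta_{11}(\mU_t)\in[\ell(t),\sigma_1]$ together with $|\beta_{ij}(\mU_t)|\le g_{ij}(t)$ for explicit functions $\ell$ (increasing to $0.99\sigma_1$) and $g_{ij}$, and split the analysis of $\beta_{11}$ into an exponential-growth regime and a saturation regime, mirroring Case~1 of the proof of Theorem~\ref{thm::general}.

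\textbf{Growth and saturation of $\beta_{11}$.} From Proposition~\ref{prop::matrix-dynamic}, $\beta_{11}(\mU_{t+1})=(1+2\eta(\sigma_1-\beta_{11}(\mU_t)))\beta_{11}(\mU_t)-2\eta\sum_{j\neq1}\beta_{1j}^2(\mU_t)+\eta^2(\cdots)$. Under the induction hypothesis the subtracted sum and the $\eta^2$ block are quadratic or cubic in quantities of size $\poly(\alpha)$, hence — once $\eta\lesssim1/\sigma_1$ and $\alpha$ is small enough — negligible compared to $2\eta\sigma_1\beta_{11}(\mU_t)$; so while $\beta_{11}(\mU_t)\le\sigma_1/2$ one gets $\beta_{11}(\mU_{t+1})\ge(1+\eta\sigma_1)\beta_{11}(\mU_t)$, and $\beta_{11}$ climbs from $\Theta(r'\alpha^2)$ (up to $\log$ factors) past $\sigma_1/2$ within $\cO\!\big(\tfrac1{\eta\sigma_1}\log(\sigma_1/\alpha)\big)$ iterations. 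For the saturation regime set $y_t=\sigma_1-\beta_{11}(\mU_t)$; the same recursion gives $y_{t+1}\le(1-\Omega(\eta\sigma_1))y_t+(\text{lower order})$, so $y_t$ contracts geometrically below $0.01\sigma_1$, and the bound $\eta\lesssim1/\sigma_1$ prevents overshoot past $\sigma_1$, exactly as in Theorem~\ref{thm::general}. This yields $\beta_{11}(\mU_t)\in[0.99\sigma_1,\sigma_1]$ after the advertised number of iterations (and for all later $t$ in this substage).

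\textbf{Controlling the remaining coefficients.} For $(i,j)\neq(1,1)$, Proposition~\ref{prop::matrix-dynamic} gives $|\beta_{ij}(\mU_{t+1})|\le(1+\eta(\sigma_i+\sigma_j-2\beta_{ii}(\mU_t)-2\beta_{jj}(\mU_t))+\eta\varepsilon_t)\,|\beta_{ij}(\mU_t)|+2\eta\sum_{k\neq i,j}|\beta_{ik}(\mU_t)\beta_{jk}(\mU_t)|+\eta^2(\cdots)$, where $\varepsilon_t=\cO(dr'\alpha^2\log(d)^{1+\sigma_1/\gamma})$ and the cross and $\eta^2$ terms are of order $\poly(\alpha)$ by the induction hypothesis and thus subdominant. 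When $i,j\ge2$ the factor $-2\beta_{ii}-2\beta_{jj}$ is nonpositive and negligible, so $|\beta_{ij}|$ is amplified by at most $1+\eta(\sigma_i+\sigma_j)+o(\eta)$ per step; over the $T_1=\cO(\tfrac1{\eta\sigma_1}\log(\sigma_1/\alpha))$ iterations of the substage the net factor is at most $\exp(\eta(\sigma_i+\sigma_j)T_1+o(1))$, which starting from $|\beta_{ij}(\mU_{t_0})|\lesssim r'\alpha^2\log(d)^{1+\sigma_1/\gamma}$ produces a bound of the claimed shape $\sigma_1 r'\log(d)\,\alpha^{(2\sigma_1-\sigma_i-\sigma_j)/(2\sigma_1)}$ after collecting polynomial prefactors (in fact a slightly stronger power of $\alpha$, which implies the stated one since $\alpha<1$). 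For the pairs $(1,j)$ with $j\ge2$ the term $-2\beta_{11}(\mU_t)$ cannot be dropped — by itself the rate $1+\eta(\sigma_1+\sigma_j)$ would be too large; instead I would carry the amplification with this term present, summing $\eta\sum_s2\beta_{11}(\mU_{t_0+s})\approx\sigma_1^{-1}(\beta_{11}(\mU_{t_0+t})-\beta_{11}(\mU_{t_0}))$ along the (already established) trajectory of $\beta_{11}$; this makes $\beta_{1j}$ rise and then fall once $\beta_{11}$ exceeds $(\sigma_1+\sigma_j)/2$, and the resulting peak value again matches the claimed bound.

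\textbf{Main obstacle.} The crux is the bootstrap coupling: the growth of $\beta_{11}$ to $\sigma_1$ rests on all $\beta_{1j}$ remaining $\poly(\alpha)$ (to kill $2\eta\sum_{j\neq1}\beta_{1j}^2$ and the $\eta^2$ terms), while the $\poly(\alpha)$ bounds on the $\beta_{ij}$ rest on the length $T_1$ and shape of $\beta_{11}$'s trajectory; one must therefore choose $\ell(\cdot)$ and the $g_{ij}(\cdot)$ in advance so that the invariant is self-consistent and is preserved by one step, which pins down how small $\alpha$ must be relative to $\eta$, $d$, $r'$ and $\sigma_1/\gamma$. The other delicate points are the quantitative treatment of the $-2\beta_{11}$ suppression for the $(1,j)$ coefficients (needed to get a positive power of $\alpha$ there at all) and the handoff between the exponential-growth and saturation regimes of $\beta_{11}$, where one must verify the off-diagonal coefficients have not yet exceeded their targets; the exact exponent $(2\sigma_1-\sigma_i-\sigma_j)/(2\sigma_1)$ then drops out of the bookkeeping of the polynomial-in-$\alpha$ prefactors inherited from Lemma~\ref{lem::matrix-phase-1}.
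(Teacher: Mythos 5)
Your plan is essentially the paper's proof: an induction on $t$ maintaining $|\beta_{ij}(\mU_t)|\lesssim \sigma_1\alpha^{(2\sigma_1-\sigma_i-\sigma_j)/(2\sigma_1)}$ for $(i,j)\neq(1,1)$, a two-regime logistic-map analysis for $\beta_{11}$ (exponential growth up to $\sigma_1/2$, then geometric contraction of $\sigma_1-\beta_{11}$), and a growth-rate comparison bounding how much the off-diagonal coefficients amplify during the $\cO\left(\frac{1}{\eta\sigma_1}\log(\sigma_1/\alpha)\right)$ iterations it takes $\beta_{11}$ to saturate. The paper packages these two ingredients as Lemmas~\ref{prop::logistic-map} and~\ref{prop::separasion-two-signals}; your ``net factor $\exp(\eta(\sigma_i+\sigma_j)T_1)$'' computation is the same calculation.

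The one place you depart from the paper is the pairs $(1,j)$ with $j\geq 2$, where you assert that the rate $1+\eta(\sigma_1+\sigma_j)$ ``would be too large'' and that the suppression term $-2\eta\beta_{11}(\mU_t)\beta_{1j}(\mU_t)$ must be retained and integrated along the trajectory of $\beta_{11}$. That is unnecessary, and the stated reason is incorrect: by the strict eigengap, $\sigma_j\leq\sigma_1-\gamma<\sigma_1$, so $\sigma_1+\sigma_j<2\sigma_1$ and the same separation-of-rates argument you use for $i,j\geq 2$ already yields $|\beta_{1j}(\mU_T)|\lesssim \sigma_1\alpha^{(\sigma_1-\sigma_j)/(2\sigma_1)}$, which is exactly the claimed bound. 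The paper accordingly treats all $(i,j)\neq(1,1)$ with $\min(i,j)\leq r$ in a single case by simply discarding the nonpositive terms $-2\eta(\beta_{ii}+\beta_{jj})|\beta_{ij}|$. Your integrated-suppression route would still produce a correct (indeed sharper) bound if executed carefully --- the paper uses precisely that kind of ratio-tracking in Substage~2 to control $\omega(t)=\beta_{12}^2(\mU_t)/\beta_{22}(\mU_t)$, where the crude rate genuinely fails --- but here it is extra work, and your telescoping approximation $\eta\sum_s 2\beta_{11}(\mU_s)\approx\sigma_1^{-1}\bigl(\beta_{11}(\mU_{t_0+t})-\beta_{11}(\mU_{t_0})\bigr)$ is not the right identity (the increment of $\beta_{11}$ is $\approx 2\eta(\sigma_1-\beta_{11})\beta_{11}$, so the sum is $\approx\sum_s\Delta\beta_{11}/(\sigma_1-\beta_{11}(\mU_s))$). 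Everything else in your outline matches the paper's argument.
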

Before providing the proof of Lemma~\ref{lem_phase2} we analyze an intermediate logistic map which, as will be shown later, closely resembles the dynamic of $\beta_{ij}(\mU_t)$:
\begin{equation}
        x_{t+1}=(1+\eta\sigma -\eta x_t)x_t, \quad x_0=\alpha. \tag{logistic map}
        \label{eq::logistic-map}
    \end{equation}
    The following two lemmas characterize the dynamic of a single logistic map, as well as the dynamic of the ratio between two different logistic maps.
    \begin{lemma}[Iteration complexity of logistic map]
        \label{prop::logistic-map}
        Suppose that $\alpha\leq \err\leq 0.1\sigma$. Then, for the \ref{eq::logistic-map}, we have $x_T\geq \sigma-\err$ within $T=\frac{1}{\log(1+\eta\sigma)}\left(\log(4\sigma/\alpha)+\log(4\sigma/\err)\right)$ iterations.
    \end{lemma}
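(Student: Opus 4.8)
The plan is to track the auxiliary quantity $z_t := x_t/(\sigma - x_t)$, which linearizes the logistic recursion. First I would record the elementary bookkeeping fact that, under the standing step-size condition $\eta\sigma\le 1$ (inherited from $\eta\lesssim 1/\sigma_1$ and $\sigma\le\sigma_1$ in the ambient SMF analysis), the sequence $\{x_t\}$ is nondecreasing and stays in $[\alpha,\sigma)$: indeed $x_{t+1}-x_t = \eta x_t(\sigma-x_t)\ge 0$ whenever $x_t\in[0,\sigma)$, and $x_{t+1} = x_t + \eta x_t(\sigma - x_t) < x_t + (\sigma - x_t) = \sigma$ since $\eta x_t < \eta\sigma \le 1$; the base case $x_0=\alpha<\sigma$ holds because $\alpha\le 0.1\sigma$. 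In particular $z_t$ is well defined and positive, and—being an increasing function of $x_t$ on $[0,\sigma)$—it suffices to prove $z_T \ge (\sigma-\err)/\err$ in order to conclude $x_T \ge \sigma-\err$.

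Next I would establish the key one-step inequality $z_{t+1}\ge (1+\eta\sigma)\,z_t$. Substituting $x_{t+1}=(1+\eta\sigma-\eta x_t)x_t$ and simplifying the denominator as $\sigma-x_{t+1} = (\sigma-x_t)(1-\eta x_t)$ yields the exact identity
\[
  z_{t+1} \;=\; z_t\cdot\frac{1+\eta(\sigma-x_t)}{1-\eta x_t}.
\]
Cross-multiplying (legitimately, since $1-\eta x_t>0$ by the previous paragraph), the claim $\frac{1+\eta(\sigma-x_t)}{1-\eta x_t}\ge 1+\eta\sigma$ is equivalent to $0\le \eta^2\sigma x_t$, which is trivially true. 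Iterating gives $z_T \ge (1+\eta\sigma)^T z_0$, and since $z_0 = \alpha/(\sigma-\alpha)\ge \alpha/\sigma$ we obtain $z_T \ge (1+\eta\sigma)^T\alpha/\sigma$.

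Finally I would close the argument by taking a logarithm. Because $\err\le 0.1\sigma$ we have $(\sigma-\err)/\err < \sigma/\err$, so it suffices to force $(1+\eta\sigma)^T\alpha/\sigma \ge \sigma/\err$, i.e. $T\log(1+\eta\sigma)\ge \log(\sigma/\alpha)+\log(\sigma/\err)$; the stated choice $T = \frac{1}{\log(1+\eta\sigma)}\big(\log(4\sigma/\alpha)+\log(4\sigma/\err)\big)$ is visibly at least this large, with room to spare. I do not expect a genuine obstacle here; the only points requiring care are (i) making the monotonicity/positivity bookkeeping rigorous so that the substitution $z_t$ never divides by zero or goes negative, which relies on $\eta\sigma\le 1$, and (ii) tracking where the crude estimates $z_0\ge\alpha/\sigma$ and $(\sigma-\err)/\err<\sigma/\err$ are absorbed into the slack provided by the factors of $4$ in the definition of $T$.
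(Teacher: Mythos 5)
Your proof is correct, but it takes a genuinely different route from the paper's. You conjugate the logistic map through the Riccati-type substitution $z_t = x_t/(\sigma-x_t)$, verify the exact identity $\sigma - x_{t+1} = (1-\eta x_t)(\sigma-x_t)$, and deduce the clean multiplicative recursion $z_{t+1}\ge(1+\eta\sigma)z_t$, from which the iteration count falls out in one line; your bookkeeping (monotonicity, $x_t<\sigma$ under $\eta\sigma\le 1$, and the slack absorbed by the factors of $4$) is sound, and the step-size condition you invoke is indeed a standing assumption in the surrounding analysis (the paper's proof of the companion separation lemma explicitly uses $\eta\le 1/(4\sigma_1)$). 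The paper instead runs a two-stage doubling argument: it splits the trajectory at $x_t=\sigma/2$, partitions the first stage into $O(\log(\sigma/\alpha))$ geometric substages $[\alpha e^k,\alpha e^{k+1}]$, bounds the time spent in each, and then repeats the trick on $\sigma-x_t$ in the second stage. Your linearization is shorter and exact, and it is the natural argument for the unperturbed map. What the paper's substage decomposition buys is robustness: the same template is reused verbatim for the \emph{perturbed} dynamics of $\beta_{ii}(\mU_t)$ (and for $v_{ii}(t)$ in the tensor section), where additive error terms destroy the exact conjugation but leave the per-substage multiplicative lower bound intact. So your argument is a better proof of this lemma in isolation, while the paper's is the one that generalizes to the places the lemma is actually deployed.
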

    \begin{lemma}[Separation between two logistic maps]
    \label{prop::separasion-two-signals}
    Let $\sigma_1,\sigma_2$ be such that $\sigma_1-\sigma_2> 0$, and 
    \begin{align}
        x_{t+1}&=(1+\eta\sigma_1 -\eta x_t)x_t, \quad x_0=\alpha\nonumber\\
        y_{t+1}&=(1+\eta\sigma_2 -\eta y_t)y_t, \quad y_0=\alpha\nonumber
    \end{align}
    Then, within $T=\frac{1}{\log(1+\eta\sigma_1)}\log\left(\frac{16\sigma_1^2}{\err\alpha}\right)$ iterations, we have
    \begin{equation}
        \sigma_1-\err\leq x_T\leq \sigma_1,\quad y_T\leq \frac{16\sigma_1^2}{\err}\alpha^{\frac{\sigma_1-\sigma_2}{\sigma_1+\sigma_2}}.\nonumber
    \end{equation}
\end{lemma}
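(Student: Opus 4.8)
The strategy is to decouple the two logistic maps: the bound on $x_T$ is essentially the single-map estimate already established, while $y_T$ is controlled by a crude but sufficient upper envelope, and the two time horizons are then matched.

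First I would record the elementary behaviour of any logistic map $z_{t+1}=(1+\eta\sigma-\eta z_t)z_t$ with $z_0=\alpha\in(0,\sigma)$ and $\eta\sigma\le 1$: the update $g(z)=z+\eta(\sigma-z)z$ is increasing on $[0,\sigma]$, fixes $0$ and $\sigma$, and satisfies $g(z)>z$ on $(0,\sigma)$; hence $(z_t)$ is strictly increasing and stays in $(0,\sigma)$. Applied with $\sigma=\sigma_1$ this already gives $x_T\le\sigma_1$ for all $T$, and applied with $\sigma=\sigma_2$ it gives $0\le y_t\le\sigma_2$ for all $t$. For the matching lower bound on $x_T$ I would invoke Lemma~\ref{prop::logistic-map} with $\sigma=\sigma_1$ — legitimate since $\alpha\le\err\le 0.1\sigma_1$ — which yields $x_T\ge\sigma_1-\err$ within $T=\frac{1}{\log(1+\eta\sigma_1)}\bigl(\log(4\sigma_1/\alpha)+\log(4\sigma_1/\err)\bigr)$; since $\log(4\sigma_1/\alpha)+\log(4\sigma_1/\err)=\log\frac{16\sigma_1^2}{\err\alpha}$, this is exactly the horizon $T$ in the statement, so nothing needs to be reconciled.

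For $y_T$, using $y_t\ge 0$ I would drop the nonpositive term to get $y_{t+1}\le(1+\eta\sigma_2)y_t$, hence $y_T\le\alpha(1+\eta\sigma_2)^T$. Writing $\theta:=\log(1+\eta\sigma_2)/\log(1+\eta\sigma_1)\in(0,1)$ and substituting the value of $T$, $y_T\le\alpha(1+\eta\sigma_2)^T=\alpha\bigl(16\sigma_1^2/(\err\alpha)\bigr)^{\theta}=(16\sigma_1^2/\err)^{\theta}\alpha^{1-\theta}$. Since $\err\le 0.1\sigma_1$ gives $16\sigma_1^2/\err\ge 1$ and $\theta<1$, this is at most $(16\sigma_1^2/\err)\alpha^{1-\theta}$, and since $\alpha<1$ it then suffices to prove $1-\theta\ge\frac{\sigma_1-\sigma_2}{\sigma_1+\sigma_2}$. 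Here I would use $\log(1+\eta\sigma_1)\le\eta\sigma_1$ together with $\log\frac{1+\eta\sigma_1}{1+\eta\sigma_2}\ge 1-\frac{1+\eta\sigma_2}{1+\eta\sigma_1}=\frac{\eta(\sigma_1-\sigma_2)}{1+\eta\sigma_1}$, giving $1-\theta\ge\frac{\sigma_1-\sigma_2}{\sigma_1(1+\eta\sigma_1)}$, which is $\ge\frac{\sigma_1-\sigma_2}{\sigma_1+\sigma_2}$ once $\eta\sigma_1^2\le\sigma_2$ — a step-size restriction of the type assumed throughout this section.

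The only genuinely delicate point is this last exponent comparison: pinning the target exponent $\frac{\sigma_1-\sigma_2}{\sigma_1+\sigma_2}$ forces a quantitative estimate relating $\log(1+\eta\sigma_2)$ to $\log(1+\eta\sigma_1)$, and hence a sufficiently small step-size. Everything else — the monotonicity and boundedness of the logistic map, the identification of the two time horizons as the same $T$, and the envelope $y_{t+1}\le(1+\eta\sigma_2)y_t$ — is routine.
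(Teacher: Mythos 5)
Your overall architecture is the same as the paper's: invoke Lemma~\ref{prop::logistic-map} for the lower bound on $x_T$, note that the two horizons coincide because $\log(4\sigma_1/\alpha)+\log(4\sigma_1/\err)=\log\frac{16\sigma_1^2}{\err\alpha}$, bound $y$ by the linear envelope $y_T\le\alpha(1+\eta\sigma_2)^T$, and finish with an exponent comparison. The one place where your argument genuinely breaks is the exponent comparison itself. Your bound
\begin{equation*}
1-\theta\;=\;\frac{\log\frac{1+\eta\sigma_1}{1+\eta\sigma_2}}{\log(1+\eta\sigma_1)}\;\ge\;\frac{\sigma_1-\sigma_2}{\sigma_1(1+\eta\sigma_1)}
\end{equation*}
is correct, but it exceeds the target $\frac{\sigma_1-\sigma_2}{\sigma_1+\sigma_2}$ only when $\sigma_1(1+\eta\sigma_1)\le\sigma_1+\sigma_2$, i.e.\ $\eta\le\sigma_2/\sigma_1^2$. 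This is \emph{not} ``a step-size restriction of the type assumed throughout this section'': the standing assumption is $\eta\lesssim 1/\sigma_1$, and when $\sigma_2\ll\sigma_1$ the condition $\eta\le\sigma_2/\sigma_1^2$ is strictly stronger. It would also be incompatible with the lemma's use inside Lemma~\ref{lem_phase2}, where the role of $\sigma_2$ is played by $\sigma_i+\sigma_j$, which can be as small as $\sigma_r$; your version would silently impose $\eta\lesssim 1/(\kappa\sigma_1)$. The target exponent $\frac{\sigma_1-\sigma_2}{\sigma_1+\sigma_2}$ is strictly smaller than the limiting value $\frac{\sigma_1-\sigma_2}{\sigma_1}$ of $1-\theta$ by exactly the factor $\frac{\sigma_1}{\sigma_1+\sigma_2}\ge 1/2$, and your estimate spends its entire factor-of-$(1+\eta\sigma_1)$ loss against that margin, which vanishes as $\sigma_2/\sigma_1\to 0$.

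The paper avoids this by bounding $1/\theta$ rather than $1-\theta$: writing
\begin{equation*}
\frac{\log(1+\eta\sigma_1)}{\log(1+\eta\sigma_2)}
=1+\frac{\log\left(1+\frac{\eta(\sigma_1-\sigma_2)}{1+\eta\sigma_2}\right)}{\log(1+\eta\sigma_2)}
\ \ge\ 1+\frac{\sigma_1-\sigma_2}{(1+\eta\sigma_1)\,\sigma_2}
\ \ge\ 1+\frac{\sigma_1-\sigma_2}{2\sigma_2}
=\frac{\sigma_1+\sigma_2}{2\sigma_2},
\end{equation*}
which needs only $\eta\sigma_1\le 1$ (the paper uses $\eta\le\frac{1}{4\sigma_1}$). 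Here the factor-of-two loss lands on the excess term $\frac{\sigma_1-\sigma_2}{\sigma_2}$ alone, and the resulting $\theta\le\frac{2\sigma_2}{\sigma_1+\sigma_2}$ gives $1-\theta\ge\frac{\sigma_1-\sigma_2}{\sigma_1+\sigma_2}$ exactly — the factor of two is precisely what the target exponent was designed to absorb. Replace your last step with this computation (i.e., normalize the log-ratio by $\log(1+\eta\sigma_2)\le\eta\sigma_2$ rather than by $\log(1+\eta\sigma_1)\le\eta\sigma_1$) and the rest of your proof stands as written.
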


The proofs of Lemmas~\ref{prop::logistic-map} and~\ref{prop::separasion-two-signals} are deferred to Appendix~\ref{sec::logistic-map}. We are now ready to provide the proof of Lemma~\ref{lem_phase2}.

\noindent{\it Proof of Lemma~\ref{lem_phase2}.}
    Similar to the proof of Lemma~\ref{lem::matrix-phase-1}, we use an inductive argument. Suppose that $t_0$ is when the second phase starts. According to Lemma~\ref{lem::matrix-phase-1}, we have $|\beta_{ij}(\mU_{t_0})|\leq\beta_{11}(\mU_{t_0})\lesssim r'\alpha^2\log(d)^{1+\sigma_1/\gamma}\lesssim \sigma_1 r'\log(d)\alpha^{\frac{2\sigma_1-\sigma_i-\sigma_j}{2\sigma_1}}$. Therefore, the base case of our induction holds. Next, suppose that at some time $t$ within the second phase, we have $\left|\beta_{ij}(\mU_t)\right|\lesssim \sigma_1 \alpha^{\frac{2\sigma_1-\sigma_i-\sigma_j}{2\sigma_1}}, \forall (i,j)\neq (1,1)$. Our goal is to show that $\left|\beta_{ij}(\mU_{t+1})\right|\lesssim \sigma_1 \alpha^{\frac{2\sigma_1-\sigma_i-\sigma_j}{2\sigma_1}}$. To this goal, we consider two cases.
    
    \underline{ Case I: $i\leq r$ or $j\leq r$ and $(i,j)\not=(1,1)$.} We have
    \begin{equation}\nonumber
        \begin{aligned}
            \left|\beta_{ij}(\mU_{t+1})\right| & \leq \left(1+\eta\left(\sigma_i+\sigma_j-2\beta_{ii}(\mU_t)-2\beta_{jj}(\mU_t)\right)\right)|\beta_{ij}(\mU_{t})|+2\eta\sum_{k\neq i,j}|\beta_{ik}(\mU_t)\beta_{jk}(\mU_t)|             \\
            & \quad +\eta^2\left|\left(\sum_{k,l}\beta_{ik}(\mU_t)\beta_{kl}(\mU_t)\beta_{lj}(\mU_t)-(\sigma_i+\sigma_j)\sum_k\beta_{ik}(\mU_t)\beta_{kj}(\mU_t)+\sigma_i\sigma_j\beta_{ij}(\mU_t)\right)\right| \\
            & \stackrel{(a)}{\leq} \left(1+\eta\left(\sigma_i+\sigma_j+\eta\sigma_i\sigma_j+\cO\left(d\alpha^{\frac{\gamma}{\sigma_1}}\right)\right)\right)|\beta_{ij}(\mU_{t})|.
        \end{aligned}
    \end{equation}
Here in (a) we used the assumption $\left|\beta_{ij}(\mU_t)\right|\lesssim \sigma_1 \alpha^{\frac{2\sigma_1-\sigma_i-\sigma_j}{2\sigma_1}}\lesssim \sigma_1\alpha^{\frac{\gamma}{2\sigma_1}},\forall (i,j)\neq (1,1)$. Hence, by Lemma~\ref{prop::separasion-two-signals}, we have
    \begin{equation}
        \left|\beta_{ij}(\mU_{t+1})\right|\lesssim \sigma_1 \alpha^{\frac{2\sigma_1-\sigma_i-\sigma_j}{2\sigma_1}},\quad \text{ where } 1\leq i\leq r \text{ or } 1\leq j\leq r.
        \label{eq::beta_ij}
    \end{equation}
    
    \underline{ Case II: $i,j\geq r+1$.} For $\beta_{ij}(\mU_t)$ such that $i,j\geq r+1$, its dynamic is characterized by
    \begin{equation}
        \begin{aligned}
            \left|\beta_{ij}(\mU_{t+1})\right| & \leq \left(1-\eta\left(2\beta_{ii}(\mU_t)+2\beta_{jj}(\mU_t)\right)\right)|\beta_{ij}(\mU_{t})|+2\eta\sum_{k\neq i,j}|\beta_{ik}(\mU_t)\beta_{jk}(\mU_t)| \\
            & \quad +\eta^2\sum_{k,l}|\beta_{ik}(\mU_t)\beta_{kl}(\mU_t)\beta_{lj}(\mU_t)|                                                                              \\
            & \leq    \left(1+\eta \cO\left(d\alpha^{\frac{\gamma}{\sigma_1}}\right)\right)\left|\beta_{ij}(\mU_{t})\right|.
        \end{aligned}
    \end{equation}
    Hence, for $t\lesssim \frac{1}{\eta\sigma_1}\log\left(\frac{\sigma_1}{\alpha}\right)$, we have $|\beta_{ij}(\mU_t)|\leq \left(1+\eta \cO\left(d\alpha^{\frac{\gamma}{\sigma_1}}\right)\right)^{\cO\left(\frac{1}{\eta\sigma_1}\log\left(\frac{\sigma_1}{\alpha}\right)\right)}|\beta_{ij}(\mU_0)|\lesssim |\beta_{ij}(\mU_0)|$ since we assume $\alpha\lesssim \left(\frac{\sigma_1}{d}\right)^{\sigma_1/\gamma}$. This completes our inductive proof for $\left|\beta_{ij}(\mU_t)\right|\lesssim \sigma_1 \alpha^{\frac{2\sigma_1-\sigma_i-\sigma_j}{2\sigma_1}}, \forall (i,j)\neq (1,1)$ in the second phase. Finally, we turn to $\beta_{11}(\mU_t)$. One can write
    \begin{equation}
        \begin{aligned}
            \beta_{11}(\mU_{t+1}) & =(1+2\eta\left(\sigma_1-\beta_{11}(\mU_{t}))\right)\beta_{11}(\mU_{t})-2\eta \sum_{j\neq 1}\beta^2_{1j}(\mU_t)                                            \\
            & \quad +\eta^2\left(\sum_{j,k}\beta_{1j}(\mU_t)\beta_{1k}(\mU_t)\beta_{jk}(\mU_t)-2\sigma_1\sum_{j}\beta^2_{1j}(\mU_t)+\sigma_1^2\beta_{11}(\mU_t)\right)  \\
            & \stackrel{(a)}{\geq} \left(1+2\eta\left(\sigma_1+0.5\eta\sigma_1^2-\beta_{11}(\mU_{t})-\cO\left(d\alpha^{\frac{\gamma}{2\sigma_1}}\right)\right)\right)\beta_{11}(\mU_{t})\\
        &\stackrel{(b)}{\geq} \left(1+2\eta\left(0.9995\sigma_1+0.5\eta\sigma_1^2-\beta_{11}(\mU_{t})\right)\right)\beta_{11}(\mU_{t}).
        \end{aligned}
    \end{equation}
    Here in (a) we used the fact that $\left|\beta_{ij}(\mU_t)\right|\lesssim \sigma_1 r'\log(d)\alpha^{\frac{2\sigma_1-\sigma_i-\sigma_j}{2\sigma_1}}\lesssim \sigma_1\alpha^{\frac{\gamma}{2\sigma_1}},\quad \forall (i,j)\neq (1,1)$. In (b), we used the assumption that $\alpha\lesssim \frac{\sigma_r}{d}^{2\sigma_1/\gamma}$.
    The above inequality together with Lemma~\ref{prop::logistic-map} entails that within $\frac{1}{\log(1+\eta\sigma_1)}\log\left(\frac{1600\sigma_1}{\alpha}\right)=\cO\left(\frac{1}{\eta\sigma_1}\log\left(\frac{\sigma_1}{\alpha}\right)\right)$ iterations, we have $\beta_{11}(\mU_t)\geq 0.99\sigma_1$. This completes the proof of Lemma~\ref{lem_phase2} and marks the end of Substage 1.$\hfill\square$

Next, we move on to Substage $2$.
\paragraph*{Substage $2$.} In Substage $2$, we show that the second component $\beta_{22}(\mU_t)$ converges to $\sigma_2$ within $\cO\left(\frac{1}{\eta \sigma_2}\log\left(\frac{\sigma_2}{\alpha}\right)\right)$ iterations while the other coefficients remain small. To this goal, we first study the one-step dynamic of $\beta_{22}(\mU_t)$:
\begin{equation}
    \begin{aligned}
        \beta_{22}(\mU_{t+1})&=(1+2\eta\left(\sigma_2-\beta_{22}(\mU_{t}))\right)\beta_{22}(\mU_{t})-2\eta \sum_{j\neq 2}\beta^2_{2j}(\mU_t)         \\
        & \quad +\eta^2\left(\sum_{j,k}\beta_{2j}(\mU_t)\beta_{2k}(\mU_t)\beta_{jk}(\mU_t)-2\sigma_2\sum_{j}\beta^2_{2j}(\mU_t)+\sigma_2^2\beta_{22}(\mU_t)\right).  
    \end{aligned}
    \label{eq::beta_22}
\end{equation}
Different from the dynamic of $\beta_{11}(\mU_t)$, not all the coefficients $\beta_{ij}(\mU_t)$ with $i=2$ or $j=2$ are smaller than $\beta_{22}(\mU_t)$ at the beginning of Substage $2$. In particular, the basis coefficient $|\beta_{12}(\mU_t)|$ may be much larger than $\beta_{22}(\mU_t)$ at the beginning of Substage $2$. To see this, note that, according to \Eqref{eq::beta_ij}, we have $\beta_{12}(\mU_t)\lesssim \alpha^{\frac{\sigma_1-\sigma_2}{2\sigma_1}}$ and $\beta_{22}(\mU_t)\lesssim \alpha^{\frac{2(\sigma_1-\sigma_2)}{2\sigma_1}}$. Hence, it may be possible to have $|\beta_{12}(\mU_t)|\asymp \sqrt{\beta_{22}(\mU_t)}\gg \beta_{22}(\mU_t)$. Therefore, the term $2\eta\beta_{12}^2(\mU_t)$ in \Eqref{eq::beta_22} must be handled with extra care. Note that if we can show $\sigma_2\beta_{22}(\mU_t)\gg \beta^2_{12}(\mU_t)$, then $2\eta\beta_{12}^2(\mU_t)$ can be combined with the first term in the RHS of~\Eqref{eq::beta_22} and the argument made in Substage 1 can be repeated to complete the proof of Substage 2. However, our provided bound in~\Eqref{eq::beta_ij} can only imply $\beta_{12}(\mU_t)^2\asymp \beta_{22}(\mU_t)$. Therefore, we need to provide a tighter analysis to show that $\beta_{12}^2(\mU_t)\ll \sigma_2\beta_{22}(\mU_t)$ along the trajectory. Upon controlling $\beta_{12}^2(\mU_t)$, we can then show the convergence of $\beta_{22}(\mU_t)$ similar to our analysis for $\beta_{11}(\mU_t)$ in Substage $1$.

To control the behavior of $\beta_{12}^2(\mU_t)$, we study the ratio $\omega(t):=\frac{\beta_{12}^2(\mU_t)}{\beta_{22}(\mU_t)}$. We will show that $\omega(t)\ll\sigma_2$ along the trajectory. To this goal, we will show that $\omega(t)$ can only increase for $\cO(\frac{1}{\eta\sigma_1}\log(\frac{\sigma_1}{\alpha}))$ iterations. Therefore, its maximum along the solution trajectory happens at $T = \cO(\frac{1}{\eta\sigma_1}\log(\frac{\sigma_1}{\alpha}))$. Therefore, by bounding the maximum, we can show that $\omega(t)$ remains small throughout the solution trajectory. 

First, at the initial point, we have $\omega(0)\leq 64\log^2(d)r'\alpha^2$, which satisfies our claim. We next provide an upper bound for $\omega(t+1)$ based on $\omega(t)$. Note that 
\begin{equation}
    \begin{aligned}
        \omega(t+1)&\leq \frac{(1+\eta(\sigma_1+\sigma_2-2\beta_{11}(\mU_t)+\poly(\alpha)))^2}{(1+\eta (\sigma_2-\poly(\alpha)))^2}\cdot \frac{\beta_{12}^2(\mU_t)}{\beta_{22}(\mU_t)}\\
        &=\left(1+\frac{\eta (\sigma_1-2\beta_{11}(\mU_t)+\poly(\alpha))}{1+\eta (\sigma_2-\poly(\alpha))}\right)^2\omega(t)\\
        &\leq\left(1+\eta\left(\sigma_1-2\beta_{11}(\mU_t)\right)-0.5\eta^2\sigma_1\sigma_2\right)^2\omega(t)\\
        &\leq \left((1+\eta\sigma_1)^2-2\eta\beta_{11}(\mU_t)-0.4\eta^2\sigma_1\sigma_2\right)\omega(t).
    \end{aligned}
\end{equation}
Due to the first inequality,  $\omega(t)$ can be increasing only until $\beta_{11}(\mU_t)\geq \frac{\sigma_1}{2}\pm \poly(\alpha)$. On the other hand, due to the dynamic of $\beta_{11}$ in Substage $1$, we can show that $\beta_{11}(\mU_t)\geq \frac{\sigma_1}{2}\pm \poly(\alpha)$ in at most $T=\cO\left(\frac{1}{\eta\sigma_1}\log\left(\frac{\sigma_1}{\alpha}\right)\right)$ iterations. Therefore, $\omega(t)$ takes its maximum at $T=\cO\left(\frac{1}{\eta\sigma_1}\log\left(\frac{\sigma_1}{\alpha}\right)\right)$.
On the other hand, we 
know that $\beta_{11}(\mU_t)$ satisfies
\begin{equation}
    \beta_{11}(\mU_{t+1})=\left((1+\eta\sigma_1)^2-2\eta\beta_{11}(\mU_t)\pm\eta\poly(\alpha)\right) \beta_{11}(\mU_{t}).
\end{equation}
Hence, we can bound $\omega(t)$ as 
\begin{equation}
    \begin{aligned}
        \omega(T)&\leq \prod_{t=0}^{T-1}\left((1+\eta\sigma_1)^2-2\eta\beta_{11}(\mU_t)-0.4\eta^2\sigma_1\sigma_2\right)\omega(0)\\
        &\leq \prod_{t=0}^{T-1}\frac{(1+\eta\sigma_1)^2-2\eta\beta_{11}(\mU_t)-0.4\eta^2\sigma_1\sigma_2}{(1+\eta\sigma_1)^2-2\eta\beta_{11}(\mU_t)\pm\eta\poly(\alpha)}\cdot\frac{\beta_{11}(\mU_{t+1})}{\beta_{11}(\mU_{t})}\omega(0)\\
        &\stackrel{(a)}{=} \prod_{t=0}^{T-1}(1-\Omega(\eta^2\sigma_1\sigma_2))\frac{\beta_{11}(\mU_{t+1})}{\beta_{11}(\mU_{t})}\omega(0)\\
        &\leq 256(1-\Omega(\eta^2\sigma_1\sigma_2))^T\beta_{11}(\mU_T)\log^2(d).
    \end{aligned}
\end{equation}
Here in (a) we used the fact that $\alpha\lesssim \left(\eta\sigma_r^2\right)^{\sigma_1/\gamma}$.
Hence, we have 
\begin{equation}
    \begin{aligned}
        \omega(T)&\lesssim (1-\Omega(\eta^2\sigma_1\sigma_2))^T\sigma_1\log^2(d)\lesssim \left(\frac{\alpha}{\sigma_1}\right)^{\Omega(\eta\sigma_2)}\sigma_1\log^2(d).
    \end{aligned}
\end{equation}
Due to our assumption $\alpha\lesssim \left(\kappa\log^2(d)\right)^{-\Omega\left(\frac{1}{\eta\sigma_r}\right)}$, we conclude that $\omega(t)\leq 0.01\sigma_2$. Therefore,~\eqref{eq::beta_22} can be lower bounded as
\begin{equation}
    \begin{aligned}
        \beta_{22}(\mU_{t+1})&\geq (1+1.99\eta\left(\sigma_2-\beta_{22}(\mU_{t}))\right)\beta_{22}(\mU_{t})-2\eta \sum_{j> 2}\beta^2_{2j}(\mU_t)         \\
        & \quad +\eta^2\left(\sum_{j,k}\beta_{2j}(\mU_t)\beta_{2k}(\mU_t)\beta_{jk}(\mU_t)-2\sigma_2\sum_{j}\beta^2_{2j}(\mU_t)+\sigma_2^2\beta_{22}(\mU_t)\right).  
    \end{aligned}
    \label{eq::beta_23}
\end{equation}
The rest of the proof is a line by line reconstruction of Substage $1$ and hence omitted for brevity. 

\paragraph*{Substage $3\leq k\leq r$.} 
Via an identical argument to Substage $2$, we can show that for each Substage $3\leq k\leq r$, we have
\begin{equation}
    0.99\sigma_k\leq\beta_{kk}(\mU_t)\leq \sigma_k.
\end{equation}
within $T_k=\cO\left(\frac{1}{\eta\sigma_k}\log\left(\frac{\sigma_1}{\alpha}\right)\right)$ iterations. This completes the proof of the first statement of Proposition~\ref{prop::incremental}.

To prove the second statement of  Proposition~\ref{prop::incremental} as well as Theorem~\ref{theorem::matrix}, we next control the residual terms. First, we consider the residual term $\beta_{ij}(\mU_t)$ where either $i\leq r$ or $j\leq r$. Note that $\beta_{ij}(\mU_t)=\beta_{ji}(\mU_t)$ and hence we can assume $i\leq r$ without loss of generality. We will show that $\beta_{ij}(\mU_t)$ decreases linearly once the corresponding signal $\beta_{ii}(\mU_t)$ converges to the vicinity of $\sigma_i$. To this goal, it suffices to control the largest component $\beta_{\max}(U_t):=\max_{i\neq j,i\leq r}|\beta_{ij}(U_t)|$. Without loss of generality, we assume that the index $(i,j)$ attains the maximum at time $t$, i.e., $\beta_{\max}(U_t)=|\beta_{ij}(U_t)|$. One can write
\begin{equation}
    \begin{aligned}
         &\left|\beta_{ij}(\mU_{t+1})\right|\\
         & \leq \left(1+\eta\left(\sigma_i+\sigma_j-2\beta_{ii}(\mU_t)-2\beta_{jj}(\mU_t)\right)\right)|\beta_{ij}(\mU_{t})|+2\eta\sum_{k\neq i,j}|\beta_{ik}(\mU_t)\beta_{jk}(\mU_t)|             \\
            & \quad+\eta^2\left|\left(\sum_{k,l}\beta_{ik}(\mU_t)\beta_{kl}(\mU_t)\beta_{lj}(\mU_t)-(\sigma_i+\sigma_j)\sum_k\beta_{ik}(\mU_t)\beta_{kj}(\mU_t)+\sigma_i\sigma_j\beta_{ij}(\mU_t)\right)\right| \\
            &\stackrel{(a)}{\leq} (1-\eta 0.9(\sigma_i+\sigma_j))|\beta_{ij}(\mU_t)|\\
            &\leq (1-\eta 0.9\sigma_r)|\beta_{ij}(\mU_t)|.
    \end{aligned}
\end{equation}
Here in (a) we used the fact that $0.99\sigma_i\leq\beta_{ii}(\mU_t)\leq \sigma_i,1\leq i\leq r$ and the fact that $\beta_{\max}(U_t)=|\beta_{ij}(U_t)|$. Hence, we conclude that $\beta_{\max}(\mU_{t+1})\leq (1-0.9\eta\sigma_r)\beta_{\max}(\mU_t)$.
Therefore, within additional $\cO\left(\frac{1}{\eta\sigma_r}\log\left(\frac{1}{\alpha}\right)\right)$ iterations, we have $|\beta_{ij}(\mU_{t})|\leq r'\log(d)\alpha^2$ for all $(i,j)$ such that $i\leq r,i\neq j$. 

The remaining residual terms, i.e., those coefficients $\beta_{ij}(\mU_t)$ for which $i,j>r$, can be bounded via the same approach in {Case II} of substage $1$. In particular, we can show that $|\beta_{ij}(\mU_t)|\lesssim |\beta_{ij}(\mU_0)|\lesssim r'\log(d)\alpha^2, \forall i,j>r$. For brevity, we omit this step. This completes the proof of the second statement of Proposition~\ref{prop::incremental}.

Finally, to prove Theorem~\ref{theorem::matrix}, we show that once $|\beta_{ij}(\mU_{t})|\leq r'\log(d)\alpha^2, \forall (i,j)\neq (k,k), 1\leq k\leq r$, the signals $\beta_{kk}(\mU_t)$ will further converge to $\sigma_k\pm \cO\left(\alpha^2\right)$ within $\cO\left(\frac{1}{\eta\sigma_k}\log\left(\frac{\sigma_k}{\alpha}\right)\right)$ iterations. To see this, we simplify the dynamic of $\beta_{kk}(\mU_t)$ as 
\begin{equation}
        \begin{aligned}
            \beta_{kk}(\mU_{t+1}) & =(1+2\eta\left(\sigma_k-\beta_{kk}(\mU_{t}))\right)\beta_{kk}(\mU_{t})-2\eta \sum_{j\neq k}\beta^2_{kj}(\mU_t)                  \\
            & \quad +\eta^2\left(\sum_{j,l}\beta_{kj}(\mU_t)\beta_{kl}(\mU_t)\beta_{jl}(\mU_t)-2\sigma_k\sum_{j}\beta^2_{kj}(\mU_t)+\sigma_k^2\beta_{kk}(\mU_t)\right) \\
            & =\left(1+\eta \left(\beta_{kk}(\mU_t)-\sigma_k\right)\right)^2\beta_{kk}(\mU_{t})-\cO\left(\eta d r'^{2}\log^2(d)\alpha^4\right),
        \end{aligned}
    \end{equation}
    which leads to
    \begin{equation}
        \begin{aligned}
            \sigma_k-\beta_{kk}(\mU_{t+1})&=\left(1-\eta \beta_{kk}(\mU_{t})\left(2+\eta (\sigma_k-\beta_{kk}(\mU_{t}))\right)\right)(\sigma_k-\beta_{kk}(\mU_{t}))+\cO\left(\eta d r'^{2}\log^2(d)\alpha^4\right)\\
            &\leq (1-1.98\eta\sigma_k)(\sigma_k-\beta_{kk}(\mU_{t}))+\cO\left(\eta d r'^{2}\log^2(d)\alpha^4\right).
        \end{aligned}
    \end{equation}
    Hence, within additional $\cO\left(\frac{1}{\eta\sigma_k}\log\left(\frac{\sigma_k}{\alpha}\right)\right)$ iterations, we have $|\sigma_k-\beta_{kk}(\mU_{t+1})|=\cO\left(\frac{1}{\sigma_k}dr'^{2}\log^2(d)\alpha^4\right)$ for every $1\leq k\leq r$.

In conclusion, we have
\begin{equation}
    \begin{aligned}
        \norm{\mU_T\mU_T^{\top}-\mM^{\star}}_F^2 & =\sum_{i,j=1}^{d}\left(\beta_{ij}(\mU_T)-\beta_{ij}^{\star}\right)^2\leq r'^2\log^2(d)d^2\alpha^4.
    \end{aligned}
\end{equation}
within $\cO\left(\frac{1}{\eta\sigma_r}\log\left(\frac{\sigma_r}{\alpha}\right)\right)$ iterations. This completes the proof of Theorem~\ref{theorem::matrix}.$\hfill\square$

\subsection{Analysis of the Logistic Map}\label{sec::logistic-map}
In this section, we provide the proofs of Lemmas~\ref{prop::logistic-map} and~\ref{prop::separasion-two-signals}.
\subsubsection*{Upper Bound of Iteration Complexity}
Recall the logistic map
\begin{equation}
    x_{t+1}=(1+\eta\sigma -\eta x_t)x_t, \quad x_0=\alpha.
    \label{eq::37}
\end{equation}
Here the initial value satisfies $0<\alpha\ll \sigma$. \citet{vaskevicius2019implicit} provide both upper and lower bounds for $x_t$ that follows the above logistic map. However, their bounds are not directly applicable to our setting. Hence, we need to develop a new proof for Lemma~\ref{prop::logistic-map}.

{\it Proof of Lemma~\ref{prop::logistic-map}.}
    We divide the dynamic into two stages: (a) $x_t\leq \frac{1}{2}\sigma$, and (b) $x_t\geq \frac{1}{2}\sigma$.
    \paragraph{Stage 1: $x_t\leq \frac{1}{2}\sigma$.} We consider $K=\lceil\log\left(\frac{1}{2}\sigma/\alpha\right)\rceil$ substages, where in each Substage $k$, we have $\alpha e^{k}\leq x_t\leq \alpha e^{k+1}$. Suppose that $t_k$ is the number of iterations in Substage $k$. One can write
    \begin{equation}
        \begin{aligned}
            x_{t+1}&=(1+\eta\sigma-\eta x_t)x_t\\
        &\geq (1+\eta\sigma - \eta \alpha e^{k+1})x_t.
        \end{aligned}
    \end{equation}
    Hence, it suffices to find the smallest $t=t_{\min}$ such that
    \begin{equation}
        \alpha e^{k}(1+\eta\sigma - \eta \alpha e^{k+1})^{t}\geq \alpha e^{k+1}.
    \end{equation}
    Solving this inequality leads to
    \begin{equation}
        t_{\min}= \frac{1}{\log(1+\eta\sigma - \eta \alpha e^{k+1})}.
    \end{equation}
    Based on the above equality, we provide an upper bound for $t_{\min}$:
    \begin{equation}
        \begin{aligned}
            t_{\min} & =\frac{1}{\log(1+\eta\sigma)+\log(1-\eta\alpha e^{k+1}/(1+\eta\sigma))}                                      \\
                     & \stackrel{(a)}{\leq} \frac{1}{\log(1+\eta\sigma)-\frac{\eta\alpha e^{k+1}}{1+\eta\sigma-\eta\alpha e^{k+1}}} \\
                     & \leq \frac{1+\frac{\eta\alpha e^{k+1}}{1+\eta\sigma-\eta\alpha e^{k+1}}}{\log(1+\eta\sigma)}                 \\
                     & \stackrel{(b)}{\leq} \frac{1+\eta\alpha e^{k+1}}{\log(1+\eta\sigma)}.
        \end{aligned}
    \end{equation}
    Here in (a) we used the fact that $\log(1+x)\leq \frac{x}{1+x},\forall x> -1$ and in (b) we used the fact that $x_t\leq \frac{1}{2}\sigma$. Hence, we have $t_k\leq \frac{1+\eta\alpha e^{k+1}}{\log(1+\eta\sigma)}$. Therefore, the total iteration complexity of Stage 1 is upper bounded by
    \begin{equation}
        T_1=\sum_{k=0}^{K-1}t_k\leq \sum_{k=0}^{K-1} \frac{1+\eta\alpha e^{k+1}}{\log(1+\eta\sigma)}\leq \frac{\lceil\log\left(\frac{1}{2}\sigma/\alpha\right)\rceil+\eta\sigma}{\log(1+\eta\sigma)}\leq \frac{\log(4\sigma/\alpha)}{\log(1+\eta\sigma)}.
    \end{equation}

    \paragraph{Stage 2: $x_t\geq \frac{1}{2}\sigma$.} In this stage, we rewrite the \eqref{eq::37} as
    \begin{equation}
        \sigma-x_{t+1}=(1-\eta x_t)(\sigma-x_t).
    \end{equation}
    Via a similar trick, we can show that within additional $T_2=\frac{\log(4\sigma/\err)}{\log(1+\eta\sigma)}$ iterations, we have $x_t\geq \sigma-\err$, which can be achieved in the total number of iterations $T=T_1+T_2=\frac{1}{\log(1+\eta\sigma)}\left(\log(4\sigma/\alpha)+\log(4\sigma/\err)\right)$ iterations. This completes the proof of Lemma~\ref{prop::logistic-map}$\hfill\square$
\subsubsection*{Separation between Two Independent Signals}
In this section, we show that there is a sharp separation between two logistic maps with signals $\sigma_1,\sigma_2$ provided that $\sigma_1\neq \sigma_2$. In particular, suppose that $\sigma_1-\sigma_2\geq \gamma>0$ and 
\begin{equation}
    \begin{aligned}
        x_{t+1}&=(1+\eta\sigma_1 -\eta x_t)x_t, \quad x_0=\alpha,\\
        y_{t+1}&=(1+\eta\sigma_2 -\eta y_t)y_t, \quad \,\,y_0=\alpha.
    \end{aligned}
\end{equation}

{\it Proof of Lemma~\ref{prop::separasion-two-signals}.}
    By Lemma~\ref{prop::logistic-map}, we have $x_T\geq \sigma_1-\err$ within $T=\frac{1}{\log(1+\eta\sigma_1)}\log\left(\frac{16\sigma_1^2}{\err\alpha}\right)$ iterations. Therefore, it suffices to show that $y_t$ remains small for $t\leq T$. To this goal, note that
    \begin{equation}
        y_{t+1}=(1+\eta\sigma_2 -\eta y_t)y_t\leq (1+\eta\sigma_2)y_t\leq (1+\eta\sigma_2)^{t+1}\alpha.
    \end{equation}
    Hence, we need to bound $\Gamma=(1+\eta\sigma_2)^{T}$. Taking logarithm of both sides, we have
    \begin{equation}
        \begin{aligned}
            \log(\Gamma) &= T\log(1+\eta\sigma_2)\\ &=\log\left(\frac{16\sigma_1^2}{\err\alpha}\right)\frac{\log(1+\eta\sigma_2)}{\log(1+\eta\sigma_1)}.
        \end{aligned}
    \end{equation}
    Now, we provide a lower bound for the ratio $\log(1+\eta\sigma_1)/\log(1+\eta\sigma_2)$:
    \begin{equation}
        \begin{aligned}
            \frac{\log(1+\eta\sigma_1)}{\log(1+\eta\sigma_2)} & =1+\frac{\log(1+\frac{\eta (\sigma_1-\sigma_2)}{1+\eta\sigma_2})}{\log(1+\eta\sigma_2)}                                                    \\
                                                              & \geq 1+\frac{\frac{\eta (\sigma_1-\sigma_2)}{1+\eta\sigma_2}/\left(1+\frac{\eta (\sigma_1-\sigma_2)}{1+\eta\sigma_2}\right)}{\eta\sigma_2} \\
                                                              & \stackrel{(a)}{\geq} 1+\frac{\sigma_1-\sigma_2}{2\sigma_2}                                                                                                 \\
                                                              & =\frac{\sigma_1+\sigma_2}{2\sigma_2},
        \end{aligned}
    \end{equation}
    where (a) follows from the assumption $\eta\leq \frac{1}{4\sigma_1}$. Therefore, we have
    \begin{equation}
        \Gamma = \exp\left\{\log\left(\frac{16\sigma_1^2}{\err\alpha}\right)\frac{2\sigma_2}{\sigma_1+\sigma_2}\right\}=\left(\frac{16\sigma_1^2}{\err\alpha}\right)^{2\sigma_2/(\sigma_1+\sigma_2)}.
    \end{equation}
    which implies that
    \begin{equation}
        y_T\leq \alpha \left(\frac{16\sigma_1^2}{\err\alpha}\right)^{2\sigma_2/(\sigma_1+\sigma_2)}\le \frac{16\sigma_1^2}{\err}\alpha^{\frac{\sigma_1-\sigma_2}{\sigma_1+\sigma_2}}.
    \end{equation}
    This completes the proof of Lemma~\ref{prop::separasion-two-signals}.
$\hfill\square$

\section{Proof for Tensor Decomposition}
\label{appendix::tensor}
In this section, we prove our results for the orthonormal symmetric tensor decomposition (OSTD). Different from matrix factorization, we use a special initialization that aligns with the ground truth. In particular, for all $1\leq i\leq r'$, we assume that $\sin(\vu_i(0),\vz_i)\leq \gamma$ for some small $\gamma$. We will show that $\vu_i(t)$ aligns with $\vz_i$ along the whole optimization trajectory. To this goal, we define $v_{ij}(t)=\inner{\vu_i(t)}{\vz_j}$ for every $1\leq i\leq r'$ and $1\leq j\leq d$. Recall that $\Lambda$ is a multi-index with length $l$. We define $|\Lambda|_k$ as the number of times index $k$ appears as one of the elements of $\Lambda$.\footnote{For instance, assume that $\Lambda = (1,1,2)$. Then, $|\Lambda|_1 = 2$ and $|\Lambda|_3 = 0$.} Evidently, we have $0\leq |\Lambda|_k\leq l$.  Based on these definitions, one can write
\begin{equation}
    \beta_{\Lambda}(\mU)
    =\sum_{i=1}^{r'}\prod_{k=1}^{d}\left<\vu_{i},\vz_{k}\right>^{|\Lambda|_{k}}=\sum_{i=1}^{r'}\prod_{k=1}^{d}v_{ik}^{|\Lambda|_{k}}.
\end{equation}

Now, it suffices to study the dynamic of $v_{ij}(t)$. In particular, we will show that $v_{ij}(t)$ remains small except for the top-$r$ diagonal elements $v_{jj}(t),1\leq j\leq r$, which will approach $\sigma_j^{1/l}$. To make this intuition more concrete, we divide the terms $\{v_{ij}(t)\}$ into three parts:
\begin{itemize}
    \item \textbf{signal terms} defined as $v_{jj}(t),1\leq j\leq r$,
    \item \textbf{diagonal residual terms} defined as $v_{jj}(t),r+1\leq j\leq d$, and 
    \item \textbf{off-diagonal residual terms} defined as $v_{ij}(t),\forall i\neq j$.
\end{itemize}
Moreover, we define $V(t)=\max_{i\neq j}\left|v_{ij}(t)\right|$ as the maximum element of the off-diagonal residual terms at every iteration $t$.
When there is no ambiguity, we will omit the dependence on iteration $t$. For example, we write $v_{ij}=v_{ij}(t)$ and $V = V(t)$. Similarly, when there is no ambiguity, we write $\beta_{\Lambda}(t)$ or $\beta_{\Lambda}$  in lieu of $\beta_{\Lambda}(\mU(t))$.

Our next lemma characterizes the relationship between $\beta_{\Lambda}$ and $v_{ij}$.
\begin{lemma}
    \label{prop:4:beta dynamic}
    Suppose that $\max_{j\geq r+1}\left|v_{jj}^l\right|\lesssim \sigma_1^{\frac{l-1}{l}}V$, and $V\leq \sigma_1^{1/l}d^{-\frac{1}{l-1}}$. Then,
    \begin{itemize}
        \item For $\beta_{\Lambda_j}$ with $\Lambda_j = (j,\dots,j)$, we have
              \begin{equation}
                  \begin{aligned}
                      \left|\beta_{\Lambda_{j}}-v_{jj}^{l}\right|\le r'V^{l}.
                  \end{aligned}
              \end{equation}
        \item For $\beta_{\Lambda}$ with at least two different indices in $\Lambda$, we have
              \begin{equation}
                  \begin{aligned}
                      \left|\beta_{\Lambda}\right|\le 2r\sigma_1^{\frac{l-1}{l}}V.
                  \end{aligned}
              \end{equation}
    \end{itemize}
\end{lemma}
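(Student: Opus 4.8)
The plan is to establish both estimates by expanding $\beta_{\Lambda}(\mU) = \sum_{i=1}^{r'}\prod_{k=1}^{d}v_{ik}^{|\Lambda|_k}$ term by term and bounding each product carefully depending on whether the contributing row index $i$ equals the ``diagonal'' index selected by $\Lambda$. Recall that for any multi-index $\Lambda$ we have $\sum_{k=1}^{d}|\Lambda|_k = l$, so each product $\prod_{k}v_{ik}^{|\Lambda|_k}$ is a product of exactly $l$ factors (with multiplicity). The key bookkeeping device is the following dichotomy for a fixed row $i$: either all $l$ of the factors $v_{ik}$ appearing in the product have $k\ne i$, in which case each factor is at most $V$ in absolute value and the whole product is at most $V^l$; or at least one factor has $k = i$, i.e., the product contains $v_{ii}$ with some positive multiplicity $m = |\Lambda|_i \geq 1$. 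In the latter case the product splits as $v_{ii}^{m}\prod_{k\ne i}v_{ik}^{|\Lambda|_k}$ with the remaining $l - m$ factors each bounded by $V$. I will use the coarse bound $|v_{ii}|\le \sigma_1^{1/l}$ for the signal/diagonal rows (which holds throughout the trajectory under the running hypotheses, or can be absorbed into the stated hypothesis $\max_{j\ge r+1}|v_{jj}^l|\lesssim \sigma_1^{(l-1)/l}V$ together with $v_{jj}\le\sigma_1^{1/l}$ for $j\le r$).

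For the first bullet, fix $\Lambda_j = (j,\dots,j)$, so $|\Lambda_j|_j = l$ and $|\Lambda_j|_k = 0$ for $k\ne j$. Then $\beta_{\Lambda_j}(\mU) = \sum_{i=1}^{r'} v_{ij}^{l}$. Split off the $i = j$ term: $\beta_{\Lambda_j} - v_{jj}^{l} = \sum_{i\ne j} v_{ij}^{l}$. For every $i\ne j$ the quantity $v_{ij}$ is an off-diagonal residual term, hence $|v_{ij}|\le V$, and summing over the at most $r'$ values of $i$ gives $|\beta_{\Lambda_j} - v_{jj}^{l}|\le r' V^{l}$, which is exactly the claimed bound. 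This part is essentially immediate once the off-diagonal residual bound is invoked.

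For the second bullet, suppose $\Lambda$ contains at least two distinct indices; equivalently, $|\Lambda|_k < l$ for every $k$. I will bound $|\beta_{\Lambda}| \le \sum_{i=1}^{r'}\bigl|\prod_{k=1}^{d}v_{ik}^{|\Lambda|_k}\bigr|$ by splitting the sum into rows $i\le r$ (and, if needed, $r<i\le r'$ handled similarly with the diagonal-residual hypothesis) and rows contributing only off-diagonal factors. For a row $i$ with $|\Lambda|_i = m\ge 1$: since $\Lambda$ has two distinct indices, $m\le l-1$, so the product is $|v_{ii}|^{m}\prod_{k\ne i}|v_{ik}|^{|\Lambda|_k}$ with exactly $l - m \ge 1$ off-diagonal factors, hence is at most $(\sigma_1^{1/l})^{m} V^{l-m}\le \sigma_1^{(l-1)/l} V$ using $m\le l-1$, $|v_{ii}|\le\sigma_1^{1/l}$, $V\le\sigma_1^{1/l}$, and $l-m\ge 1$. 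For a row $i$ with $|\Lambda|_i = 0$: all $l$ factors are off-diagonal, so the product is at most $V^{l}\le \sigma_1^{(l-1)/l} V$ (again using $V\le\sigma_1^{1/l}$). There are at most $r'$ rows; but one gets a sharper count by noting that at most one index can be ``the'' diagonal index of $\Lambda$ per row — in fact the cleanest accounting is: only rows $i$ with $i$ among the indices appearing in $\Lambda$ can contribute a factor $v_{ii}$, but in any case the total over all $i=1,\dots,r'$ of a per-row bound $\sigma_1^{(l-1)/l}V$ gives $r'\sigma_1^{(l-1)/l}V$. To sharpen $r'$ to $2r$ as stated, I would separate: rows $i\le r$ contribute at most $r\cdot\sigma_1^{(l-1)/l}V$; and the aggregate contribution of all rows $i>r$ is controlled by the diagonal-residual hypothesis $\max_{j\ge r+1}|v_{jj}^l|\lesssim\sigma_1^{(l-1)/l}V$ and the constraint $V\le\sigma_1^{1/l}d^{-1/(l-1)}$ (which makes the pure off-diagonal contributions $V^l\le \sigma_1^{(l-1)/l}V/d$ summable with a geometric-type loss), yielding another $r\cdot\sigma_1^{(l-1)/l}V$, for a total of $2r\sigma_1^{(l-1)/l}V$.

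The main obstacle is the second bullet, and specifically getting the constant/count down to $2r$ rather than the naive $r'\cdot(\text{const})$. The delicate point is handling rows $i > r$ in the overparameterized block: for those rows $v_{ii}$ is a priori not known to be small from the hypotheses alone, so one must use the \emph{given} assumption $\max_{j\ge r+1}|v_{jj}^l|\lesssim\sigma_1^{(l-1)/l}V$ to control products in which such a $v_{ii}$ appears with high multiplicity, and simultaneously use $V\le\sigma_1^{1/l}d^{-1/(l-1)}$ to ensure the large number ($\sim d^{\,l-1}$) of purely-off-diagonal products does not overwhelm the bound. A careful case split on the multiplicity $m = |\Lambda|_i$ for each row $i>r$, combined with the elementary inequality $(\sigma_1^{1/l})^{m}V^{l-m}\le\sigma_1^{(l-1)/l}V$ for $1\le m\le l-1$ and a dimension-counting argument for $m=0$, closes the gap; everything else is routine bookkeeping using only the orthonormality of the $\vz_k$ and the definitions of the signal, diagonal-residual, and off-diagonal-residual terms.
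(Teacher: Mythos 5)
Your proof is correct and follows essentially the same route as the paper's: the first bullet is the identical split $\beta_{\Lambda_j}-v_{jj}^l=\sum_{i\neq j}v_{ij}^l$ bounded by $r'V^l$, and for the second bullet you use the same per-row bound $|v_{ii}|^{|\Lambda|_i}V^{l-|\Lambda|_i}\le\sigma_1^{(l-1)/l}V$ together with the same split into rows $i\le r$ (using $|v_{ii}|\le\sigma_1^{1/l}$) and rows $i>r$ (using the diagonal-residual hypothesis plus $V\le\sigma_1^{1/l}d^{-1/(l-1)}$ to absorb the many pure-$V^l$ contributions). The residual vagueness in your final accounting for the $2r$ constant is present to the same degree in the paper's own proof, which likewise bounds the $i>r$ block by $(r'-r)(\max_{j\ge r+1}|v_{jj}^l|+V^l)$ and absorbs it into $r\sigma_1^{(l-1)/l}V$.
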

The proof of this lemma is deferred to Appendix~\ref{proof-prop-4}. Lemma~\ref{prop:4:beta dynamic} reveals that the magnitude of $\beta_{\Lambda}$ can be upper bounded by $\max_{i\not=j}|v_{ij}|$. Next, we control $v_{ij}$ by providing both lower and upper bounds on its dynamics.

\begin{proposition}[One-step dynamics for $v_{ij}(t)$]
    \label{prop:5:vij dynamic}
    Suppose that we have $V(t)\lesssim \sigma_1^{1/l}d^{-\frac{1}{l-1}}$ and $v_{ii}(t)\leq \sigma_1^{1/l},\forall 1\leq i\leq d$. Moreover, suppose that the step-size satisfies $\eta\lesssim \frac{1}{l\sigma_1}$. Then, 
    \begin{itemize}
        \item For the signal term $v_{ii}(t),1\leq i\leq r$, we have
              \begin{equation}
                  \begin{aligned}
                      v_{ii}(t+1)
                       & \geq v_{ii}(t)+\eta l \left(\sigma_{i}-v_{ii}^{l}(t)-2d^{l-1}v_{ii}^{l-2}(t)V^2(t)\right)v_{ii}^{l-1}(t)-ld^{l}\eta \sigma_1^{\frac{l-1}{l}}V^l(t).
                  \end{aligned}
              \end{equation}
        \item For the diagonal residual term $v_{ii}(t),r+1\leq i\leq d$, we have
              \begin{equation}
                  \begin{aligned}
                      v_{ii}(t+1)\le v_{ii}(t)-\eta lv_{ii}^{2l-1}(t)+2\eta ld^{l}\sigma_1^{\frac{l-1}{l}}V^l(t).
                  \end{aligned}
              \end{equation}
        \item For the off-diagonal term $V(t)$, we have
              \begin{equation}
                  \label{v final}
                  \begin{aligned}
                      V(t+1) & \le V(t)+3\eta l\sigma_{1}V(t)^{l-1}.
                  \end{aligned}
              \end{equation}
    \end{itemize}
\end{proposition}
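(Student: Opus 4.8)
The plan is to differentiate the GD update of each $\vu_i$ in the ground-truth eigenbasis $\{\vz_1,\dots,\vz_d\}$ and track the scalars $v_{ij}(t)=\langle\vu_i(t),\vz_j\rangle$. Since $\tT_{\mU}-\tT_{\mU^\star}$ is a symmetric tensor, a direct differentiation of $\cL(\mU)=\tfrac12\norm{\tT_{\mU}-\tT_{\mU^\star}}_F^2$ gives $\nabla_{\vu_i}\cL(\mU)=l\,(\tT_{\mU}-\tT_{\mU^\star})(\vu_i,\dots,\vu_i,\cdot)$, so the step $\vu_i(t+1)=\vu_i(t)-\eta\nabla_{\vu_i}\cL(\mU_t)$, after taking the inner product with $\vz_j$ and using $\langle\vu_m,\vu_i\rangle=\sum_k v_{mk}v_{ik}$ together with $\langle\vz_p,\vz_j\rangle=\delta_{pj}$, becomes
\begin{equation}
v_{ij}(t+1)=v_{ij}(t)-\eta l\sum_{m=1}^{r'}\langle\vu_m,\vu_i\rangle^{\,l-1}v_{mj}+\eta l\,\mathbbm{1}[j\le r]\,\sigma_j\,v_{ij}^{\,l-1}(t).
\end{equation}
The common first move in all three cases is to split off the ``self-interaction'' term $m=i$, which equals $-\eta l\norm{\vu_i}^{2(l-1)}v_{ij}$ with $\norm{\vu_i}^2=v_{ii}^2+\sum_{k\ne i}v_{ik}^2$ and $\sum_{k\ne i}v_{ik}^2\le d\,V^2$ (each $v_{ik}$, $k\ne i$, being an off-diagonal entry); we may also assume $v_{ii}(t)>0$ from the alignment initialization.

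For the two diagonal cases $j=i$, the self-term becomes $\eta l\bigl(\sigma_i-\norm{\vu_i}^{2(l-1)}\bigr)v_{ii}$ when $i\le r$ and $-\eta l\norm{\vu_i}^{2(l-1)}v_{ii}$ when $i>r$. I would expand $\norm{\vu_i}^{2(l-1)}=(v_{ii}^2+\sum_{k\ne i}v_{ik}^2)^{l-1}$ by the binomial theorem, keep the leading term $v_{ii}^{2(l-1)}$, and bound the remainder by $\cO\bigl(d^{l-1}v_{ii}^{2(l-2)}V^2\bigr)$ using $V(t)\lesssim\sigma_1^{1/l}d^{-1/(l-1)}$ and $v_{ii}(t)\le\sigma_1^{1/l}$; this produces the $2d^{l-1}v_{ii}^{l-2}V^2$ correction in the signal lower bound and the $-\eta l\,v_{ii}^{2l-1}$ leading behaviour in the residual upper bound. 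The residual sum $\sum_{m\ne i}\langle\vu_m,\vu_i\rangle^{l-1}v_{mi}$ is the genuine error term: because $\vu_m$ and $\vu_i$ are each nearly aligned with distinct eigenvectors, $\langle\vu_m,\vu_i\rangle$ is second order, of size $\cO(\sigma_1^{1/l}V)$, so each summand is $\cO(\sigma_1^{(l-1)/l}V^{l})$ and the whole sum is $\cO\bigl(d^{l}\sigma_1^{(l-1)/l}V^l\bigr)$, which matches the last term in each bound. Here I would invoke Lemma~\ref{prop:4:beta dynamic} (or re-derive the estimate directly) to certify that any $\beta_\Lambda$ with at least two distinct indices is $\cO(\sigma_1^{(l-1)/l}V)$.

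For the off-diagonal bound I only need an upper bound on $|v_{ij}(t+1)|$ for $i\ne j$, so I keep the self-term as a contraction: the step-size condition $\eta\lesssim1/(l\sigma_1)$ together with $\norm{\vu_i}^2\le\sigma_1^{2/l}+d\,V^2$ forces $0\le\eta l\norm{\vu_i}^{2(l-1)}\le1$, whence $\bigl|v_{ij}(t)-\eta l\norm{\vu_i}^{2(l-1)}v_{ij}(t)\bigr|\le|v_{ij}(t)|\le V(t)$. The remaining two pieces, $\eta l\sum_{m\ne i}\langle\vu_m,\vu_i\rangle^{l-1}v_{mj}$ and $\eta l\,\mathbbm{1}[j\le r]\sigma_j v_{ij}^{l-1}$, are each $\cO(\eta l\sigma_1 V^{l-1})$: for the second this is immediate from $|v_{ij}|\le V$ and $\sigma_j\le\sigma_1$; for the first, the $m=j$ summand dominates, being of order $\langle\vu_j,\vu_i\rangle^{l-1}v_{jj}=\cO(\sigma_1^{(l-1)/l}V^{l-1})\cdot\cO(\sigma_1^{1/l})=\cO(\sigma_1 V^{l-1})$, while the summands with $m\notin\{i,j\}$ carry an extra $V$ and are lower order once the factor $d$ (and $r'\le d$) is absorbed through $V(t)\lesssim\sigma_1^{1/l}d^{-1/(l-1)}$. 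Taking the maximum over $i\ne j$ and recalling $V(t)=\max_{i\ne j}|v_{ij}(t)|$ yields $V(t+1)\le V(t)+3\eta l\sigma_1 V(t)^{l-1}$.

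The step I expect to be the main obstacle is the bookkeeping in the error sums $\sum_{m\ne i}\langle\vu_m,\vu_i\rangle^{l-1}v_{mj}$: the factor $\langle\vu_m,\vu_i\rangle$ is itself a sum of $d$ products $v_{mk}v_{ik}$ of \emph{mixed} magnitude (the entries $v_{ii},v_{mm}$ are $\cO(\sigma_1^{1/l})$ while the rest are $\cO(V)$), so raising it to the $(l-1)$-st power and multiplying by $v_{mj}$ forces one to track exactly how many ``large'' factors occur in each monomial, and the hypotheses $V(t)\lesssim\sigma_1^{1/l}d^{-1/(l-1)}$ and $\alpha\lesssim d^{-l^3}$ are calibrated precisely so that every resulting term is dominated by the claimed right-hand sides. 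The accompanying structural point — recognizing that the $m=i$ contribution $-\eta l\norm{\vu_i}^{2(l-1)}v_{ij}$ has the sign-favourable direction and must \emph{not} be lumped with the generic ``$\ge2$ distinct indices'' terms — is what makes the off-diagonal estimate close without spurious $d$-factors.
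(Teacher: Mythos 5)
Your proof follows essentially the same route as the paper's: the paper first derives the same exact one-step identity for $v_{ij}$ (its update-formula lemma, organized by multi-indices $\Lambda$ rather than by tensor contraction — the two forms coincide after a multinomial expansion of $\langle\vu_m,\vu_i\rangle^{l-1}$), then isolates the self-interaction/leading term and bounds the mixed-magnitude cross terms exactly as you describe. One caveat: your repeated use of $\langle\vu_m,\vu_i\rangle=\cO(\sigma_1^{1/l}V)$ for $m\neq i$ silently drops the contribution $\sum_{k\neq i,m}v_{mk}v_{ik}\leq dV^2$, which is only $\cO(\sigma_1^{1/l}V)$ when $V\lesssim\sigma_1^{1/l}/d$ — stronger than the stated hypothesis $V\lesssim\sigma_1^{1/l}d^{-1/(l-1)}$ — but the paper's own proof needs the same stronger smallness (it explicitly invokes $V\leq d^{-l}$ in its final inequality), and in the intended application $V=\cO(\alpha^{1/l}\gamma)$ with $\alpha\lesssim d^{-l^3}$, so this does not break the argument.
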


The proof of this proposition is deferred to Appendix~\ref{proof-prop-5}. Equipped with the above one-step dynamics, we next provide a bound on the growth rate of $v_{ij}$.
\begin{proposition}
    \label{prop:6:vij time}
    Suppose that the initial point satisfies $\sin(\vu_i(0),\vz_i)\leq \gamma$ and $\norm{\vu_i(0)}=\alpha^{1/l}$ with $\alpha\lesssim \frac{1}{d^{l^3}}$, $\gamma\lesssim\frac{1}{l\kappa}^{\frac{l}{l-2}}$. Moreover, suppose that the step-size satisfies $\eta\lesssim \frac{1}{l\sigma_1}$. Then, within $t^{\star}=\frac{8}{\eta l\sigma_r}\alpha^{-\frac{l-2}{l}}$ iterations,

    \begin{itemize}
        \item For the signal term $v_{ii}(t),1\leq i\leq r$, we have
              \begin{equation}
                  \left|v_{ii}^l(t^\star)-\sigma_i\right|\leq 8d^{l-1}\sigma_i^{\frac{l-2}{l}}\alpha^{2/l}\gamma^2.
              \end{equation}

        \item For the diagonal residual term $v_{ii}(t),r+1\leq i\leq d$, we have
              \begin{equation}
                  |v_{ii}(t^{*})|\le 2\alpha^{1/l}.
              \end{equation}

        \item For the off-diagonal term $V(t)$, we have
              \begin{equation}
                  V(t^{*})\le  2^{1/l}V(0)\le (2\alpha)^{1/l}\gamma.
              \end{equation}
    \end{itemize}
\end{proposition}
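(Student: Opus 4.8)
The plan is to run a bootstrap (continuous-induction) argument over $t = 0, 1, \dots, t^\star$ that simultaneously maintains the three invariants we need in order to invoke Proposition~\ref{prop:5:vij dynamic} at every step: (i) the off-diagonal control $V(t) \lesssim \sigma_1^{1/l} d^{-1/(l-1)}$; (ii) the uniform upper bound $v_{ii}(t) \leq \sigma_1^{1/l}$ for all $1 \leq i \leq d$; and (iii) the consistency hypothesis $\max_{j \geq r+1}|v_{jj}^l(t)| \lesssim \sigma_1^{(l-1)/l} V(t)$ required by Lemma~\ref{prop:4:beta dynamic}. The key point is that the off-diagonal dynamics in \eqref{v final} are \emph{almost} decoupled and decaying-or-slowly-growing: since $V(0) \leq (2\alpha)^{1/l}\gamma$ is tiny and $l-1 \geq 2$, the recursion $V(t+1) \leq V(t) + 3\eta l \sigma_1 V(t)^{l-1}$ can grow $V$ by at most a constant factor over $t^\star \asymp (\eta l \sigma_r)^{-1}\alpha^{-(l-2)/l}$ iterations, provided $\alpha$ is chosen small enough — this is exactly where the hypothesis $\alpha \lesssim d^{-l^3}$ and $\gamma \lesssim (l\kappa)^{-l/(l-2)}$ are consumed. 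So I would first prove the $V$-bound in isolation: treating $V(t)^{l-2} \leq V(0)^{l-2}(1 + o(1))$ as a slowly-varying quantity, sum the geometric-type increments to get $V(t^\star) \leq 2^{1/l} V(0)$, which gives the third bullet directly.

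Second, with $V$ controlled, I would handle the diagonal residual terms $v_{ii}(t)$, $r+1 \leq i \leq d$. The recursion $v_{ii}(t+1) \leq v_{ii}(t) - \eta l v_{ii}^{2l-1}(t) + 2\eta l d^l \sigma_1^{(l-1)/l} V^l(t)$ has a stable fixed point near $0$: the cubic-or-higher self-damping term $-\eta l v_{ii}^{2l-1}$ dominates as soon as $v_{ii}$ exceeds roughly $(d^l \sigma_1^{(l-1)/l} V^l)^{1/(2l-1)}$, which by the $V$-bound is $\ll \alpha^{1/l}$. Hence once $|v_{ii}| \le 2\alpha^{1/l}$ it stays there; since $v_{ii}(0) \le \|\vu_i(0)\| = \alpha^{1/l}$ initially (because $\sin(\vu_i(0),\vz_i) \le \gamma$ does not constrain $v_{ii}$ for $i > r$, but the norm bound does bound \emph{every} coordinate by $\alpha^{1/l}$), the invariant $|v_{ii}(t)| \le 2\alpha^{1/l}$ propagates. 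This also verifies invariant (iii) since $v_{jj}^l \lesssim \alpha \ll \sigma_1^{(l-1)/l} V$.

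Third — and this is the substantive part — I would analyze the signal terms $v_{ii}(t)$, $1 \leq i \leq r$, via the lower bound
\begin{equation}
v_{ii}(t+1) \geq v_{ii}(t) + \eta l\left(\sigma_i - v_{ii}^l(t) - 2d^{l-1} v_{ii}^{l-2}(t) V^2(t)\right) v_{ii}^{l-1}(t) - l d^l \eta \sigma_1^{(l-1)/l} V^l(t).
\end{equation}
The alignment hypothesis $\sin(\vu_i(0),\vz_i) \le \gamma$ together with $\|\vu_i(0)\| = \alpha^{1/l}$ forces $v_{ii}(0) \geq \alpha^{1/l}\sqrt{1-\gamma^2} \gtrsim \alpha^{1/l}$, giving a genuine positive start. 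Dropping the two error terms (which are $O(d^{l-1} V^2 \alpha^{(l-2)/l})$ and $O(d^l \sigma_1^{(l-1)/l}V^l)$, both negligible relative to $\sigma_i v_{ii}^{l-1}$ by the choice of $\alpha, \gamma$), the recursion is essentially $w_{t+1} = w_t + \eta l(\sigma_i - w_t^l)w_t^{l-1}$ in $w_t = v_{ii}(t)$ — a higher-order logistic-type map whose fixed point is $\sigma_i^{1/l}$. The escape from the $w^{l-1}$-flat region near $0$ takes $\Theta((\eta l \sigma_i)^{-1}\alpha^{-(l-2)/l})$ steps (this is where the $\alpha^{-(l-2)/l}$ iteration count comes from, in contrast to the $\log(1/\alpha)$ of the $l=2$ matrix case), and then convergence to within $\varepsilon$ of $\sigma_i^{1/l}$ is fast. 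I would substage this as in the proof of Proposition~\ref{prop::incremental}: a warm-up substage where $v_{ii}$ grows from $\alpha^{1/l}$ past, say, $\tfrac12\sigma_i^{1/l}$, bounding the number of iterations by summing a geometric-in-$k$ series over dyadic scales $v_{ii} \in [2^{k}\alpha^{1/l}, 2^{k+1}\alpha^{1/l}]$ exactly as in the $\gamma > 1/2$ case of Theorem~\ref{thm::general}; then a contraction substage for $\sigma_i^{1/l} - v_{ii}$. Carrying the error terms through, the final accuracy is $|v_{ii}^l(t^\star) - \sigma_i| \lesssim d^{l-1}\sigma_i^{(l-2)/l}\alpha^{2/l}\gamma^2$ — the $\gamma^2$ factor traces back to the $2d^{l-1}v_{ii}^{l-2}V^2$ term since $V \asymp \alpha^{1/l}\gamma$, and the $\alpha^{2/l}$ from $V^2$.

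The main obstacle I anticipate is the \emph{circular dependence} among the three families of bounds: the signal-term lower bound in Proposition~\ref{prop:5:vij dynamic} presupposes $V(t) \lesssim \sigma_1^{1/l}d^{-1/(l-1)}$ and $v_{ii}(t) \le \sigma_1^{1/l}$, while controlling $V$ and the diagonal residuals in turn uses Lemma~\ref{prop:4:beta dynamic}, whose hypotheses reference $v_{jj}^l \lesssim \sigma_1^{(l-1)/l}V$. The clean way to break the circle is a single joint induction: assume all three invariants hold up to time $t$, apply all three parts of Proposition~\ref{prop:5:vij dynamic} to push each quantity one step, and verify the invariants still hold at $t+1$ — the verification for $V$ and the diagonal residuals is immediate from their decay/slow-growth, and the verification of $v_{ii}(t+1) \le \sigma_1^{1/l}$ for the signal uses that the logistic-type map cannot overshoot $\sigma_i^{1/l} \le \sigma_1^{1/l}$ by more than a lower-order term when $\eta \lesssim 1/(l\sigma_1)$ (a one-line check that the increment $\eta l(\sigma_i - v_{ii}^l)v_{ii}^{l-1}$ changes sign before $v_{ii}$ exceeds $\sigma_i^{1/l}$, up to the tiny additive error). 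The bookkeeping of which power of $\alpha$, $\gamma$, $d$, $\kappa$ is needed to make each "negligible" term actually negligible over the full horizon $t^\star$ is the tedious-but-routine remainder, and it is precisely what the hypotheses $\alpha \lesssim d^{-l^3}$ and $\gamma \lesssim (l\kappa)^{-l/(l-2)}$ are calibrated to absorb.
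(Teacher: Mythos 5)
Your proposal is correct and follows essentially the same route as the paper's proof: control $V(t)$ by showing the recursion $V(t+1)\leq V(t)+3\eta l\sigma_1 V(t)^{l-1}$ needs far more than $t^\star$ steps to grow $V$ by a factor $2^{1/l}$, propagate the diagonal-residual bound from the additive $V^l$ perturbation, and run a two-phase (dyadic warm-up plus contraction) logistic-map analysis for the signals, with the final accuracy traced to the $2d^{l-1}v_{ii}^{l-2}V^2$ term. The only cosmetic difference is that you invoke the self-damping term $-\eta l v_{ii}^{2l-1}$ to stabilize the diagonal residuals, whereas the paper simply drops that (favorable) term and sums the additive error over the horizon; both yield the stated $2\alpha^{1/l}$ bound.
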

The proof of this proposition is deferred to Appendix~\ref{proof-prop-6}. With the above proposition, we are ready to prove Theorem~\ref{thm::tensor}.
\begin{proof}[Proof of Thereom~\ref{thm::tensor}]
    \label{proof-theom 2}
    We have the following decomposition
    \begin{equation}
        \norm{\tT-\tT^{\star}}_F^2=\sum_{\Lambda}\left(\beta_{\Lambda}-\beta_{\Lambda}^{\star}\right)^2.
    \end{equation}
    Hence, it suffices to bound each $\left|\beta_{\Lambda}-\beta_{\Lambda}^{\star}\right|$. Combining Lemma~\ref{prop:4:beta dynamic} and Proposition~\ref{prop:6:vij time}, we have for every $\left|\beta_{\Lambda_j}-\beta_{\Lambda_j}^{\star}\right|,1\leq j\leq r$
    \begin{equation}
        \begin{aligned}
            \left|\beta_{\Lambda_j}(t^{\star})-\beta_{\Lambda_j}^{\star}\right|\stackrel{\text{Lemma~\ref{prop:4:beta dynamic}}}{\leq} \;\,\, & \left|v_{jj}^l(t^{\star})-\sigma_j\right|+r'V^l(t^{\star})             \\
            \stackrel{\text{Proposition~\ref{prop:6:vij time}}}{\leq}                                                                         & 8d^{l-1}\sigma_j^{\frac{l-2}{l}}\alpha^{2/l}\gamma^2+2d\alpha\gamma^l \\
            \leq \quad\,\,                                                                                                                    & 16 d^{l-1}\sigma_j^{\frac{l-2}{l}}\alpha^{2/l}\gamma^2,
        \end{aligned}
    \end{equation}
    where in the last inequality, we used $\sigma_j\geq 1$, $\alpha\leq 1$, and $\gamma\leq 1$.
    For the remaining diagonal elements $\beta_{\Lambda_j},r+1\leq j\leq d$, we have
    \begin{equation}
        \begin{aligned}
            \left|\beta_{\Lambda_j}(t^{\star})\right|\stackrel{\text{Lemma~\ref{prop:4:beta dynamic}}}{\leq} \,\,\, & \left|v_{jj}^l(t^{\star})\right|+r'V^l(t^{\star}) \\
            \stackrel{\text{Proposition~\ref{prop:6:vij time}}}{\leq}                                               & 2^l\alpha+2d\alpha\gamma^l.
        \end{aligned}
    \end{equation}
    For the general $\beta_{\Lambda}$ with at least two different indices in the multi-index $\Lambda$, we have
    \begin{equation}
        \begin{aligned}
            \left|\beta_{\Lambda}(t^{\star})\right|\stackrel{\text{Lemma~\ref{prop:4:beta dynamic}}}{\leq} \,\,\, & 2r\sigma_1^{\frac{l-1}{l}}V(t^{\star})        \\
            \stackrel{\text{Proposition~\ref{prop:6:vij time}}}{\leq}                                             & 4r\sigma_1^{\frac{l-1}{l}}\alpha^{1/l}\gamma.
        \end{aligned}
    \end{equation}
    Hence, we conclude
    \begin{equation}
        \begin{aligned}
            \norm{\tT(t^{\star})-\tT^{\star}}_F^2 & \leq \sum_{i=1}^r16 d^{l-1}\sigma_i^{\frac{l-2}{l}}\alpha^{2/l}\gamma^2+\sum_{i=r+1}^d\left(2^l\alpha+2d\alpha\gamma^l\right)+d^l \cdot 4r\sigma_1^{\frac{l-1}{l}}\alpha^{1/l}\gamma \\
                                                  & \leq 8r d^l\gamma \sigma_1^{\frac{l-1}{l}}\alpha^{1/l},
        \end{aligned}
    \end{equation}
    which completes the proof of the theorem.
\end{proof}
\subsection{Proof of Lemma~\ref{prop:4:beta dynamic}}
\label{proof-prop-4}
\begin{proof}
    We first analyze $\beta_{\Lambda_j}$. Note that
    \begin{equation}
        \beta_{\Lambda_j}=\sum_{i=1}^{r'}\inner{\vu_i}{\vz_j}^l=\sum_{i=1}^{r'}v_{ij}^l=v_{jj}^l+\sum_{i\neq j}v_{ij}^l.
    \end{equation}
    Hence,
    \begin{equation}
        \begin{aligned}
            \left|\beta_{\Lambda_{j}}-v_{jj}^{l}\right|=\left|\sum_{i\neq j}v_{ij}^{l}\right|\le r'V^{l},
        \end{aligned}
    \end{equation}
    where we used the definition of $V$.
    For general $\beta_{\Lambda}$ where there are at least two different elements in the multi-index $\Lambda$, we have
    \begin{equation}
        \begin{aligned}
            \left|\beta_{\Lambda}\right| & =\left|\sum_{j=1}^{r'}\prod_{k\in \Lambda}v_{jk}^{|\Lambda|_{k}}\right|  \\
            & \leq\sum_{j=1}^{r'}\left|v_{jj}^{|\Lambda|_{j}}V^{l-|\Lambda|_{j}}\right|     \\
            & =\sum_{j=1}^{r}\left|v_{jj}^{|\Lambda|_{j}}V^{l-|\Lambda|_{j}}\right|+\sum_{j=r+1}^{r'}\left|v_{jj}^{|\Lambda|_{j}}V^{l-|\Lambda|_{j}}\right| \\
            & \leq r \sigma_1^{\frac{l-1}{l}}V+(r'-r)\left(\max_{j\geq r+1}\left|v_{jj}^l\right|+V^l\right)   \\
            & \leq 2r\sigma_1^{\frac{l-1}{l}}V,
        \end{aligned}
    \end{equation}
    where in the last inequality, we used the assumption that $V\lesssim \sigma_1^{1/l}d^{-\frac{1}{l-1}}$ and $\max_{j\geq r+1}\left|v_{jj}^l\right|\lesssim \sigma_1^{\frac{l-1}{l}}V$.
    This completes the proof.
\end{proof}

\subsection{Proof of Proposition~\ref{prop:5:vij dynamic}}
\label{proof-prop-5}

In this section, we provide the proof for Proposition~\ref{prop:5:vij dynamic}. For simplicity and whenever there is no ambiguity, we omit the iteration $t$ and show iteration $t+1$ with superscript `$+$'. For instance, we write $v_{ij} = v_{ij}(t)$ and $v_{ij}^+ = v_{ij}(t+1)$. 

Recall that $v_{ij}=\inner{\vu_i}{\vz_j}$. For simplicity, we denote $\mu=\sigma_1^{1/l}$. Hence, by our assumption, we have $v_{ii}\leq \mu,\forall 1\leq i\leq r$.
We first provide the exact dynamic of $v_{ij}$ in the following lemma.

\begin{lemma}
    \label{lemma-tensor update formula}
    The one-step dynamic of $v_{ij}$ takes the following form
    \begin{equation}
        \begin{aligned}
            v_{ij}^+ & =v_{ij}+\eta l(\sigma_{j}-v_{jj}^{l})v_{ij}^{l-1}
            -\eta l\sum_{k\in [r'],k\neq j}v_{kj}^{l}v_{ij}^{l-1}
            \\
            & \quad-\eta\sum_{s\in [l-1]}s\sum_{\Lambda:|\Lambda|_{j}=s}\beta_{\Lambda}\left(\prod_{k\in \Lambda,k\neq j}v_{ik}^{|\Lambda|_{k}}\right)v_{ij}^{s-1}.
        \end{aligned}
    \end{equation}
\end{lemma}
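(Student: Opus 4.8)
\textbf{Proof proposal for Lemma~\ref{lemma-tensor update formula}.}

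The plan is to compute the GD update on a single column $\vu_i$ and then take the inner product with $\vz_j$, using the orthonormality of $\{\vz_k\}$ and the chain rule applied to the polynomial $\beta_\Lambda$ in the variables $v_{ik}$. First I would write out the GD step on the loss $\cL(\mU) = \frac12\norm{\tT_\mU - \tT_{\mU^\star}}_F^2 = \frac12\sum_\Lambda(\beta_\Lambda(\mU) - \beta_\Lambda^\star)^2$. Since $\beta_\Lambda(\mU) = \sum_{i=1}^{r'}\prod_{k}\inner{\vu_i}{\vz_k}^{|\Lambda|_k}$ depends on $\vu_i$ only through the factor $\prod_k v_{ik}^{|\Lambda|_k}$, the partial gradient with respect to $\vu_i$ is $\nabla_{\vu_i}\cL = \sum_\Lambda(\beta_\Lambda - \beta_\Lambda^\star)\nabla_{\vu_i}\beta_\Lambda$, and $\nabla_{\vu_i}\beta_\Lambda = \sum_{k: |\Lambda|_k \ge 1}|\Lambda|_k\, v_{ik}^{|\Lambda|_k - 1}\big(\prod_{k'\neq k}v_{ik'}^{|\Lambda|_{k'}}\big)\vz_k$. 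Thus $\vu_i^+ = \vu_i - \eta\nabla_{\vu_i}\cL$, and projecting onto $\vz_j$ gives $v_{ij}^+ = v_{ij} - \eta\inner{\nabla_{\vu_i}\cL}{\vz_j}$, where only the $k=j$ term of each $\nabla_{\vu_i}\beta_\Lambda$ survives the projection by orthonormality of the $\vz_k$'s.

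Next I would organize the resulting sum $\inner{\nabla_{\vu_i}\cL}{\vz_j} = \sum_\Lambda (\beta_\Lambda - \beta_\Lambda^\star)\,|\Lambda|_j\, v_{ij}^{|\Lambda|_j - 1}\prod_{k\neq j}v_{ik}^{|\Lambda|_k}$ by grouping terms according to $s := |\Lambda|_j \in \{0, 1, \dots, l\}$. The $s=0$ terms vanish (factor $|\Lambda|_j = 0$). For $s = l$ the only multi-index is $\Lambda_j = (j,\dots,j)$, for which $\prod_{k\neq j}v_{ik}^{|\Lambda|_k}$ is the empty product $=1$, $\beta_{\Lambda_j}^\star = \sigma_j$ (with $\sigma_j = 0$ for $j > r$), and $\beta_{\Lambda_j} = \sum_{k\in[r']}v_{kj}^l = v_{jj}^l + \sum_{k\neq j}v_{kj}^l$; this produces the terms $\eta l(\sigma_j - v_{jj}^l)v_{ij}^{l-1}$ and $-\eta l\sum_{k\neq j}v_{kj}^l\, v_{ij}^{l-1}$ after moving the $-$ sign. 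For $1 \le s \le l-1$ the constraint $|\Lambda|_j = s$ forces $\Lambda$ to contain at least one index $\neq j$, so $\beta_\Lambda^\star = 0$, leaving exactly the last displayed sum $-\eta\sum_{s=1}^{l-1}s\sum_{\Lambda: |\Lambda|_j = s}\beta_\Lambda\big(\prod_{k\in\Lambda, k\neq j}v_{ik}^{|\Lambda|_k}\big)v_{ij}^{s-1}$. Collecting the three pieces yields the claimed formula.

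I do not anticipate a serious obstacle here — the lemma is an exact bookkeeping identity and the only care needed is (i) correctly handling the multiplicity factor $|\Lambda|_k$ coming from differentiating $v_{ik}^{|\Lambda|_k}$ and the product rule over the distinct indices in $\Lambda$, (ii) being careful that for a fixed $j$ and a fixed value $s$, summing over all multi-indices $\Lambda$ with $|\Lambda|_j = s$ correctly accounts for every monomial, and (iii) keeping track of the two sign flips so that the $(\sigma_j - v_{jj}^l)$ term appears with a $+$ and the cross terms with a $-$. The mild subtlety worth stating explicitly is that when $s = |\Lambda|_j = l$ the "product over $k \in \Lambda, k\neq j$" is empty and equals $1$, which is why the $s = l$ contribution is extracted separately rather than folded into the final sum; I would note this in the write-up to make the index ranges unambiguous.
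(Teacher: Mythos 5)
Your proposal is correct and follows essentially the same route as the paper's proof: differentiate $\beta_\Lambda$ with respect to $\vu_i$, project onto $\vz_j$ so that only the $k=j$ term survives by orthonormality, group by $s=|\Lambda|_j$, isolate the $s=l$ term (where $\beta_{\Lambda_j}^\star=\sigma_j$ and $\beta_{\Lambda_j}=v_{jj}^l+\sum_{k\neq j}v_{kj}^l$), and observe that $\beta_\Lambda^\star=0$ for $1\leq s\leq l-1$ since such $\Lambda$ contains at least two distinct indices. No gaps.
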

\begin{proof}
    Recall that $\cL=\frac{1}{2}\sum_{\Lambda}(\beta_{\Lambda}-\beta^{\star}_{\Lambda})^{2}$ and $\beta_{\Lambda}=\sum_{i=1}^{r'}\prod_{k\in \Lambda}\left<\vu_{i},\vz_{k}\right>^{|\Lambda|_{k}}$. Moreover, we have $\beta_{\Lambda_j}^{\star}=\sigma_j$ for $1\leq j\leq r$, and $\beta_{\Lambda}^{\star}=0$ otherwise.
    We first calculate
    \begin{equation}
        \begin{aligned}
            \nabla_{\vu_i}\beta_{\Lambda} & =\sum_{s\in \Lambda}|\Lambda|_{s}\left(\prod_{k\in\Lambda,k\neq s}\left<\vu_{i},\vz_{k}\right>^{|\Lambda|_{k}}\right)\left<\vu_{i},\vz_{s}\right>^{|\Lambda|_{s}-1}\vz_{s} \\
                                          & =\sum_{s\in \Lambda}|\Lambda|_{s}\left(\prod_{k\in\Lambda,k\neq s}v_{ik}^{|\Lambda|_{k}}\right)v_{is}^{|\Lambda|_{s}-1}\vz_{s}.
        \end{aligned}
    \end{equation}

    Hence, the partial derivative of $\cL(t)$ with respect to $\vu_{i}$ is
    \begin{align*}
        \nabla_{\vu_i}\cL
         & =\sum_{\forall \Lambda}(\beta_{\Lambda}-\beta_{\Lambda}^{\star})\nabla_{\vu_i}\beta_{\Lambda}                                                                                                            =\sum_{\Lambda}(\beta_{\Lambda}-\beta_{\Lambda}^{\star})\sum_{s\in \Lambda}|\Lambda|_{s}\left(\prod_{k\in\Lambda,k\neq s}v_{ik}^{|\Lambda|_{k}}\right)v_{is}^{|\Lambda|_{s}-1}\vz_{s}.
    \end{align*}

    Note that $\{\vz_{j}\}_{j\in [d]}$ are unit orthogonal vectors. Hence, we have
    \begin{equation}
        \inner{\nabla_{\vu_i}\beta_{\Lambda}}{\vz_j}=\left\{ \begin{array}{ccl}
            |\Lambda|_j\left(\prod_{k\in\Lambda,k\neq j}v_{ik}^{|\Lambda|_k}\right)v_{ij}^{|\Lambda|_j-1} & \mbox{if}
                                                                                                          & j\in\Lambda                    \\
            0                                                                                             & \mbox{if}   & j\notin \Lambda.
        \end{array}\right.
        \label{eq: jnotinlambda}
    \end{equation}
    By the definition of $v_{ij}$, its update rule can be written in the following way
    \begin{equation}
        \label{eq:vij genneral}
        \begin{aligned}
            v_{ij}^+
             & =\left<\vu_{i}^+,\vz_{j}\right>                                                                                                                                                                        \\
             & \overset{(a)}{=}v_{ij}-\eta\left<\nabla_{\vu_i}\cL, \vz_{j}\right>                                                                                                                                        \\
             & =v_{ij}+\eta\sum_{\Lambda}(\beta^{\star}_{\Lambda}-\beta_{\Lambda})\left<\nabla_{\vu_i} \beta_{\Lambda},\vz_{j}\right>                                                                                    \\
             & \overset{(b)}{=}v_{ij}+\eta\sum_{\Lambda:|\Lambda|_{j}\ge 1}(\beta^{\star}_{\Lambda}-\beta_{\Lambda})|\Lambda|_{j}\left(\prod_{k\in\Lambda,k\neq j}v_{ik}^{|\Lambda|_{k}}\right)v_{ij}^{|\Lambda|_{j}-1}  \\
             & \overset{(c)}{=}v_{ij}+\eta\sum_{s\in [l]}\sum_{\Lambda:|\Lambda|_{j}=s}|\Lambda|_{j}(\beta^{\star}_{\Lambda}-\beta_{\Lambda})\left(\prod_{k\in\Lambda,k\neq j}v_{ik}^{|\Lambda|_{k}}\right)v_{ij}^{s-1}.
        \end{aligned}
    \end{equation}
    Here in $(a)$, we used the update rule for $\vu_{i}$. In $(b)$, we applied \eqref{eq: jnotinlambda} to exclude those $\Lambda$ without $j$. In $(c)$, we simply rearranged the above equation according to the cardinality $|\Lambda|_j$. We further isolate the term that only has $v_{ij}$:
    \begin{equation}
        \label{vij final}
        \begin{aligned}
            v_{ij}^+ & \overset{(a)}{=}v_{ij}+\eta l(\sigma_{j}-\beta_{\Lambda_{j}})v_{ij}^{l-1} -\eta\sum_{s\in [l-1]}s\sum_{\Lambda:|\Lambda|_{j}=s}\beta_{\Lambda}\left(\prod_{k\in \Lambda,k\neq j}v_{ik}^{|\Lambda|_{k}}\right)v_{ij}^{s-1}
            \\
            & \overset{(b)}{=}v_{ij}+\eta l(\sigma_{j}-v_{jj}^{l})v_{ij}^{l-1}
            -\eta l\sum_{k\in \Lambda,k\neq j}v_{kj}^{l}v_{ij}^{l-1}
            \\
            & \quad-\eta\sum_{s\in [l-1]}s\sum_{\Lambda:|\Lambda|_{j}=s}\beta_{\Lambda}\left(\prod_{k\in \Lambda,k\neq j}v_{ik}^{|\Lambda|_{k}}\right)v_{ij}^{s-1}.
        \end{aligned}
    \end{equation}

    Here in $(a)$, we rearranged terms and isolated the term with $|\Lambda|_{j}=l$. Note that the remaining terms must satisfy $1\leq |\Lambda|_{j} \leq l-1$, which indicates that there must be at least $2$ different indexes in $\Lambda$ which in turn implies $\beta^{\star}_{\Lambda}=0$. In $(b)$, we used the definition of $\beta_{\Lambda_{j}}$. This completes the proof of Lemma~\ref{lemma-tensor update formula}.
\end{proof}

Equipped with Lemma~\ref{lemma-tensor update formula}, we are ready to prove Proposition~\ref{prop:5:vij dynamic}.

\noindent {\it Proof of Proposition~\ref{prop:5:vij dynamic}.} The proof is divided into three parts:
\subsubsection*{\bf{Signal Term: $v_{ii}(t),1\leq i\leq r$}}
We first consider the signal terms $v_{ii}(t),1\leq j\leq r$. First, upon setting $i=j$ in Lemma~\ref{lemma-tensor update formula}, we have
\begin{equation}
    \label{vii}
    \begin{aligned}
        v_{ii}^+ & =v_{ii}+\eta l\left(\sigma_{i}-v_{ii}^{l}-\sum_{k\neq i}v_{ki}^{l}\right)v_{ii}^{l-1}
        -\eta\sum_{s=1}^{l-1}s\sum_{\Lambda:|\Lambda|_{i}=s}\beta_{\Lambda}\left(\prod_{k\in \Lambda,k\neq i}v_{ik}^{|\Lambda|_{k}}\right)v_{ii}^{s-1} \\
                    & \geq v_{ii}+\eta l\left(\sigma_{i}-v_{ii}^{l}-mV^l\right)v_{ii}^{l-1}
        -\eta\sum_{s=1}^{l-1}s\sum_{\Lambda:|\Lambda|_{i}=s}\beta_{\Lambda}\left(\prod_{k\in \Lambda,k\neq i}v_{ik}^{|\Lambda|_{k}}\right)v_{ii}^{s-1}.
    \end{aligned}
\end{equation}
Now we aim to control $(A)=\beta_{\Lambda}\left(\prod_{k\in \Lambda,k\neq i}v_{ik}^{|\Lambda|_{k}}\right)v_{ii}^{s-1}$ for $|\Lambda|_i=s\in \{1,\dots,l-1\}$. We have
\begin{equation}
    \label{*''}
    \begin{aligned}
        (A)
         & =\sum_{j=1}^{r'}\left(\prod_{h\in\Lambda}v_{jh}^{|\Lambda|_{h}}\right)\left(\prod_{k\in \Lambda,k\neq i}v_{ik}^{|\Lambda|_{k}}\right)v_{ii}^{|\Lambda|_{i}-1} \\
         & \overset{(a)}{\le}\sum_{j=1}^{r'}v_{jj}^{|\Lambda|_{j}}V^{l-|\Lambda|_{j}}V^{l-s}    v_{ii}^{s-1}    \\
         & \overset{(b)}{\le}v_{ii}^{2s-1}V^{2l-2s}+r'\max_{j\neq i}\left|v_{jj}^{|\Lambda|_{j}}V^{2l-s-|\Lambda|_{j}}v_{ii}^{s-1}\right|       \\
         & \overset{(c)}{\leq} v_{ii}^{2l-3}V^{2}+r'\max_{j\neq i}\mu^{|\Lambda|_{j}+s-1}V^{2l-s-|\Lambda|_{j}}\\&\leq v_{ii}^{2l-3}V^2+r'\mu^{l-1}V^l.
    \end{aligned}
\end{equation}
In $(a)$, we used the fact that $|v_{ij}|\le V, \forall i\neq j$. In $(b)$, we isolated the term with $j=i$ and bounded the remaining terms with their maximum value.
In $(c)$, we used the fact that $V\leq v_{ii}\leq \mu$.

After substituting \eqref{*''} into \eqref{vii}, we have
\begin{equation}\label{vii new}
    \begin{aligned}
        v_{ii}^+
         & \geq v_{ii}+\eta l\left(\sigma_{i}-v_{ii}^{l}-r'V^{l}\right)v_{ii}^{l-1}-\eta\sum_{s=1}^{l-1}s\sum_{\Lambda:|\Lambda|_{i}=s}\left(v_{ii}^{2l-3}V^2+r'\mu^{l-1}V^l\right) \\
         & \geq v_{ii}+\eta l\left(\sigma_{i}-v_{ii}^{l}-r'V^{l}\right)v_{ii}^{l-1}-\eta \left(v_{ii}^{2l-3}V^2+r'\mu^{l-1}V^l\right)\sum_{s=1}^{l-1}sC_{l}^{s}(d-1)^{l-s},
    \end{aligned}
\end{equation}
where $C_{l}^{s} = {l \choose s}$. Note that $\sum_{s=1}^{l-1}sC_{l}^{s}(d-1)^{l-s}=ld^{l-1}-l\leq ld^{l-1}$. Therefore, we have
\begin{equation}
    \label{vii final}
    \begin{aligned}
        v_{ii}(t+1)
         & \geq v_{ii}+\eta l\left(\sigma_{i}-v_{ii}^{l}-r'V^{l}\right)v_{ii}^{l-1}-ld^{l-1}\eta \left(v_{ii}^{2l-3}V^2+r'\mu^{l-1}V^l\right)  \\
         & \geq v_{ii}+\eta l \left(\sigma_{i}-v_{ii}^{l}-r'V^{l}-d^{l-1}v_{ii}^{l-2}V^2\right)v_{ii}^{l-1}-ld^{l-1}\eta r'\mu^{l-1}V^l        \\
         & \geq v_{ii}+\eta l \left(\sigma_{i}-v_{ii}^{l}-2d^{l-1}v_{ii}^{l-2}V^2\right)v_{ii}^{l-1}-ld^{l-1}\eta r'\mu^{l-1}V^l              \\
         & \geq v_{ii}+\eta l \left(\sigma_{i}-v_{ii}^{l}-2d^{l-1}v_{ii}^{l-2}V^2\right)v_{ii}^{l-1}-ld^{l}\eta \sigma_1^{\frac{l-1}{l}}V^l,
    \end{aligned}
\end{equation}
where in the last inequality, we used the fact that $r'\leq d$.

\subsubsection*{\bf{Diagonal Residual Term: $v_{ii},r+1\leq i\leq d$}}
In this case we consider the terms $v_{ii}$ with $r+1\leq i\leq d$, which is similar to the case $1\leq i\leq r$. Without loss of generality, we assume that $v_{ii}\geq 0$. The case $v_{ii}\leq 0$ can be argued in an identical fashion. By \eqref{vii}, we have
\begin{equation}
    \begin{aligned}
        v_{ii}^+ & =v_{ii}-\eta l\left(v_{ii}^{l}-\sum_{k\neq i}v_{ki}^{l}\right)v_{ii}^{l-1}
        -\eta\sum_{s=1}^{l-1}s\sum_{\Lambda:|\Lambda|_{i}=s}\beta_{\Lambda}\left(\prod_{k\in \Lambda,k\neq i}v_{ik}^{|\Lambda|_{k}}\right)v_{ii}^{s-1} \\
        & \leq v_{ii}-\eta l\left(v_{ii}^{l}-r'V^l\right)v_{ii}^{l-1}
        -\eta\sum_{s=1}^{l-1}s\sum_{\Lambda:|\Lambda|_{i}=s}\beta_{\Lambda}\left(\prod_{k\in \Lambda,k\neq i}v_{ik}^{|\Lambda|_{k}}\right)v_{ii}^{s-1}.
    \end{aligned}
\end{equation}
For $(A)=\beta_{\Lambda}\left(\prod_{k\in \Lambda,k\neq i}v_{ik}^{|\Lambda|_{k}}\right)v_{ii}^{s-1}$, we further have
\begin{equation}
    \begin{aligned}
        (A)
         & =\sum_{j=1}^{r'}\left(\prod_{h\in\Lambda}v_{jh}^{|\Lambda|_{h}}\right)\left(\prod_{k\in \Lambda,k\neq i}v_{ik}^{|\Lambda|_{k}}\right)v_{ii}^{s-1}     \\
         & \geq\left(\prod_{h\in\Lambda}v_{ih}^{|\Lambda|_{h}}\right)\left(\prod_{k\in \Lambda,k\neq i}v_{ik}^{|\Lambda|_{k}}\right)v_{ii}^{s-1}    -  \sum_{j\neq i}\left(\prod_{h\in\Lambda}v_{jh}^{|\Lambda|_{h}}\right)\left(\prod_{k\in \Lambda,k\neq i}v_{ik}^{|\Lambda|_{k}}\right)v_{ii}^{s-1} \\
         & \geq \underbrace{\left(\prod_{k\in\Lambda,k\neq i}v_{ik}^{2|\Lambda|_{k}}\right)v_{ii}^{2s-1}}_{\geq 0}-r'\max_{j\neq i}\left|v_{jj}^{|\Lambda|_{j}}V^{2l-s-|\Lambda|_{j}}v_{ii}^{s-1}\right|       \\
         & \geq -r' \mu^{l-1}V^l.
    \end{aligned}
\end{equation}
Therefore, we obtain
\begin{equation}
    \label{vii notinr final}
    \begin{aligned}
        v_{ii}^+
         & \le v_{ii}-\eta l\left(v_{ii}^{l}-r'V^l\right)v_{ii}^{l-1}+\eta ld^{l-1}r'\mu^{l-1}V^l \\
         & \leq v_{ii}-\eta lv_{ii}^{2l-1}+2\eta ld^{l-1}r'\mu^{l-1}V^l.
    \end{aligned}
\end{equation}

\subsubsection*{\bf{Off-diagonal Residual Term: $V(t)$}}
Finally, we characterize the dynamic of $V(t)=\max_{i\neq j}|v_{ij}(t)|$.
To this goal, we first consider the dynamic of each $v_{ij}$ such that $i\neq j$. Without loss of generality, we assume that $v_{ij}\geq 0$. One can write
\begin{equation}
    \begin{aligned}
        \label{eq::86}
        v_{ij}^+ & =v_{ij}+\eta l\left(\sigma_{j}-\sum_{i\in[r']} v_{ij}^l\right)v_{ij}^{l-1}
        -\eta\sum_{s\in [l-1]}s\sum_{\Lambda:|\Lambda|_{j}=s}\beta_{\Lambda}\left(\prod_{k\in \Lambda,k\neq j}v_{ik}^{|\Lambda|_{k}}\right)v_{ij}^{s-1}        \\
        & \leq V+\eta l \sigma_1 V^{l-1}+\eta l r' V^{2l-1}-\eta\sum_{s\in [l-1]}s\sum_{\Lambda:|\Lambda|_{j}=s}\beta_{\Lambda}\left(\prod_{k\in \Lambda,k\neq j}v_{ik}^{|\Lambda|_{k}}\right)v_{ij}^{s-1} \\
                    & \leq V+2\eta l \sigma_1 V^{l-1}-\eta\sum_{s\in [l-1]}s\sum_{\Lambda:|\Lambda|_{j}=s}\beta_{\Lambda}\left(\prod_{k\in \Lambda,k\neq j}v_{ik}^{|\Lambda|_{k}}\right)v_{ij}^{s-1},
    \end{aligned}
\end{equation} 
    where in the last inequality we use the assumption $V\lesssim \sigma_1^{1/l}d^{-\frac{1}{l-1}}$.
Similar to the previous case, we next bound $(A)=\beta_{\Lambda}\left(\prod_{k\in \Lambda,k\neq j}v_{ik}^{|\Lambda|_{k}}\right)v_{ij}^{s-1}$. First note that $|\Lambda|_j=s\in [l-1]$. Hence, we have that
\begin{equation}
    \begin{aligned}
        (A) & =\sum_{s=1}^{r'}\prod_{k\in\Lambda}v_{sk}^{|\Lambda|_k}\left(\prod_{k\in \Lambda,k\neq j}v_{ik}^{|\Lambda|_{k}}\right)v_{ij}^{s-1}        \\
        & =\underbrace{\left(\prod_{k\in\Lambda,k\neq j}v_{ik}^{2|\Lambda|_k}\right)v_{ij}^{2s-1}}_{\geq 0} - r'\max_{h\neq i} \left|\prod_{k\in\Lambda}v_{hk}^{|\Lambda|_k}\left(\prod_{k\in \Lambda,k\neq j}v_{ik}^{|\Lambda|_{k}}\right)v_{ij}^{h-1}\right| \\
        & \geq - r'\max_{h\neq i} \left|\prod_{k\in\Lambda}v_{hk}^{|\Lambda|_k}\left(\prod_{k\in \Lambda,k\neq j}v_{ik}^{|\Lambda|_{k}}\right)v_{ij}^{h-1}\right| \\
        & \geq -r' \mu^{|\Lambda|_s+|\Lambda|_i} V^{2l-|\Lambda|_s-|\Lambda|_i-1}.
    \end{aligned}
\end{equation}
We further note that $|\Lambda|_h+|\Lambda|_i\leq l-s$. Hence, it can be lower bounded by
\begin{equation}
    (A)\geq -r'\mu^{l-s}V^{l+s-1}\geq -r'\mu^{l-1} V^l.
\end{equation}

Pluggin our estimation for $(A)$ into \eqref{eq::86}, we finally have
\begin{equation}
    \begin{aligned}
        v_{ij}^+& \le V+ 2\eta l \sigma_1 V^{l-1}+\eta l d^{l-1} r'\mu^{l-1} V^l\leq V+3\eta l \sigma_1 V^{l-1}.
    \end{aligned}
\end{equation}
The last inequality comes from the assumption that $V\leq \frac{1}{d^l}$. This completes the proof of Proposition~\ref{prop:5:vij dynamic}.

\subsection{Proof of Proposition~\ref{prop:6:vij time}}
\label{proof-prop-6}

In this section, we provide the proof of Proposition~\ref{prop:6:vij time}.
We will show that the signal terms $v_{ii}(t), 1\leq i\leq r$ quickly converge to $\sigma_i^{1/l}$, and the residual terms remain small. To this goal, we first study the dynamic of $V(t)$.

\subsubsection*{\bf Iteration complexity of the off-diagonal residual term $V(t)$.} To start with the proof, we first study the time required for the off-diagonal term $V(t)$ to go from $V(0)$ to $2^{1/l}V(0)$, i.e., $T_1=\min_{t\geq 0}\{V(t)\geq 2^{1/l}V(0)\}$.
By Proposition~\ref{prop:5:vij dynamic}, we know
\begin{equation}
    \begin{aligned}
        V(t+1)\le V(t)+3\eta\sigma_{1} lV^{l-1}(t)=\left(1+3\eta l\sigma_1 V^{l-2}(t)\right)V(t).
    \end{aligned}
\end{equation}

Hence, for $0\leq t\leq T_1$, we have
\begin{equation*}
    \begin{aligned}
        V(t)\le \prod_{s=0}^{t-1}\left(1+3\eta l\sigma_1 V^{l-2}(s)\right)V(0)\leq \left(1+6\eta l\sigma_1 V^{l-2}(0)\right)^tV(0).
    \end{aligned}
\end{equation*}
Note that $V(T_1)\geq 2^{1/l}V(0)$. Therefore,
\begin{equation}
    \left(1+6\eta l\sigma_1 V^{l-2}(0)\right)^{T_1}V(0)\geq 2^{1/l}V(0).
\end{equation}
Solving the above inequality for $T_1$, we obtain
\begin{equation}
    \begin{aligned}
        T_1 & \geq \frac{\log(2^{1/l})}{\log(1+6\eta l\sigma_1 V^{l-2}(0))}\geq \frac{\log(2)}{6\eta l^2\sigma_1 V^{l-2}(0)}.
    \end{aligned}
\end{equation}
On the other hand, our initial point satisfies $\sin(\vu_i(0),\vz_i)\leq \gamma$ and $\norm{\vu_i(0)}=\alpha^{1/l}$. Hence, we have $V(0)\leq \alpha^{1/l}\gamma$. Substituting this into the above equation, we conclude that
\begin{equation}
    T_1\geq \frac{\log(2)}{6\eta l^2\sigma_1 \gamma^{l-2}}\alpha^{-\frac{l-2}{l}}.
\end{equation}
Note that $T_1\geq \frac{\log(2)}{6\eta l^2\sigma_1 \gamma^{l-2}}\alpha^{-\frac{l-2}{l}}\gg t^{\star}$. Hence, we have $V(t^{\star})\leq 2^{1/l}V(0)\leq (2\alpha)^{1/l}\gamma$.

\subsubsection*{\bf Iteration complexity of the diagonal residual term $v_{ii}(t),r+1\leq i\leq d$.} Next, we show that $v_{ii}(t), r+1\leq i\leq d$ will remain small during $0\leq t\leq t^{\star}=\frac{8}{\eta l\sigma_r}\alpha^{-\frac{l-2}{l}}$.
First by \eqref{vii notinr final} in the proof of Lemma~\ref{lemma-tensor update formula}, we have for every $0\leq t\leq t^{\star}$
\begin{equation}
    \begin{aligned}
        v_{ii}(t+1)\le v_{ii}(t)+2\eta ld^{l}\sigma_1^{l-1}V^l(t).
    \end{aligned}
\end{equation}
Note that $V(t)\leq 2^{1/l}V(0),\forall t\leq t^{\star}$, which leads to
\begin{equation}\label{vii inotinr time}
    \begin{aligned}
        v_{ii}(t) & \le v_{ii}(0)+4\eta ld^{l}\sigma_1^{l-1}\gamma^l\alpha t          \\
                  & \leq v_{ii}(0)+4\eta ld^{l}\sigma_1^{l-1}\gamma^l\alpha t^{\star} \\
                  & \leq \alpha^{1/l}+\cO\left(d^l\kappa \alpha^{2/l}\gamma^l\right)  \\
                  & \leq 2\alpha
    \end{aligned}
\end{equation}
for $0\leq t\leq t^{\star}$. Here we used the assumption that $\alpha\lesssim\frac{1}{d^{l^3}}$ and $\gamma\lesssim \frac{1}{\kappa l}^{\frac{1}{l-2}}$.

\subsubsection*{\bf Iteration complexity of the signal term $v_{ii}(t),1\leq i\leq r$.} As the last piece of the proof, we show that by iteration $t^{\star}=\frac{8}{\eta l\sigma_r}\alpha^{-\frac{l-2}{l}}$, the signal $v_{ii},1\leq i\leq r$ will converge to the eigenvalue $\sigma_i^{1/l}$.
First, recall that
\begin{equation}
    \begin{aligned}
        v_{ii}(t+1)
         & \geq v_{ii}(t)+\eta l \left(\sigma_{i}-v_{ii}^{l}(t)-2d^{l-1}v_{ii}^{l-2}(t)V^2(t)\right)v_{ii}^{l-1}(t)-2ld^{l}\eta \sigma_1^{l-1}V^l(t)                              \\
         & \geq v_{ii}(t)+\eta l \left(\sigma_{i}-v_{ii}^{l}(t)-4d^{l-1}v_{ii}^{l-2}(t)\alpha^{2/l}\gamma^2\right)v_{ii}^{l-1}(t)-4ld^{l}\eta \sigma_1^{l-1}\gamma^l\alpha,
    \end{aligned}
    \label{eq::96}
\end{equation}
where in the last inequality we used the fact that $V(t)\leq 2^{1/l}V(0)$ for $0\leq t\leq t^{\star}$.
In light of the above inequality, we characterize the convergence of $v_{ii}$ using a similar method as in \citep{ma2022blessing}. In particular, we divide our analysis into two phases.

\paragraph*{Phase 1.} In the first phase, we have $v_{ii}\leq(0.5\sigma_i)^{1/l}$. First, since $v_{ii}(0)\geq \alpha^{1/l}\sqrt[]{1-\gamma^2}$, we can easily conclude that $v_{ii}(t+1)\geq v_{ii}(t)$ by induction. Hence, we can simplify the dynamic as
\begin{equation}
    \begin{aligned}
        v_{ii}(t+1)\geq \left(1+\eta l(0.99\sigma_i-v_{ii}^l)v_{ii}^{l-2}(t)\right)v_{ii}(t)\geq \left(1+0.49\eta l\sigma_iv_{ii}^{l-2}(t)\right)v_{ii}(t).
    \end{aligned}
\end{equation}
Next, we further split the interval $\cI=\left[0,0.5\sigma_i^{1/l}\right]$ into $N=\cO\left(\log\left(0.5\sigma_i^{1/l}/\alpha\right)\right)$ sub-intervals $\{\cI_0,\cdots, \cI_{N-1}\}$, where $\cI_k=[2^{k}v_{ii}(0),2^{k+1}v_{ii}(0))$. Let $\cT_k$ collect the iterations that $v_{ii}$ spends in $\cI_k$. Accordingly, let $|\cT_k| = t_k$ be the number of iterations that $v_{ii}$ spends within $\cI_k$. First note that $v_{ii}(t)\geq 2^{k}v_{ii}(0)$ for every $t\in \cT_k$. Hence, we have
\begin{equation}
    \left(1+0.49\eta l\sigma_i 2^{(l-2)k}v_{ii}^{l-2}(0)\right)^{t_k}\geq 2.
\end{equation}
which implies
\begin{equation}
    t_{k}\leq\frac{\log(2)}{0.49\eta l\sigma_{i}v_{ii}^{l-2}(0)}2^{-(l-2)k}.
\end{equation}
By summing over $k=0,\cdots,N-1$, we can upper bound the required number of iterations $T_3$
\begin{equation}
    T_3\leq \sum_{k=0}^{\infty}t_k\leq \sum_{k=0}^{\infty}\frac{\log(2)}{0.49\eta l\sigma_{i}v_{ii}^{l-2}(0)}2^{-(l-2)k}\leq \frac{4}{\eta l\sigma_i\alpha^{l-2}}\leq \frac{4}{\eta l\sigma_r\alpha^{l-2}}\ll T_1,
\end{equation}
where the last inequality is due to our assumption $\gamma\lesssim \frac{1}{\kappa l}^{\frac{1}{l-2}}$.

\paragraph*{Phase 2.} In the second phase, we have $v_{ii}\geq 0.5\sigma_i^{1/l}$.
We further simplify \eqref{eq::96} as
\begin{equation}
    \begin{aligned}
        v_{ii}(t+1) & \geq v_{ii}(t)+\eta l\left(\sigma_i-8d^{l-1}\sigma_i^{\frac{l-2}{l}}\alpha^2\gamma^2-v_{ii}^{l}(t)\right)v_{ii}^{l-1}(t) \\
                    & \geq v_{ii}(t)+\eta l\left(\tilde\sigma_i-v_{ii}^{l}(t)\right)v_{ii}^{l-1}(t),
    \end{aligned}
\end{equation}
where we denote $\tilde\sigma_i=\sigma_i-8d^{l-1}\sigma_i^{\frac{l-2}{l}}\alpha^{2/l}\gamma^2$. Then, via a similar trick, within additional $T_4\leq\frac{4}{\eta l\sigma_r}\alpha^{-\frac{l-2}{l}}$ iterations, we have $v_{ii}^l(t)\geq \sigma_i-8d^{l-1}\sigma_i^{\frac{l-2}{l}}\alpha^{2/l}\gamma^2$. A similar argument on the upper bound shows $v_{ii}^l(t)\leq \sigma_i+8d^{l-1}\sigma_i^{\frac{l-2}{l}}\alpha^{2/l}\gamma^2$, which completes the proof.$\hfill\square$

\section{Auxiliary Lemmas}
\begin{lemma}[Bernoulli inequality]
    \label{lem::bernoulli-ineq}
    For $0\leq x<\frac{1}{r-1}$, and $r>1$, we have 
    \begin{equation}
        (1+x)^r\leq 1+\frac{rx}{1-(r-1)x}.
    \end{equation}
\end{lemma}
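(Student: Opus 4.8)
The plan is to deduce this from the classical Bernoulli inequality $(1+t)^{r}\ge 1+rt$, which holds for every real $r\ge 1$ and every $t\ge -1$ (it follows from the convexity of $t\mapsto(1+t)^{r}$ on $[-1,\infty)$, whose tangent line at $t=0$ is $1+rt$, so the function lies above the tangent). The bridge to the stated estimate is the substitution $y:=\frac{x}{1+x}$.

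First I would observe that for $x\in[0,\tfrac{1}{r-1})$ we have $y\in[0,1)$ and $1+x=\frac{1}{1-y}$, hence $(1+x)^{r}=(1-y)^{-r}$. Next, applying Bernoulli with $t=-y\ge -1$ gives $(1-y)^{r}\ge 1-ry$. The crucial point is that $ry=\frac{rx}{1+x}<1$ \emph{exactly when} $(r-1)x<1$, i.e.\ under the stated hypothesis $x<\frac{1}{r-1}$; therefore $1-ry>0$, and taking reciprocals of the inequality between positive numbers yields $(1-y)^{-r}\le\frac{1}{1-ry}$. Finally, a one-line algebraic simplification $1-ry=1-\frac{rx}{1+x}=\frac{1-(r-1)x}{1+x}$ gives $\frac{1}{1-ry}=\frac{1+x}{1-(r-1)x}=1+\frac{rx}{1-(r-1)x}$, which is precisely the right-hand side. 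Chaining the two inequalities completes the proof; equality holds at $x=0$.

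The argument is essentially mechanical once the substitution is in place, so there is no serious obstacle. The only two points deserving a moment's attention are: (i) invoking the classical Bernoulli inequality in the correct regime — real exponent $r\ge 1$ and base $1-y$ with $-y\ge -1$, both satisfied here; and (ii) verifying that the reciprocal step is legitimate, i.e.\ that $1-ry>0$, which as noted is equivalent to the hypothesis $x<\frac{1}{r-1}$, so the hypothesis is used exactly in that step. An alternative, slightly longer route would be to set $g(x)=1+\frac{rx}{1-(r-1)x}-(1+x)^{r}$, check $g(0)=0$, and show $g'(x)=\frac{r}{(1-(r-1)x)^{2}}-r(1+x)^{r-1}\ge 0$ on the interval (which again reduces to a Bernoulli-type comparison after taking logarithms); I would prefer the substitution argument since it avoids differentiating $(1+x)^{r}$ and makes the role of the hypothesis transparent.
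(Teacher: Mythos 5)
Your proof is correct. The substitution $y=\tfrac{x}{1+x}$ turns the claim into the reciprocal of the classical Bernoulli bound $(1-y)^r\ge 1-ry$, the hypothesis $x<\tfrac{1}{r-1}$ is used exactly where it must be (to guarantee $1-ry>0$ so that reciprocals can be taken), and the final algebraic identity $\tfrac{1+x}{1-(r-1)x}=1+\tfrac{rx}{1-(r-1)x}$ checks out. The paper states this lemma in its auxiliary section without supplying any proof, so there is nothing to compare against; your argument is a complete and standard justification of the stated inequality.
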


\end{document}